\newtheorem{theorem}{Theorem}
\newtheorem{claim}{Claim}
\newtheorem{fact}{Fact}
\newtheorem{lemma}{Lemma}
\newtheorem{corollary}{Corollary}
\theoremstyle{definition}
\newtheorem{definition}{Definition}
\newcommand{\Ex}{\mathop{\mathbf{E}}}
\renewcommand{\Pr}{\mathop{\mathbf{Pr}}}
\newcommand{\doubleP}{\mathbb{P}}
\newcommand{\doubleR}{\mathbb{R}}
\newcommand{\rext}{{r_{\mathrm{ext}}}}
\newcommand{\kext}{{k_{\mathrm{ext}}}}
\newcommand{\ellext}{{\ell_{\mathrm{ext}}}}
\newcommand{\ellsigv}{{\ell_{\mathrm{sigv}}}}
\newcommand{\rlen}{{r_{\mathrm{len}}}}
\newcommand{\ellhigh}{\ell_{\mathrm{high}}}
\newcommand{\ellbias}{\ell_{\mathrm{bias}}}
\newcommand{\ellsigs}{\ell_{\mathrm{sigs}}}
\newcommand{\ellflat}{\ell_{\mathrm{flat}}}
\newcommand{\ellgood}{\ell_{\mathrm{good}}}
\newcommand{\SigV}{\mathrm{SigV}}
\newcommand{\Bad}{\mathrm{Bad}}
\newcommand{\High}{\mathrm{High}}
\newcommand{\ch}{\mathrm{cnt}_{\mathrm{high}}}
\newcommand{\cb}{\mathrm{cnt}_{\mathrm{bias}}}
\newcommand{\wt}{\ \widetilde{\to} \ }
\newcommand*\rel@kern[1]{\kern#1\dimexpr\macc@kerna}
\newcommand*\widebar[1]{%
  \begingroup
  \def\mathaccent##1##2{%
    \rel@kern{0.8}%
    \overline{\rel@kern{-0.8}\macc@nucleus\rel@kern{0.2}}%
    \rel@kern{-0.2}%
  }%
  \macc@depth\@ne
  \let\math@bgroup\@empty \let\math@egroup\macc@set@skewchar
  \mathsurround\z@ \frozen@everymath{\mathgroup\macc@group\relax}%
  \macc@set@skewchar\relax
  \let\mathaccentV\macc@nested@a
  \macc@nested@a\relax111{#1}%
  \endgroup
}
\renewcommand{\bar}{\widebar}
\renewcommand{\hat}{\widehat}
\renewcommand{\tilde}{\widetilde}
\newcommand{\wtone}{\overset{\mathrm{flat}}{\longrightarrow}}
\newcommand{\T}{\mathcal{T}}
\newcommand{\Tone}{\T^{(1)}_{v_0}}
\newcommand{\Ttwo}{\T^{(2)}_{v_1}}
\newcommand{\fail}{\mathbf{halt}}
\newcommand{\chj}{\ch^{(j)}}
\newcommand{\cbj}{\cb^{(j)}}
\newcommand{\xin}[1]{}
\newcommand{\avishay}[1]{}
\newcommand{\hongxun}[1]{}
\newcommand{\junzhao}[1]{}
\title{Tight Time-Space Lower Bounds for Constant-Pass Learning}
\author{Xin Lyu\footnote{Department of Computer Science, University of California, Berkeley. Email: \href{mailto:lyuxin1999@gmail.com} {\url{lyuxin1999@gmail.com}}. Supported by Avishay Tal's Sloan Research Fellowship and NSF CAREER Award CCF-2145474, and Jelani Nelson's ONR grant N00014-18-1-2562.} \and Avishay Tal\footnote{Department of Computer Science, University of California, Berkeley. Email: \href{mailto:atal@berkeley.edu}{\url{atal@berkeley.edu}}. Supported by a Sloan Research Fellowship and NSF CAREER Award CCF-2145474.} \and Hongxun Wu\footnote{Department of Computer Science, University of California, Berkeley. Email: \href{mailto:wuhx@berkeley.edu}{\url{wuhx@berkeley.edu}}. Supported by Avishay Tal's Sloan Research Fellowship, NSF CAREER Award CCF-2145474, and Jelani Nelson's ONR grant N00014-18-1-2562.} \and Junzhao Yang\footnote{Institute for Interdisciplinary Information Sciences, Tsinghua University. Email: \href{mailto:yang-jz20@mails.tsinghua.edu.cn}{\url{yang-jz20@mails.tsinghua.edu.cn}.}}}
\begin{document}

\maketitle
\begin{abstract}
In his breakthrough paper, Raz showed that any parity learning algorithm requires either quadratic memory or an exponential number of samples [FOCS'16, JACM'19]. A line of work that followed extended this result to a large class of learning problems. Until recently, all these results considered learning in the streaming model, where each sample is drawn independently, and the learner is allowed a single pass over the stream of samples. 
Garg, Raz, and Tal [CCC'19] considered a stronger model, allowing multiple passes over the stream. In the $2$-pass model, they showed that learning parities of size $n$ requires either a memory of size $n^{1.5}$ or at least $2^{\sqrt{n}}$ samples. (Their result also generalizes to other learning problems.)

In this work, for any constant $q$, we prove tight memory-sample lower bounds for any parity learning algorithm that makes $q$ passes over the stream of samples. We show that such a learner requires either $\Omega(n^{2})$ memory size or at least $2^{\Omega(n)}$ samples.  
Beyond establishing a tight lower bound, this is the first non-trivial lower bound for $q$-pass learning for any $q\ge 3$.
Similar to prior work, our results extend to any learning problem with many nearly-orthogonal concepts.

We complement the lower bound with an upper bound, showing that parity learning with $q$ passes can be done efficiently with $O(n^2/\log q)$ memory.
\end{abstract}
\thispagestyle{empty}

\newpage\thispagestyle{empty}
{\normalsize
\setcounter{tocdepth}{2}
\tableofcontents
\thispagestyle{empty}
}
\newpage
\pagenumbering{arabic}
% \section*{Parameters Gallery}
% \input{parameters}
%\newpage
\section{Introduction}
A growing recent line of works studied the efficiency of learning under memory constraints
\cite{Shamir14,SVW16,Raz16,KRT17,MT17,MM17,raz2017time,MM18, GargRT18-extractor,BGY18,
DaganS18,
garg2019time, SSV19,
DKS19,GKR20,DBLP:conf/approx/GargKLR21,MSSV22,liu2023memory}.
This study was initiated by the beautiful works of Shamir \cite{Shamir14} and Steinhardt, Valiant, and Wager \cite{SVW16}. Specifically, Steinhardt et al. \cite{SVW16} conjectured that any learning parity algorithm requires either quadratic memory or an exponential number of examples. 
In a breakthrough result, Raz \cite{Raz16} proved this conjecture.
While we have two simple algorithms for parity learning: (i) Gaussian Elimination that uses $O(n^2)$ space and $O(n)$ samples, and (ii) Brute-force search that uses $O(n)$ space and $O(2^n)$ samples, Raz showed that there is no learning algorithm that uses $o(n^2)$ space and $2^{o(n)}$ samples \cite{Raz16}. This demonstrated that efficient learning requires a large memory -- in this case, at least $\Omega(n^2)$ memory bits.

Follow-up work extended and generalized the lower bounds techniques to a wide array of learning problems such as learning sparse parities, learning DNFs, learning decision trees, learning juntas, \cite{KRT17,GargRT18-extractor} learning low-degree polynomials \cite{BGY18,GargRT18-extractor}, learning from sparse equations and low-degree equations \cite{GargRT18-extractor}, learning codewords from random coordinates \cite{raz2017time,MM18,GargRT18-extractor}, learning parities with noisy inputs \cite{DBLP:conf/approx/GargKLR21}, and more. In all the above, it is shown that any learning algorithm for the corresponding concept class on input size $n$, requires either super-linear size memory, or super polynomial number of samples. Work towards a tight characterization of memory-samples lower bounds was done by \cite{GLM20}, but such a full characterization is still missing with polynomial gaps on the memory required for efficiently learning classical concepts classes such as juntas, DNFs, decision trees \cite{KRT17}.

Most of the works above modeled the learner as a streaming algorithm, observing the random labeled examples one at a time. 
More precisely, the lower bounds proved were in the stronger model of read-once branching programs that captures bounded-space streaming computation in a non-uniform setting.
Recent exciting work by \cite{liu2023memory} extended the model to include quantum memory in addition to classical memory and showed that Raz's result extends even if the learner has additionally $o(n)$ qubits at its disposal.

Dagan and Shamir~\cite{DaganS18} and Garg, Raz, and Tal~\cite{garg2019time} considered the model of multi-pass learners. In this model, the learner makes several passes over the stream of examples in the same order. Dagan and Shamir~\cite{DaganS18} proved polynomial lower bounds on the number of samples in such setting. Garg, Raz and Tal~\cite{garg2019time} obtained a subexponential lower bound on the number of samples $2^{\Omega(\sqrt{n})}$ for any \emph{two-pass} learning parity algorithm with $o(n^{1.5})$ space.
The result more generally implies lower bounds for any of the aforementioned learning problems.
Indeed, the lower bounds are proved in the extractor-based framework of \cite{GargRT18-extractor} and all the aforementioned learning problems fall under this framework.

Despite the strong lower bound, the GRT result was not known to be tight for two-pass learning, as no efficient algorithm with $o(n^2)$ space was known in this setting. Moreover, their result did not translate to the multi-pass setting with more than two passes, and, as indicated in their paper, some of their techniques are quite delicate, and it is far from clear how to extend them to more than two passes \cite{garg2019time}.

Proving lower bounds for multi-pass learners is much more challenging, as such learners can store information during the first pass that would make examples in the second pass somewhat predictable, correlated with one another, or correlated with the hidden vector.

One might wonder whether more passes can help in learning. Indeed, when the number of passes is quasi-polynomial, a parity learning algorithm with $n^{O(\log n)}$ passes, $n^{O(\log n)}$ samples, and $O(n)$ space follows from the following two facts: (i) solving linear equations can be done in $O(\log^2 n)$ depth \cite{Csanky} (ii) Barrington's simulation of $O(\log^2 n)$ depth by length $n^{O(\log n)}$ read-once branching programs \cite{Barrington86}.

\subsection{Our Results}
We study time-space lower bounds for multi-pass learning problems. We provide a nearly tight lower bound for two-pass learning parity algorithms:

\begin{theorem}[Informal version of \Cref{theo:two-pass-main-result}]
Any two-pass algorithm for $n$-bit parity learning requires either $\Omega(n^2)$ bits of memory or $2^{\Omega(n)}$ many samples. Otherwise, the algorithm succeeds with probability at most $2^{-\Omega(n)}$.
\end{theorem}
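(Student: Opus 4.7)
I would follow the extractor-based branching-program framework of Garg–Raz–Tal \cite{GargRT18-extractor,garg2019time}, modeling the two-pass learner as a layered branching program $\mathcal{B}$ of width $2^m$ that reads the same sample stream $(a_1,b_1),\dots,(a_t,b_t)$ twice, where $b_i = \langle a_i, x\rangle \bmod 2$. My target is to show that if $m = o(n^2)$ and $t = 2^{o(n)}$, then $\Pr[\mathcal{B}\text{ outputs }x] \le 2^{-\Omega(n)}$. For each vertex $v$ in the program, track the posterior $\Pr[X = x^{\star} \mid \text{reach } v]$; call $v$ \emph{significant} if there exists $x^{\star}$ with $\Pr[\text{reach } v \mid x = x^{\star}]$ much larger than $\Pr[\text{reach }v]$, and track, layer by layer, a progress measure combining the $\ell_2$ mass of the posterior with the cumulative probability of having hit a significant vertex.

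\textbf{Step 1 (First pass).} Replay Raz's single-pass analysis on the first half of $\mathcal{B}$. With $m = o(n^2)$ memory and $t=2^{o(n)}$ samples, the probability of reaching any significant vertex during the first pass is $2^{-\Omega(n)}$. Consequently, with probability $1 - 2^{-\Omega(n)}$ over the interaction, the first-pass terminal state $v_1$ is \emph{flat}: the posterior $\Pr[X=\cdot\mid v_1]$ has $\ell_2^2$-mass at most $2^{-n}(1+2^{-\Omega(n)})$ and in particular min-entropy $\ge n - o(n)$.

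\textbf{Step 2 (Second pass reduced to fresh samples).} Condition on a flat $v_1$. The conceptual difficulty is that the second pass sees the \emph{same} samples $(a_i,b_i)$, which are now correlated with the first-pass state. The plan is to bound the statistical distance between the true conditional distribution of $(v_1,a_1,b_1,\dots,a_t,b_t)$ and the product distribution in which $x$ is redrawn from $\Pr[X\mid v_1]$ independently of $v_1$ and the samples are redrawn i.i.d. Using the near-uniformity from Step~1 together with the fact that $\langle a,x\rangle$ is a strong linear extractor for $a$ uniform and $x$ having high min-entropy, this distance is $2^{-\Omega(n)}$. Under the product coupling, the second pass is literally a fresh one-pass parity learner — now with a slightly non-uniform prior on $x$ — and Raz's one-pass lower bound again kicks in to give success probability $2^{-\Omega(n)}$ unless $m = \Omega(n^2)$ or $t = 2^{\Omega(n)}$.

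\textbf{Main obstacle.} The hardest step is making Step~2 precise: the naive ``pretend the samples are fresh'' reduction is too lossy, because even exponentially small per-vertex errors can accumulate across $2^m$ states in each of $t$ layers. I would address this by restricting the hybrid argument to the typical sub-program obtained after pruning (a)~the significant first-pass vertices, (b)~the significant second-pass vertices when started from each surviving $v_1$, and (c)~the exponentially rare high-bias samples; only on this pruned sub-program would I invoke the extractor property of $\langle a,x\rangle$, whose seed length is exactly tight enough to absorb the $o(n^2)$ bits of conditioning coming from $v_1$. The extra factor of two in the exponent of memory (passing from $n^{1.5}$ in GRT to $n^2$ here) should come from doing this extractor step only \emph{once} at the interface between the two passes, rather than using GRT's delicate layerwise similarity inequality which incurs a quadratic loss.
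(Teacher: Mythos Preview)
Your Step~2 reduction is false, and no amount of pruning repairs it. The paper's Section~2.2 gives exactly the counterexample: a first pass that stores $v_1 = x \oplus a_1$ (bitwise XOR). The posterior $\Pr[X\mid v_1]$ is \emph{perfectly uniform}---$v_1$ is flat in your sense---yet in the true distribution, $a_1 = v_1 \oplus x$ is determined by $(v_1,x)$, so the second pass learns $x$ immediately upon seeing $a_1$ again. In your resampled product distribution, $a_1$ is fresh uniform and uninformative. The statistical distance between the two is essentially $1$, not $2^{-\Omega(n)}$. The extractor property of $\langle a,x\rangle$ is irrelevant here: it controls the bias of $b_i$ when $a_i$ is uniform, but says nothing about the conditional distribution of $a_i$ given $v_1$, which is what you would need. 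Your diagnosis in ``Main obstacle'' (error accumulation across $2^m$ states) misses the point---the per-sample error is already order $1$, not exponentially small.

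The paper never tries to argue the second-pass samples look fresh. Instead it (i)~modifies the program so the second pass simulates the first pass in parallel, making each second-pass vertex remember both $v_1$ and its first-pass counterpart $v'$; (ii)~tracks the progress $\langle \doubleP_{x\mid v_1\to v}, \doubleP_{x\mid v_1\to t}\rangle$ toward each target $t$; (iii)~identifies the samples $a_i$ that are \emph{predictable} given the current state (``high-probability edges'') and argues there can be at most $O(n)$ of them since each costs $\Omega(n)$ bits of memory to remember. The jump from GRT's $n^{1.5}$ to $n^2$ does \emph{not} come from ``doing the extractor step once at the interface''---it comes from the paper's new \emph{bias counter} $\cb$, which accumulates the log-bias of each high-probability edge, together with a potential argument showing $\Ex[2^{\cb}]$ stays bounded (so $\cb$ overflows its $\Theta(n)$ threshold with probability $2^{-\Omega(n)}$). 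This replaces GRT's cruder ``very-bad edge'' threshold at $2^{-\sqrt{n}}$. A separate \emph{transfer lemma} then bridges the gap between the analysis distribution $\doubleP_{x\mid v_1\to v}$ and the actual distribution $\doubleP_{x\mid v_0\to v}$.
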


Moreover, our results generalize to any constant-pass learner and, moreover, imply nearly similar bounds for any algorithm with at most $o(\log \log n)$ passes.

\begin{theorem}[Informal version of \Cref{theo:multi-pass-main-result}]
There is a universal constant $C>0$ such that the following holds. For any $q\ge 2$, letting $c_q = C\cdot 100^{3^q}$, any $q$-pass algorithm for $n$-bit parity learning requires either $n^2/c_q$ bits of memory or $\exp({n/c_q})$ many samples. Otherwise, the algorithm succeeds with probability at most $2^{-(n/c_q)}$.
\end{theorem}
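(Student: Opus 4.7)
The plan is to prove the multi-pass lower bound by strong induction on $q$, taking Theorem~\ref{theo:two-pass-main-result} as the base case $q=2$. I would formalize a $q$-pass learner as a composition of $q$ ordered read-once branching programs $B_1,\dots,B_q$ of width $2^s$ (where $s$ is the memory bound), all reading the same sample stream $z_1,\dots,z_m$, with the terminal vertex of $B_i$ serving as the start vertex of $B_{i+1}$, and the final leaf labeled with a guess for the hidden parity $x \in \bits^n$.

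For the inductive step, the strategy is to reduce a $q$-pass learner to a $(q-1)$-pass learner by conditioning on the memory state $M_1$ at the end of the first pass. Adapting the Raz-style ``significant vertex'' analysis to the first pass in isolation, I would classify the leaves of $B_1$ as \emph{heavy} (those whose posterior on $x$ has $L_\infty$ norm above a threshold) or \emph{light}, and show by a single-pass extractor-type argument that the probability of landing on a heavy vertex is at most $2^{-\Omega(n/c_q)}$. Conditioned on any light outcome $M_1=m$, the posterior distribution of $x$ still has large min-entropy, and the stream $z_1,\dots,z_m$ together with $x$ lives in a distribution that is close to the uniform product in a sense tailored to what the $(q-1)$-pass inductive hypothesis requires. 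The remaining $(q-1)$ passes then constitute a $(q-1)$-pass algorithm on this conditioned problem, to which the inductive hypothesis applies, yielding the desired $2^{-\Omega(n/c_{q-1})}$ failure bound for each light $m$, which we then sum.

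The main obstacle I anticipate is precisely the correlation structure induced by the conditioning. Unlike the clean product distribution in the top-level problem, after conditioning on $M_1=m$ the samples are no longer i.i.d.\ and are no longer independent of $x$. A naive total-variation comparison to a product distribution is far too lossy. The cleanest resolution is to strengthen the inductive hypothesis so that it applies to a broader family of admissible starting distributions --- say, joint distributions $(x, z_1,\dots,z_m)$ in which $x$ has high min-entropy conditioned on every bounded subset of samples, and in which the stream has a prescribed ``flatness'' property. Verifying that this family is closed under conditioning on a light $M_1$ is where the bulk of the technical work lives, and is the analogue, at the multi-pass level, of the delicate first-to-second-pass transition in \cite{garg2019time}.

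The parameter $c_q = C\cdot 100^{3^q}$ would be tracked through the induction. Each inductive step incurs three compounding constant-factor losses: one in the memory budget (to afford the witness $M_1$), one in the sample-count budget (to accommodate atypical sample configurations after conditioning), and one in the failure probability (to absorb a union bound over heavy versus light endpoints of $B_1$). Because each of these losses enters exponentially in the lower bound, a cubic blow-up in the constant per level is plausible, giving $c_q \approx c_{q-1}^3$ and hence the $100^{3^q}$ scaling. Finally, as in the extractor-based framework of \cite{GargRT18-extractor}, nothing in the above argument uses more than near-orthogonality of the parity codewords, so the bound should generalize verbatim to any learning problem whose concept-sample matrix is an $L_2$-extractor of sufficient quality.
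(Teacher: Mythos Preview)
Your plan has a genuine gap, and it is precisely the obstacle the paper identifies and works hard to circumvent. You propose to peel off the \emph{first} pass, condition on a ``light'' terminal state $M_1=m$ (one where the posterior on $x$ is flat), and then treat passes $2,\dots,q$ as a $(q-1)$-pass learner over some admissible conditioned distribution. But flatness of the posterior on $x$ alone is not enough: consider a first pass that (hypothetically) learns $x$, stores $m=x\oplus a_1$, and forgets both $x$ and $a_1$. Conditioned on $M_1=m$, the posterior on $x$ is perfectly uniform, so $m$ is ``light'' by your criterion; yet the second pass, upon re-reading $a_1$, recovers $x=m\oplus a_1$ immediately. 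Thus your proposed family of ``admissible'' distributions---those where $x$ has high min-entropy conditioned on bounded subsets of samples---is \emph{not} closed under conditioning on light $M_1$: in this example, conditioning on $M_1=m$ and then on the single sample $a_1$ pins down $x$. The closure step you flag as ``where the bulk of the technical work lives'' in fact fails outright, and no mild strengthening of the admissibility notion repairs it, because the joint law of $(x,a_1,\dots,a_T)$ after conditioning on $M_1$ can be arbitrarily entangled.

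The paper's approach is structurally different. Rather than peeling off the first pass, it analyzes the \emph{last} pass directly, after first modifying the program so that pass $j$ carries a copy of pass $j-1$ in its state (hence pass $q$ remembers the entire history). The progress measure and stopping rules are set up for pass $q$, and the central difficulty---that the starting vertex $v_{q-1}$ of pass $q$ is chosen adaptively by the earlier passes---is handled by a \emph{transfer lemma}. For $q>2$ this lemma is itself nontrivial: it requires constructing a $(q-1)$-pass ``probabilistic subprogram'' built from the \emph{right halves} $v'_t\to v_t$ of the first $q-1$ passes, decoupling the left halves by resampling a fresh $\tilde{x}$, and then invoking the inductive hypothesis on that subprogram. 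The $100^{3^q}$ scaling arises from the nested flat/good thresholds in this transfer step, not from the three-way loss you sketch. In short, the induction is on the right object (number of passes), but the reduction goes the other way and hinges on the bias-counter potential and the multi-pass transfer lemma rather than on any closure property of conditioned sample distributions.
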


We stress that the multi-pass lower bound is not a direct generalization of the two-pass one. It requires us to revisit a key technique in the two-pass proof (which we call the ``transfer lemma''), and extend the technique to the multi-pass case with a significantly more involved argument.

\paragraph*{Extractor-based framework.} Our results apply more generally to any learning problem with many nearly pairwise orthogonal concepts (i.e. concepts that agree on roughly half of the inputs). Alternatively, 
 to any learning problem whose associated matrix (as defined in~\cite{raz2017time}) exhibits an extractor-property~\cite{GargRT18-extractor}, as defined next.

Let $A$ be a finite domain, and let $X$ be a concept class over $A$, where each $x \in X$ represents a function (or concept) mapping $A$ to $\{-1, 1\}$. We naturally associate with the concept class a matrix $M \in \{-1,1\}^{A \times X}$ whose rows correspond to samples and columns correspond to concepts/functions. Then, $M$ describes the following learning problem: An unknown $x\in X$ is chosen uniformly at random. A learner tries to learn $x$ from a stream of labeled samples, 
$(a_1, M(a_1,x)), (a_2, M(a_2,x)), \ldots$ where each $a_i$ is uniformly distributed over $A$. In particular, we consider the setting in which the learner can see the \emph{same} stream of samples for $q\ge 2$ passes.

Our lower bounds apply to any learning problem whose corresponding matrix $M$ has certain extractor properties: Any large submatrix of $M$ has a similar fraction of $1$'s and $-1$'s. More precisely, we say that $M$ is a $(k, \ell, r)$-extractor if for any submatrix of at least $2^{-k}\cdot |A|$ rows and at least $2^{-\ell}\cdot |X|$ columns, the fraction of entries with value $1$ is $\frac{1}{2}\pm 2^{-r}$. 
(For example, parity learning has parameters $k,\ell, r = \Omega(n)$.) We show that any two-pass learning for the learning problem associated with $M$ requires either $\Omega(k\cdot \min(\ell,k))$ memory or at least $2^{\Omega(r)}$ samples. For $q$-pass learning, we show that the learning problem requires either $\Omega(k\cdot \min(\ell,k))/c_q$ memory or at least $2^{\Omega(r/c_q)}$ samples for $c_q = 100^{3^q}$.

Our main theorems, \Cref{theo:two-pass-main-result} and~\Cref{theo:multi-pass-main-result}, are actually stated for matrices that are $L_2$-extractors, since these extractors are more convenient to work with in our proof. However, a simple reduction from \cite[Corollary~3]{GargRT18-extractor} shows that any standard extractor as above is also a $(\Omega(k+r), \Omega(\ell+r), \Omega(r))$-$L_2$-Extractor. 
Our results thus apply to all the aforementioned concept classes (juntas, DNFs, Decision trees, low-degree polynomials, codewords) as their corresponding matrices form $L_2$-Extractors with good parameters.

\paragraph*{A non-trivial multi-pass algorithm.} One might wonder whether the lower bound can be strengthened to show that any $n^{o(1)}$-pass learner requires either $\Omega(n^2)$-memory or $2^{\Omega(n)}$ samples to learn parity. Our next result shows that this is not the case, and efficient learning with $o(n^2)$ memory is possible for any $q=\omega(1)$.

\begin{theorem}[Informal version of \Cref{theo:multi-pass-algorithm}]
For any $q\le 2^n$, there is a $q$-pass algorithm for $n$-bit parity learning that uses $O(n^2/\log(q))$ bits of memory and $O(qn)$ samples.
\end{theorem}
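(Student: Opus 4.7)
My plan is to construct an explicit $q$-pass algorithm. Let $k = \lfloor \log_2 q \rfloor$ and partition the coordinates $[n]$ into $k$ blocks $G_1,\ldots,G_k$ each of size $m = \lceil n/k \rceil$. The basic idea is a ``multi-pass Gaussian elimination'' in which pass $i$ is dedicated to finding $m$ pivots whose pivot columns sit in $G_i$, while keeping all stored pivots in a ``reduced'' form that only references the still-unresolved blocks $G_{i+1}\cup\cdots\cup G_k$.

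Concretely, I would execute pass $i$ as follows. Entering the pass we have at most $(i-1)m$ stored pivots, each expressed purely in terms of variables from $G_i\cup\cdots\cup G_k$. For each incoming sample $(a,b)$, I first use the stored pivots to zero out the coefficients of $a$ on the already-resolved coordinates. If the reduced equation has a nonzero coefficient on some column $j\in G_i$ not yet pivoted, I install the equation as a new pivot for $j$; then, for book-keeping, I immediately back-substitute this new pivot into every previously stored pivot, performing the update in place so that no temporary duplicate of any pivot is ever materialized. Thanks to the randomness of the samples, a standard rank argument shows that $O(m+\log(1/\delta))$ samples suffice within each pass for the $m$-th pivot to appear, so the total sample count across the $k$ passes is $O(nk) \le O(qn)$, and at the end of pass $k$ all pivots collapse to explicit values of $x$.

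The key step is the memory accounting. After pass $i$, the $im$ stored pivots each occupy about $(k-i)m + O(1)$ bits, giving total memory $im\cdot (k-i)m = i(k-i)\,n^2/k^2$; this peaks at roughly $n^2/4$ near $i = k/2$. Consequently, when $k$ is a small constant (say $q \le 16$) this block scheme already meets the claimed $O(n^2/\log q)$ bound. For larger $k$ (i.e.\ larger $q$), I would prepend a brute-force outer loop over $s$ chosen coordinates: use $2^s\le q$ passes to try each assignment in turn, and on each guess run the block scheme on the remaining $n-s$ coordinates with the remaining $q/2^s$ passes. Choosing $s$ to balance $(n-s)^2/\min(4,\log(q/2^s))$ against $n^2/\log q$ yields the claimed memory bound across the whole range $q\le 2^n$, smoothly interpolating between plain Gaussian elimination at $q = O(1)$ and exhaustive search at $q = 2^n$. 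The main obstacle I expect is exactly the peak memory at the ``middle'' passes $i\approx k/2$ of the block scheme: the in-place back-substitution is what prevents it from blowing up by another factor of two, and the brute-force layer is what prevents it from exceeding $n^2/\log q$ in the regime where $\log q$ is large.
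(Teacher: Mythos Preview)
There is a genuine gap. Your block scheme stores all $im$ pivots simultaneously after pass $i$, and you correctly compute that the memory peaks at $i(k-i)\,n^2/k^2 \approx n^2/4$ near $i=k/2$. But this peak is $\Theta(n^2)$ \emph{independent of $k$}, so the scheme never beats ordinary one-pass Gaussian elimination on memory. Your brute-force outer loop cannot close the gap in the interesting regime: to drive $(n-s)^2/4$ down to $O(n^2/\log q)$ you would need $s \ge n\bigl(1 - O(1/\sqrt{\log q})\bigr)$, and then $2^s$ already exceeds $q$ unless $\log q$ is within a constant factor of $n$. So for, say, polynomial $q$, the proposal yields $\Theta(n^2)$ memory rather than $O(n^2/\log q)$.

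The paper's construction takes a fundamentally different route: it does \emph{not} retain all intermediate pivots. Setting $K=\Theta(\log q)$, it views the $n\times n$ system as a $K\times K$ matrix of $(n/K)\times(n/K)$ blocks and writes block-wise Gaussian elimination as a depth-$K$, fan-in $\le 4$ circuit whose wires each carry a single block of $c=(n/K)^2$ bits. The branching program then evaluates each output block by a \emph{depth-first recursion} over this circuit: to evaluate a gate, recursively evaluate its (at most four) inputs one at a time, overwriting the same workspace. At any moment only $O(K)$ blocks live on the recursion stack, so the memory is $O\bigl(K\cdot(n/K)^2\bigr)=O(n^2/K)=O(n^2/\log q)$; the price is $4^{O(K)}=\mathrm{poly}(q)$ passes over the sample stream to redo subcomputations from scratch. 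This ``recompute instead of store'' trick is exactly what your proposal is missing: you are spending memory to save passes by caching all pivots, whereas the target bound demands the opposite trade --- spending passes (re-reading the input exponentially in $K$ many times) so that only $O(\log q)$ blocks are ever materialized at once.
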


\section{Technical Overview}
In this section, we will present the road map for our paper, including the difficulties and a sketch of our main ideas for bypassing them.

\subsection{Recap of the One-Pass Lower Bound}

Both our work and the previous work on the two-pass learning bound \cite{garg2019time} are based on the proof techniques for one-pass lower bound \cite{raz2017time,GargRT18-extractor}. Let us sketch its main idea. 

\paragraph*{Computational Model.} The proof models the computation as a read-once branching program: The input to the branching program is a sequence of pairs $(a_1, b_1), (a_2, b_2), \dots, (a_T, b_T) \in A \times \{-1, 1\}$. Each of them represents an equation $M(a_i, x) = b_i$. These $a_1, a_2, \dots, a_T \in A$ are sampled uniformly at random, while $b_1, b_2, \dots, b_T$ are all generated according to the hidden vector $x \in X$. (In our paper, for simplicity, we will identify $X$ with $\{0,1\}^n$.) We label the layers of the branching program by $0, 1, 2, \dots, T$. Let $v$ be the current vertex. Initially, it is equal to the starting vertex of the branching program at layer $0$. At each layer $i$, we read $(a_{i + 1}, b_{i + 1})$ and move $v$ along one corresponding edge to layer $i + 1$. At the end of the computation, $v$ will reach the last layer $T$. Then, it outputs a vector $x_v$. We say it is successful if and only if $x_v = x$.

The length of this branching program is the number of samples $T$. The width is $2^S$ where $S \leq n^2/16$ is the memory bound. We want to prove that when $S \leq n^2 / 16$ and $T \leq 2^{n / 16}$, the program cannot succeed with constant probability.

\paragraph*{Main Idea of the One-Pass Lower Bound.} When outputting $x_v$, the optimal strategy is to output the $x'$ with the highest posterior probability $\doubleP_{x \mid v}(x')$. Intuitively, if the distribution $\doubleP_{x\mid v}$ is very spread, measured by its $\ell_2$ norm, the vertex $v$ will have a small chance of answering $x$ correctly. We will define its $\ell_2$ norm as
$$\|\doubleP_{x \mid v}\|_2 \coloneqq \Ex_{x' \sim X} \left[\doubleP_{x \mid v}^2(x')\right]^{\frac{1}{2}}.$$
Initially, as $X = \{0,1\}^n$, the uniform prior $\doubleP_x$ has $\|\doubleP_x\|_2 = 2^{-n}$.
As $v$ moves along the computational path, the posterior distribution $\doubleP_{x \mid v}$ evolves. In the end, one can show that, for some $\epsilon > 0$, if $\|\doubleP_{x \mid v}\|_2 \leq 2^{\epsilon n} \cdot 2^{-n}$, the probability that $v$ answers $x$ correctly will be less than $2^{-\Theta(n)}$. (Think of $\epsilon$ as a small constant, say $\epsilon = 0.1$.)

Hence, to upper bound the success probability, it is sufficient to upper bound the probability that we ever reach a vertex $v$ with $\|\doubleP_{x \mid v}\|_2 > 2^{\epsilon n} \cdot 2^{-n}$ on our computational path. To show this, we will enumerate all target vertices $t$ with $\|\doubleP_{x \mid t}\|_2 > 2^{\epsilon n} \cdot 2^{-n}$ and prove that the probability of reaching a fixed $t$ is less than $2^{-\Theta(n^2)}$. Then, our desired upper bound follows from a union bound over all $2^{n^2 / 16 + O(n)}$ possibilities of vertex $t$. 

\paragraph*{Progress Measure.} To study the probability of reaching $t$, we need to look at the similarity between our current posterior $\doubleP_{x \mid v}$ and the target $\doubleP_{x \mid t}$, captured by their inner product $\langle \doubleP_{x \mid v}, \doubleP_{x \mid t} \rangle$, defined as 
$$\left\langle \doubleP_{x \mid v}, \doubleP_{x \mid t} \right\rangle \coloneqq \Ex_{x'\sim X}\left[\doubleP_{x \mid v}(x') \cdot \doubleP_{x \mid t}(x')\right] = \frac{1}{|X|}\sum_{x' \in X} \doubleP_{x \mid v}(x') \cdot \doubleP_{x \mid t}(x') .$$
This measures our progress towards $t$. To show that we reach $t$ with a very small probability, we will show that for a uniform random $a \in A$, reading equation $M(a, x) = b$ will, w.h.p., makes little progress. 

Let the posterior distribution after reading this equation be $\doubleP^{(a,b)}_{x \mid v}$. Then after normalization, the similarity becomes 
$$\left\langle \doubleP^{(a,b)}_{x \mid v}, \doubleP_{x\mid t}\right\rangle = 
\frac{1}{|X|} \cdot \sum_{\substack{x' \in X \\ M(a,x') = b}} \doubleP_{x\mid v}(x') \cdot \doubleP_{x \mid t}(x') \Bigg/ \sum_{\substack{x' \in X \\ M(a,x') = b}} \doubleP_{x\mid v}(x').$$

We say $a \in A$ cuts $\doubleP_{x \mid v}$ evenly if $$\sum_{x' \in X : M(a, x') = 0} \doubleP_{x\mid v}(x') \approx \sum_{x' \in X : M(a, x') = 1} \doubleP_{x\mid v}(x').$$ Similarly, we say $a \in A$ cuts the point-wise product $\doubleP_{x \mid v} \cdot \doubleP_{x \mid t}$ evenly if this holds for $\doubleP_{x \mid v} \cdot \doubleP_{x \mid t}$ instead of $\doubleP_{x\mid v}$.

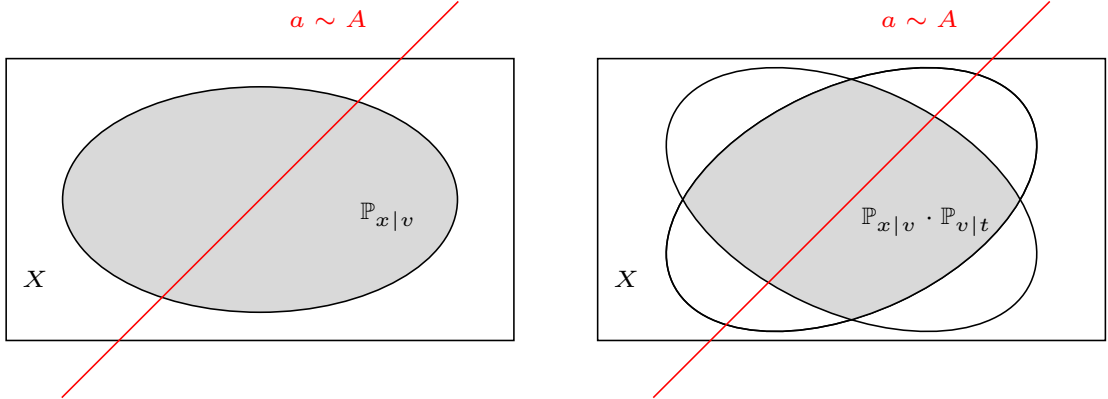
\begin{figure}
    \centering
     \begin{subfigure}[b]{0.4\textwidth}
         \centering
           \scalebox{1.5}{
            \begin{tikzpicture}
        
        % Define the rectangle and the oval
        \node[rectangle, draw, minimum width=4.5cm, minimum height=2.5cm] (rect) {};
        \node[ellipse, draw, fill=gray!30, minimum width=3.5cm, minimum height=2cm] (oval) at (rect.center) {};
        
        % Draw the red line marked with the letter 'a'
        \coordinate  (aa) at ($(rect.south west) + (0.5cm, -0.5cm)$) {};
        \coordinate  (ab) at ($(rect.north east) + (-0.5cm, 0.5cm)$) {};
        \draw[red, decoration={markings, mark=at position 0.9 with {\node[red, xshift = -0.8cm, yshift = 0.2cm,font=\tiny] (a) {$a \sim A$};}}, postaction={decorate}] (aa) -- (ab);
        
        % Add the letter 'x' next to the oval
        \node[right=-1cm of oval, yshift = -0.15cm, font=\tiny] (x) {$\doubleP_{x \mid v}$};
        \node[right=-4cm of oval, yshift = -0.7cm,font=\tiny] (x) {$X$};
        
        \end{tikzpicture}}
     \end{subfigure}
     \hspace{1cm}
     \begin{subfigure}[b]{0.4\textwidth}
         \centering
           \scalebox{1.5}{
            \begin{tikzpicture}

        \node[ellipse, minimum width=3.5cm, minimum height=2cm] (oval) at (rect.center) {};
% Define the rectangle and the oval
\node[rectangle, draw, minimum width=4.5cm, minimum height=2.5cm] (rect) {};
\begin{scope}
  \clip[draw] (rect.center) ellipse [x radius=1.75cm, y radius=1cm, rotate=25];
  \fill[gray!30] (rect.center) ellipse [x radius=1.75cm, y radius=1cm, rotate=-25];
\end{scope}
\draw (rect.center) ellipse [x radius=1.75cm, y radius=1cm, rotate=25];
\draw (rect.center) ellipse [x radius=1.75cm, y radius=1cm, rotate=-25];

% Draw the red line marked with the letter 'a'
\coordinate  (aa) at ($(rect.south west) + (0.5cm, -0.5cm)$) {};
\coordinate  (ab) at ($(rect.north east) + (-0.5cm, 0.5cm)$) {};
\draw[red, decoration={markings, mark=at position 0.9 with {\node[red, xshift = -0.8cm, yshift = 0.2cm,font=\tiny] (a) {$a \sim A$};}}, postaction={decorate}] (aa) -- (ab);

% Add the letter 'x' next to the oval
\node[right=-2.3cm of rect, yshift = -0.2cm,font=\tiny] (x) {$\doubleP_{x \mid v} \cdot \doubleP_{v \mid t}$};
\node[right=-4cm of oval, yshift = -0.7cm,font=\tiny] (x) {$X$};

\end{tikzpicture}}
     \end{subfigure}
    \caption{A uniformly random equation $a \sim A$ will w.h.p. cut both $\doubleP_{x \mid v}$ and $\doubleP_{x \mid v} \cdot \doubleP_{x \mid t}$ almost evenly into two parts: (1) those $x \in X$ with $M(a,x) = 0$ and (2) those $x \in X$ with $M(a, x) = 1$. }
    \label{fig:balance-cut}
\end{figure}

As shown in  \Cref{fig:balance-cut}, due to the extractor property of $M$, when $\doubleP_{x\mid v}$ and $\doubleP_{x \mid v} \cdot \doubleP_{x \mid t}$ are spread enough, a \emph{uniformly random} $a \in A$ cuts both  $\doubleP_{x\mid v}$ and $\doubleP_{x \mid v} \cdot \doubleP_{x \mid t}$ in half with high probability ($1 - 2^{-\Theta(n)}$). Hence each time we see a random equation, it will most likely halve both the numerator and the denominator, which will not help us make progress. Suppose we are unlucky. the rare event with probability $2^{-\Theta(n)}$ happens. Let us see how the similarity might change:
\begin{enumerate}
    \item If $a \in A$ cuts $\doubleP_{x \mid v}$ unevenly: When the denominator $\sum_{x' \in X, M(a, x') = b} \doubleP_{x\mid v}(x') < 2^{-c}$, the similarity may be larger by a factor of $2^c$. This causes huge progress. For now, we ignore this issue and assume it never arises. We will later handle it by designing certain ``stopping rules''. 
    \item If $a \in A$ cuts $\doubleP_{x \mid v} \cdot \doubleP_{x \mid t}$ unevenly but still cut $\doubleP_{x \mid v}$ evenly: In this case, the worst case is that the numerator does not decrease at all, while the denominator is still halved. Then the similarity doubles.
\end{enumerate}
Initially, the similarity between uniform prior and target $t$ is $\left\langle \doubleP_x, \doubleP_{x \mid t}\right\rangle = 2^{-2n}$. In order to reach the target node $t$, which has similarity $\langle \doubleP_{x \mid t}, \doubleP_{x \mid t} \rangle = \|\doubleP_{x \mid t}\|_2^2 \geq 2^{2 \epsilon n} \cdot 2^{-2n}$ with itself, the second case has to happen $2 \epsilon n$ times. Intuitively, this tells us the probability of reaching $t$ is less than $\left(2^{-\Theta(n)}\right)^{2\epsilon n} = 2^{-\Theta(n^2)}$.

\paragraph*{Stopping Rules.} Now we turn to handle the first case. Although it happens with probability at most $2^{-\Theta(n)}$, it only needs to happen once to make a huge progress. Thus, the $2^{-\Theta(n)}$ probability is not enough to afford the union bound over all $2^{\Theta(n^2)}$ many targets $t$.

Luckily, we do not have to do union-bound. Observe that whether $a \in A$ cuts $\doubleP_{x \mid v}$ evenly is independent of the target $t$. Whenever we see an equation $a \in A$ that cuts $\doubleP_{x \mid v}$ unevenly in our computational path, we can stop right away. Since there are $T \approx 2^{n / 16}$ layers in our branching program, a simple union bound over them shows that the overall probability of stopping is still $2^{-\Theta(n)}$. Moreover, if we did not stop, the previous argument shows that we reach any target vertex $t$ with probability $2^{-\Theta(n^2)}$. Overall, our algorithm succeeds with a very small probability. 

But this is not the only stopping rule. Recall that a uniformly random $a \in A$ cuts both $\doubleP_{x \mid v}$ and $\doubleP_{x \mid v} \cdot \doubleP_{x \mid t}$ evenly w.h.p. (by extractor property) only when they are spread enough. We also need stopping rules to guarantee this. Formally, we have the following stopping rules. 

\begin{itemize}
    \item (Bad Edge) If $a$ does not cut $\doubleP_{x \mid v}$ evenly, we stop.
    \item (Significant State) If $\|\doubleP_{x \mid v}\|_2 \geq 2^{\epsilon n} \cdot 2^{-n}$, we stop.

    This guarantees that the distribution of $\doubleP_{x \mid v}$ will be spread enough for the extractor property.
    \item (Significant Value) If $\doubleP_{x \mid v}(x) > 2^{\epsilon n} \cdot 2^{-n}$, we stop. 
    
    After applying this rule, we know $\|\doubleP_{x \mid v}\|_{\infty} \leq 2^{\epsilon n}$. Since $\|\doubleP_{x \mid v} \cdot \doubleP_{x \mid t}\|_2 \leq \|\doubleP_{x \mid v}\|_{\infty} \cdot \|\doubleP_{x \mid t}\|_2$, this guarantees that $\doubleP_{x \mid v} \cdot \doubleP_{x \mid t}$ will be spread enough for the extractor property.  
\end{itemize}

%Emphasize bad edges here. Also, explain that a small potential implies a small probability of reaching a significant state.

\subsection{The Proof Framework: Two Passes} \label{subsec:framwork-twopass}

Our work builds on the approach taken by the previous two-pass lower bound \cite{garg2019time}. We will now sketch their proof framework.

\paragraph*{Computational Model. } A two pass branching program reads its input $(a_1, b_1), (a_2, b_2), \dots, (a_T, b_T)$ twice in the \emph{exact same order}. At the first pass, the starting vertex is $v_0$, and after reading its input, the computational path reaches a vertex $v_1$ at the end of the first pass (which is also the first layer of the second pass). In the second pass, the computational path starts from $v_1$ and reaches $v_2$ at the last layer after reading the input again. Then it will output a vector $x_{v_2} \in X$. 

For any two vertices $u$ and $v$ in the program, we use $u \wt v$ to denote the following event (over $x, a_1, a_2, \dots, a_T$): Imagine that we set the starting vertex of the branching program at $u$, the path from $u$ determined by $x, a_1, a_2, \dots, a_T$ reaches $v$ without stopping.
\begin{itemize}
    \item For a vertex $v_1$ in the last layer of the first pass, $v_0 \wt v_1$ means that the first pass ends at $v_1$. 
    \item For any vertex $v_1$ in the last layer of the first pass, and vertex $v_2$ in the last layer of the second pass, $v_1 \wt v_2$ means that the second pass will end at $v_2$ if it were starting at $v_1$. 
\end{itemize}

%In the second pass, for any $v_1$ (not necessarily the end vertex of the first pass) in the first layer of the second pass and $v_2$ in the last layer, we will use the event $v_1 \wt v_2$, to denote that the computational path from $v_1$ reaches $v_2$ on input $x, a_1, a_2, \dots, a_T$. 

\paragraph*{First Attempt.} Moving from one pass to two passes, one might consider the following natural approach: First, apply the above argument to the first pass and conclude that, at the end of the first pass, the similarity $\langle \doubleP_{x \mid v}, \doubleP_{x \mid t}\rangle$ is small. Second, apply it to the second pass and argue that such similarity grows slowly in the second pass too. 

However, such a direct approach would not work. Consider a program that (1) magically learns $x$, (2) remembers $x \oplus a_1$ (thinking of $x$ and $a_1$ as bit strings), and (3) forgets $x$ and $a_1$ at the end of the first pass. Conditioning on what $v$ remembers, $x \oplus a_1$, the distribution $\doubleP_{x \mid v}$ is uniformly random, just like the prior $\doubleP_{x}$. This is because $x$ is encrypted by the one-time pad using $a_1$. So in the eyes of our analysis, this magical first pass is no different from a trivial first pass. If we do not rule out the possibility of such a magical first pass, what could happen in the second pass is that, after seeing $a_1$ again, the program combines $a_1$ with its knowledge of $x \oplus a_1$ and immediately decodes $x$.

\paragraph*{Remembering the First Pass. } To prove any non-trivial lower bound for two passes, it is necessary to rule out such a program. This program shows that analyzing two passes separately would not work (at least for this specific argument). Therefore, we will analyze two passes together. 

The first observation is that one can w.l.o.g. assume that, when at the $i$-th layer of the second pass, the program knows which vertex it was at in the $i$-th layer of the first pass. This is because the program can keep a copy of the first pass in its memory, which only blows up the memory by a factor of two.

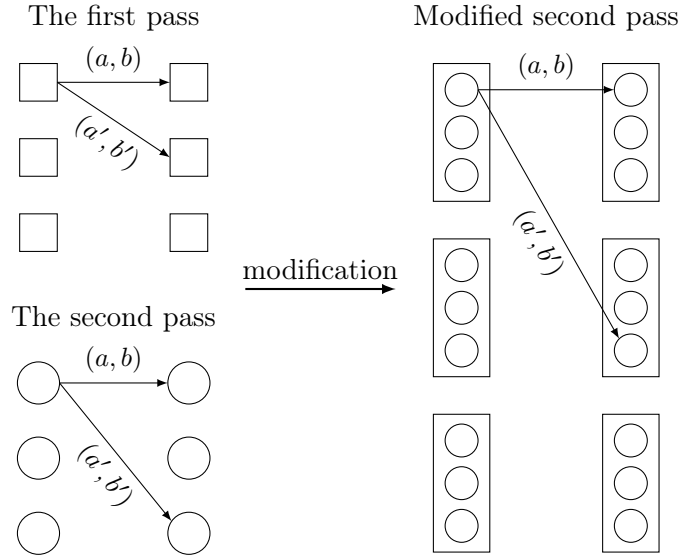
\begin{figure}[h]
    \centering
    \begin{tikzpicture}
        % Nodes
        % \draw (0,0) circle (0.1);
        \coordinate (pass-1-bottom-left) at (-3, 1);
        \coordinate (sqr-size) at (0.5, 0.5);
            \coordinate (sqr-size-x) at (0.5, 0);
            \coordinate (sqr-size-y) at (0, 0.5);
        \def\circr{0.28}
        \coordinate (circ-o) at (0.25, 0.25);
        \coordinate (step-size-x) at (2, 0);
        \coordinate (step-size-y) at (0, 1);
        % draw rectangle grids of three rows and two columns 
        \coordinate (p1-00) at ($(pass-1-bottom-left)+0*(step-size-x)+0*(step-size-y)$);
        \draw (p1-00) rectangle ($(p1-00)+(sqr-size)$);
        \coordinate (p1-01) at ($(pass-1-bottom-left)+0*(step-size-x)+1*(step-size-y)$);
        \draw (p1-01) rectangle ($(p1-01)+(sqr-size)$);
        \coordinate (p1-02) at ($(pass-1-bottom-left)+0*(step-size-x)+2*(step-size-y)$);
        \draw (p1-02) rectangle ($(p1-02)+(sqr-size)$);
        \coordinate (p1-10) at ($(pass-1-bottom-left)+1*(step-size-x)+0*(step-size-y)$);
        \draw (p1-10) rectangle ($(p1-10)+(sqr-size)$);
        \coordinate (p1-11) at ($(pass-1-bottom-left)+1*(step-size-x)+1*(step-size-y)$);
        \draw (p1-11) rectangle ($(p1-11)+(sqr-size)$);
        \coordinate (p1-12) at ($(pass-1-bottom-left)+1*(step-size-x)+2*(step-size-y)$);
        \draw (p1-12) rectangle ($(p1-12)+(sqr-size)$);

        % draw the same rectangle grids for pass two
        \coordinate (pass-2-bottom-left) at (-3, -3);
        \coordinate (p2-00) at ($(pass-2-bottom-left)+0*(step-size-x)+0*(step-size-y)+(circ-o)$);
        
        \draw ($(p2-00)$) circle (\circr);
        \coordinate (p2-01) at ($(pass-2-bottom-left)+0*(step-size-x)+1*(step-size-y)+(circ-o)$);
        \draw ($(p2-01)$) circle (\circr);
        \coordinate (p2-02) at ($(pass-2-bottom-left)+0*(step-size-x)+2*(step-size-y)+(circ-o)$);
        \draw ($(p2-02)$) circle (\circr);
        \coordinate (p2-10) at ($(pass-2-bottom-left)+1*(step-size-x)+0*(step-size-y)+(circ-o)$);
        \draw ($(p2-10)$) circle (\circr);
        \coordinate (p2-11) at ($(pass-2-bottom-left)+1*(step-size-x)+1*(step-size-y)+(circ-o)$);
        \draw ($(p2-11)$) circle (\circr);
        \coordinate (p2-12) at ($(pass-2-bottom-left)+1*(step-size-x)+2*(step-size-y)+(circ-o)$);
        \draw ($(p2-12)$) circle (\circr);
        % draw arrows
        \coordinate (arrow1-p) at ($(p1-02)+(sqr-size-x)+0.5*(sqr-size-y)$);
        \coordinate (arrow1-q) at ($(p1-12)+0.5*(sqr-size-y)$);
        \draw[-latex] (arrow1-p) -- (arrow1-q);
        \node[anchor=south,font=\small] at ($(arrow1-p)!0.5!(arrow1-q)$) {$(a, b)$};
        
        \coordinate (arrow2-p) at ($(p1-02)+(sqr-size-x)+0.5*(sqr-size-y)$);
        \coordinate (arrow2-q) at ($(p1-11)+0.5*(sqr-size-y)$);
        \draw[-latex] (arrow2-p) -- (arrow2-q);
        \node[anchor=north,font=\small,rotate=-36] at ($(arrow2-p)!0.56!(arrow2-q)$) {$(a', b')$};
        
        \coordinate (arrow3-p) at ($(p2-02)+(0:\circr)$);
        \coordinate (arrow3-q) at ($(p2-12)+(180:\circr)$);
        \draw[-latex] (arrow3-p) -- (arrow3-q);
        \node[anchor=south,font=\small] at ($(arrow3-p)!0.5!(arrow3-q)$) {$(a, b)$};
        
        \coordinate (arrow4-p) at ($(p2-02)+(0:\circr)$);
        \coordinate (arrow4-q) at ($(p2-10)+(145:\circr)$);
        \draw[-latex] (arrow4-p) -- (arrow4-q);
        \node[anchor=north,font=\small,rotate=-53] at ($(arrow4-p)!0.57!(arrow4-q)$) {$(a', b')$};

    % modified pass

        \def\mpleftx{2.5}
        \def\mpbottomy{-3}
        \def\mptopy{3.5}
        \def\mpgapy{0.5}
        \def\mprectsizey{{((\mptopy-\mpbottomy)-2*\mpgapy)/3}}
        \def\mprectsizex{0.75}

        \coordinate (mp-bottom-left) at (\mpleftx,\mpbottomy); 
        \coordinate (mp-top-left) at (\mpleftx,\mptopy);
        \coordinate (mp-gap-size-x) at ($(1.5,0)$);
        \coordinate (mp-gap-size-y) at ($(0,\mpgapy)$);

        \coordinate (mp-rect-size-x) at ($(\mprectsizex,0)$);
        \coordinate (mp-tmp1) at ($(mp-top-left)-(mp-bottom-left)$);
        % \coordinate (mp-sum-rect-y) at ($(mp-tmp1)-2*(mp-gap-size-y)$);
        \coordinate (mp-rect-size-y) at (0, \mprectsizey);
        \coordinate (mp-rect-size) at ($(mp-rect-size-x)+(mp-rect-size-y)$);

        \def\mpcircmarginy{0.13}
        \def\mpcircgapy{0.13}
        \def\mpcircr{{(\mprectsizey-2*\mpcircmarginy-2*\mpcircgapy)/3/2}}

        \coordinate (mp-step-size-x) at ($(mp-gap-size-x)+(mp-rect-size-x)$);
        \coordinate (mp-step-size-y) at ($(mp-gap-size-y)+(mp-rect-size-y)$);
        
        \
        \coordinate (mp-circ-margin-y) at (0,\mpcircmarginy);
        \coordinate (mp-circ-gap-y) at (0, \mpcircgapy);
        \coordinate (mp-circr-y) at (0, \mpcircr);
        
        \coordinate (mp-circ-o) at ($0.5*(mp-rect-size-x)+ (mp-circ-margin-y)+(mp-circr-y)$);
        \coordinate (mp-circ-step) at ($(mp-circ-gap-y)+2*(mp-circr-y)$);

        \foreach \i in {0, 1} {
            \foreach \j in {0, 1, 2} {
                \coordinate (rect-corner) at ($(mp-bottom-left)+\i*(mp-step-size-x)+\j*(mp-step-size-y)$);
                \draw (rect-corner) rectangle +(mp-rect-size);
                \foreach \k in {0, 1, 2} {
                    \coordinate (circo) at ($(rect-corner)+(mp-circ-o)+\k*(mp-circ-step)$);
                    \draw (circo) circle (\mpcircr);
                }
            }
        }
        \coordinate (rect-corner1) at ($(mp-bottom-left)+0*(mp-step-size-x)+2*(mp-step-size-y)$);
        \coordinate (circo1) at ($(rect-corner1)+(mp-circ-o)+2*(mp-circ-step)$);

        \coordinate (rect-corner2) at ($(mp-bottom-left)+1*(mp-step-size-x)+2*(mp-step-size-y)$);
        \coordinate (circo2) at ($(rect-corner2)+(mp-circ-o)+2*(mp-circ-step)$);
        
        \coordinate (rect-corner3) at ($(mp-bottom-left)+1*(mp-step-size-x)+1*(mp-step-size-y)$);
        \coordinate (circo3) at ($(rect-corner3)+(mp-circ-o)+0*(mp-circ-step)$);
        
        \coordinate (mp2-tmp1) at (\mpcircr,0);
        \coordinate (mp2-tmp2) at (0,\mpcircr);
        
        \coordinate (mp-arrow1-p) at ($(circo1)+(mp2-tmp1)$);
        \coordinate (mp-arrow1-q) at ($(circo2)-(mp2-tmp1)$); 
        \draw[-latex] (mp-arrow1-p) -- (mp-arrow1-q)
        ;
        \node[anchor=south,font=\small] at ($(mp-arrow1-p)!0.5!(mp-arrow1-q)$) {$(a, b)$};
        
        \coordinate (mp-arrow2-p) at ($(circo1)+(mp2-tmp1)$);
        \coordinate (mp-arrow2-q) at ($(circo3)-0.70710678118654752440084436210485*(mp2-tmp1)+0.70710678118654752440084436210485*(mp2-tmp2)$); 
        \draw[-latex] (mp-arrow2-p) -- (mp-arrow2-q);
        
        \node[anchor=north,font=\small,rotate=-60] at ($(mp-arrow2-p)!0.57!(mp-arrow2-q)$) {$(a', b')$};
        
        % texts
        \node[anchor=south] at (-1.75, 3.8) {The first pass};
        \node[anchor=south] at (-1.75, -0.2) {The second pass};
        \node[anchor=south] at (4, 3.8) {Modified second pass};
        %draw a arrow from (0,0.5) to (2,0.5) with width 20pt
        \draw[-latex,thick,label={a}] (0,0.5) -- (2,0.5);
        \node[anchor=south] at (1, 0.5) {modification};
    \end{tikzpicture}
    \caption{Remembering the first pass.} \label{fig:remember}
\end{figure}

More formally, we modify the second pass (See \Cref{fig:remember}), so that every vertex is now a pair of an original first-pass vertex and an original second-pass vertex. The initial starting vertex of the second pass becomes $(v_0, v_1)$. When the program reads the first equation $(a_1, b_1)$, if in the first pass $v_0 \wt v'$ and in the second pass $v_1 \wt v$, in the modified second pass, $(v_0, v_1) \wt (v', v)$. For the exact details of this modification, please refer to \Cref{sec:two-pass-setup}. 

Now in the modified program, every modified vertex $v$ in the second pass corresponds to (``remembers'') a unique vertex $v'$ in the first pass. The event $v_1 \wt v$ now implies $v_0 \wt v'$, in the sense that for any input $x, a_1, a_2, \dots, a_T$ such that $v_1 \wt v$ happens, the event $v_0 \wt v'$ must also happen.

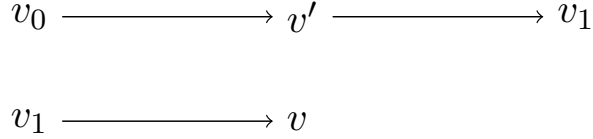
\begin{figure}[H]
    \centering
    \scalebox{1.4}{
    \begin{tikzpicture}
  % Nodes
  \node (v0) {$v_0$};
  \node[right=2cm of v0] (v') {$v'$}; % Adjust horizontal distance here
  \node[right=2cm of v'] (v1) {$v_1$};
  \node[below=0.5cm of v0] (v1_second) {$v_1$}; % Adjust vertical distance here
  \node[right=2cm of v1_second] (v) {$v$};

  % Edges
  \draw[->] (v0) -- (v');
  \draw[->] (v') -- (v1);
  \draw[->] (v1_second) -- (v);
\end{tikzpicture}}
    \caption{The computational path of two passes.} \label{fig:two-pass-path}
\end{figure}

Furthermore, we require every vertex in the second pass to remember $v_1$, the starting vertex of the second pass. This can be achieved with a similar modification. By this modification, for any vertex $v$ in the second pass (which remembers its corresponding vertex $v'$ in the first pass), 
\begin{align*}
(v_0 \wt v) &= (v_0 \wt v') \land (v' \wt v_1) \land (v_1 \wt v) \\ &= (v_1 \wt v) \land (v' \wt v_1).
\end{align*}
Here the last equality holds because $v_1 \wt v$ implies $v_0 \wt v'$, and $v_1$ is unique since $v$ remembers it. 

When $v = v_2$ for some vertex $v_2$ in the last layer (of the second pass), this simplifies to 
\begin{align*}
(v_0 \wt v_2) &= (v_0 \wt v_1) \land (v_1 \wt v_2) \\
&= (v_1 \wt v_2).
\end{align*}

\paragraph*{Progress Measure.} When the program reaches $v_2$, the optimal strategy for it is to output the $x' \in X$ with the highest $\doubleP_{x \mid v_0 \wt v_2}(x')$. Note that this equals $\doubleP_{x \mid v_1 \wt v_2}(x')$ (by the equation above). 

Hence, similar to the one-pass case, when the distribution $\doubleP_{x \mid v_1 \wt v_2}$ is spread enough ($\|\doubleP_{x \mid v_1 \wt v_2}\|_2 \geq 2^{\epsilon n} \cdot 2^{-n}$), vertex $v_2$ cannot answer $x$ correctly. To upper bound the probability of answering $x$ correctly, we only need to upper bound the probability of reaching any target state $t$ with $\|\doubleP_{x \mid v_1 \wt t}\|_2 \geq 2^{\epsilon n} \cdot 2^{-n}$. \\

Initially, $\doubleP_{x \mid v_1 \wt v_1} = \doubleP_x$. As the current vertex $v$ moves along the computational path from $v_1$, the posterior $\doubleP_{x \mid v_1 \wt v}$ evolves similarly to the one-pass case. Let the similarity $\langle \doubleP_{x \mid v_1 \wt v}, \doubleP_{x \mid v_1 \wt t}\rangle$ be the progress measure, and let $\doubleP^{(a,b)}_{x \mid v_1 \wt v}$ be the posterior after reading a new equation $(a,b)$. We have,
$$\left\langle \doubleP^{(a,b)}_{x \mid v_1 \wt v}, \doubleP_{x\mid v_1 \wt t}\right\rangle = 
\frac{1}{|X|}\sum_{\substack{x' \in X \\ M(a,x') = b}} \doubleP_{x\mid v_1 \wt v}(x') \cdot \doubleP_{x \mid v_1 \wt t}(x') \Bigg/ \sum_{\substack{x' \in X \\ M(a,x') = b}} \doubleP_{x\mid v_1 \wt v}(x').$$

Until now, this seems like a natural generalization of the one-pass case. However, for one pass, we heavily rely on the fact that $a \in A$ is uniformly random. In the second pass, we no longer have this property. For example, the program could simply remember $a_1 \in A$ from the first pass, then in the second pass, $a_1$ is completely deterministic, with no randomness at all.

\paragraph*{High-Probability Edges.} The previous work \cite{garg2019time} calls such $a_i \in A$ that is remembered by the program a \emph{high-probability edge}. Formally, for a vertex $v$ in the $i$-th layer of the second pass, we say that $a$ is a high-probability edge at $v$ (denoted by $a \in \High(v)$) if and only if 
$$\Pr[a_{i + 1} = a \mid v_0 \wt v] \geq 2^{\epsilon \cdot n} \cdot 2^{- n}.$$
Since these edges occur with too large probability (much higher than the uniform case, $2^{-n}$), we cannot simply stop when they cut distributions unevenly (like we did for one-pass).

\begin{enumerate}
    \item If this $a \in A$ cuts $\doubleP_{x \mid v_1 \wt v}$ unevenly: When the denominator $\sum_{x' \in X, M(a, x') = b} \doubleP_{x \mid v_1 \wt v}(x') < 2^{-c}$, the similarity might be larger by a factor of $2^c$, causing huge progress. For now, we ignore this issue and assume that it never arises. We will explain how we handle it in \Cref{subsec:new-counter}. 
    \item If this $a \in A$ cuts $\doubleP_{x \mid v_1 \wt v} \cdot \doubleP_{v_1 \wt t}$ unevenly, but still cuts $\doubleP_{x \mid v_1 \wt v}$ evenly: In this case, the worst case is the same as in one-pass. Namely, the numerator does not decrease, while the denominator is halved. Then the similarity at most doubles. \label{Item:High-prob-edge-Item-2}
    \end{enumerate}

Their key observation is the following. Intuitively, to remember a single $a \in A$, we need at least $\Omega(n)$ memory. Since the memory bound $S \leq n^2 / 16$, the program can only remember $O(n)$ many such $a$'s. So there can be at most $O(n)$ high probability edges. 

Hence, to handle the case in \Cref{Item:High-prob-edge-Item-2}, we observe that these $O(n)$ many high-probability edges only blow up the similarity by $2^{O(n)}$. Since initially similarity $\langle \doubleP_{x}, \doubleP_{x \mid v_1 \wt t}\rangle = 2^{-2n}$, and we want to prove that it would increase to $\|\doubleP_{x \mid v_1 \wt t}\|_2^2 \geq 2^{2 \epsilon n} \cdot 2^{-2n}$ with very small probability. As long as the constant hidden by big~$O$ is much smaller than $\epsilon$, this blow-up is negligible. We will pick the correct constants to ensure that this is indeed the case.

\subsection{New Ingredient: Bias Counters} \label{subsec:new-counter}

As mentioned in the first case, if a high probability edge $a$ cuts $\doubleP_{x \mid v_1 \wt v}$ unevenly, the similarity might grow a lot. First, we explain how the previous work \cite{garg2019time} gets around this issue. Then, we will introduce our new idea. This is the key idea for proving the tight memory lower bound for two passes. 

\paragraph*{Very-bad edge.} To get around this issue, they defined ``very-bad edges'', which is the high probability edges $a$ that cut $\doubleP_{x \mid v_1 \wt v}$ in a very biased way. Formally, $a \in \High(v)$ is ``very-bad'' if $$\sum_{\substack{x' \in X \\ M(a,x') = b}} \doubleP_{x \mid v_1 \wt v} (x') \leq 2^{-\sqrt{n}}.$$ 
\begin{itemize}
    \item On the one hand, when a high probability edge is not very-bad, it only blows up the similarity at most by a factor of $2^{\sqrt{n}}$. 
    
    If we set the memory bound $S$ to be $O(n^{3/2})$, there can only be $c \cdot \sqrt{n}$ many high probability edges for some constant $0 < c < 1$ (since remembering one needs $\Omega(n)$ memory). Then these $c \cdot \sqrt{n}$ many high probability not-very-bad edges can blow up the potential by at most $2^{c n}$. As long as $c \ll \epsilon$, this will be acceptable for our purpose.
    \item On the other hand, when an edge $(a,b)$ is very-bad, conditioning on we reached this vertex, $x$ is distributed as $\doubleP_{x \mid v_1 \wt v}$. Over the randomness of $x$, the probability that we traverse this edge $(a,b)$ instead of $(a,-b)$ is at most $2^{-\sqrt{n}}$.

    First of all, note this is completely independent of the target $t$. So we do not have to union bound over $t$. Then if we set the sample bound $T$ to be $2^{O(\sqrt{n})}$, we can stop immediately whenever we meet a very bad edge. For each step, we stop with probability $2^{-\sqrt{n}}$. By union-bound over all $T$ steps, we conclude that the overall stopping probability is still $2^{-\Omega(\sqrt{n})}$. 
\end{itemize}

Therefore, they can prove that for some constant $\epsilon > 0$, any two-pass algorithm with $S \leq \epsilon n^{3/2}$ memory and $T \leq 2^{\epsilon \sqrt{n}}$ samples cannot succeed with constant probability. 

\paragraph*{New Idea: Bias Counter.} Instead of stopping immediately at biased ``very-bad edges'', we introduce a counter $\cb$ to keep track of the accumulated biases: In the second pass, initially, when we were at the starting vertex $v_1$, we let $\cb \gets 0$. For any high probability edge $(a,b)$ from current vertex $v$ (satisfying $a \in \High(v)$), we say it is $\Delta$-biased if 
$$\sum_{\substack{x' \in X \\ M(a,x') = b}} \doubleP_{x \mid v_1 \wt v} (x') \in [2^{-\Delta-1}, 2^{-\Delta}).$$ 
Whenever we traverse a $\Delta$-biased edge, 
we will update our counter (roughly) by $$\cb \gets \cb + \Delta.$$ 
Note a $\Delta$-biased edge has to be a high probability edge. There can be at most $O(n)$ high probability edges when $S \leq n^2 / 16$. Hence we will make at most $O(n)$ such updates.

\begin{itemize}
    \item On one hand, when the counter $\cb \leq \epsilon \cdot n$, the high probability edges we have traversed can blow up the similarity by at most $2^{\epsilon n}$.  

    This follows almost directly from the definition of $\Delta$: We increase the counter by $\Delta$ if and only if we traversed a $\Delta$-biased edge. As we have discussed, such an edge would blow up the similarity by at most $2^{\Delta}$. Hence in total, they can blow up the similarity by at most $2^{\cb}$.
    \item On the other hand, the overall probability that $\cb > \epsilon \cdot n$ is small. If we stop whenever the counter exceeds the threshold $\epsilon \cdot n$, we can show the overall stopping probability will be small.
    
    This is because we traverse a $\Delta$-biased edge $(a,b)$ instead of the other edge $(a, -b)$ with probability at most $2^{-\Delta}$ over the randomness of $x$. To gain some intuition, let us think about two extreme cases: 
    \begin{itemize}
    \item Case 1: Each time, the counter increases a little, e.g., $\Delta \approx 100$. In this case, w.p. $1 - (2^{-100})^{(\epsilon/100)\cdot n} = 1-2^{-\epsilon n}$, the counter $\cb$ is going to increase like this for less than $(\epsilon/100) \cdot n$ times. 

    (For $\Delta \ll 100$, since there are at most $O(n)$ updates, as long as the constant hiding by this big-$O$ is much smaller than $\epsilon$, we can ignore these updates, as the total increase due to them is negligible compared to $\epsilon \cdot n$.)
    
    \item Case 2: Each time, the counter increases a lot, i.e., $\Delta = \delta n$ for $0 < \delta < 1$. In this case, w.p. $1 - (2^{-\delta n})^{(\epsilon/\delta)}=1-2^{-\epsilon n}$, the counter $\cb$ will increase like this for less than $\frac{\epsilon}{\delta}$ times. 
    \end{itemize}

In both cases, the counter $\cb$ will not overflow with high probability. To make $\cb$ larger than $\epsilon \cdot n$, it is necessary to have a sufficiently large number of (correspondingly) sufficiently large $\Delta$, and this is very unlikely. This resembles a famous quote:

\begin{displayquote}
\textit{``You can fool some of the people all of the time, and all of the people some of the time, but you can not fool all of the people all of the time.''--- Abraham Lincoln}
\end{displayquote}

Formally, we can show that for each layer $i \in [T]$, the counter overflows at layer $i$ with probability at most $2^{-\epsilon n}$. Together with a union bound over $T \ll 2^{\epsilon n}$ layers, we prove that the overall stopping probability for counter overflow is small.

\end{itemize}

This allows us to prove that for some constant $\epsilon' > 0$, any two-pass algorithm with $S \leq \epsilon' n^2$ memory and $T \leq 2^{\epsilon' n}$ samples cannot success with constant probability. This new idea is the key to proving a tight lower bound for two-pass learning.

In our actual proof, we will modify the program so that each vertex $v$ will ``remember'' a unique counter value $\cb(v)$. More details about this modification, the bias counter, and related stopping rules will be presented in \Cref{sec:two-pass-setup}.

\paragraph*{Potential Argument.} To implement the idea, we will introduce a new stopping rule: We stop whenever the counter $\cb$ exceeds $\epsilon \cdot n$. We have to prove that overall, we stop due to this new rule with a small probability. 

Unlike previous stopping rules, this new stopping rule is ``soft'', in the sense that when a rare event of probability $2^{-\Delta}$ happens, it does not stop right away. Instead, it accumulates such rare events and only stops when enough rare events have occurred. Previously, we only need to analyze stopping probability based on the randomness at the current step. But now, we have to look at the computation history and exploit the fact that we stop only when a lot of rare events have happened in history.

This makes it harder to analyze the stopping probability. Our main technical contribution to two-pass is to come up with a potential analysis resolving the issue. Roughly speaking, our potential function is defined as $\Phi \approx 2^{\cb}.$ Initially, the expectation of $\Phi$ is $1$ (as we have $\cb = 0$ at the starting vertex). Since in each step, $\cb$ increases by $\Delta$ with probability roughly $2^{-\Delta}$, the expectation of $\Phi$ (of the current state $v$) will be (almost) non-increasing. At the end of the computation, we can use Markov's inequality to bound the probability of $\Pr[\cb > \epsilon \cdot n]$ by $\frac{1}{2^{\epsilon n}}$.

We will give a more detailed overview and more intuitions at \Cref{subsec:potental-def-overview}.

\subsection{Transfer Lemma} \label{sec:overview-transfer}

However, the proof still has the last important missing piece. In \Cref{subsec:new-counter}, we argued intuitively that for a $\Delta$-biased edge $(a,b)$, with only $2^{-\Delta}$ probability over the randomness of $x$, we traverse $(a,b)$ instead of $(a,-b)$. But there is one important subtlety.

\paragraph{Informal discussion.} Note we defined the $\Delta$-biased edges w.r.t.~the posterior distribution $\doubleP_{x \mid v_1 \wt v}$. So we actually showed is the following:
\begin{quote}
For all starting vertex $v_1$ of the second pass, after $v_1 \wt v$, the computational path will then traverse a $\Delta$-biased edge from $v$ w.p. at most $2^{-\Delta}$. 
\end{quote}
But ideally, we want to argue about the two-pass branching program, starting from $v_0$. The ideal statement will be:
\begin{quote}
For the starting vertex $v_0$ of the first pass, after $v_0 \wt v$, the computational path will then traverse a $\Delta$-biased edge from $v$ w.p. at most $2^{-\Delta}$. 
\end{quote}

If $v = v_2$ is a vertex in the last layer of the second pass, as explained in \Cref{subsec:framwork-twopass}, we know that $v_0 \wt v_2$ is equivalent to $v_1 \wt v_2$. This is because the second pass remembers the first pass, so $v_1 \wt v_2$ can ``certify'' that the first pass indeed reaches $v_1$. 

But for stopping rules, it is crucial that we stop or update counters in the middle of the program. So $v$ is not always going to be in the last layer. Suppose $v$ is in the middle of the second layer and $v'$ is its corresponding vertex in the first layer. As explained in \Cref{subsec:framwork-twopass}, we have 
$$(v_0 \wt v) = (v_1 \wt v) \land (v' \wt v_1).$$
Hence the statement we showed differs from the ideal one. Intuitively, this is because $v_1 \wt v$ can only certify that $v_0 \wt v'$. We need extra arguments for controlling $v' \wt v_1$. This is why we need the transfer lemma, to transfer the statement we showed into the ideal statement we want.

This is the most technical part of \cite{garg2019time}. In \Cref{sec:two-pass-transfer}, we will frame this subtlety as an adaptive issue and give a slightly simplified proof together with a clean interpretation. This perspective enables us to generalize this lemma to multiple passes.

\paragraph*{Distribution Mismatch.} To explain this subtlety in full clarity, let us carefully examine the definition of being $\Delta$-biased. We say that a high probability edge $(a,b)$ is $\Delta$-biased if,
$$\sum_{\substack{x' \in X \\ M(a,x') = b}} \doubleP_{x \mid v_1 \wt v} (x') \leq [2^{-\Delta-1}, 2^{-\Delta}).$$
Note that the vector $x$ is sampled conditioning on the event $v_1 \wt v$. Hence it corresponds to the following random process:
\begin{itemize}
    \item First, sample a uniformly random $x \in X$ and let $v = v_1$.
    \item At the $i$-th step uniformly sample $a_i \in A$ and move $v$ along the edge $(a_i, M(a_i, x))$. 
\end{itemize}

The definition of $\Delta$-biased edges is saying that conditioning on this process reaches $v$, the probability that $M(a,x) = b$ is at most $2^{-\Delta}$. However, when we talk about our overall stopping probability, we are referring to a different random process:
\begin{itemize}
    \item First, sample a random $x \sim \doubleP_{x \mid v_0 \to v_1}$ and let $v = v_1$.
    \item At the $i$-th step uniformly sample $a_i \sim \Pr[a_i \mid v_0 \to v]$ and move $v$ along edge $(a_i, M(a_i, x))$. 
\end{itemize}

We can see there is a distribution mismatch between these two. Conditioning on we reached $v$, in the first process, the posterior distribution of $x$ is $\doubleP_{x \mid v_1 \wt v}$, while in the second process, the posterior distribution of $x$ is $\doubleP_{x \mid v_0 \wt v}$. As explained in \Cref{subsec:framwork-twopass}, in general for a vertex $v$, $(v_0 \wt v) = (v_1 \wt v) \land (v' \wt v_1)$. The posterior distribution at the end of the second process $$\doubleP_{x \mid v_0 \wt v} = \doubleP_{x \mid (v_1 \wt v) \land (v' \wt v_1)}$$  which is not only conditioning on (1) the event that the path from $v_1$ reaches $v$ but also on (2) the event that in the first pass, the path from $v'$ picks $v_1$ as the end vertex. 

The issue is that we ultimately want to prove that the stopping probability is small in the second process $v_0 \wt v$. But as $\Delta$-bias has been defined w.r.t. the first process, our previous argument only shows that the stopping probability is small in the first process $v_1 \wt v$. 

\paragraph*{Transfer Lemma.} To resolve this, \cite{garg2019time} introduced the following lemma, ``transferring'' an upper bound on the stopping probability of the first process $v_1 \wt v$ to an upper bound on the stopping probability of the second process $v_0 \wt v$. 

\paragraph*{Informal Statement:} For any two-pass algorithm, suppose for all fixed starting vertex $v_1$, the computational path from $v_1$ stops with probability less than $2^{-c n}$ for some $0 < c < 1$. The computational path from $v_0$ will stop in the second pass with probability less than $2^{-(c -\epsilon) n}$ for some $0 < \epsilon < 1$. 

\paragraph*{Our Perspective.} Note the assumption of this lemma says that \emph{for all fixed $v_1$}, over the randomness of $x$ and $a_1, a_2, \dots, a_T$, the probability of stopping is small. The reason there is this distribution mismatch is that the vertex $v_1$ is not fixed beforehand but picked by the first pass adaptively (depending on $x, a_1, a_2, \dots, a_T$ as well). Hence this is really an issue caused by adaptivity. 

We will then interpret the proof as exploiting the fact that the first pass (a memory-bounded one-pass algorithm) has a limited ability to adaptively choose the worst $v_1$. This perspective, which seems missing from the previous works, is crucial for us to generalize this lemma to multiple passes. A more detailed overview will be given in \Cref{sec:two-pass-transfer}.

\subsection{Extending to Multiple Passes}

To extend this to multiple passes, we first adapt our modification on the program and stop rules beyond two passes. This turns out to be simple, and the details will be presented in \Cref{sec:multi-pass-setup}. Our new counter and its potential analysis extend to multi-pass smoothly as well. This will be included in \Cref{sec:mult-pass-potential}.

\paragraph*{Main Challenge: Transfer Lemma.} However, for the transfer lemma, the original proof crucially relies on the fact that the algorithm has only two passes. Subtle technical difficulties arise when generalizing it to multiple passes. We will give detailed intuitions and discussions in \Cref{sec:mutli-pass-transfer}.  

\paragraph*{Current Bottleneck: Transfer Lemma.} Currently, our lower bound stops at $O(\log \log n)$ many passes. This is because of the transfer lemma. To prove a lower bound for $q$-passes, we will use our result for $(q-1)$-passes in a black-box way. This implicitly applies the transfer lemma for $q - 1$ times. It turns out that each application of the transfer lemma is quite costly: In the informal statement in \Cref{sec:overview-transfer}, the original $2^{-cn}$ bound on stopping probability is demoted to a $2^{-(c - \epsilon)n}$ bound after applying transfer lemma. Roughly speaking, if $\epsilon > 0$ is a constant, we can at most apply this lemma a constant number of times. Then we can only prove a lower bound for a fixed constant number of passes. 

By carefully choosing parameters, we can prove that for any algorithm with $q$ passes, to succeed with constant probability, it necessarily requires either $\Omega\!\left( \frac{n^2}{c_q} \right)$ memory or $2^{\Omega\left(n/c_q\right)}$ samples where $c_q = 100^{3^q}$. In particular, this implies a $n^{2-o(1)}$ memory lower bound for $o(\log\log n)$-pass learning.
The bottleneck in improving these results beyond $O(\log \log n)$ passes is the successive application of the transfer lemma. It remains an interesting open problem whether a better tradeoff such as $c_q = \exp(q)$ or even $c_q = \poly(q)$ can be proved. For example: can a $n^{o(1)}$-pass algorithm learn the hidden vector $x$ using $n^{2-\Omega(1)}$ memory and $2^{n^{1-\Omega(1)}}$ samples?

\subsection{Organization of the Proofs}

\subsubsection{Two-Pass Learning Lower bound}

\paragraph*{Setup.} In our analysis, we will always assume that the second pass program remembers the first pass and counters. In \Cref{sec:modify}, we will prove that this is without loss of generality by showing that we can modify any two-pass branching program into such a program. In the rest of \Cref{sec:two-pass-setup}, we will introduce the stopping rules formally.

\paragraph*{Analysis.} First, we need to show that the probability of stopping is small. Such analysis will be divided into the following sections in \Cref{fig:two-pass-road-map}. Specifically, for two stopping rules, counter overflow and significant value, we will need the transfer lemma in \Cref{sec:two-pass-transfer}.

\begin{figure}[H]
    \centering
    \begin{tikzpicture}
        \def\ygapa{1.5}
        \def\ygapb{1}
        \def\ygapc{0.5}
        \def\ygapd{0.5}
        \def\ygape{0.5}
        \def\xgapa{4.4}
        \def\xgapb{1.7}
        \coordinate (node1) at ({-\xgapb-\xgapa},0);
        \coordinate (node2) at ({-\xgapb}, {-\ygapa});
        \coordinate (node3) at (0, {-\ygapa-\ygapb});
        \coordinate (node4) at (0, {-\ygapa-\ygapb-\ygapc});
        \coordinate (node5) at (0, {-\ygapa-\ygapb-\ygapc-\ygapd});
        \coordinate (node6) at (0,{-\ygapa-\ygapb-\ygapc-\ygapd-\ygape});
        \node[anchor=west] at (node2) {Bias counter overflow (\Cref{sec:two-pass-potential})};
        \node[anchor=west] at (node3) {Significant value};
        \node[anchor=west] at (node4) {High-probability edge overflow};
        \node[anchor=west] at (node5) {Bad edge but not high-probability }; %$\Bad(v) \setminus \High(v)$};
        \node[anchor=west] at (node6) {Significant state (\Cref{appendix:sig-state-two-pass})};
        \node[anchor=east,rotate=-90] at ($(node6)-(0.1,0.35)$){$\underbrace{\hspace{1.9cm}}$};
        \coordinate (node7) at ({-\xgapb},{-\ygapa-\ygapb-(\ygapc+\ygapd+\ygape)/2});
        \node[anchor=west] at (node7) {Others};
        \node[anchor=east,rotate=-90] at ($(node7)-(0.1,0.35)$){$\underbrace{\hspace{2.15cm}}$};
        
        \coordinate (node8) at ({-\xgapa}, {-\ygapa-\ygapb/2-\ygapc/4-\ygapd/4-\ygape/4});
        \node[anchor=west] at (node8) {$\mathbf{Pr}[\text{Stop}] \ll 1$};

        \def\xrightgap{4.5}

        \coordinate (node9) at ($(node3)+(\xrightgap,0)$);
        \node[anchor=west,rotate=90] at ($(node9)-(0.1,0.25)$){$\underbrace{\hspace{1.3cm}}$};
        \coordinate (node10) at ({\xrightgap}, {-\ygapa-\ygapb/2});
        \node[anchor=west] at (node10) {Transfer lemma (\Cref{sec:two-pass-transfer})};

        \def\xrightgapb{6.5}

        \coordinate (node11) at ($(node5)+(\xrightgapb,0)$);
        \node[anchor=west,rotate=90] at ($(node11)-(0.1,0.25)$){$\underbrace{\hspace{0.8cm}}$};
        \coordinate (node12) at ({\xrightgapb}, {-\ygapa-\ygapb-\ygapc-\ygapd/2});
        \node[anchor=west] at (node12) {\Cref{sec:analysis}};

    \end{tikzpicture}
    \caption{Upper-bound the stopping probability for two passes.}
    \label{fig:two-pass-road-map}
\end{figure}

Second, if the branching program does not stop (specifically, it does not reach any significant state), we will show that it is going to correctly guess $x$ with small probability in \Cref{sec:two-pass-wrap-up}.

\subsubsection{Multi-Pass Learning Lower bound}

\paragraph*{Setup.} For multi-pass, we will w.l.o.g. assume that each pass of the branching program is going to remember all the previous passes and counters. We will present corresponding modification in \Cref{sec:multi-pass-modify}. In the rest of \Cref{sec:multi-pass-setup}, we will present the stopping rules. 

\paragraph*{Analysis.} First, we show that the probability of stopping is small. Such analysis will be divided into the following sections in \Cref{fig:multi-pass-road-map}.

\begin{figure}[H]
    \centering    \begin{tikzpicture}
        \def\ygapa{1.5}
        \def\ygapb{1}
        \def\ygapc{0.5}
        \def\ygapd{0.5}
        \def\ygape{0.5}
        \def\xgapa{4.4}
        \def\xgapb{1.7}
        \coordinate (node1) at ({-\xgapb-\xgapa},0);
        \coordinate (node2) at ({-\xgapb}, {-\ygapa});
        \coordinate (node3) at (0, {-\ygapa-\ygapb});
        \coordinate (node4) at (0, {-\ygapa-\ygapb-\ygapc});
        \coordinate (node5) at (0, {-\ygapa-\ygapb-\ygapc-\ygapd});
        \coordinate (node6) at (0,{-\ygapa-\ygapb-\ygapc-\ygapd-\ygape});
        \node[anchor=west] at (node2) {Bias counter overflow (\Cref{sec:mult-pass-potential})};
        \node[anchor=west] at (node3) {Significant value};
        \node[anchor=west] at (node4) {High-probability edge overflow};
        \node[anchor=west] at (node5) {Bad edge but not high-probability }; %$\Bad(v) \setminus \High(v)$};
        \node[anchor=west] at (node6) {Significant state (\Cref{appendix:sig-state-two-pass})};
        \node[anchor=east,rotate=-90] at ($(node6)-(0.1,0.35)$){$\underbrace{\hspace{1.9cm}}$};
        \coordinate (node7) at ({-\xgapb},{-\ygapa-\ygapb-(\ygapc+\ygapd+\ygape)/2});
        \node[anchor=west] at (node7) {Others};
        \node[anchor=east,rotate=-90] at ($(node7)-(0.1,0.35)$){$\underbrace{\hspace{2.15cm}}$};
        
        \coordinate (node8) at ({-\xgapa}, {-\ygapa-\ygapb/2-\ygapc/4-\ygapd/4-\ygape/4});
        \node[anchor=west] at (node8) {$\mathbf{Pr}[\text{Stop}] \ll 1$};

        \def\xrightgap{4.5}

        \coordinate (node9) at ($(node3)+(\xrightgap,0)$);
        \node[anchor=west,rotate=90] at ($(node9)-(0.1,0.25)$){$\underbrace{\hspace{1.3cm}}$};
        \coordinate (node10) at ({\xrightgap}, {-\ygapa-\ygapb/2});
        \node[anchor=west] at (node10) {Transfer lemma (\Cref{sec:mutli-pass-transfer})};

        \def\xrightgapb{6.5}

        \coordinate (node11) at ($(node5)+(\xrightgapb,0)$);
        \node[anchor=west,rotate=90] at ($(node11)-(0.1,0.25)$){$\underbrace{\hspace{0.8cm}}$};
        \coordinate (node12) at ({\xrightgapb}, {-\ygapa-\ygapb-\ygapc-\ygapd/2});
        \node[anchor=west] at (node12) {\Cref{sec:multi-pass-proof-of-main}};

    \end{tikzpicture}
    \caption{Upper-bound the stopping probability for multiple passes.}
    \label{fig:multi-pass-road-map}
\end{figure}
Second, if the branching program does not stop, we will show that it is going to correctly guess $x$ with small probability in \Cref{sec:multi-pass-wrap-up}.

\section{Preliminaries}
\subsection{Notation}

Let $(x,y,z)$ be a joint random variable. We use $x\perp y$ to denote that $x$ and $y$ are \emph{independent}. $x\perp y \mid z = z'$ denotes that $x$ and $y$ are independent, \emph{conditioned} on $z=z'$. 

$\mathbb{R}^{+}$ denotes the set of non-negative real numbers. All the logarithms will have base $2$. 

For a list of items $a = (a_1,a_2,\dots, a_T)$, we use $a_{\le i}$ (resp.~$a_{>i}$) to define the length-$i$ prefix (resp.~length-$(n-i)$ suffix) of $a$.

\subsection{Norms and Inner Products}

In this paper, we will deal with functions of the form $f:X \to \doubleR$. Given $p\ge 1$, define the $\ell_p$-norm of $f$ as
\[
\| f\|_p = \left(\Ex_{x\sim X} [f(x)^p] \right)^{1/p}.
\]

Let $f,g$ be two functions. We define $\langle f,g\rangle := \Ex_{x\sim X}[f(x)\cdot g(x)]$ as the inner product between $f$ and $g$. We collect some basic facts about norms in the following.

\begin{itemize}
\item Monotonicity: $\| f\|_p \le \|f \|_q$ for every $p\le q$.
\item Cauchy-Schwarz inequality: $\langle f, g\rangle \le \| f \|_2 \cdot \|g \|_2$
\item (Special case of) H\"older's inequality: $\|f g\|_p \le \|f\|_p \cdot \|g\|_\infty$.
\item Truncation trick: Suppose $\|f\|_2 \le A$. Let $f^{>B}(x) = \mathbbm{1}_{f(x)>B}\cdot f(x)$. Then 
\[
\|f^{>B}\|_1 := \Ex_{x\sim X}[f(x)\cdot \mathbbm{1}_{f(x)>B}] \le \Ex_{x\sim X}\left[ \frac{f(x)^2}{B} \right] \le \frac{A^2}{B}.
\]
\end{itemize}

\subsection{Learning Tasks and $L2$-Extractors}

Following~\cite{GargRT18-extractor}, we use a matrix $M\in \{-1,1\}^{A\times X}$ to describe a learning task. Here, $X$ is the domain of concepts, and $A$ is the domain of possible inputs. $M$ corresponds to the following learning problem: An unknown element $x\in X$ was chosen uniformly at random. The learner tries to learn $x$ from a stream of samples, $(a_1,b_1),(a_2,b_2),\dots$, where for each $i$, $a_i\in A$ is uniformly at random chosen and $b_i = M(a_i, x)$. In particular, we consider the setting where the learner is allowed to see the same stream of samples for $q\ge 2$ passes.

\begin{definition}\label{def:L2_ext}
We say a learning matrix $M:A\times X\to \{-1,1\}$ a $(\kext,\ellext,\rext)$-$L2$-Extractor, if for every non-negative $f\colon X\to \mathbb{R}$ with $\frac{\|f\|_2}{\|f\|_1} \le 2^{\ellext}$, there are at most $2^{-\kext}\cdot |A|$ rows $a\in A$ with 
\[
| \langle M_a, f\rangle | \ge 2^{-\rext} \cdot \| f\|_1,
\]
where $M_a\in \{-1,1\}^X$ denotes the restriction of $M$ to the $a$-th row.
\end{definition}

Below, we collect a few examples of learning tasks from \cite{GargRT18-extractor}, whose matrices are good $L2$-extractors. For a more comprehensive list, see \cite{GargRT18-extractor}.
\begin{itemize}
    \item \emph{Parities.} In parity learning, one identifies $X = \{0,1\}^n$ and $A = \{0,1\}^{n}$. Then, $M(a,x) = (-1)^{\langle a, x\rangle}$ for every $a,x$. It can be shown that $M$ is an $L2$-extractor with
    \[
    \kext = \Omega(n), \quad \ellext = \Omega(n), \quad \rext = \Omega(n).
    \]
    \item \emph{Sparse Parities.} In sparse parity learning,
    the hidden vector $x$ is known to have Hamming weight exactly $\ell$.
    %
    %there are two parameters $\epsilon \ge \delta > 0$. 
    We still identify $A = \{0,1\}^n$. But now $X = \{x\in \{0,1\}^n \mid \sum_{i}x_i = \ell\}$. The learning matrix $M(a,x) = (-1)^{\langle a, x\rangle}$ is an $L2$-extractor with 
    \[
    \kext = \Omega(n), \quad \ellext,  \rext = \Omega(\ell) \qquad\text{Assuming $\ell \le n/2$.}
    \]
    \[
    \kext = \Omega(n/\ell^{0.1}), \quad \ellext, \rext = \Omega(\ell \log(\ell)) \qquad\text{Assuming $\ell \le n^{0.9}$.}
    \]
    Indeed, the problem requires either superlinear samples or superpolynomial memory for any $\ell =\omega( \log(n)/\log\log(n))$~\cite{KRT17}.
    \item \emph{Random matrix.} Consider a random learning matrix $M\in \{\pm 1\}^{A\times X}$ with $|A|=|X|=2^n$, where every entry of $M$ is chosen from $\{-1,1\}$ uniformly at random. With high probability, the resulting matrix $M$ is an $L2$-extractor with 
    \[
    \kext = \Omega(n), \quad \ellext = \Omega(n), \quad \rext = \Omega(n).
    \]
    That means, a ``random'' learning task will be subject to our lower bound with high probability.
\end{itemize}

\subsection{Computational Model}

In this paper, we will model a $q$-pass learning algorithm as a $q$-pass ordered branching program, which can see exactly the same input stream $q$ times in the same order. 

\begin{definition}[$q$-pass branching program for learning]
A $q$-pass (ordered) branching program for learning, with  length $T$ and width $d$, is a directed (muti) graph with vertices arranged in $nT + 1$ layers, and each layer contains at most $d$ vertices. In the first layer which we think of as layer $0$, we fix the vertex $0$ to be the starting vertex, denoted by $v_0$. The $j$-th pass vertices are those vertices from layer $(j - 1)T$ to layer $jT + 1$. We will denote these $j$-th pass layers using vertex sets $V^{(j)}_0, V^{(j)}_1, \dots, V^{(j)}_T$. Note for all $j > 1$, $V^{(j)}_0 = V^{(j - 1)}_T$. 

Every vertex except the last layer has $2^{n + 1}$ many outgoing edges. Each edge is labeled by a unique pair $(a,b) \in A \times \{-1, 1\}$.  Every last-layer vertex is labeled by one element $\tilde{x}(v) \in X$, which is the output of the program when reaching that vertex. 
\end{definition}

\begin{definition}[Computational Path] Initially, the current vertex is the start vertex , $v \gets v_0$. Given the input $(a_1, b_1), (a_2, b_2), \dots, (a_T, b_T) \in A \times \{-1, 1\}$, in the $i$-th step ($i = 1, 2, \dots, T$),
the program reads $(a_i, b_i)$, move from its current vertex $v$ in $V^{(j)}_{i - 1}$ along the edge with that label, and arrive at a vertex in $V^{(j)}_i$. Once $v \in V_T^{(q)}$, the program will stop and output $\tilde{x}(v)$. This defines a computational path.
\end{definition}

\begin{definition}[Success Probability]
The success probability of a program is defined as 
$$\Pr_{\substack{a_1, a_2, \dots, a_T \sim A, \ x \sim X \\  \forall i \in [T] \ b_i = M(a_i, x)}}\left[\text{The output of the program matches $x$}\right]$$
where $x$ and $a_1, a_2, \dots, a_T$ are sampled uniformly at random. 
\end{definition}

% \begin{definition}[Reachable Vertices $\out(v)$]
% For any vertex $v$ in the branching program, we define $$\out(v) = \{w \mid \text{There is a path from $v$ to $w$ in the branching program.}\}.$$
% Note $w$ is not necessarily in the same pass as $v$. 
% \end{definition}

% \subsection{Number theory}

% We need some results from number theory to prove upper bounds for learning parity.

% \begin{theorem}[Prime number theorem]\label{prime-number-theorem}
% Let $\pi(n)$ be the number of primes in $[1, n]$, then we have
% \[
% \lim_{n \to \infty} \frac{\pi(n)}{n/\ln n} = 1.
% \]
% \end{theorem}

% \begin{theorem}[Chinese remainder theorem]\label{chinese-remainder-theorem}
%     Let $p_1, p_2, \dots, p_n$ be $n$ distinct primes. Let $P = \prod_{i=1}^n p_i$ be their products. For $n$ integers $r_1, r_2, \dots, r_d$, there exists a unique integer $x \in [0, P)$ such that
%     \[
%     \forall i \in [1, n], \ x \equiv r_i \pmod {p_i}.
%     \]
%     In other words, for an unknown integer $x$, we can uniquely determine $(x \bmod P)$ from $(x \bmod p_1),(x \bmod p_2), \dots, (x \bmod p_n)$.
% \end{theorem}

\section{Setup for Two Passes} \label{sec:two-pass-setup}
\subsection{Modifying the Program} \label{sec:modify}

To ease our analysis, we would like to have the following modifications to the program. The modification will be a two-stage process. We will use $B_0$ to denote the original program, use $B_1$ to denote the program after the first stage, and use $B_2$ to denote the final program.

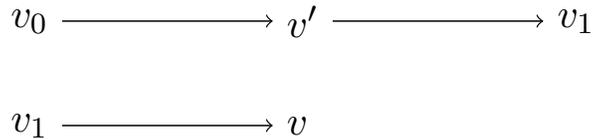
\begin{figure}[H]
    \centering
    \scalebox{1.4}{
    \begin{tikzpicture}
  % Nodes
  \node (v0) {$v_0$};
  \node[right=2cm of v0] (v') {$v'$}; % Adjust horizontal distance here
  \node[right=2cm of v'] (v1) {$v_1$};
  \node[below=0.5cm of v0] (v1_second) {$v_1$}; % Adjust vertical distance here
  \node[right=2cm of v1_second] (v) {$v$};

  % Edges
  \draw[->] (v0) -- (v');
  \draw[->] (v') -- (v1);
  \draw[->] (v1_second) -- (v);
\end{tikzpicture}}
    \caption{The computational path of two passes.}
\end{figure}

\paragraph*{Stage 1: Remember the First Pass.} We keep the first pass of the program unchanged. For the second pass of the program, we will now force it to remember the last state of the first pass, which we denote as $v_1 \in V^{(1)}_T$. (This is also the starting vertex of the second pass because $V^{(1)}_T = V^{(2)}_0$.) Besides this, the second pass program also runs a copy of the first pass in its memory. Namely, we modify the second pass so that now each vertex in layer $i$ is a triple $(v', v_1, v_{B_0}) \in V^{(1)}_i \times V^{(1)}_T \times V^{(2)}_i$, which means:
\begin{itemize}
    \item The program remembers that the first pass leads to the vertex $v_1$.
    \item When traversing for the first pass, it visited vertex $v' \in V^{(1)}_i$ in the $i$-th layer.
    \item The (original) second pass program visits $v_{B_0}$ in the $i$-th layer.
\end{itemize}

Having this definition in mind, it is easy to modify the topology of the program. Namely, if after reading the sample $(a_{i + 1}, b_{i + 1})$, $v_{B_0}$ reaches $w_{B_0}$ and $v'$ reaches $w'$, we add an edge from $(v',v_1,v_{B_0})$ to $(w',v_1,w_{B_0})$ with label $(a_{i + 1},b_{i + 1})$. 

Note such modification only blows up the memory by a constant factor (from $\log d$ to $3 \log d$ if the branching program has width $d$). We can without loss of generality only work with modified programs. In the following, we will use $v_{B_1}$ to denote a (modified) vertex in the second pass. That is, $v_{B_1} = (v', v_1, v_{B_0})$.
Note that $v_{B_1}$ uniquely determines both $v'$ and $v_1$. We use $B_1$ to denote the program after this modification.

\newcommand{\wtd}{\widetilde}

\paragraph*{Stage 2: Biasness and High-probability edge Counters.} In this stage, we also keep the first pass of the program unchanged. For the second pass, we will further attach two ``counters'' to each state, denoted by $\cb$ and $\ch$. Initially, they are both zero. When we traverse with an edge $(a,b)$ from vertex $v_{B_1}$, we will increase these two counters by some amount that is uniquely determined by $v_{B_1}$ and $(a,b)$. The update rules will be specified later in \Cref{sec:two-pass-stop-rule}. 

Similar to how we added $v'$ and $v_1$ to the memory of the second pass in Stage 1, we will also add $\cb$ and $\ch$ to the memory. Namely, for each vertex $v_{B_1}$, we will create many duplicates of it and label them by $(v_{B_1}, \ch, \cb)$. In this way, a vertex in the second pass remembers not only $v_{B_1} = (v', v_1, v_{B_0})$ but also $\ch$ and $\cb$. We will apply the modification layer-by-layer as we describe in the next.

Now we will describe how we modify the edges. In layer-$0$ of the second pass, the counters are initialized to $0$. In general, suppose we have modified the first $i-1$ layers. Every vertex in layer $i$ should be of the form $(v_{B_1},\ch,\cb)$. Fix one such vertex $v_{B_1}$. For each edge labeled $(a,b)$ in $B_1$ that goes from $v_{B_1}$ to $w_{B_1}$, we calculate the increased value of the counters, $c_1$ and $c_2$, based on the available information: $v_{B_1},\ch,\cb$ and $(a,b)$. Then we add an edge from $(v_{B_1},\ch,\cb)$ to $(w_{B_1},c_1,c_2)$ with label $(a,b)$. 
We apply the modification for each edge between the $i$-th layer and the $(i+1)$-th layer in $B_1$. 

We will restrict the range of the counters to be integers in $[0, \log |X|]$. Hence, they require only $O(\log \log |X|)$ bits to store. 
Since the desired output $x \in X$ already requires at least $\log |X|$ memory to store, we can without loss of generality ignore the memory blowup of this modification. Hence we will only work with the modified program $B_2$. In the paper, we simply use $v$ to denote a vertex in the program $B_2$. Note that in the second pass, $v$ uniquely determines $v',v_1, v_{B_0}, \cb, \ch$.  As the counters $\cb$ and $\ch$ are now uniquely determined by vertex $v$, we will denote them by $\cb(v)$ and $\ch(v)$.

Finally, note that our modification does not change the functionality of $B$. Therefore, if $B_2$ succeeds in learning with very low probability, so does $B$.

\paragraph*{Comparison with \cite{garg2019time}.} The idea of ``running a copy of the previous pass'' is inherited from their work. However, our implementation is slightly simpler. For one, we observed that we don't need to remember the set of indices on which the second pass traverses a high-probability edge. Only knowing the number of high-probability edges one has traversed so far, i.e. $\ch$, is sufficient.

The introduction of the new counter $\cb$ is novel in our work. Although $\cb$ and $\ch$ require us to modify the program in the same way, these two counters are for completely different purposes. This new counter $\cb$ is crucial for achieving the tight quadratic lower bound on the memory of any efficient parity learner (and more generally the tight lower bound on the memory for any learning problem whose corresponding matrix is an $L_2$-Extractor).

\paragraph*{Comparison with \cite{liu2023memory}.} This recent work by Liu, Raz and Zhan introduced a notion of ``badness level'', which is a technical alternative to the previous potential function in \cite{raz2017time, GargRT18-extractor},  namely, the function $\langle \doubleP_{x\mid v}, \doubleP_{x\mid s}\rangle^k$. Although such ``badness level'' has also a counter-like definition, $\cb$ and ``badness level'' are counting completely different things. In our work, the potential function is nice enough to work with and gives shorter proofs. So we will use the original potential function of \cite{raz2017time, GargRT18-extractor}. 

\subsection{The First Pass}

In this section, we will set up the notation and stopping rules for the first pass.

\paragraph{Computational Path.} The computational path starting from $v_0$ is uniquely determined by $x \in X, a_1, a_2, \dots, a_T \in A$. We use $v_0\wt \wtd{v}$ to denote the event that the (untruncated) computational path reaches $\wtd{v}$ (on $x,a_1,\dots, a_T$).

For a vertex $v' \in V^{(1)}_i$, we can consider the subprogram which consists of only layers $i, i + 1, \dots, T$. In this subprogram, we set $v'$ as the starting vertex. The computational path starting from $v'$ is uniquely determined by $x \in X, a_{i + 1}, a_{i + 2}, \dots, a_T$. We use $v' \wt \wtd{v}$ to denote the event that the (untruncated) computational path from $v'$ reaches $\wtd{v}$ (on $x, a_{i + 1}, a_{i + 2}, \dots, a_T$). \\

To define the stopping rules, we need to first define truncated paths.

\paragraph*{Truncated Path from $v_0$ (Informal).} For any $x \in X, a_1, a_2, \dots, a_T \in A$, we have a uniquely determined computational path. The truncated path will be a prefix of it. The stopping rules specify when the truncated path stops. We will let $v_0 \to \wtd{v}$ denote the event that from $v_0$, we reached $\wtd{v}$ without stopping. Even if we arrived at $\wtd{v}$ and then stopped, this still counts as $v_0 \to \wtd{v}$ since we have reached $\wtd{v}$. 

We will give the formal definition of the truncated path after we have introduced the stopping rules. For any vertex $v' \in V^{(1)}_i$, we will define the truncated path starting from $v'$. But this definition is much more subtle. We will defer it to after the formal definition.

\paragraph*{Bad Events and Stopping Rules.} In the definition of stopping rules, we will need $\doubleP_{x \mid v_0 \to v'}$, defined as the distribution of $x$ conditioning on events $v_0 \to v'$. (The prior distribution of $x,a_1,\dots, a_T$ is uniform.) Although $v_0 \to v'$ is defined using the stopping rules, this is not a circular definition because we will define stopping rules and truncated paths layer by layer: For $v' \in V^{(1)}_i$, the event $v_0 \to v'$ will only depend on the stopping rules of layers $1, 2, \dots, i - 1$. Also the stopping rules of layer $i$ will only depend on events $v_0 \to v'$ for $v' \in V^{(1)}_i$. 

Suppose at layer $i$, the truncated path currently reaches vertex $v' \in V^{(1)}_i$. Recall that $\doubleP_{x \mid v_0 \to v'}$ is the distribution of $x$ conditioning on event $v_0 \to v'$. Based on it, we define the following bad events. They will appear in the layer $i$ stopping rules. 

\begin{itemize}
    \item \emph{Bad edges.}
    The set of bad edges at state $v'$ is defined as 
    \[
    \Bad(v') = \left\{(a',b')\in A\times \{-1,1\} : \Pr_{x'\sim \doubleP_{x|v_0\to v'}}[M(a',x')=1] \not\in \left( \frac{1}{2}-2^{-\rext},\frac{1}{2}+2^{-\rext} \right) \right\}.
    \]
    Note that an edge $(a',b')$ being bad or not only depends on $a'$. For brevity, we sometimes abuse notation by using $\Bad(v')$ to refer to a subset of $A$, which consists of all ``bad'' $a'\in A$.
    \item \emph{Significant States.} If $v'$ is a state that satisfies $\|\doubleP_{x|v_0\to v'}\|_2 \ge 2^{-n}\cdot 2^{\ellsigs^{(1)}}$, we will call $v'$ a significant state.
    \item \emph{Significant Values.} Define the set of significant values at $v'$ as 
    \[
    \SigV(v') = \{x': \doubleP_{x|v_0\to v'}(x') \ge 2^{-n + \ellsigv} \}.
    \]
\end{itemize}

In the first pass, we will immediately stop whenever we encounter these bad events. Formally, these are the stopping rules that are applied \emph{before} traversing from $V^{(1)}_i$ to $V^{(1)}_{i+1}$:

\begin{enumerate}
    \item 
    If we are currently at $v'\in V^{(1)}_i$, which is itself a significant state, then we stop.\label{rule:one-pass-sig-state}
    \item If $v'$ is not significant, but the next edge $(a,b)$ satisfies that $a\in \Bad(v')$, then we stop. \label{rule:one-pass-bad-edge}
    \item If $v'$ is not significant but $x\in \SigV(v')$, then we stop. \label{rule:one-pass-sig-value}
\end{enumerate}

If none of these rules apply, we take a step forward from $v'$ to some $\wtd{v}\in V^{(1)}_{i+1}$ following the next edge. Then we say the truncated path reaches $\wtd{v}$. 

\paragraph*{Truncated Path from $v_0$ (Formal).} Initially for layer $0$, $v_0 \to v_0$ is always true. Suppose we have finished the definition for layers $0, 1, \dots, i$. Then, for each $\wtd{v}\in V^{(1)}_{i+1}$, we can define $v_0\to \wtd{v}$ recursively.

$$v_0\to \wtd{v} \equiv \bigvee_{\wtd{u}\in V^{(1)}_i} (v_0\to \wtd{u}) \land \left[\begin{aligned}
    &\text{From $\wtd{u}$, we traverse an edge and reach $\wtd{v}$} \\ &\text{without triggering any stopping rules at $\wtd{u}$}
\end{aligned}\right].$$

\paragraph*{Truncated Path from $v'$.} For a vertex $v' \in V_i^{(1)}$, we now define the truncated path starting from $v'$. First, from $x, a_{i + 1}, a_{i + 2}, \dots, a_T$, we can uniquely determine the computational path from $v'$. Then $v' \to \wtd{v}$ is defined as the event that the computational path reaches $\wtd{v}$ without triggering the stopping rules. 

The subtlety here is that the conditional distributions $\doubleP_{x \mid v_0 \to \wtd{v}}$ in the stopping rules are still defined using truncated paths from $v_0$ instead of that from $v'$. Formally, for each $\wtd{v} \in V_{i + 1}$,
$$v' \to \wtd{v} \equiv \bigvee_{\wtd{u} \in V^{(1)}_i} (v' \rightarrow \wtd{u}) \land \left[\begin{aligned}&\text{From $\wtd{u}$, we traverse an edge and reach $\wtd{v}$} \\  &\text{without triggering any stopping rules at $\wtd{u}$ where} \\ &\text{the distributions $\doubleP_{x \mid v_0 \to \wtd{u}}$ in the stopping rules}\\ & \text{are still defined w.r.t. $v_0 \to \wtd{u}$.}\end{aligned}\right]$$

We will list several facts to fully explain the subtlety.
They are heavily used in both the previous work \cite{garg2019time} and ours:
\begin{itemize}
    \item For any vertex $v_1$ in the first pass, the event $v_0 \to v_1$ is implied by $(v_0 \to v')\land (v' \to v_1)$. \\ 
    This fact emphasizes that when we define $v' \to v_1$, we are still defining the stopping rules w.r.t. the truncated paths from $v_0$, not $v_1$. Otherwise, this fact would not hold. 
    \item The probability $\Pr[v' \to \wtd{v}]$ can be interpreted as follows: First, we sample $x \in X$ and $a_{i + 1}, a_{i +2}, \dots, a_T$ \emph{uniformly at random}. Then, $\Pr[v' \to \wtd{v}]$ is the probability the truncated path from $v'$ defined by $x, a_{i + 1}, a_{i + 2}, \dots, a_T$ reaches $\wtd{v}$. We emphasize its comparison to the next item.
    
    \item The probability $\Pr[v' \to \wtd{v} \mid v_0 \to v']$ can be interpreted as follows: First we sample $x \in X$ \emph{according to $\mathbb{P}_{x \mid v_0 \to v'}$}. (Note the event $v_0 \to v'$ only depends on $x, a_1, a_2, \dots, a_i$.) Then we sample $a_{i + 1}, a_{i + 2}, \dots, a_T$ \emph{uniformly at random}, because $v_0 \to v'$ is independent of them. 
    
    Then, $\Pr[v' \to \wtd{v} \mid v_0 \to v']$ is the probability that the truncated path from $v'$ determined by these $x, a_{i + 1}, a_{i + 2}, \dots, a_T$ reaches $\wtd{v}$.
\end{itemize}

\paragraph*{The first pass stops with small probability.} The following theorem is proved in \cite{GargRT18-extractor}.

\begin{theorem}[Main Theorem of \cite{GargRT18-extractor}]\label{theo:GRT-extractor}
Suppose $M$ is a $(\kext,\ellext,\rext)$-$L_2$-extractor. Let $\ell,r$ be two parameters satisfying:
\begin{itemize}
    \item $4\ell \le \min\{\kext,\ellext\} - 4$,
    \item $r \le \frac{1}{3} \min\{\rext, \kext, \ell\}$.
\end{itemize}
Let $B$ be a one-pass branching program for the learning task of $M$. Suppose $B$ has:
\begin{itemize}
    \item Width: $2^{\frac{1}{8}\kext \ell}$.  
    \item Length: $T=2^{r}$. 
\end{itemize}
Then, over uniformly random $x\sim X$, $(a_1,\dots, a_T)\sim A^T$, the probability that $B$ stops is at most $2^{-2r}$. Put in other words, there exists an event $G\subseteq X\times A^{T}$ such that
\[
\Pr_{x,a_1,\dots, a_T}[(x,a_1,\dots, a_T) \not\in G] \le 2^{-\ell}.
\]
Furthermore, denote by $v_0$ the starting vertex of $B$. For any final vertex $v$ of $B$, it holds that
\[
\begin{aligned}
& \| \doubleP_{x|(v_0\wt v)\land G} \|_2 \le 2^{\ell + 1} 2^{-n}, \quad \text{and} \\
& \| \doubleP_{x|(v_0\wt v)\land G} \|_\infty \le 2^{3\ell + 4} 2^{-n}.
\end{aligned}
\]
\end{theorem}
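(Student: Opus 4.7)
The plan is to follow the extractor-based framework of \cite{raz2017time,GargRT18-extractor} as recapped in the technical overview, bounding separately the probability of each stopping rule firing and then reading off the norm bounds from the non-triggering of the last two rules. Setting up the parameters, choose $\ellsigs = \ell$ and $\ellsigv = O(\ell)$ so that the good event $G$ is ``the truncated path from $v_0$ never stops.'' The goal then splits into three estimates: (i) a uniformly random next edge from a non-significant $v'$ is bad with probability at most $2^{-\kext}$; (ii) the hidden $x$ is significant at $v'$ with probability at most $2^{-\ellsigv}$ over $\doubleP_{x\mid v_0\to v'}$; (iii) no significant state is reached.

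For (i) I would apply the $L_2$-extractor property directly: since $v'$ is not significant, $\|\doubleP_{x\mid v_0\to v'}\|_2 \le 2^{\ellsigs-n}$ while $\|\doubleP_{x\mid v_0\to v'}\|_1 = 2^{-n}$, hence the ratio is $\le 2^{\ellext}$ (assuming $\ellsigs\le \ellext$), and the extractor guarantees that only a $2^{-\kext}$ fraction of $a\in A$ are bad. A union bound over the $T=2^{r}$ layers gives overall probability $\le 2^{r-\kext}$. For (ii), by Markov the set $\SigV(v')$ has measure at most $2^{-\ellsigv}$ under $\doubleP_{x\mid v_0\to v'}$; summing over $T$ layers gives $2^{r-\ellsigv}$.

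The core of the proof is (iii). Fix a hypothetical significant target $s\in V^{(1)}_T$ and track the progress measure $\Phi_v := \langle \doubleP_{x\mid v_0\to v},\,\doubleP_{x\mid v_0\to s}\rangle$ along the truncated path from $v_0$. Initially $\Phi_{v_0} = 2^{-2n}$, whereas if the path reaches $s$ without stopping then $\Phi_s = \|\doubleP_{x\mid v_0\to s}\|_2^2 \ge 2^{2\ellsigs-2n}$, so the progress must multiply by a factor $2^{2\ellsigs}$. For a non-significant $v$ that survives the significant-value rule, both $\|\doubleP_{x\mid v_0\to v}\|_2$ and $\|\doubleP_{x\mid v_0\to s}\|_\infty$ are controlled, so by H\"older the product $f_v := \doubleP_{x\mid v_0\to v}\cdot \doubleP_{x\mid v_0\to s}$ satisfies $\|f_v\|_2/\|f_v\|_1 \le 2^{O(\ell)}$. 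The $L_2$-extractor property then shows that, apart from at most a $2^{-\kext}$ fraction of edges, reading a random labeled equation changes $\Phi$ by a factor of $1\pm 2^{-\rext}$, whereas for the exceptional edges the factor is at most $2$ (after verifying that the $v'$ is not significant so the denominator stays balanced). A supermartingale/Markov argument on $\Phi$ therefore bounds the reach-probability of $s$ by $2^{-\Omega(\kext\cdot \ellsigs)}$; union-bounding over the at most $2^{S} = 2^{\kext\ellsigs/8}$ vertices in $V^{(1)}_T$ closes (iii). Combining the three estimates with the parameter choices yields the overall $2^{-\ell}$ bound on stopping.

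The norm bounds in the ``furthermore'' clause follow quickly once (i)--(iii) are established: conditioning on $G$ restricts to the non-stopping event, hence on the event $(v_0\wt v)\land G$ the vertex $v$ is by definition not significant, giving $\|\doubleP_{x\mid (v_0\wt v)\land G}\|_2 \le 2^{\ellsigs+1-n}$, and not triggering the significant-value rule gives the $\ell_\infty$ bound up to a constant. The main obstacle I expect is the ``tails of the product measure'' step inside (iii): one needs the truncation trick to discard the small mass of $x'$ with $\doubleP_{x\mid v_0\to s}(x')$ above $2^{3\ell-n}$ before invoking the $L_2$-extractor definition on $f_v$, and to verify that this truncation only perturbs $\Phi_v$ by a $1+o(1)$ multiplicative factor per step, uniformly across all targets $s$. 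Carefully choosing the thresholds $\ellsigs,\ellsigv$ relative to $\ellext,\kext,\rext$ so that all three tail inequalities compose is the delicate calibration, and is exactly where the constants in the hypotheses $4\ell\le \min\{\kext,\ellext\}-4$ and $r\le \tfrac{1}{3}\min\{\rext,\kext,\ell\}$ come from.
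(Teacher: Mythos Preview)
The paper does not prove this theorem; it is quoted as the main result of \cite{GargRT18-extractor} and used as a black box (see the line ``The following theorem is proved in \cite{GargRT18-extractor}'' immediately preceding the statement). Your outline matches the standard argument from that reference, as recapped in Section~2.1 here, so in spirit it is correct.

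One genuine gap in your part (iii): a plain Markov argument on the first-power progress $\Phi_v = \langle \doubleP_{x\mid v_0\to v},\doubleP_{x\mid v_0\to s}\rangle$ yields only $\Pr[v_0\to s]\le 2^{-\Theta(\ell)}$, since $\Phi$ needs to grow from $2^{-2n}$ to $2^{2\ell-2n}$ and its expectation is essentially non-increasing. That is far too weak to union-bound over $2^{\kext\ell/8}$ candidate targets $s$. The actual argument (carried out in the cited paper, and reproduced in Appendix~\ref{appendix:sig-state-two-pass} of the present paper in the more general multi-pass form) tracks the $k$-th power potential
\[
Z_i \;=\; \sum_{v\in V_i}\Pr[v_0\to v]\cdot \langle \doubleP_{x\mid v},\doubleP_{x\mid s}\rangle^{k}
\]
with $k=\Theta(\kext)$; the extractor property then bounds the per-step growth of $Z_i$ by a $(1+2^{-\Omega(r)})^k$ factor, and Markov at the end gives $\Pr[v_0\to s]\le 2^{-\Omega(\kext\ell)}$. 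Your sentence ``a supermartingale/Markov argument on $\Phi$ therefore bounds the reach-probability of $s$ by $2^{-\Omega(\kext\cdot\ellsigs)}$'' is exactly where this amplification is hidden, and as written it does not follow. Once you insert the $k$-th power, the rest of your outline (including the truncation-of-tails step you flag at the end) is the right plan.
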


\subsection{The Second Pass} 
\label{sec:two-pass-stop-rule}

Now we will set up the second pass. Below we will fix a vertex $v_1 \in V^{(2)}_0$ as the starting vertex. Let $\ell$ be a ``meta'' parameter. Most of the relevant parameters in our proof are stated as multiples of $\ell$ for brevity.

\paragraph*{Computational Path from $v_1$.} For the subprogram containing layers $V^{(2)}_0, V^{(2)}_1, \dots, V^{(2)}_T$, we set the vertex $v_1 \in V^{(2)}_0$ as the starting vertex. For any $x \in X, a_1, a_2, \dots, a_T \in A$, the (untruncated) computational path from $v_1$ is defined as the computational path in that subprogram. Recall that we use $v_1 \wt v$ to denote the event that the computational path from $v_1$ reaches $v$.

\paragraph*{Truncated Path from $v_1$.} We use $v_1 \to v$ to denote the event that from $v_1$, the computational path reached $v$ without stopping. Here we are only considering the second pass subprogram.

To avoid circular definitions, we will define truncated paths and stopping rules layer by layer. In layer $i$, as the stopping rules for all previous layers are well-defined, the event $v_1 \to v$ will be well-defined for all $v \in V^{(2)}_i$. Then we can define the layer $i$ bad events and stopping rules. 

Since truncated paths with an arbitrary starting vertex in the second pass are not used in our two-pass result, we will not define them here.

\paragraph*{Truncated Path from $v_0$.} For a vertex $v$ in the second pass, we use $v_0 \to v$ to denote the event that (1) the computational path starting from $v_0$ reaches $v$ and (2) before reaching $v$ it did not trigger any first pass or second pass stopping rule. Note $v_0 \to v$ is equivalent to $v_0 \to v_1 \land v_1 \to v$. 

%and new stopping rules for the second pass at the same time. Let $v \in V_i^{(2)}$ be any vertex in layer $i$ of the second pass that remembers $v_1$. We use $v_1 \to v$ to denote that the truncated path starting from $v_1$ does not stop before $v$ due to the stopping rules of the second pass. Our stopping rules for layer $i$ will rely on the event $v_1 \to v$. The event itself is well-defined because it only relies on stopping rules before layer $i$. 

\paragraph*{Bad Events} Let $v \in V^{(2)}_i$ be a vertex in layer $i$ of the second pass. We will define the bad events of layer-$i$. Unlike the first pass, we do \emph{not} always stop immediately when we encounter them. 

\begin{itemize}
    \item \emph{High-probability edges.} An input $a' \in A$ is of high-probability, if $$\Pr[a_{i+1}=a'|v_0 \to v] \ge 2^{-n} \cdot 2^{\frac{\kext}{2}}.$$ We define $\High(v)\subseteq A$ as the set of such inputs at $v$.%
    \footnote{Observe that there is no analog of such set for the first pass since there, conditioned on the past, the next $a_{i+1}$ is completely random.}
    \item \emph{Bad edges.} Define the set of bad edges at $v$ as 
    \[
    \Bad(v) = \left\{a'\in A : \Pr_{x'\sim \doubleP_{x|v_1\to v}}[M(a',x')=1] \not\in \left( \frac{1}{2}-2^{-\rext},\frac{1}{2}+2^{-\rext} \right) \right\}.
    \]
    \item \emph{Significant Values.} $\SigV(v)$ is the set of all $x' \in X$ such that $\doubleP_{x|v_1\to v}(x') \ge 2^{-n} \cdot 2^{\ellsigv}$, where $\ellsigv = 50\ell$. 
    \item \emph{Significant States.} Finally, $v$ is called a significant state, if $\|\doubleP_{x|v_1\to v}\|_2\ge 2^{-n}\cdot 2^{\ellsigs^{(2)}}$ where $\ellsigs^{(2)} = 18\ell$. 
\end{itemize}

Recall that, from the second stage of \Cref{sec:modify}, we create two counters $\ch(v)$ and $\cb(v)$. They are uniquely determined by the vertex $v$.  We will now define their update-rules.

\paragraph*{High-probability Counter.}  Whenever we traverse an edge labeled $(a',b') \in A \times \{-1, 1\}$ from $v$ to a new vertex $w$, if $a' \in \High(v)$, we will increase the counter $\ch$ by $1$, i.e. $\ch(w) = \ch(v) + 1$.  Otherwise $\ch(w) = \ch(v)$.

\paragraph*{Biasness Counter.}  
Whenever we traverse an edge labeled $(a',b') \in A \times \{-1, 1\}$ from $v$ to a new vertex $w$, if $a' \in \High(v)$, we will increase the bias-counter $\cb$ by $$\Delta = \left\lfloor - \log\left(\Pr_{x' \sim \doubleP_{x \mid v_1 \to v}}[M(a',x') = b']\right)\right\rfloor.$$
Hence $\cb(w) = \cb(v) + \Delta$. If $a'\notin \High(v)$, set $\cb(w) = \cb(v)$.

% For any vertex $v \in V_{i}^{(2)}$ in the second pass and its next vertex $w \in V_{i + 1}^{(2)}$. When we traverse from $v$ to $w$ by an edge labeled $(a',b') \in A \times \{-1, 1\}$, if $a' \in \High(v)$, we will add the counter $\cb$ by 
% $$\Delta = \left\lfloor - \log\left(\Pr_{x' \sim \doubleP_{x \mid v_1 \to v}}[M(a',x') = b']\right)\right\rfloor.$$
% Hence $\cb(w) = \cb(v) + \Delta$.

\paragraph*{Stopping Rules.} We will now define the stopping rules. 

Now for a vertex $v \in V_i^{(2)}$, we have the following stopping rules. They are applied after we visit $v$ and before we reach $V_{i+1}^{(2)}$.

\begin{enumerate}
    \item Before traversing the next edge, if $x\in \SigV(v)$, we stop.
    \item When we are traversing an edge $(a,b)$ where $a\in \Bad(v) \setminus \High(v)$, we stop.    
    \item Recall that from the first stage of \Cref{sec:modify}, we are keeping a copy of the first pass in our memory. If that first pass copy stops for any reason, we also stop.
    \item If $v$ is a significant state, we stop.
    \item When $\ch(v) > \ellhigh^{(2)}\coloneqq \ell$ or $\cb(v) > \ellbias^{(2)} \coloneqq 14\ell$, we stop.
\end{enumerate}

\subsection{Two-Pass Main Result}

The following statement summarizes our main result for the two-pass learning lower bound.

\begin{theorem}\label{theo:two-pass-main-result}
    Suppose $M$ is a $(\kext,\ellext,\rext)$-$L2$-extractor. Let $\ell,\rlen$ be two parameters %. Suppose $\rlen, \ell$ are two parameters 
    satisfying
    \begin{itemize}
        \item $\ell \le \min\{ \frac{\ellext - 4}{100}, \frac{\kext}{8} \}$,
        \item $\rlen \le \min\{ \frac{1}{2}\rext, \frac{\ell}{3}-4 \}$.
    \end{itemize}
    Let $B$ be a two-pass learning program for the learning task of $M$. Suppose $B$ has
    \begin{itemize}
        \item Width $2^{\frac{1}{32}\kext\ell}$.\footnote{This width upper bound is \emph{before} modifying the program. After modifying $B$ as described in \Cref{sec:modify}, we can ensure the resulting program has width at most $2^{\frac{1}{8}\kext\ell}$.}
        \item Length of each pass $T = 2^{\rlen}$.
    \end{itemize}
    % Furthermore, suppose we define the stopping rules for $B$ with:
    % \begin{itemize}
    %     \item First-pass significant state threshold $\|\doubleP_{x|v_0\to v'}\|_2\le 2^{\ell}\cdot 2^{-n}$,
    %     \item Second-pass significant state threshold $\|\doubleP_{x|v_1\to v}\|_2\le 2^{10\ell}\cdot 2^{-n}$,
    %     \item Significant Value threshold for both passes being $2^{27\ell}\cdot 2^{-n}$,
    %     \item Bad edge criterion for both passes $\Pr_{x'\sim \doubleP_x}[M(a,x)] \notin \left( \frac{1}{2} - 2^{-\rext}, \frac{1}{2} + 2^{-\rext} \right)$,
    %     \item $\ch$ overflow threshold $\ell$,
    %     \item $\cb$ overflow threshold $7\ell$,
    % \end{itemize}
    % Suppose the following inequalities are true.
    % \begin{itemize}
    %     \item $\ell\ge 3\rlen+12$,
    %     \item $50\ell \le \ell_{\ext} - 4$,
    %     \item $8\ell \le \kext$,
    %     \item $\rlen \le \frac{1}{2}\rext$.
    % \end{itemize}
    Denote by $v_0$ the starting vertex of $B$.
    Then, there exists an event $G\subseteq X\times A^{T}$ (which captures that the program $B$ does not stop early) such that
    \[
    \Pr_{x,a_1,\dots, a_T}[(x,a_1,\dots, a_T) \notin G] \le 2^{-\frac{2}{3}\ell},
    \]
    and for any final vertex $v$ of $B$, it holds that
    \[
    \begin{aligned}
        & \|\doubleP_{x|(v_0\wt v)\land G}\|_2 \le 2^{18\ell+1}\cdot 2^{-n}, \quad \text{and} \\
        & \|\doubleP_{x|(v_0\wt v)\land G}\|_{\infty} \le 2^{50\ell+1} \cdot 2^{-n}.
    \end{aligned}
    \]
\end{theorem}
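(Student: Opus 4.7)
The plan is to take $G$ to be the event that the modified branching program of Section~\ref{sec:modify} completes both passes on input $(x,a_1,\dots,a_T)$ without triggering any stopping rule: the three first-pass rules (significant state, bad edge, significant value) and the five second-pass rules (significant value, bad-but-not-high edge, first-pass copy stopping, significant state, and the two counter overflows $\ch>\ellhigh^{(2)}$ and $\cb>\ellbias^{(2)}$). Under $G$ the truncated and untruncated paths coincide, so for any final vertex $v$ we have $(v_0\wt v)\land G = v_0\to v$, which further equals $(v_0\to v_1)\land (v_1\to v)$ because $v$ uniquely remembers $v_1$. The theorem then splits into two tasks: bounding $\Pr[\bar G]\le 2^{-2\ell/3}$, and deducing the stated norm bounds on $\doubleP_{x\mid (v_0\wt v)\land G}$.

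The norm bounds are the easier half. Since $v$ is a final vertex at which no second-pass stopping rule fired, $\|\doubleP_{x\mid v_1\to v}\|_2\le 2^{\ellsigs^{(2)}}\cdot 2^{-n}=2^{18\ell}\cdot 2^{-n}$ (significant-state rule) and $\doubleP_{x\mid v_1\to v}(x')<2^{\ellsigv}\cdot 2^{-n}=2^{50\ell}\cdot 2^{-n}$ for every $x'$ (significant-value rule). The target distribution $\doubleP_{x\mid (v_0\wt v)\land G}$ differs from $\doubleP_{x\mid v_1\to v}$ only by additionally conditioning on the first pass succeeding and the second pass not stopping, which has conditional probability $\ge 1/2$ on the typical $x$ once the probability bound on $\bar G$ is in hand; a standard reconditioning absorbs this into a factor of $2$ in each norm, matching the statement.

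For $\Pr[\bar G]\le 2^{-2\ell/3}$ the approach is to union-bound over stopping rules. The first-pass rules contribute at most $2^{-\ell}$ via Theorem~\ref{theo:GRT-extractor}, whose hypotheses are met by $\ell\le \min((\ellext-4)/100,\kext/8)$ and $\rlen\le \ell/3-4$. The second-pass bad-but-not-high rule is controlled by applying the $L_2$-extractor property at non-significant states (using $\|\doubleP_{x\mid v_1\to v}\|_2\cdot |X|\le 2^{18\ell}\le 2^{\ellext}$), combined with the $2^{-n+\kext/2}$ cap on any non-high input; summed over $T=2^{\rlen}$ layers this still gives $2^{-\Omega(\ell)}$. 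The high-probability counter overflow $\ch>\ell$ is bounded by a width/memory argument, since each traversed high-probability edge carries $\Omega(\kext)$ bits of information about the state, while the post-modification width is $2^{\kext\ell/8}$.

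The two delicate rules—bias counter overflow $\cb>14\ell$ and the significant-value rule—contain the main novelty and require the transfer lemma of Section~\ref{sec:two-pass-transfer}. Set $\Phi(v)=2^{\cb(v)}$ and observe that, \emph{for a fixed starting vertex $v_1$}, under the distribution $\doubleP_{\cdot\mid v_1\to v}$ a single step from $v$ increases $\cb$ by $\Delta=\lfloor -\log \Pr_{x'\sim \doubleP_{x\mid v_1\to v}}[M(a,x')=b]\rfloor$ with conditional probability at most $2^{-\Delta}$; hence $\Ex[\Phi]$ is essentially non-increasing along the path and Markov gives $\Pr[\cb>14\ell\mid v_1\text{ fixed}]\le 2^{-\Omega(\ell)}$. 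The significant-value rule admits an analogous fixed-$v_1$ argument via the potential $\doubleP_{x\mid v_1\to v}(x)\cdot 2^n$. What we actually need, however, is the probability that the two-pass program starting from $v_0$ (where $v_1$ is chosen adaptively by the first pass) triggers the rule, so the mismatch between $\doubleP_{x\mid v_1\to v}$ and $\doubleP_{x\mid v_0\to v}$ must be reconciled. The transfer lemma performs this reconciliation, exploiting the bounded memory of the first pass to limit its ability to pick a worst-case $v_1$, at the cost of a constant factor in the exponent. This transfer step is the main technical obstacle, and combined with the per-rule bounds above it yields $\Pr[\bar G]\le 2^{-2\ell/3}$.
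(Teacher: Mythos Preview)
Your outline tracks the paper's proof in most respects, but it has one genuine gap and two minor inaccuracies.

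\textbf{The gap: the second-pass significant-state rule.} You list ``significant state'' among the second-pass stopping rules and rely on it for the $\ell_2$ norm bound, but you never bound its stopping probability. This is not a technicality: it is the heart of the argument. The paper (Section~\ref{sec:two-pass-sig-state-stop} and \Cref{lemma:sig-state-multi-pass} in the appendix) fixes a target significant state $s$, introduces the progress function
\[
Z_i \;=\; \sum_{v\in V^{(2)}_i} \Pr[v_1\to v]\cdot 2^{-k(\ch(v)+\cb(v))}\cdot \langle \doubleP_{x|v_1\to v},\,\doubleP_{x|v_1\to s}\rangle^{k},
\]
shows $Z_i$ grows very slowly, and concludes $\Pr[v_1\to s]\le 2^{-\Omega(\kext\ell)}$; a union bound over the $2^{O(\kext\ell)}$ states finishes. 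The extra $2^{-k(\ch+\cb)}$ factor is precisely what absorbs the damage done by high-probability edges that you cannot stop on. None of the pieces you do discuss (bad edges, counters, significant values) substitutes for this; without it you have no control on $\|\doubleP_{x|v_1\to v}\|_2$ at the final layer and hence no $\ell_2$ bound.

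\textbf{The potential for $\cb$.} With $\Phi(v)=2^{\cb(v)}$ as you define it, $\Ex[\Phi]$ is \emph{not} essentially non-increasing: a high-probability edge sends $\cb\to\cb+\Delta$ with probability $\approx 2^{-\Delta}$, so the expectation roughly doubles per such edge. The paper uses $\Phi(v)=2^{\cb(v)-\ch(v)}$ so that the $-\ch$ term cancels this doubling. Your final Markov bound still goes through because $\ch\le\ell$, but the stated monotonicity is wrong as written.

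\textbf{The $\ch$-overflow argument.} The information-theoretic sketch you give (``each high-probability edge carries $\Omega(\kext)$ bits'') is the older \cite{garg2019time} approach. The paper replaces it with a much simpler trick (Section~\ref{sec:two-pass-too-many-high}): drop all conditioning so that each $a_i$ is uniform, use $|\High(v)|\le 2^{n-\kext/2}$, and bound $\sum_{v_1}\Pr[E_{v_1}]\le 2^{\kext\ell/32}\binom{T}{\ell}2^{-\kext\ell/2}$. Both approaches work, but the paper's is one line and yields an exponentially small bound rather than a constant.

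Finally, for the norm bounds no ``reconditioning'' is needed: at the final layer the events $(v_0\wt v)\land G$, $v_0\to v$, and $v_1\to v$ coincide exactly, so $\doubleP_{x|(v_0\wt v)\land G}=\doubleP_{x|v_1\to v}$ on the nose.
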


\subsubsection{Keeping Track of Parameters} 

For a better presentation, we would like to provide the following table, summarizing the main parameters of the two-pass proof and providing a name for each. These parameters are written as multiples of $\ell$, where we assume $\ell$ to be sufficiently large. Also, for the first time reading, we recommend considering $\ellext$ and $\kext$ as large compared with $\ell$ (e.g., $\min\{ \ellext, \kext\} = 1000\ell$). The last row is only required in \Cref{sec:two-pass-transfer}. We will explain its meaning by then.

\begin{table}[H]
    \centering
    \begin{tabular}{c|c|c}
      Name  & Explanation & Quantity \\[4pt]
      \hline 
        $\rlen$  & $2^{\rlen}$: Length of the program & $\rlen \le \frac{1}{3}\ell - 4$      \\[4pt]
       \hline
        $\ellsigs^{(1)}$  & First-pass Significant State Threshold & $\ell$      \\[4pt]
       \hline
%        \vspace{0.5px} $\ell^{(1)}_f$  & First-pass Flat Threshold (for \Cref{sec:two-pass-transfer}) & $3\ell$   \vspace{0.5px}   \\
%       \hline
       $\ellsigs^{(2)}$  & Second-pass Significant State Threshold & $18\ell$   \\[4pt]
       \hline
       $\ellhigh^{(2)}$  & Second-pass $\ch$ Threshold & $\ell$     \\[4pt]
       \hline
       $\ellbias^{(2)}$  & Second-pass $\cb$ Threshold & $14\ell$     \\[4pt]
       \hline
       $\ellsigv$  & Significant Value Threshold for Both Passes & $50\ell$ \\[4pt]
       \hline        
       $\ellflat^{(1)}$  & First-Pass Flat Threshold (Only for \Cref{sec:two-pass-transfer}) & $3\ell$      \\[4pt]
    \end{tabular}
    \caption{Parameters for the two-pass proof}
    \label{tab:parameter-two-pass}
\end{table}

% For the first time reading, we recommend readers plug in the following parameters for understanding the proof idea.

% \begin{itemize}
%     \item Think of $\ell$ as a quantity very large.
%     \item Think of $\rlen = \frac{\ell}{10000}$.
%     \item 
% \end{itemize}

\section{Potential Analysis} \label{sec:two-pass-potential}
In this section, we analyze the stopping probability due to $\cb$ overflow. 

For every fixed $v_1\in V^{(2)}_0$, every layer $i\in [T]$, and every $v'\in V^{(1)}_i$, we show the program, when starting from $v_1$, stops due to $\cb$ overflow at the $i$-th layer with small probability, even \emph{conditioning on $v_0\to v'$}. That is, we show
\begin{align}
\Pr_{v_1\to v,v\in V^{(2)}_i}[\cb(v) > \ellbias^{(2)} \mid v_0 \to v'] \le 2^{-\ellbias^{(2)}+\ell+O(1)}. \label{equ:two-pass-potential-goal}
\end{align}
However, this statement is not sufficient, because what we want to understand is
\begin{align*}
\Pr_{v_0\to v,v\in V^{(2)}_i}[\cb(v) > \ellbias^{(2)}] = \sum_{v_1\in V^{(2)}_0} \Pr[v_0\to v_1] \cdot \Pr_{v_1\to v}[\cb(v) > \ellbias^{(2)} \mid v_0\to v_1]. 
\end{align*}
Namely, we would like to bound: (note the change of conditioned event)
\begin{align}
 \Pr_{v_1\to v,v\in V^{(2)}_i}[\cb(v) > \ellbias^{(2)} \mid v_0\to v_1]. \label{equ:two-pass-potential-next}
\end{align}

In \Cref{sec:two-pass-transfer}, we show how \eqref{equ:two-pass-potential-goal} implies that \eqref{equ:two-pass-potential-next} is small. In this section, we mainly focus on bounding \eqref{equ:two-pass-potential-goal}.

% In comparison, our new stopping rule is ``soft'', in the sense that bad events only increase the counter $\cb$, and the truncated path stops after counter overflow. Such a ``soft'' stopping rule poses new challenges in our analysis. Although each bad event happens with a moderately small probability, these events are correlated. To argue that the probability of stopping is small, i.e. it is very unlikely that many bad events happen simultaneously, we come up with a novel potential analysis. This is our main technical contribution. 

\subsection{Definition and Overview} \label{subsec:potental-def-overview}

\paragraph*{Definition} Recall that for each vertex $v$, we maintained two counters $\ch(v)$ and $\cb(v)$. For any vertex $v$, we define its potential to be $$\Phi(v) \coloneqq 2^{\cb(v) - \ch(v)}.$$  

For any edge $e$ between $(u,v)$ with label $(a',b')$ define its potential to be 
\[
\Phi(e) = \Phi^{(a',b')}(u) \coloneqq \Phi(v).
\]

If we stop in the second pass on a vertex $u$ \emph{not} due to counter overflow, we will force the next vertex $v$ to be a special vertex $\fail$. Similarly, in the second pass, if we stopped when traversing an edge $(a,b)$ (due to $(a,b)\in \Bad(u)\setminus \High(u)$), we will also let $v \gets \fail$. We define $\Phi(\fail) \coloneqq 0$. Once $v$ becomes $\fail$, it remains $\fail$ for the rest of the second pass.

\paragraph*{Probability Space} Before getting into our proof strategy, we first rigorously define the underlying probability space. 

Fixing a vertex $v_1 \in V_0^{(2)}$ and a layer $i \in [T]$, we uniformly at random sample $x \in X$ and $a_1, a_2, \dots, a_T \in A^T$. They determine the realization of the truncated path from $v_1$, denoted by $\Ttwo$. We emphasize that we start the truncated path from $v_1$ right away, \emph{regardless} of whether $v_0\to v_1$ happens or not under $(x,a_1,\dots, a_T)$.
For any realization of $\Ttwo$, we use $\Ttwo[i]$ to denote the $i$-th layer vertex traversed in $\Ttwo$, and $\Ttwo[i,i+1]$ to denote the edge from $V^{(2)}_{i}$ to $V^{(2)}_{i+1}$ traversed in $\Ttwo$.
We use $|\Ttwo|$ to denote the length of the truncated path (before stopping). For $i > |\Ttwo|$, we define $\Ttwo[i] = \fail$ and $\Ttwo[i,i+1]=\fail$. When $i$ is fixed in context, we write $v \sim \Ttwo$ to express that $v$ is determined by the random process $\Ttwo$. 

Notice that $(x, a_1, a_2, \dots, a_T) \in X \times A^T$ also determines realization of the truncated path from $v_0$, denoted by $\Tone$. This gives us a natural coupling between $\Tone$ and $\Ttwo$. Namely, we draw $(x,a_1,\dots, a_T)$, and consider the realization of $\Tone$ and $\Ttwo$ under the same input. Let $v' = \Tone[i]$ be the $i$-th layer vertex in the first pass. Recall that the second pass of the program remembers the first pass. Hence from $v$, we can uniquely determine the corresponding Pass-$1$ vertex $v'$. 

On the other hand, conditioning on any realization of $v'=\Tone[i]$, the vertex $v = \Ttwo[i]$ is naturally a random variable. Formally, this is the random variable $\Ttwo[i] \mid \Tone[i] = v'$ under the natural coupling between $\Tone$ and $\Ttwo$. To avoid such heavy notation, in this section, we will always fix a $v_1 \in V^{(2)}_0$ and denote the random variable as $(v \mid v_0 \rightarrow v')$. Note if $|\Ttwo| < i$, we would have $v = \fail$. Hence $\fail$ is also in the support of such a random variable.  

\paragraph*{Proof Strategy} 

Having defined the probability space, we will show that: 
\begin{enumerate}
    \item For any fixed $v' \in V^{(1)}_i, v_1 \in V^{(2)}_0$, we show that the expected potential $\Ex_{v \mid v_0 \to v'}[\Phi(v)]$ is small.
    \item By our stopping rule, $\ch(v)$ is always at most $\ell$. 
\end{enumerate}

Together these show that $\Ex_{v \mid v_0 \to v'}[2^{\cb(v)}]$ is small. Hence we can upper bound the probability that the counter $\cb$ overflows. Namely:
$$\Pr[\cb(v) > \ellbias^{(2)} \mid v_0 \to v'] \leq \frac{\Ex_{v \mid v_0 \to v'}[2^{\cb(v)}]}{2^{\ellbias^{(2)}}} \leq \frac{\Ex_{v\mid v_0 \to v'}[\Phi(v)] \cdot 2^{\ell}}{2^{\ellbias^{(2)}}} \le 2^{\ell-\ellbias^{(2)}}.$$

\paragraph*{Intuition.}  We give some intuition for this potential function: After traversing an edge with input $a \in \High(v)$, the counter $\cb$ increases when some small-probability bad event happens. More specifically, when traversing an edge $(a,b)$ with label $b$ where $\Pr[M(a,x)=b\mid v_1\to v] = 2^{-\Delta} \ll \frac{1}{2}$. Moreover, the amount of increase is $\left\lfloor\Delta\right\rfloor$, the logarithm of that probability (up to the rounding).

In other words, with probability $2^{-\Delta}$ we traverse $(a,b)$ and the counter $\cb$ increases by $\lfloor{\Delta\rfloor}$. With probability $1-2^{-\Delta}$, we traverse the edge $(a,-b)$, in which case we do not update the counter. Overall, the expected increase of $2^{\cb(v)}$, after traversing an edge $(a,*)$ with $a\in \High(v)$, is
\[
2^{-\Delta} \cdot 2^{\lfloor{\Delta\rfloor}} + (1-2^{-\Delta}) \le 2.
\]
So the expectation of $2^{\cb(v)}$ at most doubles after traversing a high-probability edge. Thus, by introducing an extra $2^{-\ch(v)}$ term to account for the high-probability edges traversed so far, we can show the expectation of $\Phi(v) = 2^{\cb(v) - \ch(v)}$ is almost\footnote{In the real proof, we need to analyze the expectation \emph{conditioning} on the event $v_0\to v'$. The conditioning makes the argument more subtle. As a result, the expectation of $\Phi$ might grow by a factor of $(1+2^{-\rlen})$ after traversing a layer. Since there are at most $i \le T$ layers, the total growth is bounded by $O(1)$.} non-increasing.

\subsection{Evolution of Potential}

Let us first analyze how the potential function changes after traversing an edge. Below we will always fix a vertex $v_1 \in V_0^{(2)}$ and a layer $i \in [T]$. For the second second-pass-only truncated path $\Ttwo$ and any vertex $u \in V^{(2)}_{i - 1}$, we will let $(v_1 \to u)$ denote the event $\Ttwo(i - 1) = u$. 

\begin{lemma} \label{lem:edge-potential}
Fixing $v_1 \in V_0^{(2)}$, for any vertex $u$ in the second pass and an edge labeled $(a,b)$, 
$$\Phi^{a,b}(u) \leq \Phi(u) \cdot \frac{1 + 2 \cdot 2^{-\rext}}{2 \Pr_{x \mid v_1 \to u}[M(a,x) = b]}.$$
\end{lemma}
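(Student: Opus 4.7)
The statement is a per-edge accounting inequality, so the natural proof is case analysis based on whether $a \in \High(u)$, whether $a \in \Bad(u)$, and whether the program halts on the edge. Writing $p \coloneqq \Pr_{x\mid v_1\to u}[M(a,x)=b]$, the target inequality is $\Phi^{a,b}(u) \le \Phi(u)\cdot \frac{1+2\cdot 2^{-\rext}}{2p}$, so I will match each case to this bound.

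\textbf{Case 1: $a\in \High(u)$.} By the update rules from \Cref{sec:two-pass-stop-rule}, the endpoint $v$ of the edge has $\ch(v) = \ch(u)+1$ and $\cb(v) = \cb(u) + \Delta$ with $\Delta = \lfloor -\log p \rfloor$. Therefore
\[
\Phi^{a,b}(u) = \Phi(v) = \Phi(u)\cdot 2^{\Delta - 1} \le \Phi(u) \cdot \frac{1}{2p},
\]
since $2^{\Delta} \le 2^{-\log p} = 1/p$. The claimed bound is weaker than this, so we are done.

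\textbf{Case 2: $a\notin \High(u)$ and $a\notin \Bad(u)$.} Both counters are unchanged, hence $\Phi^{a,b}(u) = \Phi(u)$. Because $a$ is not a bad edge, the $L_2$-extractor definition gives $p \in (\tfrac{1}{2}-2^{-\rext},\tfrac{1}{2}+2^{-\rext})$, so
\[
\frac{1+2\cdot 2^{-\rext}}{2p} \ge \frac{1+2\cdot 2^{-\rext}}{1+2\cdot 2^{-\rext}} = 1,
\]
which yields the stated inequality.

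\textbf{Case 3: $a\notin \High(u)$ and $a\in \Bad(u)$.} The stopping rule for ``bad but not high-probability'' edges in \Cref{sec:two-pass-stop-rule} forces $v=\fail$, so $\Phi^{a,b}(u) = 0$ and the inequality is trivial (and remains trivial if $u$ itself has already been sent to $\fail$ by an earlier stopping rule, since then $\Phi(u)=0$ propagates too).

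\textbf{Expected difficulty.} This is essentially a bookkeeping lemma---the only subtlety is to handle the overlap case $a\in \High(u)\cap \Bad(u)$ (which lives inside Case 1 above, where the high-probability update already absorbs the biased cut through the $2^{\Delta-1}$ factor) and to notice that the $2^{-\rext}$ slack in the claimed bound is exactly what is needed to dominate the trivial $\Phi(v)=\Phi(u)$ step in Case 2. No $L_2$-extractor machinery beyond the definition of $\Bad(u)$ is invoked.
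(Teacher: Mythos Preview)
Your proof is correct and matches the paper's case analysis almost verbatim. The paper includes one additional explicit case---when a vertex-level stopping rule (significant state, significant value, or the first-pass copy halting) fires at $u$ itself, forcing $v=\fail$ and hence $\Phi^{a,b}(u)=0$---which you only address implicitly; since the right-hand side is nonnegative this case is trivial, so your three-case split suffices.
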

\begin{proof}
We need to discuss the four cases: Let $v$ be the vertex we reach after traversing this edge.
\begin{itemize}
    \item We stopped on $u$ due to stopping rules. Then $\Phi^{a,b}(u) = \Phi(\fail) = 0$ since we force $u$ to traverse to $\fail$ after reading $(a,b)$.
    \item $a \not\in \Bad(u)$: In this case, both $\cb$ and $\ch$ are unchanged. We know $\Phi^{a,b}(u) = \Phi(u)$.  By the definition of bad edges, we know that $$\Pr_{x \mid v_1 \to u}[M(a,x) = b] \in \left(\frac{1}{2} - 2^{-\rext}, \frac{1}{2} + 2^{-\rext}\right).$$
    
    As a result $$\frac{\Phi^{a,b}(u) }{\Phi(u)} = 1 \leq \frac{1 + 2 \cdot 2^{-\rext}}{2 \Pr_{x \mid v_1 \to u}[M(a,x) = b]}.$$
    \item  $a \in \Bad(u) \setminus \High(u)$: By our stopping rule, we will stop on this edge. Hence $\Phi^{a,b}(u) = \Phi(\fail) = 0$.  The inequality trivially holds. 
    \item $a \in \High(u)$: In this case, the counter $\ch(v) = \ch(u) + 1$ and $\cb(v) = \cb(u) + \Delta$ with $ \Delta = \left\lfloor-\log\left( \Pr_{x \mid v_1\to v}[M(a,x)=b] \right) \right\rfloor$. Hence, 
    $$\frac{\Phi^{a,b}(u) }{\Phi(u)} = 2^{\Delta - 1} \leq \frac{1}{2 \Pr_{x \mid v_1 \to u}[M(a,x) = b]}.$$
\end{itemize}
Having verified all possible cases, we conclude the validity of the lemma.
\end{proof}

\subsection{Potential Grows Slowly}

As mentioned in \Cref{subsec:potental-def-overview}, we will analyze the conditional expectation:
\[
\Ex[\Phi(v) \mid v_0\to v'].
\]
We observe that
\[
\Ex[\Phi(v) \mid v_0\to v'] = \frac{\Ex_{v\sim \Ttwo}[\Phi(v) \cdot \mathbbm{1}[v_0\to v']]}{\Pr[v_0\to v']}.
\]
Here $v \sim \Ttwo$ is a random variable determined by the random process $\Ttwo$, while $(v_0 \to v')$ is an event for the random process $\Tone$. As explained in \Cref{subsec:potental-def-overview}, we are taking the natural coupling between $\Tone$ and $\Ttwo$. So the expectation on the numerator is well-defined. Since $v$ runs a copy of the first pass, the indicator $\mathbbm{1}[v_0 \to v']$ is the same as $\mathbbm{1}[v' \text{ is remembered in } v]$. 
\\

We are ready to state the core lemma in this section.

\begin{lemma}\label{lemma:two-pass-pot-slow-grow}
    For every $i\in [T]$ and every $v'\in V^{(1)}_i$ such that $\Pr[v_0\to v'] \ne 0$, we have
    \[
    \Ex[\Phi(v) \mid v_0\to v'] = \frac{\Ex_{v\sim \Ttwo}[\Phi(v) \cdot \mathbbm{1}[v_0\to v']]}{\Pr[v_0\to v']}\le (1+2^{-2\rlen+2})^{i}.
    \]
\end{lemma}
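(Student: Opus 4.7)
The plan is to prove the inequality by induction on $i$, tracking the numerator $\Psi_i(v') := \Ex_{v\sim \Ttwo}[\Phi(v)\cdot \mathbbm{1}[v_0\to v']]$ (with $v = \Ttwo[i]$) for $v' \in V^{(1)}_i$, and showing $\Psi_i(v') \le (1+2^{-2\rlen+2})^i\cdot \Pr[v_0\to v']$. The base case $i=0$ is immediate: deterministically $\Ttwo[0] = v_1$ has $\cb(v_1) = \ch(v_1) = 0$ so $\Phi(v_1) = 1$, and the event $v_0\to v_0$ is certain.

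For the inductive step, fix $v'\in V^{(1)}_i$ and split the numerator according to the Pass-$2$ vertex $u$ reached at layer $i-1$. By the first-pass-copy stopping rule, each such $u$ uniquely remembers a Pass-$1$ vertex $u'$, and $\Ttwo[i-1]=u$ already implies $v_0\to u'$ (otherwise the second pass would have halted). Using the factorization $\mathbbm{1}[v_0\to v'] = \mathbbm{1}[v_0\to u']\cdot \mathbbm{1}[\calE(u', v', a_i, b_i)]$, where $\calE(u', v', a, b)$ is the event that reading $(a,b)$ from $u'$ reaches $v'$ without triggering a first-pass stopping rule, the first indicator becomes redundant and
\[
\Psi_i(v') = \sum_{u \in V^{(2)}_{i-1}} \Pr[\Ttwo[i-1]=u]\cdot \Ex\left[\Phi^{a_i,b_i}(u)\cdot \mathbbm{1}[\calE(u',v',a_i,b_i)] \mid \Ttwo[i-1]=u\right].
\]

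Conditional on $\Ttwo[i-1]=u$, $a_i$ is uniform on $A$ and independent of $x\sim \doubleP_{x\mid v_1\to u}$, with $b_i = M(a_i, x)$. Applying \Cref{lem:edge-potential} bounds $\Phi^{a_i, b_i}(u)$ by $\Phi(u)\cdot (1+2\cdot 2^{-\rext})/(2\Pr_{x'\mid v_1\to u}[M(a_i, x')=b_i])$, and the weight of $b_i$ in the expectation cancels this denominator exactly. The result is
\[
\Ex\left[\Phi^{a_i,b_i}(u)\cdot \mathbbm{1}[\calE(u',v',a_i,b_i)] \mid \Ttwo[i-1]=u\right] \le \Phi(u)\cdot \frac{1+2\cdot 2^{-\rext}}{2}\cdot \frac{N(u', v')}{|A|},
\]
where $N(u', v') := |\{(a,b):\calE(u', v', a, b)\}|$. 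The crucial remaining step is to convert the raw edge count $N(u', v')/|A|$ into the first-pass transition probability $\Pr[u'\to v'\mid v_0\to u']$: since $\calE(u', v', a, b) = 1$ forces $a\notin \Bad(u')$, the bad-edge definition gives $\Pr_{x\mid v_0\to u'}[M(a,x)=b]\ge \tfrac12 - 2^{-\rext}$ for each surviving edge, and summing shows $N(u', v')/|A|\le \frac{2}{1-2\cdot 2^{-\rext}}\cdot \Pr[u'\to v'\mid v_0\to u']$.

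Finally, regrouping the sum over $u$ by the Pass-$1$ vertex $u'$ it remembers gives $\sum_{u \text{ rem } u'}\Pr[\Ttwo[i-1]=u]\cdot \Phi(u) = \Psi_{i-1}(u')$, to which the inductive hypothesis applies; combined with $\sum_{u'\in V^{(1)}_{i-1}}\Pr[v_0\to u']\Pr[u'\to v'\mid v_0\to u'] = \Pr[v_0\to v']$ and the estimate $\frac{1+2\cdot 2^{-\rext}}{1-2\cdot 2^{-\rext}}\le 1+2^{-2\rlen+2}$ (using $\rlen\le \rext/2$), this closes the induction. The main obstacle is the careful coupling bookkeeping: ensuring the implication $\Ttwo[i-1]=u\Rightarrow v_0\to u'$ is guaranteed by the modification in \Cref{sec:modify}, and that the denominator from \Cref{lem:edge-potential} --- which uses the Pass-$2$ posterior $\doubleP_{x\mid v_1\to u}$ rather than the Pass-$1$ posterior $\doubleP_{x\mid v_0\to u'}$ --- nonetheless cancels cleanly against the expectation weight, so that the resulting counting bound $N(u', v')/|A|$ can be matched back against the first-pass transition using only the first-pass bad-edge rule.
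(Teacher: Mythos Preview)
Your approach is the same as the paper's: induct on the layer, apply \Cref{lem:edge-potential} for the upper bound on the $\Phi$-growth (where the denominator $\Pr_{x\mid v_1\to u}[M(a,x)=b]$ cancels against the conditional weight of the label $b$), and use the first-pass bad-edge rule to relate the resulting edge count back to a first-pass transition probability. The paper packages the inductive step slightly differently --- it fixes a single first-pass edge $e'=(u',v')$ with label $(a',b')$ and proves the ratio $\Ex[\Phi(e)\,\mathbbm{1}[v_0\to e']]/\Pr[v_0\to e']$ grows by at most $(1+2^{-2\rlen+2})$ (\Cref{lemma:two-pass-pot-vertex-to-edge}), then sums over incoming edges --- but the content is identical.

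One minor gap: your conversion $N(u',v')/|A|\le \tfrac{2}{1-2\cdot 2^{-\rext}}\Pr[u'\to v'\mid v_0\to u']$ forgets that the first-pass transition $u'\to v'$ also fails when $x\in\SigV(u')$, so each term contributing to $\Pr[u'\to v'\mid v_0\to u']$ is only $\ge \tfrac12-2^{-\rext}-\Pr_{x\mid v_0\to u'}[x\in\SigV(u')]$, not $\ge\tfrac12-2^{-\rext}$. The paper handles this by subtracting $\Pr_{x\mid v_0\to u'}[x\in\SigV(u')]\le 2^{2\ellsigs^{(1)}-\ellsigv}$ explicitly and absorbing it (together with $2^{-\rext}$) into the $\tfrac12-2^{-2\rlen}$ lower bound; the same fix applies to your argument with no change to the final growth factor.
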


\begin{proof}
We use induction on $i\in [T]$. For the case that $i=0$, $v_0\in V^{(1)}_0$ (resp. $v_1\in V^{(1)}_1$) is the only vertex such that $\Pr[v_0\to v_0] \ne 0$ (resp. $\Pr[v_1\to v_1]\ne 0$). The lemma holds trivially. Next, suppose the lemma holds for $i$. We prove it for the case of $i+1$. 

For any edge $e'$ linking $u'\in V^{(1)}_{i}$ and $v'\in V^{(1)}_{i+1}$, let $v_0\to e'$ denote the event that the program traverses the edge $e'$ without stopping at $u'$. (In particular, this implies that $e'\notin \Bad(u')$).

Fix $v'\in V^{(1)}_{i+1}$. Denote by $\Gamma^-(v')$ the set of incoming edge $e'$ to $v'$ such that $\Pr[v_0\to e'] \ne 0$. We observe that
\[
\Pr[v_0\to v'] = \sum_{e'\in \Gamma^-(v')}\Pr[v_0\to e'],
\]
and
\[
\Ex[\Phi(v) \cdot \mathbbm{1}[v_0\to v']] = \sum_{e'\in \Gamma^{-}(v')} \Ex_{e\sim \Ttwo[i,i+1]}[\Phi(e) \cdot \mathbbm{1}[v_0\to e']].
\]
Again, when we write $\mathbbm{1}[v_0\to e']$ in the expectation, we are considering the natural coupling between $\Tone$ and $\Ttwo$. Thus, $\mathbbm{1}[v_0\to e']$ denotes the event that the first pass traverses the edge $e'$. Observe that the event indicator $\mathbbm{1}[v_0\to e']$ is determined after conditioning on $v_1\to e$. More precisely, suppose $e'$ connects $(u',v')$ and has label $(a',b')$. Then $\mathbbm{1}[v_0\to e']$ is true, if and only if $e$ links two vertices $(u,v)$ where $u$ remembers $u'$, and the label of $e$ is exactly $(a',b')$. 

The following lemma is the key step in the induction proof.

\begin{lemma}\label{lemma:two-pass-pot-vertex-to-edge}
    Assuming that for every $u'\in V^{(1)}_i$ with $\Pr[v_0\to u'] \ne 0$, we have
    \[
    \Ex[\Phi(u) \mid v_0\to u'] := \frac{\Ex_{u\sim \Ttwo}[\Phi(u) \cdot \mathbbm{1}[v_0\to u']]}{\Pr[v_0\to u']}\le (1+2^{-2\rlen+2})^{i}.
    \]
    Then, for every $e'$ being an edge from $V^{(1)}_{i}$ to $V^{(1)}_{i+1}$ such that $\Pr[v_0\to e'] \ne 0$, we have
    \[
    \frac{\Ex_{e\sim \Ttwo[i,i+1]}[\Phi(e) \cdot \mathbbm{1}[v_0\to e']]}{\Pr[v_0\to e']} \le (1+2^{-2\rlen+2})^{i+1}.
    \]
\end{lemma}

We defer the proof of \Cref{lemma:two-pass-pot-vertex-to-edge} to the end of the subsection. Assuming \Cref{lemma:two-pass-pot-vertex-to-edge} for now, we quickly finish the proof of \Cref{lemma:two-pass-pot-slow-grow}. Note that
\[
\begin{aligned}
    \Ex[\Phi(v) \cdot \mathbbm{1}[v_0\to v']] 
    &= \sum_{e'\in \Gamma^{-}(v')} \Ex_{e\sim \Ttwo[i,i+1]}[\Phi(e) \cdot \mathbbm{1}[v_0\to e']] \\
    &\le \sum_{e'\in \Gamma^-(v')} \Pr[v_0\to e'] \cdot (1+2^{-2\rlen+2})^{i+1} \\
    &\le \Pr[v_0\to v'] \cdot (1+2^{-2\rlen+2})^{i+1}.
\end{aligned}
\]
Re-arranging proves the desired bound for $v'$. As this holds for every $v'\in V^{(1)}_{i+1}$, we have verified the lemma for $i+1$. By induction on $i$, this completes the proof.
\end{proof}

We prove \Cref{lemma:two-pass-pot-vertex-to-edge} now.

\begin{proof}[Proof of \Cref{lemma:two-pass-pot-vertex-to-edge}]
    Suppose $e'$ links $(u',v') \in V^{(1)}_{i}\times V^{(1)}_{i+1}$ with label $(a',b')$. As $\Pr[v_0\to e'] > 0$, we know $e'$ is not a bad edge. Thus,
    \begin{align*}
    \Pr[v_0\to e'] 
    &= \Pr[a_{i+1}=a'] \cdot \Pr_{\Tone}[(v_0\to u')\land (x\notin \SigV(u'))\land (M(a',x)=b')\mid a_{i+1}=a'] \\
    &\ge  2^{-n} \cdot \Pr[v_0\to u'] \left( \Pr_{x'\sim \doubleP_{x|v_0\to u'}}[M(a',x')=b'] - \Pr_{x'\sim \doubleP_{x|v_0\to u'}}[x'\in \SigV(u')]\right) \\
    &\ge 2^{-n} \cdot \Pr[v_0\to u'] \left(\frac{1}{2} - 2^{-\rext} - 2^{\ellsigv^{(1)} -\ellsigv} \right) \tag{$a'$ is not bad} \\
    &\ge \Pr[v_0\to u'] \cdot 2^{-n} \cdot \left(\frac{1}{2} - 2^{-2\rlen}\right).
    \end{align*}
    On the other hand, consider a realization of $(\Tone,\Ttwo)$ (under the natural coupling). Given $e\sim\Ttwo[i,i+1]$, recall that $\mathbbm{1}[v_0\to e']$ is true if and only if $e$ has label $(a',b')$ and connects two vertices $(u,v)$ where $u$ remembers $u'$. We consider the distribution of $(u,x)$ under $\Ttwo$, where $u= \Ttwo[i]$ and $x\in X$ is the hidden vector.
    We then calculate the probability that the program traverses an edge from $V^{(2)}_i$ to $V^{(2)}_{i+1}$ with label $(a',b')$. That is, we can write
    \begin{align*}
    \Ex_{\Ttwo}[\Phi(e) \cdot \mathbbm{1}[v_0\to e']] 
    &= \Ex_{(u,x)\sim \Ttwo}\left[ \Phi^{a',b'}(u)\cdot \mathbbm{1}[v_0\to u'] \cdot \mathbbm{1}[(x\notin \SigV(u))\land (M(a',x)=b')] \right] \cdot \\ &\qquad \Pr[a_{i+1}=a'] \tag{$a_{i+1}$ is independent of $(u,x')$}\\
    &\le 2^{-n} \Ex_{u\sim \Ttwo}\left[ \Phi^{a',b'}(u)\cdot \mathbbm{1}[v_0\to u'] \cdot \Pr_{x'\sim \doubleP_{x|v_1\to u}}[M(a',x')=b'] \right] \tag{Dropping $\mathbbm{1}[x\notin \SigV(u)]$} \\
    &\le 2^{-n} \Ex\left[ \Phi(u)\cdot \frac{1+2^{-\rext+1}}{2} \mathbbm{1}[v_0\to u']  \right]  \tag{\Cref{lem:edge-potential}} \\
    &\le \Ex\left[ \Phi(u) \cdot \mathbbm{1}[v_0\to u']\right] \cdot 2^{-n} \left( \frac{1}{2} + 2^{-\rext}\right).
    \end{align*}
    %We briefly justify the derivation. The first inequality is because $a_{i+1}$ is independent of $a_{\le i}$ and $x$. The second inequality follows because dropping the condition $x'\notin \SigV(u)$ does not make the expression larger. Furthermore, conditioning on $(v_1\to u)\land (v_0\to u')$, $x$ is distributed as $\doubleP_{x|v_1\to u}$. The third inequality is by \Cref{lem:edge-potential}.

    Combining the two inequalities and using the assumption, we obtain
    \[
    \begin{aligned}
    \frac{ \Ex_{e}[\Phi(e) \cdot \mathbbm{1}[v_0\to e']] } { \Pr[v_0\to e'] } 
    \le \frac{ \Ex_{u}[\Phi(u) \cdot \mathbbm{1}[v_0\to u']] } { \Pr[v_0\to u'] } \cdot  \frac{\frac{1}{2} + 2^{-\rext}}{\frac{1}{2} - 2^{-2\rlen}} 
    \le (1+2^{-2\rlen+2})^{i+1},
    \end{aligned}
    \]
    as claimed.
\end{proof}

\subsection{Upper Bounding the Probability of $\cb$ Overflow}

\begin{corollary}\label{cor:edge-potential-exp}
For any layer $i \in [T]$, $v_1 \in V^{(2)}_0$ and $v' \in V^{(1)}_i$, we have 
$$\Pr[\cb(v) > \ellbias^{(2)} \mid v_0 \to v'] \leq \frac{(1 + 2^{-2\rlen+2})^i\cdot 2^{\ell}}{2^{\ellbias^{(2)}}} \le 2^{\ell - \ellbias^{(2)}+1}.$$
\end{corollary}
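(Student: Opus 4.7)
The plan is to derive this corollary as a direct Markov's-inequality consequence of \Cref{lemma:two-pass-pot-slow-grow}, using the $\ch$-stopping rule to convert the potential bound into a bound on $2^{\cb(v)}$. First I would observe that for every non-$\fail$ vertex $v$ reached by $\Ttwo$, the stopping rule guaranteeing $\ch(v) \le \ellhigh^{(2)}=\ell$ (since we stop the moment $\ch$ exceeds $\ell$), combined with the definition $\Phi(v) = 2^{\cb(v)-\ch(v)}$, gives the deterministic pointwise bound
\[
2^{\cb(v)} \;=\; \Phi(v)\cdot 2^{\ch(v)} \;\le\; \Phi(v)\cdot 2^{\ell}.
\]
For $v=\fail$ both $\Phi(v)$ and the indicator of $\cb(v)>\ellbias^{(2)}$ vanish, so the bound trivially extends to all $v \sim \Ttwo[i]$.

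Next I would apply Markov's inequality, conditional on the event $v_0 \to v'$, to the non-negative random variable $2^{\cb(v)}$:
\[
\Pr[\cb(v) > \ellbias^{(2)} \mid v_0\to v'] \;=\; \Pr[2^{\cb(v)} > 2^{\ellbias^{(2)}} \mid v_0\to v'] \;\le\; \frac{\Ex[2^{\cb(v)} \mid v_0\to v']}{2^{\ellbias^{(2)}}}.
\]
Plugging in the pointwise bound and then \Cref{lemma:two-pass-pot-slow-grow} yields
\[
\Pr[\cb(v) > \ellbias^{(2)} \mid v_0\to v'] \;\le\; \frac{2^{\ell}\cdot \Ex[\Phi(v) \mid v_0\to v']}{2^{\ellbias^{(2)}}} \;\le\; \frac{(1+2^{-2\rlen+2})^{i}\cdot 2^{\ell}}{2^{\ellbias^{(2)}}},
\]
which is exactly the first inequality stated in the corollary. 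Note that this is where the strength of the earlier lemma is actually used; no new probabilistic content is introduced here.

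For the second (looser) inequality, I would bound $(1+2^{-2\rlen+2})^i$ using $i \le T = 2^{\rlen}$ and $1+x \le e^x$:
\[
(1+2^{-2\rlen+2})^{i} \;\le\; \exp\!\bigl(2^{\rlen}\cdot 2^{-2\rlen+2}\bigr) \;=\; \exp\!\bigl(2^{-\rlen+2}\bigr) \;\le\; 2,
\]
where the last inequality uses $\rlen \ge 2$, which follows from the parameter regime (e.g.\ $\rlen \le \tfrac{\ell}{3}-4$ forces $\ell$ large, and $\rlen$ is chosen comparable). This gives the claimed $2^{\ell - \ellbias^{(2)}+1}$ bound. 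There is no real obstacle in this step: the entire corollary is a one-shot Markov application that repackages \Cref{lemma:two-pass-pot-slow-grow} in the exact form demanded by the transfer lemma of \Cref{sec:two-pass-transfer}, where the uniform-in-$v'$ control $\Pr[\cb(v) > \ellbias^{(2)} \mid v_0\to v'] \le 2^{\ell - \ellbias^{(2)}+1}$ is needed to transfer the bound from conditioning on first-pass states to the unconditional second-pass stopping probability.
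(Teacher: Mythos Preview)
Your proposal is correct and essentially identical to the paper's proof: both use the $\ch$-stopping rule to get $2^{\cb(v)}\le 2^{\ell}\Phi(v)$, apply Markov, and invoke \Cref{lemma:two-pass-pot-slow-grow}. One minor arithmetic slip: $\rlen\ge 2$ gives $\exp(2^{-\rlen+2})=e>2$, so you actually need $\rlen\ge 3$ (which the parameter regime easily provides) for the final inequality.
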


\begin{proof}
By the stopping rule for $\ch$, we know that if we have not stopped, we always have $\ch(v) \leq \ell$. Consequently, we have $\Phi(v)\ge 2^{\cb(v)-\ell}$. Thus,
\[
\Pr[\cb(v) > \ellbias^{(2)} \mid v_0 \to v'] \leq \frac{\Ex_{v \mid v_0 \to v'}[2^{\cb(v)}]}{2^{\ellbias^{(2)}}} \leq \frac{\Ex_{v\mid v_0 \to v'}[\Phi(v)] \cdot 2^{\ell}}{2^{\ellbias^{(2)}}}.
\]
We finish the proof by noting that $\Ex_{v\mid v_0 \to v'}[\Phi(v)] \le (1 + 2^{-2\rlen+2})^i \le 2$.
\end{proof}

\section{Transfer Lemma} \label{sec:two-pass-transfer}
In this section, we will present a key lemma (the ``transfer lemma'') used in the analysis, which is a generalization of Lemma 9 in \cite{garg2019time} (proceeding version).
Jumping ahead, the statement we will prove in this section is the following lemma.

\begin{restatable}[Transfer Lemma]{lemma}{twoPassTransfer}\label{lemma:Gv-magic}\label{lemma:two-pass-transfer}
    Let $v' \in V^{(1)}_i$ be a vertex in the $i$-th layer of the first pass, and $v_1 \in V^{(2)}_0$ be a vertex at the beginning of the second pass. Define a set 
    $$
    S_{v',v_1,i} \coloneqq \{ v\in V^{(2)}_i: \text{$v$ remembers $v'$ and $v_1$} \}.
    $$
    Let $E:X\times V^{(2)}_i\to \{0,1\}$ be any event such that $E(x,v)$ that only depends on $x$ and $v$. Assume that for all fixed $v'\in V^{(1)}_i, v_1\in V^{(2)}_0$, 
    \begin{align}\ \label{equ:assumption}
    \sum_{v \in S_{v',v_1,i} } \Pr[(v_1 \to v) \land E(x,v)] \leq \Pr[v_0 \to v'] \cdot 2^{-k}.
    \end{align}
    This implies for all $v'\in V^{(1)}_i$, $$\sum_{v_1 \in V^{(2)}_0} \sum_{v \in S_{v',v_1,i}} \Pr[(v_0 \to v) \land E(x, v)] \le \Pr[v_0 \to v'] \cdot (2^{-k + 12\ellsigs^{(1)}+4} + 2^{-\ellsigs^{(1)}+2}).$$
\end{restatable}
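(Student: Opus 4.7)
My plan has three stages: rewrite $Q$ using a pass-1/pass-2 path decomposition, derive a pointwise inequality from \Cref{theo:GRT-extractor}, and conclude via truncation. For the first stage, I will observe that for any $v \in S_{v',v_1,i}$, the event $v_1 \to v$ already implies $v_0 \to v'$ because pass-2 stopping rule~3 forces the simulated first pass not to stop and $v$ remembers $v'$ at layer $i$. Combined with the pass-1 decomposition $v_0 \to v_1 \equiv (v_0 \to v') \land (v' \to v_1)$, this yields $\Pr[(v_0 \to v) \land E(x,v)] = \Pr[(v' \to v_1) \land (v_1 \to v) \land E(x,v)]$. Given $x$, $v' \to v_1$ depends only on $a_{i+1},\ldots,a_T$ while $(v_1 \to v) \land E(x,v)$ depends only on $a_1,\ldots,a_i$, so they are conditionally independent. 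Defining $h_{v_1}(x) := \Pr[v' \to v_1 \mid x]$ and $g_{v_1}(x) := \sum_{v \in S_{v',v_1,i}} \Pr[(v_1 \to v) \land E(x,v) \mid x]$, one obtains $Q = \sum_{v_1 \in V^{(2)}_0} \Ex_x[h_{v_1}(x)\,g_{v_1}(x)]$, and the hypothesis becomes $\Ex_x[g_{v_1}(x)] \le \Pr[v_0 \to v'] \cdot 2^{-k}$.

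Next, I leverage that each $v_1 \in V^{(2)}_0 = V^{(1)}_T$ is a final vertex of the pass-1 one-pass program, so \Cref{theo:GRT-extractor} gives $\|\doubleP_{x \mid v_0 \to v_1}\|_\infty \le 2^{3\ellsigs^{(1)}+4-n}$, i.e., $\Pr[v_0 \to v_1 \mid x] \le 2^{3\ellsigs^{(1)}+4}\cdot \Pr[v_0 \to v_1]$ pointwise. Decomposing the pass-1 path disjointly over the layer-$i$ vertex gives $\Pr[v_0 \to v_1 \mid x] \ge \Pr[v_0 \to v' \mid x]\cdot h_{v_1}(x)$, hence $\Pr[v_0 \to v' \mid x]\cdot h_{v_1}(x) \le 2^{3\ellsigs^{(1)}+4}\cdot \Pr[v_0 \to v_1]$. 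Multiplying by $g_{v_1}(x)$, summing over $v_1$ (using $\sum_{v_1}\Pr[v_0 \to v_1] \le 1$), and taking $\Ex_x$ with the hypothesis yields the key inequality
\[
\Ex_x\bigl[F(x)\cdot \Pr[v_0 \to v' \mid x]\bigr] \le 2^{3\ellsigs^{(1)}+4}\cdot \Pr[v_0 \to v']\cdot 2^{-k}, \quad F(x) := \sum_{v_1} h_{v_1}(x)\,g_{v_1}(x).
\]

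Finally, I will split $Q = \Ex_x[F(x)]$ based on whether $\Pr[v_0 \to v' \mid x] \ge \theta$. The typical part is bounded by $\theta^{-1}$ times the inequality above; for the atypical part I use the structural bound $F(x) \le \Pr[v_0 \to v' \mid x]$, which holds because any positive $g_{v_1}(x)$ forces the simulated first pass to reach $v'$ (so $v_0 \to v'$) and $\sum_{v_1} h_{v_1}(x) \le 1$. Setting $\theta = \Pr[v_0 \to v']\cdot 2^{-\ellsigs^{(1)}+2}$ produces the second summand $\Pr[v_0 \to v']\cdot 2^{-\ellsigs^{(1)}+2}$ cleanly. \textbf{The main obstacle} is matching the first summand in the claimed form $\Pr[v_0 \to v']\cdot 2^{-k+12\ellsigs^{(1)}+4}$: the naive truncation just sketched gives a typical-part bound of $2^{-k+O(\ellsigs^{(1)})}$ \emph{without} a $\Pr[v_0 \to v']$ factor, which is only sufficient when $\Pr[v_0 \to v']$ is not too small. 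I expect the sharper universal bound to come from pairing the $L_\infty$ estimate with the $L_2$ bound of \Cref{theo:GRT-extractor} via a Cauchy--Schwarz-style second-level truncation on the distribution of $v_1$, with the inflation of the exponent from $3\ellsigs^{(1)}$ to $12\ellsigs^{(1)}$ reflecting the cost of that refinement.
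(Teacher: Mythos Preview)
Your Stage~1 decomposition is correct and matches the paper's setup, and your structural bound $F(x)\le \Pr[v_0\to v'\mid x]$ is valid for exactly the reason you give. However, the gap you flag at the end is real and your proposed patch will not close it. The quantity you control is $\Ex_x\bigl[F(x)\cdot \Pr[v_0\to v'\mid x]\bigr]$, and no amount of Cauchy--Schwarz or $L_2$ input on $\doubleP_{x\mid v_0\to v_1}$ will manufacture a $\Pr[v_0\to v']$ factor in front of the $2^{-k}$ term: once you have divided by the threshold $\theta$, the $\Pr[v_0\to v']$ on the right-hand side cancels against the one hidden in $\theta$, and you are left with $2^{-k+O(\ellsigs^{(1)})}$ uniformly in $v'$. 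Summing over $v'\in V^{(1)}_i$ then picks up a width factor $2^{\Theta(\kext\ell)}$, which destroys the bound entirely in the downstream application.

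The missing idea is that you are applying \Cref{theo:GRT-extractor} to the \emph{wrong} one-pass program. The paper does not use flatness of $\doubleP_{x\mid v_0\to v_1}$; it applies \Cref{theo:GRT-extractor} to the \emph{subprogram $B'$ starting at $v'$} (with a fresh parameter $4\ellsigs^{(1)}$), obtaining a good event $G_{v'}\subseteq X\times A^{T-i}$ with $\Pr[\overline{G_{v'}}]\le 2^{-4\ellsigs^{(1)}}$ and $\|\doubleP_{x\mid (v'\wt v_1)\land G_{v'}}\|_\infty\le 2^{12\ellsigs^{(1)}+4-n}$. This gives, by your own \Cref{lemma:flat-dist}, a pointwise bound directly on $\Pr[(v'\wt v_1)\land G_{v'}\mid x]\le 2^{12\ellsigs^{(1)}+4}\cdot \Pr[(v'\wt v_1)\land G_{v'}]$, i.e.\ on (essentially) your $h_{v_1}(x)$, \emph{without} ever routing through $\Pr[v_0\to v'\mid x]$. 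On the $G_{v'}$ part one then gets
\[
\sum_{v_1}\Ex_x[g_{v_1}(x)]\cdot 2^{12\ellsigs^{(1)}+4}\cdot \Pr[(v'\wt v_1)\land G_{v'}]
\le \Pr[v_0\to v']\cdot 2^{-k+12\ellsigs^{(1)}+4},
\]
using the hypothesis and $\sum_{v_1}\Pr[v'\wt v_1]\le 1$. The complementary $\overline{G_{v'}}$ part is handled by a separate claim showing $\Pr[(v_0\to v')\land \overline{G_{v'}}]\le 2^{-\ellsigs^{(1)}+2}\cdot\Pr[v_0\to v']$; this is where the flat truncation $v_0\wtone v'$ and a second $\Pr[v_0\to v'\mid x]$-style argument enter, but now only against the tiny probability $2^{-4\ellsigs^{(1)}}$, so no width blowup occurs. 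In short: split on $G_{v'}$ for the subprogram rooted at $v'$, not on the size of $\Pr[v_0\to v'\mid x]$.
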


\Cref{lemma:two-pass-transfer} is the \emph{only} technical statement in this section that is required in \Cref{sec:two-pass-analysis} (the main proof of the two-pass lower bound). We will use the lemma to bound the stopping probability due to Significant Value and $\cb$ overflow in the second pass.

The rest of the section is devoted to the proof of \Cref{lemma:two-pass-transfer}. We will give a clean interpretation and a simplified proof of it. 

\subsection{Overview}

For each layer $i$, we would like to prove that the probability the program stops at the $i$-th layer of the second pass (i.e. $V^{(2)}_i$) is small. Since a layer-$i$ vertex $v$ in the second pass remembers the corresponding layer-$i$ vertex $v'$ in the first pass, we have%Using Bayes Rule, we have
\begin{align}
&\Pr_{x , a_1, a_2, \dots, a_T}\left[\text{truncated path from }v_0\text{ stops at  $V^{(2)}_i$} \right] \notag \\
&\quad= \sum_{v'\in V^{(1)}_i}\Pr[v_0\to v'] \cdot \Pr_{x , a_1, a_2, \dots, a_T}\left[\text{truncated path from }v_0\text{ stops at  $V^{(2)}_i$} \ \middle\vert \ v_0 \to v'\right]. \label{equ:transfer-sum}
\end{align}

\paragraph*{Informal Statement.} 
Let $k$ be a parameter (think of $k$ as a large constant times $\ell$). Assume the following inequality holds on the truncated path \emph{from $v_1$}: 
\begin{equation} \label{equ:transfer-asumpt}
\forall v' \in V_i^{(1)}, v_1 \in V^{(2)}_0, \quad \Pr_{x, a_{\leq i}}\left[\text{truncated path from }v_1\text{ stops at  $V^{(2)}_i$} \ \middle\vert \ v_0 \to v'\right] \leq 2^{-k}.
\end{equation}

Given the assumption, we prove the following bound on the truncated path \emph{from $v_0$}:
\begin{equation} \label{equ:transfer-conclu}
\forall v' \in V_i^{(1)}, \quad \Pr_{x , a_1, a_2, \dots, a_T}\left[\text{truncated path from }v_0\text{ stops at  $V^{(2)}_i$} \ \middle\vert \ v_0 \to v'\right] \leq 2^{-k+O(\ell)}.
\end{equation}
Plugging \eqref{equ:transfer-conclu} into \eqref{equ:transfer-sum} shows that the program stops at $V_{i}^{(2)}$ with an exponentially small probability. Summing up $i\in [T]$ shows that the probability of stopping somewhere along the second pass is also small. Thus, it remains to prove \eqref{equ:transfer-conclu}.

\paragraph*{Intuition.} To get some intuition, let us looks at a much stronger assumption than \eqref{equ:transfer-asumpt}. It will trivially imply \eqref{equ:transfer-conclu}, but at the same time,  is too good to be true:
\begin{equation} \label{equ:strong}
\forall v' \in V_i^{(1)}, \quad \Pr_{x, a_{\leq i}}\left[\exists  v_1 \in V^{(2)}_0, \text{truncated path from }v_1\text{ stops at  $V^{(2)}_i$}  \ \middle\vert \ v_0 \to v'\right] \leq 2^{-k}.
\end{equation}
Note here we are simply changing the order between the qualifier and the probability. It is a stronger statement because there are $2^{\Theta(n^2)} \gg 2^k$ many possibilities of $v_1$. Hence \eqref{equ:strong} cannot be derived from \eqref{equ:transfer-asumpt} by a simple union bound. 

Now we will prove that trivially  $\eqref{equ:strong} \implies \eqref{equ:transfer-conclu}$. Given $v_0 \to v'$, the vertex $v_1$ is determined by the truncated path from $v'$, which depends on $x, a_{> i}$. By \eqref{equ:strong}, with probability at least $1 - 2^{-k}$, for all vertex $v_1$, the truncated path from $v_1$ will not stop at $V_i^{(2)}$. As a result, starting at the particular vertex $v_1$ picked by the program does not stop at $V^{(2)}_i$.

But for $\eqref{equ:transfer-asumpt} \implies \eqref{equ:transfer-conclu}$, the argument breaks down. This is due to the following adaptivity issue: Consider an adversary who, after observing $(a_{i+1},b_{i+1}),\dots, (a_{T},b_{T})$, figures out the value of $x$, and picks the worst vertex $v_1$ maximizing $\Pr_{a_{\leq i}}\left[\text{truncated path from }v_1\text{ stops at  $V^{(2)}_i$} \ \middle\vert \ \text{value of }x, \ v_0 \to v'\right].$ Then we will have no control over such probability. 

\begin{figure}[H]
    \centering
    \scalebox{1.4}{
    \begin{tikzpicture}
  % Nodes
  \node (v0) {$v_0$};
  \node[right=2cm of v0] (v') {$v'$}; % Adjust horizontal distance here
  \node[right=2cm of v'] (v1) {$v_1$};
  \node[below=0.5cm of v0] (v1_second) {$v_1$}; % Adjust vertical distance here
  \node[right=2cm of v1_second] (v) {$v$};
  \node[above = of v'](s1) {};
  \node[below = of v](t1) {};
  \node[left = of s1](sl1) {};
  \node[below = 0.2cm of sl1] {$x, a_{\leq i}$};
  \node[right = of s1](sr1) {};
  \node[below = 0.2cm of sr1] {$x, a_{> i}$};

  % Edges
  \draw[->] (v0) -- (v');
  \draw[->] (v') -- (v1);
  \draw[->] (v1_second) -- (v);
  \draw[dashed, red, thick] (s1) -- (t1);
\end{tikzpicture}}
    \caption{The adaptivity issue.}
    \label{fig:two-pass-decouple}
\end{figure}
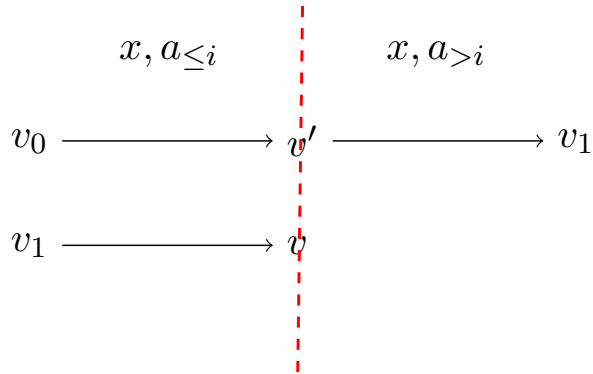
Here the adaptivity issue comes from the following fact: The choice of $v_1$ (conditioning on $v_0\to v'$) depends on $x$, but at the same time, the truncated path from $v_1$ (before layer $i$) also depends on $x$.  (See \Cref{fig:two-pass-decouple}.) Although the choice of $v_1$ depends on $v', x, a_{> i}$, we observe that $v'$ and $a_{> i}$ do not contribute to the adaptivity issue. This is because (1) we have already conditioned on a fixed $v'$, and (2) $a_{\leq i}$ and $v'$ are (jointly) independent of $a_{> i}$.

Luckily, such an adversary does not exist in our setting. This is because not only $v_1$ is picked depending on $v', x, a_{\leq i}$, but also it is picked by a \emph{one-pass algorithm with a small memory}. %that is $v' \to v_1$. 
Since a one-pass algorithm does not learn much about $x$, it cannot pick an adversarial $v_1$ based on $x$. This is the main idea of our proof. 

\paragraph*{Proof Overview.} We will now formalize the intuition that the one-pass algorithm $v' \wt v_1$ cannot be a powerful adversary. By the analysis for one-pass algorithms, there are two possibilities. 

\begin{itemize}
    \item With $1 - 2^{-\Theta(\ell)}$ probability, $v' \wt v_1$ does not trigger any stopping rule. We will denote this case as the ``good event'' $G_{v'}$. In this case, the algorithm learns only a negligible amount of information about $x$. 
    
    More specifically, by our stopping rules, the posterior $\doubleP_{x \mid G_{v'} \land v' \wt v_1}$ cannot have significant values, i.e., $\doubleP_{x \mid G_{v'} \land v' \wt v_1}(x') < 2^{O(\ell)} \cdot 2^{-n}$.%for some $\epsilon > 0$.
    \footnote{We will formally define $G_{v'}$ in \Cref{def:good-event}. For technical reasons, $G_{v'} \land v' \wt v_1$ is not the same as $v' \to v_1$. We will explain such difference fully in the remark following \Cref{def:good-event}. }
    We say the distribution $\doubleP_{x \mid G_{v'} \land v' \wt v_1}$ is flat if and only if this holds. This implies that for all $x' \in X$,
    \begin{align*}\Pr[G_{v'} \land (v' \wt v_1) \mid x = x'] &= \frac{\Pr[G_{v'} \land (v' \wt v_1)] \cdot \doubleP_{x \mid G_{v'} \land (v' \wt v_1)}(x') }{\Pr[x = x']}\\
    &\leq 2^{O(\ell)} \cdot \Pr[G_{v'} \land (v' \wt v_1)].
    \end{align*}
    Intuitively, because the distribution $\doubleP_{x \mid G_{v'} \land (v' \wt v_1)}$ is flat, when $G_{v'}$ happens, the choice of $v_1$ almost does not depend on the value of $x$. Hence the adversary is picking $v_1$ ``almost non-adaptively''.

    \item With $2^{-\Theta(\ell)}$ probability, $v' \wt v_1$ stops. We denote this as the event $\bar{G_{v'}}$, and this is saying that for all fixed $v'$,  $\Pr[\bar{G_{v'}}] \leq 2^{-\Theta(\ell)}$. If $\bar{G_{v'}}$ happens, the one pass adversary $v' \wt v_1$ may have learned some nontrivial information about $x$. 
    
    We need to upper bound the overall probability that, from $v_0$, we reach a vertex $v' \in V^{(1)}_i$ such that $G_{v'}$ happens. Formally, this is the probability
    $$\sum_{v' \in V^{(1)}_i} \Pr[(v_0 \to v') \land \bar{G_{v'}}].$$
    Note the events $\{v_0 \to v'\}_{v' \in V_i^{(1)}}$ are disjoint. So we can interpret it as follows: A one-pass adversary $v_0 \to v'$ picks a vertex $v'$. Then we check if $\bar{G_{v'}}$ happens for that $v'$.

    Here a similar adaptivity issue arises: $\bar{G}_{v'}$ depends on $x, a_{> i}$, but $v'$ is picked by the adversary $v_0 \to v'$ which depends on $x, a_{\leq i}$. Luckily, we are able to perform the same trick again. In this case, the posterior distribution $\doubleP_{x \mid v_0 \to v'}$ is flat (for all $x'$, $\doubleP_{x \mid v_0 \to v'}(x') \leq 2^{O(\ell)} \cdot 2^{-n}$) due to our stopping rules.\footnote{Technically, $v_0 \to v'$ only ensures that $v_0$ reaches $v'$, and those $x \in \SigV(v
    ')$ is not truncated yet. So the $\doubleP_{x \mid v_0 \to v'}$ may not be flat. This will make the actual analysis a bit more complicated. We will handle this issue in \Cref{sec:two-pass-transfer-flat}. } Hence,
    $$\Pr[v_0 \to v' \mid x = x'] = \frac{\Pr[v_0 \to v'] \cdot \doubleP_{x \mid v_0 \to v'}(x')}{\Pr[x = x']} \leq 2^{O(\ell)} \cdot \Pr[v_0 \to v'].$$
    Intuitively, because $\doubleP_{x \mid v_0 \to v_1}$ is flat, the choice of $v'$ almost does not depend on the value of $x$. The adaptivity issue then goes away.
\end{itemize}

\subsection{Proof of the Transfer Lemma} \label{sec:two-pass-transfer-proof}

We prove the transfer lemma in this section. We need two technical ingredients, introduced in \Cref{sec:two-pass-transfer-flat} and \Cref{sec:two-pass-transfer-good-E}. The statement and the proof of the lemma are given in \Cref{sec:two-pass-transfer-core-proof}.

\subsubsection{Flat Truncated Path}\label{sec:two-pass-transfer-flat}

First, we can assume $v'$ is not significant (as otherwise the program stops in the first pass anyway). For proving the transfer lemma, we introduce $v_0 \wtone v'$ to denote the event 
\[
(v_0\to v') \land \mathbbm{1}[\doubleP_{x|v_0\to v'}(x) \le 2^{\ellflat^{(1)}}\cdot 2^{-n}],
\]
where we define $\ellflat^{(1)} = 3\ellsigs^{(1)}$ (note that this is \emph{different} from the significant value threshold).

This definition ensures that $\doubleP_{x \mid v_0 \wtone v'}$ is ``flat'', in the sense that it contains no significant values, as captured by the following claim.

\begin{claim}\label{claim:flat-no-sigv}
For all $x' \in X$, $\doubleP_{x \mid v_0 \wtone v'} (x') \leq 2^{\ellflat^{(1)}+1} \cdot 2^{-n}.$
\end{claim}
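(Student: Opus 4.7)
The plan is to bound $\doubleP_{x \mid v_0 \wtone v'}(x')$ by a direct Bayes calculation, using the fact that the definition of $\wtone$ already bakes in the flatness constraint. First I would note that we may restrict to $v'$ that is not significant: if $v'$ is significant, the first-pass stopping rule halts at $v'$, and (as will be seen in the downstream uses of this claim) the event $v_0 \wtone v'$ only needs to be considered for non-significant $v'$. Under this restriction, we have the norm bound $\|\doubleP_{x|v_0\to v'}\|_2 \le 2^{-n}\cdot 2^{\ellsigs^{(1)}}$, which is what makes the argument go through.

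Now set $f := \doubleP_{x|v_0\to v'}$, let $B := 2^{\ellflat^{(1)}}\cdot 2^{-n}$, and let $F := \{x'' \in X : f(x'') \le B\}$ be the flat set. Applying Bayes' rule to the definition of $\wtone$, for any $x'\in X$ we have
\[
\doubleP_{x \mid v_0 \wtone v'}(x') = \frac{f(x')\cdot \mathbbm{1}[x'\in F]}{\Pr_{x\sim f}[x\in F]}.
\]
If $x'\notin F$ the numerator vanishes and the claim is immediate. Otherwise the numerator is at most $B$ by definition of $F$, so it suffices to lower-bound the denominator by $1/2$. For this I would invoke the truncation trick from the preliminaries on $f$: since $\sum_{x''} f(x'') = 1$,
\[
\Pr_{x\sim f}[x\notin F] \;=\; \sum_{x'': f(x'')>B} f(x'') \;=\; |X|\cdot \|f^{>B}\|_1 \;\le\; \frac{|X|\cdot \|f\|_2^2}{B} \;\le\; \frac{2^{n}\cdot 2^{-2n+2\ellsigs^{(1)}}}{2^{-n+\ellflat^{(1)}}} \;=\; 2^{2\ellsigs^{(1)}-\ellflat^{(1)}}.
\]
By the parameter choice $\ellflat^{(1)} = 3\ellsigs^{(1)}$ (and $\ellsigs^{(1)}\ge 1$), this is at most $2^{-\ellsigs^{(1)}} \le 1/2$, so $\Pr_{x\sim f}[x\in F]\ge 1/2$. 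Combining with the numerator bound yields $\doubleP_{x\mid v_0\wtone v'}(x') \le 2B = 2^{\ellflat^{(1)}+1}\cdot 2^{-n}$, as desired. There is no real obstacle here: the whole point of defining $\wtone$ with a pointwise truncation at level $B$ and setting $\ellflat^{(1)}$ a constant factor larger than $\ellsigs^{(1)}$ is precisely so that a one-shot second-moment truncation kills off the heavy tail while losing only a factor of two in the normalizer.
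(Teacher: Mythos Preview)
Your proof is correct and essentially identical to the paper's: both use the $\ell_2$ truncation trick to show that the mass outside the flat set $F$ is at most $2^{2\ellsigs^{(1)}-\ellflat^{(1)}}\le 1/2$, and then divide the pointwise bound $B$ by the remaining normalizer $1-c\ge 1/2$. The only cosmetic difference is that the paper writes the truncated mass as $c$ directly while you phrase it via $\Pr_{x\sim f}[x\notin F]$; the arithmetic and structure are the same.
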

\begin{proof}
As $v'$ is not significant, we have $\|\doubleP_{x \mid v_0 \to v'}\| \leq 2^{\ellsigs^{(1)}} \cdot 2^{-n}$. Hence, we have
$$
c\coloneqq \sum_{x' \in X} \doubleP_{x\mid v_0 \to v'}(x')\cdot \mathbbm{1}[\doubleP_{x\mid v_0 \to v'}(x') \geq 2^{\ellflat^{(1)}} \cdot 2^{-n}] \leq \frac{2^n \cdot \|\doubleP_{x \mid v_0 \to v'}\|^2}{2^{ \ellflat^{(1)} } \cdot 2^{-n}} \le \frac{1}{2}.
$$
On the other hand, if $\doubleP_{x \mid v_0 \wtone v'} (x') > 0$, we must have $\doubleP_{x \mid v_0 \to v'}(x') \leq 2^{\ellflat^{(1)}} \cdot 2^{-n}$. Putting these two together, we get 
$$
\doubleP_{x \mid v_0 \wtone v'}(x') \leq \frac{\doubleP_{x \mid v_0 \to v'}(x')}{1 - c} \leq 2^{\ellflat^{(1)}+1}\cdot 2^{-n},
$$
as desired.
\end{proof}

For ``flat'' distributions, we have the following lemma. 
It will come in handy in our analysis.

\begin{restatable}{lemma}{flatdist}\label{lemma:flat-dist} Consider drawing uniformly random $(x,a_1,\dots,a_T)$. Let $E\subseteq X\times A^T$ be an event. Suppose there exists $p > 0$ such that $\Pr_{x,a_1,\dots, a_T}[x=x'|E]\le 2^{p}\cdot 2^{-n}$ for all $x'\in X$.
Then, for all $x'\in X$,
    \[
    \Pr_{a_1,\dots, a_T}[E|x=x'] \le 2^{p} \cdot\Pr_{x,a_1,\dots, a_T}[E].
    \]
\end{restatable}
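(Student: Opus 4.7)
The plan is to prove this by a single application of Bayes' rule, since the statement is essentially a reformulation of the hypothesis in terms of conditional probabilities with the roles of $E$ and $\{x = x'\}$ swapped.

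First, I would observe that since $(x, a_1, \dots, a_T)$ is drawn uniformly and $x$ is a uniform marginal over $X = \{0,1\}^n$, we have $\Pr[x = x'] = 2^{-n}$ for every $x' \in X$. Next, by Bayes' rule applied to the events $E$ and $\{x = x'\}$ (assuming $\Pr[E] > 0$; the case $\Pr[E] = 0$ is trivial since then $\Pr[E \mid x = x'] = 0$ for every $x'$ with positive probability, noting $\Pr[x = x'] > 0$), I would write
\[
\Pr[E \mid x = x'] \;=\; \frac{\Pr[x = x' \mid E] \cdot \Pr[E]}{\Pr[x = x']} \;=\; 2^{n} \cdot \Pr[x = x' \mid E] \cdot \Pr[E].
\]
Then I would plug in the hypothesis $\Pr[x = x' \mid E] \le 2^{p} \cdot 2^{-n}$ to conclude
\[
\Pr[E \mid x = x'] \;\le\; 2^{n} \cdot 2^{p} \cdot 2^{-n} \cdot \Pr[E] \;=\; 2^{p} \cdot \Pr[E],
\]
which is exactly the desired bound.

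There is no real obstacle here; the lemma is a one-line Bayesian inversion, and its purpose in the paper is to serve as a convenient packaging of the observation that if the posterior distribution on $x$ given an event $E$ is nearly flat (within a factor $2^p$ of uniform), then conditioning on any specific value of $x$ can increase the probability of $E$ by at most $2^p$. The only subtlety to handle cleanly in the write-up is the degenerate case $\Pr[E] = 0$, which should be dispatched in one sentence.
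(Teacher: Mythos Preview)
Your proposal is correct and matches the paper's own proof, which is also a one-line application of Bayes' rule using $\Pr[x=x']=2^{-n}$. The only addition you make is the (harmless) remark about the degenerate case $\Pr[E]=0$.
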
 
%
% \flatdist*
\begin{proof}
By Bayes' rule:
$$
\Pr[E|x=x'] 
= \frac{\Pr[x=x'|E]\cdot \Pr[E]}{\Pr[x=x']} 
%\le \Pr[E]\cdot \frac{ \doubleP_{x|E}(x')}{\Pr[x=x']} 
\le 2^{p} \cdot  \Pr[E].$$
\end{proof}

\subsubsection{Good Events}\label{sec:two-pass-transfer-good-E}

Before we present the transfer lemma, we have to make some definitions and claims about the one-pass learning algorithm.

\begin{definition} [Good Event $G_{v'}$]  \label{def:good-event}
Let $B$ be our branching program. For every vertex $v'\in V^{(1)}_i$, consider the sub-program $B'$ starting at $v'$ and ending in the last layer of the first pass. 

Consider this sub-program $B'$ as \emph{the whole program} and $v'$ as the starting point.
We define $G_{v'}$ as the event given by applying \Cref{theo:GRT-extractor} on $B'$ with parameters $(4\cdot \ellsigs^{(1)}, \rlen)$.
\end{definition}

All we need from $G_{v'}$ are the following two properties stated in \Cref{theo:GRT-extractor}. First,
\begin{equation}\label{eq:Gv'_whp}
\Pr_{x,a_{i+1},\dots, a_{T}}[(x,a_{i+1},\dots, a_T) \not\in G_{v'}] \le 2^{-4\ellsigs^{(1)}}.
\end{equation}
Moreover, for every $v_1\in V^{(0)}_{T}$, we have
\begin{equation}\label{eq:flat_Gv'}
    \|\doubleP_{x|(v'\wt v_1)\land G_{v'}} \|_{\infty} \le 2^{12\ellsigs^{(1)} + 4}\cdot 2^{-n}.
\end{equation}

\paragraph*{Remark on \Cref{def:good-event}} For $G_{v'}$, the subtlety is that the conditional probabilities used in the stopping rules are defined w.r.t. \emph{the truncated paths in $B'$ starting from $v'$} instead of \emph{the truncated paths in $B$ starting from $v_0$}. This makes a huge difference, for example, $G_{v'} \land (v' \wt v_1)$ is not the same as $v' \to v_1$. (except for the special case of $v' = v_0$, where the two events $G_{v_0} \land (v_0 \wt v_1)$ and $v_0 \to v_1$ do coincide.)
~\\

We will need the fact that such a good event usually happens. 
\begin{claim} [$G_{v'}$ usually happens] \label{Claim:UnlikelyGbar}
For every non-significant state $v'\in V_i^{(1)}$, it holds that
\[
\Pr[(v_0\to v') \land \bar{G_{v'}}] \le 2^{-\ellsigs^{(1)}+2} \cdot \Pr[v_0\to v'].
\]
% It holds that
% \[
% \sum_{v'\in V^{(1)}_i}\Pr_{x,a_1,\dots, a_T}[v_0\to v' \land \bar{G_{v'}}] \leq 2^{-k}.
% \]
\end{claim}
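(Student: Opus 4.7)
The plan is to decompose the event $(v_0\to v')\land \bar{G_{v'}}$ by whether the flat truncated path $v_0\wtone v'$ occurs. Recall that $v_0\wtone v'$ is exactly $(v_0\to v')$ intersected with $\{\doubleP_{x|v_0\to v'}(x)\le 2^{\ellflat^{(1)}}\cdot 2^{-n}\}$, so
\[
\Pr[(v_0\to v')\land \bar{G_{v'}}] \le \Pr[(v_0\wtone v')\land \bar{G_{v'}}] + \Pr[(v_0\to v')\land \lnot(v_0\wtone v')].
\]
I would bound the two terms separately. The heart of the argument is the first term, where the obstacle is an adaptivity issue: both $v_0\to v'$ and $\bar{G_{v'}}$ depend on $x$ (they split $x,a_1,\dots,a_i$ vs.\ $x,a_{i+1},\dots,a_T$), so the naive bound $\Pr[\bar{G_{v'}}]\le 2^{-4\ellsigs^{(1)}}$ from \eqref{eq:Gv'_whp} cannot be applied directly. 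The $\wtone$-truncation is exactly what removes this adaptivity.

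For the \emph{non-flat tail} $\Pr[(v_0\to v')\land \lnot(v_0\wtone v')]$, factor out $\Pr[v_0\to v']$ and apply the truncation trick to $f=\doubleP_{x|v_0\to v'}$. Since $v'$ is not significant, $\|f\|_2\le 2^{\ellsigs^{(1)}-n}$, so
\[
\sum_{x''}f(x'')\mathbbm{1}[f(x'')>2^{\ellflat^{(1)}}\cdot 2^{-n}] \le \frac{|X|\cdot \|f\|_2^2}{2^{\ellflat^{(1)}}\cdot 2^{-n}} \le 2^{2\ellsigs^{(1)}-\ellflat^{(1)}} = 2^{-\ellsigs^{(1)}},
\]
using $\ellflat^{(1)}=3\ellsigs^{(1)}$. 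Thus the non-flat tail is at most $2^{-\ellsigs^{(1)}}\cdot \Pr[v_0\to v']$.

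For the \emph{flat} term $\Pr[(v_0\wtone v')\land \bar{G_{v'}}]$, I would invoke Claim~\ref{claim:flat-no-sigv} to get $\doubleP_{x|v_0\wtone v'}(x')\le 2^{\ellflat^{(1)}+1}\cdot 2^{-n}$ for every $x'$, and then apply Lemma~\ref{lemma:flat-dist} (with $p=\ellflat^{(1)}+1$) to obtain $\Pr[v_0\wtone v'\mid x=x']\le 2^{\ellflat^{(1)}+1}\cdot \Pr[v_0\wtone v']$. Now the key decoupling: conditioned on $x$, the event $v_0\wtone v'$ depends only on $a_1,\dots,a_i$ while $\bar{G_{v'}}$ depends only on $a_{i+1},\dots,a_T$, so they are conditionally independent. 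Therefore
\[
\Pr[(v_0\wtone v')\land \bar{G_{v'}}] = \Ex_{x'}\bigl[\Pr[v_0\wtone v'\mid x=x']\cdot \Pr[\bar{G_{v'}}\mid x=x']\bigr] \le 2^{\ellflat^{(1)}+1}\Pr[v_0\wtone v']\cdot \Pr[\bar{G_{v'}}].
\]
Using \eqref{eq:Gv'_whp} and $\Pr[v_0\wtone v']\le \Pr[v_0\to v']$, this is at most $2^{\ellflat^{(1)}+1-4\ellsigs^{(1)}}\Pr[v_0\to v'] = 2^{-\ellsigs^{(1)}+1}\Pr[v_0\to v']$.

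Summing the two bounds gives $\Pr[(v_0\to v')\land \bar{G_{v'}}]\le (2^{-\ellsigs^{(1)}+1}+2^{-\ellsigs^{(1)}})\Pr[v_0\to v']\le 2^{-\ellsigs^{(1)}+2}\Pr[v_0\to v']$, as claimed. The conceptual point, and the step I expect to be delicate, is that the flatness truncation from the non-significance of $v'$ is exactly what lets Lemma~\ref{lemma:flat-dist} trade the posterior $\Pr[v_0\wtone v'\mid x]$ against the prior $\Pr[v_0\wtone v']$ at only a $2^{O(\ellsigs^{(1)})}$ loss, which in turn is dominated by the $2^{-4\ellsigs^{(1)}}$ headroom in the one-pass guarantee defining $G_{v'}$. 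Everything else is a union bound and a norm-truncation tail estimate.
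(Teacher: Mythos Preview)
Your proof is correct and essentially identical to the paper's own argument: the same decomposition via $v_0\wtone v'$, the same use of the truncation trick for the non-flat tail, and the same conditional-independence-plus-\Cref{lemma:flat-dist} step for the flat term. The only cosmetic difference is that the paper factors the flat term as $\Pr[\bar{G_{v'}}]\cdot \Pr[v_0\wtone v'\mid \bar{G_{v'}}]$ and then bounds the conditional by $\max_{x'}\Pr[v_0\wtone v'\mid x=x']$, whereas you write the expectation over $x'$ directly; both arrive at the same bound.
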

\begin{proof}
Depending on whether $v_0\wtone v'$ happens, we have
\[
\Pr[(v_0 \to v') \land \bar{G_{v'}}] \le\Pr[(v_0 \wtone v') \land \bar{G_{v'}}] + \Pr[(v_0\to v')\land \lnot (v_0\wtone v')].
\]
For the first term, we have 
\[
\begin{aligned}
 \Pr[(v_0 \wtone v') \land \bar{G_{v'}} ]
&=  \Pr[\bar{G_{v'}}] \Pr[v_0\wtone v' \mid \bar{G_{v'}}].
\end{aligned}
\]
Observe that $v_0\wtone v'$ depends on $x,a_{\leq i}$, while $\bar{G_{v'}}$ depends on $x, a_{> i}$. This means that for all $x' \in X$, 
\[
(v_0\wtone v') \perp (G_{v'}) \mid x = x'.
\]
Hence,
\begin{align*}
 \Pr[\bar{G_{v'}}] \Pr[v_0\wtone v' \mid \bar{G_{v'}}] 
&\le \Pr[\bar{G_{v'}}] \cdot \max_{x'\in X} \{ \Pr[v_0\wtone v' \mid x=x'] \} \\
&\le 2^{-4\ellsigs^{(1)}} \cdot 
 \max_{x'\in X} \{ \Pr[v_0\wtone v' \mid x=x']
\tag{Eq.~\eqref{eq:Gv'_whp}}
\\
&\le 2^{-4\ellsigs^{(1)}} \cdot 
\Pr[v_0\wtone v']\cdot 2^{\ellflat^{(1)} + 1} 
\tag{\Cref{claim:flat-no-sigv}, \Cref{lemma:flat-dist}} \\
&\le 2^{-\ellsigs^{(1)}+1} \cdot \Pr[v_0\to v'].
\end{align*}
For the second term, we have
\begin{align*}
\Pr[(v_0\to v')\land \lnot (v_0\wtone v')] 
&= \Pr[v_0\to v'] \cdot \Pr_{x\sim \doubleP_{x|v_0\to v'}}[\doubleP_{x|v_0\to v'}(x) \ge  2^{3\ellsigs^{(1)}-n}] \\
&\le \Pr[v_0\to v'] \cdot \frac{ 2^n\cdot \| \doubleP_{x|v_0\to v'} \|^2 }{2^{3\ellsigs^{(1)}-n}} \tag{$\ell_\infty$-truncation trick}\\
&\le 2^{-\ellsigs^{(1)}+1}  \cdot \Pr[v_0\to v'].\tag{$v'$ is not significant}
\end{align*}
Combining both cases completes the proof.
\end{proof}

\subsubsection{Proof of the Main Lemma}\label{sec:two-pass-transfer-core-proof}

With these preparations, we can prove this subsection's main lemma, 
which helps us ``transfer'' from $(v_1 \to v)$ to $(v_0 \to v)$. We recall the statement below.

\twoPassTransfer*

% \begin{lemma}[Transfer Lemma]\label{lemma:Gv-magic}
%     Let $v' \in V^{(1)}_i$ be a vertex in the $i$-th layer of the first pass, and $v_1 \in V^{(1)}_T$ be a vertex at the end of the first pass. Define set 
%     $$
%     S_{v',v_1,i} \coloneqq \{ v\in V^{(2)}_i: \text{$v$ remembers $v'$ and $v_1$} \}.
%     $$

%     Let $E:X\times V^{(2)}_i\to \{0,1\}$ be any event such that $E(x,v)$ that only depends on $x$ and $v$. Assume that for all such $v', v_1$, 
%     \begin{align}\ \label{equ:assumption}
%     \sum_{v \in S_{v',v_1,i} } \Pr[(v_1 \to v) \land E(x,v)] \leq \Pr[v_0 \to v'] \cdot 2^{-k}.
%     \end{align}
%     This implies for all $v'$, $$\sum_{v_1 \in V^{(1)}_T} \sum_{v \in S_{v',v_1,i}} \Pr[(v_0 \to v) \land E(x, v)] \le \Pr[v_0 \to v'] \cdot (2^{12\ellsigs^{(1)}-k+4} + 2^{-\ellsigs^{(1)}+2}).$$
% \end{lemma}

\begin{proof}
We first decompose our goal according to the event $G_{v'}$,
\begin{align}&\sum_{v_1 \in V^{(1)}_T} \sum_{v \in S_{v',v_1,i}} \Pr[(v_0 \to v) \land E(x,v)] \notag \\ 
\leq &\sum_{v_1 \in V^{(1)}_T} \sum_{v \in S_{v',v_1,i}} \Pr[G_{v'} \land (v_0 \to v) \land E(x,v)] +  \label{equ:non-stop} \\ &\sum_{v_1 \in V^{(1)}_T}  \sum_{v \in S_{v',v_1,i}} \Pr[\bar{G_{v'}} \land (v_0 \to v)]. \label{equ:stop} \end{align}

For the first term, note that $v_0\to v$ is equivalent to $(v_1\to v) \land (v' \to v_1)$. Simply by chain rule
\begin{align*}
\eqref{equ:non-stop} 
& = \sum_{v_1 \in V^{(1)}_T} \sum_{v \in S_{v',v_1,i}} \Pr[(v_1 \to v) \land E(x,v) ~ \land ~ G_{v'} \land (v' \to v_1)]\\
&= \sum_{v_1 \in V^{(1)}_T} \sum_{v \in S_{v',v_1,i}} \Pr[(v_1 \to v) \land E(x,v)] \cdot \Pr[G_{v'} \land (v' \to v_1) \mid (v_1 \to v) \land E(x,v)]\\
&\le \sum_{v_1 \in V^{(1)}_T} \sum_{v \in S_{v',v_1,i}} \Pr[(v_1 \to v) \land E(x,v)] \cdot \Pr[G_{v'} \land (v' \wt v_1) \mid (v_1 \to v) \land E(x,v)].
% &\le \sum_{v_1 \in V^{(1)}_T} \sum_{v \in S_{v',v_1,i}} \Pr[v_1 \to v \land E(x,v)] \cdot \Pr[v'\wtone v_1 \mid v_1 \to v \land E(x,v)].
%&\leq \sum_{v_1 \in V^{(1)}_T} \sum_{v \in S_{v',v_1,i}} \Pr[v_1 \to v \land E(x,v)] \cdot \Pr[G_{v'} \land v' \wt v_1 \mid v_1 \to v \land E(x,v)].
\end{align*}

Here the inequality holds since $v' \to v_1$ implies $v' \wt v_1$. Notice that the event $G_{v'} \land v' \wt v_1$ only depends on $x$ and $a_{>i}$ while the event $v_1 \to v \land E(x, v)$ only depends on $x$ and $a_{\le i}$. This means that for all $x' \in \{0,1\}^n$, 
$$((v_1 \to v) \land E(x, v)) \perp (G_{v'} \land (v' \wt v_1)) \mid x = x'.$$
Hence,
\begin{align*}
\eqref{equ:non-stop} 
&\le \sum_{v_1 \in V^{(1)}_T} \left(\sum_{v \in S_{v',v_1,i}} \Pr[(v_1 \to v) \land E(x,v)]\right) \cdot \max_{x' \in X} \{\Pr[G_{v'} \land (v' \wt v_1) \mid x = x']\} \\
&\le \sum_{v_1\in V^{(1)}_T} \Pr[v_0\to v'] \cdot 2^{-k} \cdot \max_{x' \in X} \{\Pr[G_{v'} \land (v' \wt v_1) \mid x = x']\}
\tag{By Assumption~\eqref{equ:assumption}}\\
&\le 
\sum_{v_1\in V^{(1)}_T} \Pr[v_0\to v'] \cdot 2^{-k} \cdot \Pr[G_{v'} \land (v' \wt v_1)] \cdot 2^{12\ellsigs^{(1)}+4} \tag{By Lemma~\ref{lemma:flat-dist} and Eq.~\eqref{eq:flat_Gv'}} \\
&\le \Pr[v_0 \to v'] \cdot 2^{12\ellsigs^{(1)} - k + 4}.
\end{align*}
% Here the last inequality holds since $v' \rightarrow v_1$ implies $v' \wt v_1$. Notice that the event $G_{v'} \land v' \wt v_1$ only depends on $x$ and $a_{> i}$ while the event $v_1 \to v \land E(x, v)$ only depends on $x$ and $a_{\leq i}$. This means that for all $x' \in \{0,1\}^n$, 
% $$(v_1 \to v \land E(x, v)) \perp (G_{v'} \land v' \wt v_1) \mid x = x'.$$

% Hence we further get
% \begin{align*}\eqref{equ:non-stop} &\leq \sum_{v_1 \in V^{(1)}_T} \left(\sum_{v \in S_{v',v_1,i}} \Pr[v_1 \to v \land E(x,v)]\right) \cdot \max_{x' \in \{0,1\}^n} \Pr[G_{v'} \land v' \wt v_1\mid x = x'] \\
% &\leq \sum_{v_1 \in V^{(1)}_T} \Pr[v_0 \to v'] \cdot 2^{-k} \cdot \Pr[v' \wt v_1] \cdot 2^l \tag{By Our Assumption \eqref{equ:assumption} and \Cref{lemma:smoothness}}\\
% &\leq \Pr[v_0 \to v'] \cdot 2^{l-k} \tag{Fixing $v'$, $\sum_{v_1 \in V_T^{(1)}} \Pr[v' \wt v_1] = 1.$} 
% \end{align*}

For the second term, notice that $v_0 \to v$ implies $(v_0 \to v') \land (v' \wt v)$. We have
%For the second term, notice that $v_0 \to v$ is the same as $(v_0 \to v') \land (v' \to v)$. We have
\begin{align*}
\eqref{equ:stop} &= \sum_{v_1 \in V^{(1)}_T} \sum_{v\in S_{v',v_1,i}} \Pr[\bar{G_{v'}} \land (v_0\to v) ] \\
&\le \sum_{v_1 \in V^{(1)}_T} \sum_{v\in S_{v',v_1,i}} \Pr[\bar{G_{v'}} \land (v_0\to v') \land (v'\wt v) ] \\
&\le \Pr[\bar{G_{v'}}\land (v_0\to v')] \tag{The events $\{ v'\wt v \}_{v\in V^{(2)}_{i}}$ are mutually exclusive}\\
&\le 2^{-\ellsigs^{(1)}+1}\cdot \Pr[v_0\to v']. \tag{By Claim~\ref{Claim:UnlikelyGbar}}
\end{align*}
These two parts together finish the proof of this lemma.
\end{proof}

\section{Proof of the Two-Pass Result}\label{sec:two-pass-analysis}
We are ready to prove \Cref{theo:two-pass-main-result}.

\subsection{Analyzing the Success Probability} \label{sec:analysis}

First, we would like to show that the program stops with a very small probability. We will analyze stopping due to different rules separately. Here is an outline:
\begin{itemize}
\item Stop in the first pass: see \Cref{sec:two-pass-stop-first}
\item Stop due to traversing too many high-probability edges: see \Cref{sec:two-pass-too-many-high}.
\item Stop due to traversing a bad edge in the second pass: see \Cref{sec:two-pass-bad-not-high}.
\item Stop due to significant values or $\cb$ overflow in the second pass: see \Cref{sec:two-pass-sigv-and-counter}.
\item Stop due to reaching a significant state in the second pass: see \Cref{sec:two-pass-sig-state-stop}.
\end{itemize}

Finally, we wrap up the analysis in \Cref{sec:two-pass-wrap-up}.

\subsubsection{Stop in the First Pass}\label{sec:two-pass-stop-first}

Applying \Cref{theo:GRT-extractor} and verifying the parameters, we conclude that the program stops in the first pass with probability at most $2^{-\ell}$.

% \begin{lemma}[\cite{GargRT18-extractor}]
% The probability that the program stops in the first pass is at most $2^{-\Omega(\min(k,\ell))}$.
% \end{lemma}

% \iffalse
% \begin{lemma}[\cite{GargRT18-extractor}]
% For any vertex $v$ in the first pass of the program and any edge $(a,b)$ such that $\Pr[v_0\to v \land v\mapsto (a,b)] > 0$ we have
% \[
% \sum_{x':ax'=b} \doubleP_{x|v_0\to v}(x')\in \left(\frac{1}{2}-2^{-r},\frac{1}{2}+2^{-r}\right).
% \]
% \end{lemma}
% \fi

\subsubsection{Too Many High-Probability Edges}\label{sec:two-pass-too-many-high}

For each vertex $v_1$ at the end of the first pass, let $E_{v_1}$ be the event the program starts from $v_1$ and traverses more than $\ellhigh = \ell$ high-probability edges. We would like to prove
\[
\sum_{v_1} \Pr[v_0\to v_1] \Pr[E_{v_1} | v_0\to v_1] \le 2^{-\frac{\kext\ell}{4}}.
\]
We observe that
\[
\begin{aligned}
\sum_{v_1} \Pr[v_0\to v_1] \Pr[E_{v_1} | v_0\to v_1]\le \sum_{v_1} \Pr[E_{v_1}].
\end{aligned}
\]
Fix one vertex $v_1$. It suffices to show that $\Pr[E_{v_1}]\le 2^{-\frac{\kext\ell}{3}}$. The desired bound follows because there are at most $2^{\frac{\kext\ell}{32}}$ many $v_1$'s.

We first observe a simple fact. 
\begin{fact}\label{fact:small-high-prob-set}
For any vertex $v$ in the second pass, it holds that $|\High(v)|\le 2^{n-\frac{\kext}{2}}$.
\end{fact}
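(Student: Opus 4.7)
The plan is a direct averaging argument using the fact that conditional probabilities over a discrete alphabet sum to one. By the definition of $\High(v)$, every $a'\in \High(v)$ satisfies
\[
\Pr[a_{i+1}=a' \mid v_0\to v] \;\ge\; 2^{-n+\kext/2}.
\]
Since the events $\{a_{i+1}=a'\}_{a'\in A}$ are pairwise disjoint and $|A|=2^n$, their conditional probabilities (whenever $\Pr[v_0\to v]>0$) sum to at most $1$, so I would write
\[
1 \;\ge\; \sum_{a'\in \High(v)} \Pr[a_{i+1}=a' \mid v_0\to v] \;\ge\; |\High(v)|\cdot 2^{-n+\kext/2},
\]
and rearrange to obtain $|\High(v)|\le 2^{n-\kext/2}$. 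If $\Pr[v_0\to v]=0$, we may simply declare $\High(v)=\emptyset$ and the bound holds vacuously.

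There is no real obstacle: the statement is a one-line Markov-style counting bound, and the only thing worth checking is that the threshold $2^{-n}$ appearing in the definition of $\High(v)$ is indeed $1/|A|$, which matches the computational model ($|A|=2^n$). Conceptually, this lemma is the quantitative backbone for why ``remembering'' an input symbol $a$ from the first pass is expensive: each high-probability edge represents at least $\kext/2$ bits of nonuniformity in the conditional distribution of $a_{i+1}$, so the program can have only a small set of them and the $\ch$ counter grows slowly, which in turn is what lets the $\cb$-potential analysis from the previous section be converted into a usable stopping-probability bound on the second pass.
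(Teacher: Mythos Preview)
Your proof is correct and is essentially identical to the paper's own argument: both are the one-line averaging bound that since the conditional probabilities of the disjoint events $\{a_{i+1}=a'\}$ sum to at most $1$, and each $a'\in\High(v)$ contributes at least $2^{-n+\kext/2}$, there can be at most $2^{n-\kext/2}$ such $a'$. One minor remark: your aside that one must check $2^{-n}=1/|A|$ is unnecessary---the counting argument only uses that probabilities sum to $1$, not the size of $A$.
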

This follows because, under some distribution over the next sample $a_{i+1}$ (namely, the distribution of $a_{i+1}$ conditioned on $v_0\to v$), each high-probability edge occurs with probability at least $2^{-n+\frac{\kext}{2}}$. As such, there can be at most $2^{n-\frac{\kext}{2}}$ such edges.

Now we are ready to upper bound $\Pr[E_{v_1}]$. The key point is that when we analyze $E_{v_1}$ without conditioning anything about $a_i$, we have that \emph{all of} $a_i$'s are uniformly random. Since $|\High(v)|\le 2^{n-\frac{\kext}{2}}$ by Fact~\ref{fact:small-high-prob-set}, over uniformly random samples, the next edge $a_{i+1}$ belongs to $\High(v)$ with probability at most $2^{-\frac{\kext}{2}}$. We can union-bound over the $T$ edges, and conclude that
\[
\Pr[E_{v_1}] \le \binom{T}{\ell} 2^{-\frac{\kext\ell}{2}} \le 2^{-\frac{\kext \ell}{3}}.
\]
Finally, summing up $\Pr[E_{v_1}]$ over $v_1$ completes the proof.

\paragraph*{Comparison with \cite{garg2019time}.} We note that our proof for high-probability edge overflow is significantly simpler than the one presented in \cite{garg2019time}, which required the use of information theory and a quite delicate calculation. Furthermore, our proof can upper bound the stopping probability by an exponentially small quantity, whereas \cite{garg2019time} can only get a constant (e.g., $\frac{1}{100}$) upper bound. Having an exponentially small stopping probability is crucial for extending our result to the multi-pass case.

\subsubsection{Stop Due to Bad Edges} \label{sec:two-pass-bad-not-high}

By the extractor property, for each non-significant $v$ in the second pass, we have $|\Bad(v)| \le 2^{n-\kext}$. Therefore,
\[
\begin{aligned}
&\Pr[\text{stop due to bad edge in the second pass}] \\
&\qquad= \sum_{i=0}^{T-1} \sum_{v\in V^{(2)}_i} \Pr[v_0\to v] \cdot \Pr[a_{i+1}\in \Bad(v)\setminus \High(v)|v_0\to v] \\
&\qquad\le \sum_{i=0}^{T-1} \Pr[v_0\to v] \cdot 2^{n-\kext} \cdot 2^{\frac{\kext}{2}-n} \\
&\qquad\le T\cdot 2^{-\frac{\kext}{2}}.
\end{aligned}
\]

\subsubsection{Stop Due to Significant Values and Bias Counters (via the Transfer Lemma)} \label{sec:two-pass-sigv-and-counter}

We show that the probability of stopping due to significant values or counter-overflow is small, using the tools developed in \Cref{sec:two-pass-potential} and \Cref{sec:two-pass-transfer}. 

Fix one $i \in [T]$. We define the ``bad event'' indicator $E:X\times V^{(2)}_i\to \{0,1\}$. For each $v\in V^{(2)}_i$, we define:
\begin{itemize}
    \item If $v$ is a significant state, we set $E(x,v) \equiv 0$ for all $x$. We will bound the probability of reaching such states in \Cref{sec:two-pass-sig-state-stop}.
    \item If $v$ not significant but $\cb(v) > \ellbias^{(2)}$, then $E(x,v)\equiv 1$ for all $x$.
    \item Otherwise, we set $E(x,v) = \mathbbm{1}[x\in \SigV(v)]$.
\end{itemize}

We would like to show that 
\[
\sum_{v\in V^{(2)}_i} \Pr[(v_0\to v) \land E(x,v)] \le 2^{-\ell+O(1)}.
\]
Once this is established, we can union-bound over $i\in [T]$ to finish the proof.

We would like to apply \Cref{lemma:Gv-magic}. Let us first establish the assumption required in \Cref{lemma:Gv-magic}. Fix $v'\in V^{(1)}_i$ and $v_1\in V^{(2)}_0$. Recall we have defined $S_{v',v_1,i}$ as the set of $v\in V^{(2)}_i$ that remembers $v'$ and $v_1$. Then, observe that
\begin{align}
&\sum_{\substack{v\in S_{v',v_1,i}\\ v \text{ not significant}}} \Pr[(v_1\to v)\land \mathbbm{1}[x\in \SigV(v)]] \notag \\
&\qquad\le \sum_{v\in S_{v',v_1,i}} \Pr[v_1\to v] \cdot 2^{2\ellsigs^{(2)} - \ellsigv} & \text{(by $\ell_{\infty}$-truncation trick)} \notag \\
&\qquad\le \Pr[v_0\to v'] \cdot 2^{2\ellsigs^{(2)} - \ellsigv} \notag \\
&\qquad\le \Pr[v_0\to v'] \cdot 2^{-14\ell}. \label{equ:sig-value-assum}
\end{align}
Also, by \Cref{cor:edge-potential-exp}, we have
\begin{align}
&\sum_{\substack{v\in S_{v',v_1,i}\\ v \text{ not significant}}} \Pr[v_1\to v\land \mathbbm{1}[\cb(v)>\ellbias^{(2)}] ] \notag \\
&\qquad\le \Pr[v_0\to v'] \cdot \Pr_{v_1\to v}[\cb(v)>\ellbias^{(2)} \mid v_0\to v'] \notag \\
&\qquad\le \Pr[v_0\to v'] \cdot 2^{\ell+1} \cdot 2^{-\ellbias^{(2)}} \notag \\
&\qquad\le \Pr[v_0\to v'] \cdot 2^{-13\ell+1}. \label{equ:count-over-assum-two-pass}
\end{align}

Overall, Eq. \eqref{equ:sig-value-assum} and Eq. \eqref{equ:count-over-assum-two-pass} 
imply that
\[
\sum_{v\in S_{v',v_1,i}} \Pr[v_1\to v\land E(x,v) ] \le \Pr[v_0\to v'] \cdot 2^{-13\ell + 2}.
\]
Now, we apply \Cref{lemma:Gv-magic} to obtain
\[
\sum_{v\in V^{(2)}_i}\Pr[(v_0\to v)\land E(x,v)] \le \sum_{v'\in V_{i}^{(1)}} 2^{-\ell+O(1)}\cdot \Pr[v_0 \to v']  \le  2^{-\ell+O(1)},
\]
% }
as desired.

\subsubsection{Reaching a Significant State} \label{sec:two-pass-sig-state-stop}

We instantiate \Cref{lemma:sig-state-multi-pass} (see \Cref{appendix:sig-state-two-pass}) with parameters $\ellsigs^{(2)} = 18 \ell$ and $\ellbias^{(2)} = 14\ell$. For each fixed significant state $s$ in the second pass, \Cref{lemma:sig-state-multi-pass} implies that
\[
\Pr[v_0\to s] = \Pr[(v_0\to v_1)\land (v_1\to s)]\le \Pr[v_1\to s] \le 2^{-\frac{1}{2}\kext( \ellsigs^{(2)} - \ellbias^{(2)} - \ell - 5) } \le 2^{-\kext\ell}.
\]
Before we apply the modification to the program $B$, we have at most $2^{\frac{\kext\ell}{32}}$ states in each layer of the program. Hence, after the modification, there are at most 
\[
(2^{\frac{\kext\ell}{32}}\cdot T\cdot (\log|X|)^2)^3 \le 2^{\frac{\kext \ell}{8}}
\]
states in the second pass of the program. We can union-bound over all those states to finish the proof.

\subsubsection{Wrap-up: the Success Probability is Small} \label{sec:two-pass-wrap-up}

The previous sections show that the probability of stopping is small.

Denote by $\overline{G}\subseteq X\times A^T$ the union of all stopping events. Let $G$ be the complement of $\overline{G}$. We have shown that
\[
\Pr_{x,a_1,\dots, a_T}[(x,a_1,\dots, a_T)\in \overline{G}] \le 2^{-\frac{2}{3}\ell+O(1)}.
\]
Moreover, for every final vertex $v$ of the program, the event $v_1\to v$ is equivalent to $v_0\to v$, which is, in turn, equivalent to $(v_0\wt v)\land G$. Then, we get
\[
\begin{aligned}
& \| \doubleP_{x|(v_0\wt v)\land G} \|_2 \le 2^{\ellsigs^{(2)}+1} \cdot 2^{-n},\quad \text{and,} \\
& \| \doubleP_{x|(v_0\wt v) \land G} \|_\infty \le 2^{\ellsigv+1} \cdot 2^{-n}.
\end{aligned}
\]
Therefore, conditioning on $v_0\to v$, the probability of guessing $x$ correctly is exponentially small. Since this holds for every $v\in V^{(2)}_T$, we conclude the two-pass learning algorithm succeeds in learning $x$ with an exponentially small probability. This proves \Cref{theo:two-pass-main-result}.

\section{Setup for Multiple Passes}
\label{sec:multi-pass-setup}

In this section, we will set up the notation for our proof of constant-pass learning lower bounds. In \Cref{sec:multi-pass-modify} and \Cref{sec:multi-pass-stop-rules}, we extend the modification process and stop rules to multi-pass. These are more or less natural generalizations of the two-pass case. 

\subsection{Modifying the Program} \label{sec:multi-pass-modify}
First of all, we will generalize the modification in \Cref{sec:modify} to multiple passes. 
    \begin{figure}[H]
    
        \centering
        \scalebox{1.4}{
        \begin{tikzpicture}
      % Nodes
      \node (v0) {$v_0$};
      \node[right=2cm of v0] (v'_1) {$v'_1$}; % Adjust horizontal distance here
      \node[right=2cm of v'] (v1) {$v_1$};
      \node[below=0.2cm of v0] (v1_second) {$v_1$}; % Adjust vertical distance here
      \node[right=2cm of v1_second] (v'_2) {$v'_2$};
      \node at (v1|-v'_2) (v2) {$v_2$};
      \node[below=0cm of v'_2] (vd) {$\vdots$};
      \node[below=0.8cm of v'_2] (v'_q1) {$v'_{j-1}$};
      \node at (v1_second|-v'_q1) (vq1) {$v_{j-2}$};
      \node at (v2|-v'_q1) (vq) {$v_{j-1}$};
      \node[below=0.2cm of v'_q1] (v) {$v$};
      \node at (v0|-v) (vq_second) {$v_{j-1}$};
        
      % Edges
      \draw[->, shorten >= 2mm, shorten <= 2mm] (v0) -- (v');
      \draw[->, shorten >= 1mm, shorten <= 2mm]  (v') -- (v1);
      \draw[->, shorten >= 2mm, shorten <= 2mm]  (v1_second) -- (v'_2);
      \draw[->, shorten >= 1mm, shorten <= 2mm]  (v'_2) -- (v2);
      \draw[->, shorten >= 0mm, shorten <= 0mm]  (vq1) -- (v'_q1);
      \draw[->, shorten >= 1mm, shorten <= 0mm]  (v'_q1) -- (vq);
      \draw[->, shorten >= 2.7mm, shorten <= 1.8mm]  (vq_second) -- (v);
      
      \end{tikzpicture}}
        \caption{The computational path of first $j$ passes.}
        \label{fig:my_label}
    \end{figure}
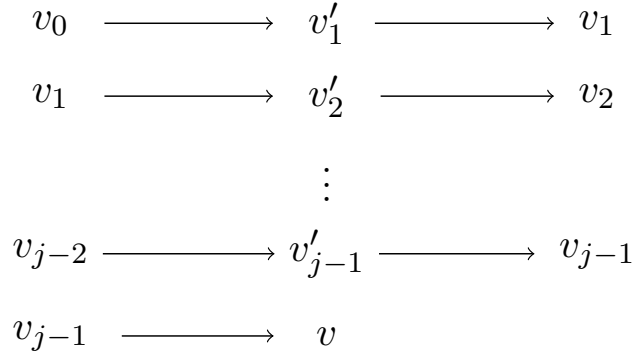

Recall the two-stage modification for two-pass learning. We will perform a similar two-stage modification process pass by pass. For the first two passes, the modification is the same as \Cref{sec:modify}. Then suppose we have already finished the modification for the first $j - 1$ passes for $j > 1$, we will perform \emph{both} of the following two stages for the $j$-th pass, before moving on to modify the $(j+1)$-th pass. 

\paragraph*{Stage 1: Remember the Previous Modified Pass} We use $v_{B_0}$ to index vertex in the $j$-th pass of the \emph{original} program.We will also $v_{B_1}$ to index vertices in the $j$-th pass of the program after stage-$1$ modification.
Similarly as before, for the $j$-th pass of the program, we will force it to remember the last state (of the modified version) of the $(j - 1)$-th pass, which we denote as $v_{j - 1} \in V_T^{(j - 1)}$. Besides this, the $j$-th pass program also runs a copy (of the modified version) of the $(j - 1)$-th pass in its memory. Now each vertex in layer $i$ of the $j$-th pass is a triple $v_{B_1} = (v'_{j -1}, v_{j - 1}, v_{B_0})$. The topology of the program is modified accordingly: If after reading the sample $(a_{i +1}, b_{i + 1})$, $v_{B_0}$ reaches $w_{B_0}$ and $v'$ reaches $w'$, we add an edge from $(v', v_{j - 1}, v_{B_0})$ to $(w', v_{j - 1}, w_{B_0})$ with label $(a_{i + 1}, b_{i + 1})$. 

For every node $v_{B_1}$ after such modification, we can uniquely determine $v'_{j - 1}$ from it. Since $v'_{j - 1}$ is a vertex in the modified version of the $(j - 1)$-th pass, from it, we can then uniquely determine $v'_{j - 2}$. In this way, $v_{B_1}$ remembers all $v'_1, v'_2, \dots, v'_{j - 1}$ as well as $v_1, v_2, \dots, v_{j - 1}$.

\paragraph*{Stage 2: Biasness and High-probability edge Counters} In the second stage, we force the program to remember two counters $\ch^{(j)}$ and $\cb^{(j)}$ for the $j$-th pass. Now each vertex will be of form $v_{B_2} = (v_{B_1}, \chj, \chj)$. 

Fix any vertex $(v_{B_1},\chj,\cbj)$. For each edge labeled $(a,b)$ in $B_1$ that goes from $v_{B_1}$ to $w_{B_1}$, we calculate the increased value of the counters, $c_1$ and $c_2$. Then we add an edge from $(v_{B_1},\chj,\cbj)$ to $(w_{B_1},c_1,c_2)$ with label $(a,b)$. After this modification, we get a new branching program $B_2$. Each vertex $v$ in $B_2$ uniquely determines $2(j - 1)$ counters from pass $2$ to pass $k$, which we denote as $\ch^{(2)}(v), \ch^{(3)}(v), \cdots, \chj(v)$ and $\cb^{(2)}(v), \ch^{(3)}(v), \cdots, \cbj(v)$. 

In the rest of the paper, we always assume that we are working with the modified program $B_2$. So, when we say ``the original prorgam'' in the future, we are always referring to $B_2$. 

We also note that our modification will blow up the width of the program from $W$ to at most $(W^{4j})$ (we assume $W\ge (10\log|X|)^2$ so that the counters do not add a significant overload).

\paragraph*{Computational Path.} Fix the starting point $v_{j - 1} \in V^{(j)}_0$ of the $j$-th pass. The computational path starting from $v_j$ is uniquely determined by $x \in X, a_1, a_2, \dots, a_T \in A$. We use $v_{j - 1} \wt \wtd{v}$ to denote the event that the (untruncated) computational path reaches $\wtd{v}$. 

For any vertex $v'_j \in V^{(j)}_i$ in the middle the $j$-th pass, we consider the subprogram consisting of layers $i, i + 1, \dots, T$ with starting vertex $v'_j$. The computational path from $v'_j$ is defined as the computational path in that subprogram. It is uniquely determined by $x \in X, a_{i+1}, a_{i + 2}, \dots, a_T \in A$. We use $v'_j \wt \wtd{v}$ to denote the event that this computational path reaches $\wtd{v}$. 

\paragraph*{Truncated Path from $v_{j - 1}$ (Informal).} Similarly, for any starting point $v_{j - 1} \in V_0^{(j)}$ of the $j$-th pass, we will use $v_{j - 1} \to \wtd{v}$ to denote the event that the compuataional path from $v_{j - 1}$ reaches $\wtd{v}$ without triggering the stopping rules.  
\subsection{Stopping Rules} \label{sec:multi-pass-stop-rules}
We also need to generalize the stopping rules in \Cref{sec:two-pass-stop-rule}. We mostly only need to replace $v_1$ by $v_{j - 1}$, but we nevertheless list them for completeness.

\paragraph*{Events of Interest.} Let $v \in V_i^{(j)}$ be a vertex in layer $i$ of the $j$-th pass. We have the following bad events.
\begin{itemize}
\item \emph{High-probability edges.} An input $a' \in A$ is of high probability if  
$$\Pr[a_{i + 1} = a' \mid v_0 \to v] \geq 2^{\frac{\kext}{2}-n}.$$
We define $\High(v)\subseteq A$ as the set of such inputs at $v$.
\item \emph{Bad edges.} Define the set of bad edges at $v$ as 
    \[
    \Bad(v) = \left\{a'\in A : \Pr_{x'\sim \doubleP_{x|v_{j - 1}\to v}}[M(a',x')=1] \not\in \left( \frac{1}{2}-2^{-\rext},\frac{1}{2}+2^{-\rext} \right) \right\}.
    \]
    \item \emph{Significant Values.} $\SigV(v)$ is the set of all $x' \in X$ such that $\doubleP_{x|v_{j - 1}\to v}(x') \ge 2^{-n} \cdot 2^{\ellsigv}$.

    Recall that $v$ also remembers the layer-$i$ vertices it has traversed during the first $j-1$ passes, which are denoted by $v'_{1},\dots, v'_{j-1}$. It will be convenient for us to define $\SigV^{(t)}(v) := \SigV(v'_{t})$ for each $t\le j-1$ and $\SigV^{(all)}(v):=\bigcup_{t=1}^{j-1} \SigV(v'_{t}) \cup \SigV(v)$.
    \item \emph{Significant States.} Finally, $v$ is called a significant state, if $\|\doubleP_{x|v_{j - 1}\to v}\|_2\ge 2^{-n}\cdot 2^{\ell^{(j)}_s}$.
\end{itemize}

\paragraph*{Counter Updates.} Whenever we traverse an edge $(a',b')$ from $v$, if $a' \in \High(v)$, we will increase the counter $\ch^{(j)}$ by $1$ and increase the counter $\cb^{(j)}$ by $$\Delta = \left\lfloor - \log\left(\Pr_{x' \sim \doubleP_{x \mid v_{j - 1} \to v}}[M(a',x') = b']\right)\right\rfloor.$$

\paragraph*{Stopping Rules.} For each pass $j$ and a starting vertex $v_{j-1}\in V^{(j)}_0$, we may consider a computational path starting with vertex $v_{j-1}$. In that case, we have the following stopping rules for the computation path. 

\begin{enumerate}
    \item Before traversing the next edge, if $x\in \SigV(v)$, we stop.
    \item When we are about to traverse an edge $(a,b)$ where $a\in \Bad(v)\setminus \High(v)$, we stop.    
    \item If the copy of the (modified) previous pass stops at $v'_{j - 1}$ due to whatever reason (including stopping due to this rule), we also stop.
    \item If $v$ is a significant state, we stop.
    \item When $\ch^{(j)}(v) > \ellhigh$, we stop.
    \item When $\cb^{(j)}(v) > \ellbias^{(j)}$, we stop.
\end{enumerate}

\paragraph*{Truncated Path from $v_{j - 1}$ (Formal).} Initially for layer $0$, $v_{j-1} \to v_{j-1}$ is always true. Suppose we have finished the definition for layer $0, 1, \dots, i$, for each $\wtd{v} \in V^{(j)}_{i+1}$, we can define $v_{j - 1}\to \wtd{v}$ recursively.

$$v_{j - 1}\to \wtd{v} \equiv \bigvee_{\wtd{u}\in V^{(j)}_i} (v_{j - 1}\to \wtd{u}) \land \left[\begin{aligned}
    &\text{From $\wtd{u}$, we traverse an edge and reach $\wtd{v}$} \\ &\text{without meeting Stopping Rules at $\wtd{u}$}
\end{aligned}\right].$$

\paragraph*{Truncated Path from $v'$.} 

For a vertex $v' \in V_i^{(j)}$, $v' \to \wtd{v}$ is defined as the event that the computational path from $v'$ reaches $\wtd{v}$ without triggering the stopping rules. Similar to the second pass, the conditional distributions $\doubleP_{x \mid v_{j - 1} \to \wtd{v}}$ in the stopping rules are still defined using truncated paths from $v_{j - 1}$ instead of that from $v'$. Formally, for each $\wtd{v} \in V_{i + 1}$,
$$v' \to \wtd{v} \equiv \bigvee_{\wtd{u} \in V^{(j)}_i} (v' \rightarrow \wtd{u}) \land \left[\begin{aligned}&\text{From $\wtd{u}$, we traverse an edge and reach $\wtd{v}$} \\  &\text{without meeting Stopping Rules at $\wtd{u}$ where} \\ &\text{(1) the distributions $\doubleP_{x \mid v_{j - 1} \to \wtd{u}}$ in the Rules}\\ & \text{are still defined w.r.t. $v_{j-1} \to \wtd{u}$}, \\ & \text{and (2) } \Pr[a_{i+1} = \text{$a'$}\mid v_0 \to \wtd{u}] \text{ in the} \\ &\text{high-probability rule is still defined w.r.t. $v_0 \to \wtd{u}$}\end{aligned}\right]$$

\subsection{Multi-Pass Learning: Main Result}\label{sec:multi-main-result-statement}

Our main result regarding multi-pass learning algorithms is as follows.

\begin{theorem}\label{theo:multi-pass-main-result}
    Suppose $M$ is a $(\kext,\ellext,\rext)$-$L2$-extractor. Let $\rlen,\ell\ge 1$ be two parameters satisfying the following inequalities.
    \begin{itemize}
        \item $\ell \le \frac{1}{10\cdot 100^{3^{q-1}}q} \min\{\ellext, \kext\}$,
        \item $\rlen \le \min\{ \frac{1}{4} \rext, \frac{\ell}{3} - 4\}$.
    \end{itemize}
    Let $B$ be a $q$-pass learning program for the learning task of $M$. Suppose $B$ has
    \begin{itemize}
        \item Width $2^{\frac{\kext\ell}{8q^{4q}}}$.
        \item Length of each pass $T \le 2^{\rlen}$.
    \end{itemize}
   Denote by $v_0$ the starting vertex of $B$.  Then, there exists an event $G\subseteq X\times A^{T}$ such that
    \[
    \Pr_{x,a_1,\dots, a_T}[(x,a_1,\dots, a_T) \notin G] \le 2^{-(\ell/2^q)}.
    \]
    and for any final vertex $v$ of $B$, it holds that
    \[
    \begin{aligned}
        & \|\doubleP_{x|(v_0\wt v)\land G}\|_{\infty} \le 2^{(100^{3^{q-1}})\ell+1} \cdot 2^{-n}, \quad \text{and} \\
        & \|\doubleP_{x|(v_0\wt v)\land G}\|_{2} \le 2^{(100^{3^{q-1}-1})\ell+1} \cdot 2^{-n}.
    \end{aligned}
    \]
\end{theorem}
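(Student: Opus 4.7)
My plan is to prove \Cref{theo:multi-pass-main-result} by induction on $q$, with $q=2$ being \Cref{theo:two-pass-main-result} and the inductive step treating the first $q-1$ passes as a single ``super-pass'' with the structural guarantees provided by the inductive hypothesis. Concretely, assume the theorem for $q-1$ passes with parameters $\ell_{q-1}$ and $\rlen$. The modification of \Cref{sec:multi-pass-modify} makes each layer-$i$ vertex $v$ in the $q$-th pass remember the entire history $v'_1,\dots,v'_{q-1}$ of the preceding passes together with the counters $\chj,\cbj$ for $j\le q$, so when we condition on $v_{q-1}\to v$ the analysis mirrors the second-pass case: the ``new randomness'' in the $q$-th pass is the draw of $a_{\le i}$, and the truncated path $\Ttwo$ gets replaced by the analogous process $\mathcal{T}^{(q)}_{v_{q-1}}$.

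The stopping-probability analysis then decomposes exactly as in the two-pass road map (\Cref{fig:multi-pass-road-map}). Stopping inside any of the first $q-1$ passes is absorbed into the event $\bar G_{q-1}$ coming from the inductive hypothesis, and has probability at most $2^{-\ell_{q-1}/2^{q-1}}$. The high-probability-edge overflow bound in \Cref{sec:two-pass-too-many-high} is already an exponentially small bound depending only on $|\High(v)|\le 2^{n-\kext/2}$, so it generalizes verbatim. The bad-edge bound relies only on the extractor property of $M$ and the width bound on $|\Bad(v)|$, and likewise generalizes. For the $\cb^{(q)}$-overflow and significant-value stopping, I will reuse the potential function $\Phi(v)=2^{\cbj(v)-\chj(v)}$ and run exactly the evolution argument of \Cref{sec:two-pass-potential} with $\Ttwo$ replaced by $\mathcal{T}^{(q)}_{v_{q-1}}$ and $v'\in V^{(1)}_i$ replaced by a vertex $v'$ of the $(q-1)$-th pass; the only change is that the conditional expectation $\Ex[\Phi(v)\mid v_0\to v']$ is controlled by the inductive flatness of $\doubleP_{x\mid v_0\to v'}$ rather than by the one-pass theorem. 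The significant-state stopping is handled by \Cref{lemma:sig-state-multi-pass} with the larger thresholds $\ellsigs^{(q)},\ellbias^{(q)}$ set so that $\ellsigs^{(q)}-\ellbias^{(q)}-\ellhigh$ still exceeds a constant times $\ell$, giving probability at most $2^{-\kext\ell}$ per significant state and surviving the union bound because the modified width is at most $2^{\kext\ell/(8q^{4q})}$ raised to the number of accumulated copies.

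The hard step, as flagged in \Cref{sec:overview-transfer} and \Cref{sec:mutli-pass-transfer}, is the generalized transfer lemma needed to convert bounds under the ``start from $v_{q-1}$'' distribution into bounds under the true ``start from $v_0$'' distribution. The plan is to define, for each layer $i$ and each vertex $v'_{q-1}$ in the last layer of pass $q-1$, a good event $G_{v'_{q-1}}$ obtained by invoking the inductive $(q-1)$-pass theorem on the sub-program whose initial vertex is $v'_{q-1}$ (exactly the role that $G_{v'}$ played in \Cref{def:good-event}). This sub-program inherits the relationship $\|\doubleP_{x\mid G_{v'_{q-1}}\land(v'_{q-1}\wt v_{q-1})}\|_\infty\le 2^{(100^{3^{q-2}})\ell+O(1)}\cdot 2^{-n}$, which is precisely the flatness we need in order to apply \Cref{lemma:flat-dist} and decouple the adaptive choice of $v_{q-1}$ from the path taken inside the $q$-th pass. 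A flat-truncation argument analogous to \Cref{claim:flat-no-sigv} and \Cref{Claim:UnlikelyGbar} then bounds the contribution of $\bar G_{v'_{q-1}}$, and the core inequality of \Cref{lemma:two-pass-transfer} becomes: an assumption of the form $\sum_{v\in S}\Pr[(v_{q-1}\to v)\land E(x,v)]\le \Pr[v_0\to v'_{q-1}]\cdot 2^{-k}$ implies the corresponding bound on $v_0\to v$ with loss $2^{O((100^{3^{q-2}})\ell)}$. This is precisely the source of the $c_q=100^{3^q}$ blow-up, because each of the $q-1$ inductive uses of the lemma multiplies the tolerable flatness threshold by a cubic factor.

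Having established the generalized transfer lemma, the final assembly follows \Cref{sec:two-pass-wrap-up}: choose $\ellbias^{(q)}, \ellsigs^{(q)}, \ellsigv$ so that the arithmetic $\ellsigv-2\ellsigs^{(q)}\ge \ell/2^q+O(1)$ and $\ellbias^{(q)}-\ell\ge \ell/2^q+O(1)$ both hold (with $c_q=100^{3^{q-1}}$ governing the gap), union-bound the stopping events over the $T\le 2^\rlen$ layers of the $q$-th pass, and combine with the inductive stopping event $\bar G_{q-1}$ to define $G$. On the complement $G$, each final vertex $v$ satisfies $v_0\to v\equiv v_{q-1}\to v$ and the potential bounds of \Cref{sec:two-pass-wrap-up} give the stated $\|\doubleP_{x\mid(v_0\wt v)\land G}\|_2$ and $\|\doubleP_{x\mid(v_0\wt v)\land G}\|_\infty$ bounds. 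I expect the main obstacle to be tracking the exact constants in the transfer-lemma iteration so that the parameter $c_q=100^{3^q}$ (rather than a worse tower) is achievable; this is where most of the bookkeeping, and most of the genuinely new technical work beyond the two-pass case, will live.
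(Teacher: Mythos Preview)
Your overall architecture matches the paper's: induction on $q$, the same five-way decomposition of stopping events, the same potential function, and the recognition that the transfer lemma is the crux. But your plan for the transfer lemma contains a real gap, not just bookkeeping.

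You propose to handle $\bar G_{v'_{q-1}}$ by ``a flat-truncation argument analogous to \Cref{claim:flat-no-sigv} and \Cref{Claim:UnlikelyGbar}.'' The two-pass \Cref{Claim:UnlikelyGbar} works because $(v_0\wtone v')$ depends only on $(x,a_{\le i})$ while $\bar G_{v'}$ depends only on $(x,a_{>i})$, so once you condition on $x$ the two events are independent and flatness of $\doubleP_{x\mid v_0\wtone v'}$ lets you replace $x$ by a fresh sample. For $q\ge 3$ this independence \emph{fails}: the event $v_0\to v'_{q-1}$ already involves passes $1,\dots,q-2$, each of which reads \emph{all} of $a_1,\dots,a_T$, so $v_0\to v'_{q-1}$ and $\bar G_{v'_{q-1}}$ share dependence on $a_{>i}$ even after conditioning on $x$. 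Flatness in $x$ alone no longer decouples them, and the direct analog of \Cref{Claim:UnlikelyGbar} does not go through.

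The paper's fix (Section~\ref{sec:mutli-pass-transfer}) is a new construction you do not have: a \emph{probabilistic subprogram} $B_{>i}$ in which the ``left halves'' of all earlier passes are generated from a resampled $(\tilde x,a_{\le i})$ treated as internal randomness. One then argues (via an Alice--Bob transcript argument) that the stopping rules defining $G_{v'_{q-1}}$ are insensitive to this internal randomness, so one may fix it and obtain a deterministic $(q-1)$-pass program to which the inductive hypothesis applies; this is \Cref{lemma:pass-elimination}. Relatedly, the assumption in the multi-pass transfer lemma (\Cref{lemma:multi-pass-transfer}) is stated with $\Pr[v_{q-2}\to v'_{q-1}]$ on the right-hand side, not $\Pr[v_0\to v'_{q-1}]$, precisely to make this decoupling work. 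A smaller point: the potential argument is not ``exactly'' the two-pass one --- \Cref{lemma:multi-pass-pot-slow-grow} picks up an extra factor $2^{\cb^{(q-1)}(v'_{q-1})}$ because edges in pass $q-1$ can themselves be biased high-probability edges; this is why $\ellbias^{(j)}$ must grow with $j$.
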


We prove \Cref{theo:multi-pass-main-result} by induction on $q\ge 1$. The case of $q=2$ is established by \Cref{theo:two-pass-main-result}. In the rest of the article, we assume \Cref{theo:multi-pass-main-result} is true for $q-1$, and prove it for the case of $q$.

% Given the parameter $\ell$, the stopping rule thresholds have been specified in \Cref{theo:multi-pass-main-result}. We will need a family of additional parameters: for every $t\in [q]$, define $\ell^{(t)}_f = 100^{3^{t-1}-0.5}\ell$.

\subsubsection{Keeping Track of Parameters}

Again, we would like to provide the following table, summarizing all important parameters in the multi-pass lower bound proof and naming them. In particular, the meaning for the last two rows will be clear when we prove the multi-pass transfer lemma.

\begin{table}[H]
    \centering
    \begin{tabular}{c|c|c}
      Name  & Explanation & Quantity \\[5pt]
      \hline 
        $\rlen$  & $2^{\rlen}$: Length of the program & $\rlen \le \frac{1}{3}\ell - 4$      \\[5pt]
       \hline
        $\ellsigs^{(j)},~ j\in [q]$  & $j$-th Pass Significant State Threshold & $\ell\cdot 100^{3^{j-1}-1}$      \\[5pt]
       \hline
%        \vspace{0.5px} $\ell^{(1)}_f$  & First-pass Flat Threshold (for \Cref{sec:two-pass-transfer}) & $3\ell$   \vspace{0.5px}   \\
%       \hline
       $\ellsigv$  & Significant Value Threshold for All Passes & $\ell\cdot 100^{3^{q-1}}$   \\[5pt]
       \hline
       $\ellhigh$  & $\ch$ Threshold for All Passes & $\ell$     \\[5pt]
       \hline
       $\ellbias^{(j)},~ j\in [q]$  & $j$-th Pass $\cb^{(j)}$ Threshold & $\ell\cdot \frac{100^{3^{j-1}-1} - 1}{2}$     \\[5pt]
       \hline
       $\ellflat^{(j)},~ j\in [1,q-1]$  & $j$-th Pass Flat Threshold (for Transfer Lemma) & $\ell \cdot 100^{3^{j-1}}$     \\[5pt]
       \hline
       $\ellgood^{(j)},~ j\in [1,q-1]$  & $j$-th Goodness  Threshold (for Transfer Lemma) & $\ell \cdot 2^{q+2}\cdot 100^{2\cdot 3^{j-1}}$     \\[5pt]
       \hline
       % $\ellflat^{(j)},~ j\in [2,q-1]$  & $j$-th Pass Goodness Threshold (for Transfer Lemma) & $\ell \cdot 100^{3^{j-1}-1}$     \\[5pt]
    \end{tabular}
    \caption{Parameters for the multi-pass proof}
    \label{tab:parameter-multi-pass}
\end{table}

% \begin{remark}\label{remark:different-sigv}
% We could have set different $\ellsigv$ for different passes. In particular, for any setting of significant value thresholds $\ellsigv^{(1)},\dots, \ellsigv^{(q)}$, denote
% \[
% \ell_{maxv} = \max_{j\in [q]} \{ \ellsigv^{(j)} \} \quad \text{ and } \quad
% \ell_{minv} = \min_{j\in [q]} \{ \ellsigv^{(j)} \}.
% \]
% Then, if both of the following inequalities are true
% \[
% \ell_{maxv} \le \frac{1}{10}\min\{ \ellext, \kext\} \quad \text{ and } \quad \ell_{minv} \ge \ell \cdot 100^{3^{q-1}\ell},
% \]
% the overall stopping probability of the program is still bounded by $2^{-\ell}$. This observation is required for a technical proof in the transfer lemma.
% \end{remark}

\section{Multi-Pass Transfer Lemma} \label{sec:mutli-pass-transfer}
In this section, we will present our main technical contribution for multiple passes, the multi-pass transfer lemma. The generalization from two-pass to multi-pass turns out to be highly non-trivial. The multi-pass proof not only requires a deeper understanding of the two-pass proof but also contains new ideas that are not in the two-pass proof.

For any pass $\tau$, we say two vertices $u_1,u_2\in V^{(\tau)}$ in the $\tau$-th pass \emph{consistent}, if they remember the same list of history starting vertices $(v_1,\dots, v_{\tau})$. The main lemma of this section is as follows.

\begin{restatable}{lemma}{multiTransfer}\label{lemma:multi-pass-transfer}
Fix $i\in [T]$. Suppose we have an indicator $E:X\times V^{(j)}_i\to \{0,1\}$. For every pair of consistent $v'_{j-1}\in V^{(j-1)}_i$ and $v_{j-1} \in V^{(j-1)}_T$, define the set
\[
S_{v'_{j-1},v_{j-1},i} \coloneqq \{ v\in V^{(j)}_i : v \text{ remembers $v'_{j-1}$ and $v_{j-1}$ }\}.
\]
Assume that for every such pair $(v'_{j-1}, v_{j-1})$, it holds
\[
\sum_{v\in S_{v'_{j-1},v_{j-1},i}}\Pr[(v_{j-1}\to v) \land E(x,v)] \le 2^{-k} \Pr[v_{j-2}\to v'_{j-1}].
\]
Then, we have
\[
\sum_{v\in V^{(j)}_i} \Pr[(v_0\to v)\land E(x,v)] \le 2^{-k+\ellgood^{(j-1)} + 1} + 2^{j-\ell}.
\]

\end{restatable}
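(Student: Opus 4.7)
My plan is to prove \Cref{lemma:multi-pass-transfer} by a careful iterative generalization of the two-pass transfer. The blueprint of \Cref{lemma:two-pass-transfer} extends as a \emph{single-step transfer}, converting a bound conditional on starting at $v_k$ into a bound conditional on starting at $v_{k-1}$: take $v_{k-1},v'_k,v_k$ in the roles played by $v_0,v',v_1$ in \Cref{sec:two-pass-transfer-core-proof}, apply \Cref{theo:GRT-extractor} to the one-pass sub-program from $v'_k$ to the end of pass $k$ (yielding $G_{v'_k}$ depending on $(x,a_{>i})$), and decouple the ``past'' $P=G_{v'_k}\land(v'_k\wt v_k)$ from the ``future'' $Q=(v_k\to v)\land E(x,v)$. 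For $k=j-1$ this decoupling is clean because $P$ depends on $(x,a_{>i})$ and $Q$ on $(x,a_{\le i})$ only, giving $P\perp Q\mid x$. In parallel I will also invoke the $(q-1)$-pass form of \Cref{theo:multi-pass-main-result} applied to the prefix of passes $1,\dots,j-1$ to obtain a ``global'' goodness event $G_{\mathrm{pre}}$ under which $\|\doubleP_{x\mid (v_0\wt v_{j-1})\land G_{\mathrm{pre}}}\|_\infty\le 2^{\ellflat^{(j-1)}+1}\cdot 2^{-n}$ for every $v_{j-1}$; this global flatness will be needed to tame the adaptive first $j-2$ passes inside the iteration.

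For each single-step transfer I will follow \Cref{sec:two-pass-transfer-proof} verbatim: introduce the flat event $v_{k-1}\wtone v'_k$, prove the analog of \Cref{Claim:UnlikelyGbar} bounding $\Pr[(v_{k-1}\to v'_k)\land\lnot G_{v'_k}]\le 2^{-\ellsigs^{(k)}+O(1)}\Pr[v_{k-1}\to v'_k]$, decompose the sum into ``good'' and ``bad'' using $G_{v'_k}$, and apply the conditional-independence decoupling with \Cref{lemma:flat-dist}. Each per-$v'_k$ bound sums to a per-$v_{k-1}$ bound via $\sum_{v'_k}\Pr[v_{k-1}\to v'_k]\le 1$. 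Iterating down from $k=j-1$ to $k=1$ accumulates $\sum_k O(\ellsigs^{(k)})\le \ellgood^{(j-1)}$ multiplicative loss and $(j-1)\cdot 2^{-\ell+O(1)}\le 2^{j-\ell}$ additive loss, matching the parameters in \Cref{tab:parameter-multi-pass}. The $\lnot G_{\mathrm{pre}}$ contribution, which would by itself exceed the additive budget, is absorbed before iteration begins by intersecting with $G_{\mathrm{pre}}$ in the ``bad'' claim above so that its complement is charged only through the per-step $2^{-\ellsigs^{(k)}+O(1)}$ terms rather than through the weaker inductive $2^{-\ell/2^{j-1}}$.

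The main obstacle is that the clean decoupling $P\perp Q\mid x$ only holds for $k=j-1$: for $k<j-1$ the ``future'' $(v_k\to v)$ traverses the full intermediate passes $k+1,\dots,j-1$ which read all of $a_{\le T}$, so $Q$ shares $a_{>i}$ with $P$. To circumvent this I plan to use the bound established by the $k$-th iteration not as a pointwise statement but as an abstract assumption handed to the $(k-1)$-th iteration: after the $k$-th step I obtain a per-$(v'_k,v_{k-1})$ bound on $\sum_{v\in S}\Pr[(v_{k-1}\to v)\land E(x,v)]$ of the form $2^{-k'}\Pr[v_{k-2}\to v'_{k-1}]$ (after re-indexing using that $v\in V^{(j)}_i$ determines both $v'_{k-1}\in V^{(k-1)}_i$ and $v_{k-1}$), and this reshaped bound plays the role of the assumption required by the next single-step transfer. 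Because the transfer lemma treats the assumption abstractly, the intermediate passes' $a_{\le T}$-dependence is hidden inside that assumption and never needs to be unpacked. Verifying that the per-step reshaping preserves the $\sum_{v'_k}\Pr[v_{k-1}\to v'_k]\le 1$ marginalization, and tracking how the $G_{\mathrm{pre}}$-flatness is consumed when the sum $\sum_{v'_{j-1}}\Pr[v_{j-2}(v'_{j-1})\to v'_{j-1}]$ would otherwise blow up by the program width (because distinct $v'_{j-1}$ correspond to distinct $v_{j-2}$), is where the bulk of the technical work lies; this width-absorption step, which requires re-introducing $\Pr[v_0\to v_{j-2}]$ via the inductive flatness at an extra $2^{\ellflat^{(j-1)}}$ cost, is ultimately what forces the double-exponential $\ellgood^{(j-1)}\approx 2\ellflat^{(j-1)}$ parameter blowup and limits the overall result to $o(\log\log n)$ passes.
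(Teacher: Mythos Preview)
Your proposal correctly identifies the central obstacle---that for $k<j-1$ the event $(v_k\to v)$ depends on all of $a_1,\dots,a_T$, so it is not conditionally independent of $G_{v'_k}\land(v'_k\wt v_k)$ given $x$---but your proposed workaround does not overcome it. The claim that ``the transfer lemma treats the assumption abstractly, so the intermediate passes' $a_{\le T}$-dependence is hidden inside that assumption'' is the gap: the two-pass transfer argument does \emph{not} use the assumption abstractly. The decoupling step
\[
\Pr[(v'\wt v_1)\land G_{v'}\mid Q]\le \max_{x'}\Pr[(v'\wt v_1)\land G_{v'}\mid x=x']
\]
requires that the event $Q$ inside the assumption depend only on $(x,a_{\le i})$, not merely that its probability be bounded. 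Once you pass to an event of the form $(v_{j-2}\to v)\land E(x,v)$ with $v\in V^{(j)}_i$, that event reads $a_{>i}$ (through pass $j-1$), and the maximization-over-$x$ step simply fails: conditioning on $Q$ can now shift the distribution of $a_{>i}$, not just $x$. No amount of ``reshaping'' the numerical bound recovers the lost independence. Iterating the one-pass $G_{v'_k}$ construction is therefore not a valid route.

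The paper's proof avoids iteration entirely at this point. It performs a \emph{single} transfer step at level $j-1$, but defines the good event $G_{v'_{j-1}}$ not via a one-pass argument but via the full $(j-1)$-pass inductive hypothesis (\Cref{theo:multi-pass-main-result} for $q=j-1$). The new idea you are missing is the probabilistic subprogram $B_{>i}$: one observes that $v_0\to v'_{j-1}$ factors as $(v_{j-2}\to v'_{j-1})\land(v'_{j-2}\to v_{j-2})$, where the first factor depends only on $(x,a_{\le i})$ and the second only on $(x,a_{>i})$; one then \emph{resamples} $(\tilde{x},a_{\le i})$ for the first factor, turning the left-hand segments $v_{\tau-1}\wt v'_\tau$ into input-independent pre/post-processing stages of a $(j-1)$-pass program on input $(x,a_{>i})$. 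After fixing the internal randomness, this is a deterministic $(j-1)$-pass program of the allowed width, so the inductive hypothesis bounds its stopping probability---which is exactly $\Pr[\bar{G_{v'_{j-1}}}]$ after summing over $v'_{j-1}$. This construction is what makes the $\bar{G}$ case go through and is where the double-exponential parameter growth actually enters; it is not recoverable from iterating the two-pass lemma.
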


\Cref{lemma:multi-pass-transfer} is the \emph{only} technical statement from this section used to prove the main result, \Cref{theo:multi-pass-main-result} (see \Cref{sec:multi-pass-proof-of-main} for the proof). The rest of the section is devoted to proving \Cref{lemma:multi-pass-transfer}.

\subsection{Overview}
\paragraph*{A Digest the Two-Pass Proof.} Before jumping into the more complicated analysis for multiple passes, it is helpful to first understand the proof for two passes deeper. The proof contains two cases: (1) $G_{v'}$ happens. (2) $\bar{G_{v'}}$ happens. We use very similar techniques in both cases. We will use the first one as an example. 

More specifically, this is \Cref{equ:non-stop} of \Cref{lemma:Gv-magic}. We will sketch its statement and proof steps below.

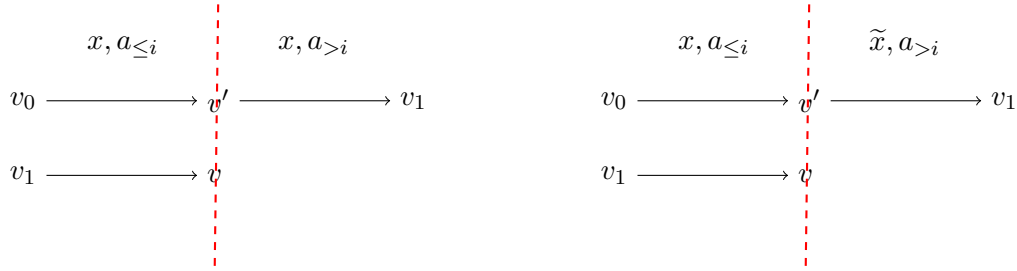
\begin{figure}[H]
    \centering
     \begin{subfigure}[b]{0.4\textwidth}
         \centering
           
    \scalebox{1}{
    \begin{tikzpicture}
  % Nodes
  \node (v0) {$v_0$};
  \node[right=2cm of v0] (v') {$v'$}; % Adjust horizontal distance here
  \node[right=2cm of v'] (v1) {$v_1$};
  \node[below=0.5cm of v0] (v1_second) {$v_1$}; % Adjust vertical distance here
  \node[right=2cm of v1_second] (v) {$v$};
  \node[above = of v'](s1) {};
  \node[below = of v](t1) {};
  \node[left = of s1](sl1) {};
  \node[below = 0.3cm of sl1] {$x, a_{\leq i}$};
  \node[right = of s1](sr1) {};
  \node[below = 0.3cm of sr1] {$x, a_{> i}$};

  % Edges
  \draw[->] (v0) -- (v');
  \draw[->] (v') -- (v1);
  \draw[->] (v1_second) -- (v);
  \draw[dashed, red, thick] (s1) -- (t1);
\end{tikzpicture}}
     \end{subfigure}
     \hspace{1cm}
     \begin{subfigure}[b]{0.4\textwidth}
         \centering
          
    \scalebox{1}{
    \begin{tikzpicture}
  % Nodes
  \node (v0) {$v_0$};
  \node[right=2cm of v0] (v') {$v'$}; % Adjust horizontal distance here
  \node[right=2cm of v'] (v1) {$v_1$};
  \node[below=0.5cm of v0] (v1_second) {$v_1$}; % Adjust vertical distance here
  \node[right=2cm of v1_second] (v) {$v$};
  \node[above = of v'](s1) {};
  \node[below = of v](t1) {};
  \node[left = of s1](sl1) {};
  \node[below = 0.3cm of sl1] {$x, a_{\leq i}$};
  \node[right = of s1](sr1) {};
  \node[below = 0.2cm of sr1] {$\tilde{x}, a_{> i}$};

  % Edges
  \draw[->] (v0) -- (v');
  \draw[->] (v') -- (v1);
  \draw[->] (v1_second) -- (v);
  \draw[dashed, red, thick] (s1) -- (t1);
\end{tikzpicture}}
     \end{subfigure}
    \caption{(a) We can divide the program into two parts $v_1 \to v$ and $v' \to v_1$. (Recall that $v_1 \to v$ also remembers $v_0 \to v'$.) (b) Our proof is actually ``resampling'' $\tilde{x}$ to decouple these two parts.}
    \label{fig:digest}
\end{figure}

In the first case, the program is divided into two parts: $v_1 \to v$ and $v' \to v_1$ (See \Cref{fig:digest} (a)). Let $E_{v_1}[x, a_{\leq i}]$ denote some event $E_{v_1}$ that only depends on $x, a_{\leq i}$. This dependency will be the only property we need. But for clarity, we would like to point out that specifically in the upper bound of \Cref{equ:non-stop}, $E_{v_1}[x, a_{\leq i}] = (v_1 \to v) \land E(x, v).$ 

  We know that for all fixed $v_1$, $\Pr_{(x,a_{\le i})}[E_{v_1}[x, a_{\leq i}] ] \leq 2^{-k}$. Our goal is to upper bound for a fixed $v'$,
\begin{equation} \label{equ:digest-ex}
\sum_{v_1} \Pr[E_{v_1}[x, a_{\leq i}] \land (G_{v'} \land (v' \wt v_1))[x, a_{> i}] ].
\end{equation}
Here, we add the bracket $[x, a_{> i}]$ to emphasize that the event $G_{v'} \land (v' \wt v_1)$ only depends on $x, a_{> i}$. 
The quantity is Eq.~\eqref{equ:digest-ex} can be interpreted as follows. We only consider the case where $G_{v'}$ happens and let the one-pass adversary $v' \to v_1$ pick the vertex $v_1$. This is the probability that $E_{v_1}$ happens for the vertex $v_1$ picked by the adversary.

Besides these, we know the posterior $\doubleP_{x \mid G_{v'} \land (v' \wt v_1)}$ is flat. Hence, by a simple Bayes' rule (\Cref{lemma:flat-dist}), we know for all $x'$,
\begin{equation}\label{equ:digest-Gv}\Pr[G_{v'} \land (v' \wt v_1) \mid x = x'] \leq 2^{O(\ell)} \cdot \Pr[G_{v'} \land (v' \wt v_1)].\end{equation}

The core of our proof is very simple:
\begin{align}
\eqref{equ:digest-ex} &= \sum_{v_1} \Pr[E_{v_1}[x, a_{\leq i}]] \cdot \Pr[ (G_{v'} \land (v' \wt v_1))[x, a_{> i}] \mid E_{v_1}[x, a_{\leq i}] ] \notag \\
&\leq \sum_{v_1} \Pr[E_{v_1}[x, a_{\leq i}]] \cdot \max_{x' \in X} \Pr[(G_{v'} \land (v' \wt v_1))[x, a_{> i}] \mid x = x'] \notag \\
&\leq 2^{O(\ell)} \cdot \Pr[E_{v_1}[x, a_{\leq i}]] \cdot \Pr[G_{v'} \land (v' \wt v_1)[x, a_{> i}]]. \label{equ:digest-final}
\end{align}

The rest of proof follows simply by plug in $\Pr[E_{v_1}[x, a_{\leq i}]] \leq 2^{-k}$. Let us stop here and appreciate what has really happened:

In the first inequality, we use the fact that the left part of the program ($E_{v_1}$) can only affect the right part ($G_{v'} \land (v' \wt v_1)$) by their common dependency on $x$. Hence we can just consider the worst case effect (letting $x$ to be worst case $x'$). 

In the second inequality, we are using \eqref{equ:digest-Gv}. Intuitively, even when we are considering such worst case $x = x'$, \eqref{equ:digest-Gv} still tells us the vertex $v_1$ picked by the adversary $v' \wt v_1$ will not differ too much from the case where $x$ is uniformly random. This is simply because an adversary with flat posterior cannot tell a specific value $x = x'$ apart from the uniform distribution $x \sim X$. 

Furthermore, we know
$$\eqref{equ:digest-final} = 2^{O(\ell)} \cdot \Pr[E_{v_1}[x, a_{\leq i}] ~~\land~~  (G_{v'} \land (v' \wt v_1))[\tilde{x}, a_{> i}]]$$
where $(G_{v'} \land (v' \wt v_1))[\tilde{x}, a_{> i}]$ is the same event but determined using an independently sampled $\tilde{x}$. 

Now one can see the whole picture: To resolve the issue caused by the common dependency on $x$, we ``resample'' a uniformly random $\tilde{x}$. Then we let the left part of \Cref{fig:digest} (a) be generated using $x, a_{\leq i}$ and let the right part be generated using $\tilde{x}, a_{> i}$. (See \Cref{fig:digest} (b).) This gives us 
$$
E_{v_1}[x, a_{\leq i}] ~~\land~~ (G_{v'} \land (v' \wt v_1))[\tilde{x}, a_{> i}],
$$
a random process that decouples the left and right parts and is very easy to analyze. However, the process we want to study is still
$$
E_{v_1}[x, a_{\leq i}] ~~\land~~ (G_{v'} \land (v' \wt v_1))[x, a_{> i}].
$$
We can relate them because when $G_{v'}$ happens, $v' \wt v_1$ cannot tell any $x = x'$ apart from the uniformly resampled $\tilde{x}$. This intuition will be heavily used in the proof for the multi-pass case.

\paragraph*{Informal Statement.} Let us understand the statement of \Cref{lemma:multi-pass-transfer} on an intuitive level. We have as assumption the following:% \footnote{Similar to the two-pass, we count bad-edge stops as stopping on the edge, rather than stopping at $V_i^{(j)}$.}
\begin{align*} 
\forall v'_{j - 1} \in V_i^{(j - 1)}, v_{j - 1} \in V^{(j)}_0, \quad \Pr_{x, a_{\leq i}}\left[\text{truncated path from }v_{j - 1}\text{ stops at  $V^{(j)}_i$} \ \middle\vert \ v_{j - 2} \to v'_{j - 1}\right] \leq 2^{-k}.
\end{align*}
Given this assumption, we aim to show the following about the truncated path from $v_0$:
\begin{align*}
%\forall v'_{j - 1} \in V_i^{(j - 1)},\quad   &
\Pr_{x , a_1, a_2, \dots, a_T}\left[\text{truncated path from }v_0\text{ stops at  $V^{(j)}_i$} \right] \leq  2^{-k+O(\ell)}. % \cdot \Pr[v_{j - 2} \to v'_{j - 1}] \cdot \Pr[v'_{j - 2} \wt v_{j - 2}].
\end{align*}
Here the last statement is weaker than the two-pass case for technical reasons. But it is already sufficient for our purpose.

\paragraph*{Proof Overview.} Now we will give an overview of this proof and highlight the main difficulty. Similar to the two-pass proof, we will define good event $G_{v'_{j - 1}}$ to capture whether the computational path from $v'_{j - 1}$ to $v_{j - 1}$ stops. (Roughly speaking, $G_{v'_{j - 1}}$ happens if the path does not stop.) But the specific stopping rules used here are rather complicated. For details, see the formal definition in \Cref{sec:multi-pass-good}. Our proof divides into two cases just like the proof for two-pass:

\begin{itemize}
    \item \textbf{The good event $G_{v'_{j - 1}}$ happens.} (\Cref{fig:good_happen}) \\
    \begin{figure}[H]
    
        \centering
        \scalebox{1.3}{
        \begin{tikzpicture}
      % Nodes
      \node (v0) {$v_0$};
      \node[right=2cm of v0] (v'_1) {$v'_1$}; % Adjust horizontal distance here
      \node[right=2cm of v'] (v1) {$v_1$};
      \node[below=0.2cm of v0] (v1_second) {$v_1$}; % Adjust vertical distance here
      \node[right=2cm of v1_second] (v'_2) {$v'_2$};
      \node at (v1|-v'_2) (v2) {$v_2$};
      \node[below=0cm of v'_2] (vd) {};
      \node[left=of vd] (vd1) {$\vdots$};
      \node[right=of vd] (vd2) {$\vdots$};
      \node[below=0.8cm of v'_2] (v'_q1) {$v'_{j-1}$};
      \node at (v1_second|-v'_q1) (vq1) {$v_{j-2}$};
      \node at (v2|-v'_q1) (vq) {$v_{j-1}$};
      \node[below=0.2cm of v'_q1] (v) {$v$};
      \node at (v0|-v) (vq_second) {$v_{j-1}$};
      \node[above = 0.4cm of v'_1](s1){};
      \node[below = 0.4cm of v](t1){};
      \node[left = of s1](s1l){};
      \node[below = 0.2cm of s1l](lx){$x, a_{\leq i}$};
      \node[right = of s1](s1r){};
      \node[below = 0.1cm of s1r](rx){$\tilde{x}, a_{> i}$};
      
      % Edges
      \draw[->, shorten >= 2mm, shorten <= 2mm] (v0) -- (v');
      \draw[->, shorten >= 1mm, shorten <= 2mm]  (v') -- (v1);
      \draw[->, shorten >= 2mm, shorten <= 2mm]  (v1_second) -- (v'_2);
      \draw[->, shorten >= 1mm, shorten <= 2mm]  (v'_2) -- (v2);
      \draw[->, shorten >= 0mm, shorten <= 0mm]  (vq1) -- (v'_q1);
      \draw[->, shorten >= 1mm, shorten <= 0mm]  (v'_q1) -- (vq);
      \draw[->, shorten >= 2.7mm, shorten <= 1.8mm]  (vq_second) -- (v);
      \draw[-, thick, dashed, red] (s1) -- (t1);
      \end{tikzpicture}}
        \caption{Decouple $v_{j - 1} \to v$ and $v'_{j - 1} \to v_{j - 1}$ for $v \in V_j^{(2)}$.}
        \label{fig:good_happen}
    \end{figure}
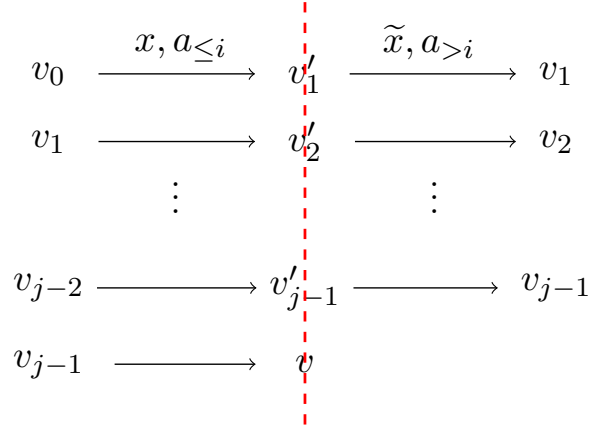

    In this case, we are facing the same adaptivity issue as in the two-pass case because both the truncated path $v_{j - 1}\to v$
    and $v'_{j - 1} \to v_{j - 1}$ depend on $x$. The strategy for two-pass generalizes quite easily. Since the posterior distribution $\doubleP_{x \mid G_{v'_{j - 1}}\land\; (v'_{j - 1} \wt v_{j - 1})}$ is flat by our stopping rules, when $G_{v'_{j - 1}}$ happens, $v'_{j - 1} \wt v_{j - 1}$ cannot distinguish any specific value $x = x'$ and the resampled $\tilde{x}$. Hence we can decouple these two parts in the same way as before.
 
    \item \textbf{The complement event $\bar{G_{v'_{j - 1}}}$ happens:} This is the more challenging case. Recall that in this case, we have the following adaptivity issue: Although for all $v'_{j - 1}$, $\Pr_{x, a_{> i}}\left[\bar{G_{v'_{j - 1}}}\right]$ is exponentially small, $v'_{j - 1}$ is picked by the process $v_0 \to v'_{j - 1}$. Since $\bar{G_{v'_{j - 1}}}$ is determined by $x, a_{> i}$, and $v_0 \to v'_{j - 1}$ is determined by $x, a_{\leq i}, a_{> i}$ (thus both events depend on $a_{>i}$), an almighty adversary might pick the $v'_{j - 1}$ for which $\bar{G_{v'_{j - 1}}}$ happens.

 In the two-pass proof, this was not an issue, and we crucially relied on the fact that $v_0 \to v'_1$ is a one-pass algorithm that \emph{only} depends on $x, a_{\leq i}$.
    %To handle this issue, in the second-pass proof, we crucially relied on the fact that $v_0 \to v'_1$ is a one-pass algorithm that \emph{only} depends on $x, a_{\leq i}$. 
    But now $v_0 \to v'_{j - 1}$ depends not only on $x, a_{\leq i}$, but also on $a_{> i}$. To prove this lemma, we necessarily need some new ideas. 
    
    \begin{figure}
        \centering
        \scalebox{1.3}{
        \begin{tikzpicture}
      % Nodes
      \node (v0) {$v_0$};
      \node[right=2cm of v0] (v'_1) {$v'_1$}; % Adjust horizontal distance here
      \node[right=2cm of v'] (v1) {$v_1$};
      \node[below=0.2cm of v0] (v1_second) {$v_1$}; % Adjust vertical distance here
      \node[right=2cm of v1_second] (v'_2) {$v'_2$};
      \node at (v1|-v'_2) (v2) {$v_2$};
      \node[below=0cm of v'_2] (vd) {};
      \node[left=of vd] (vd1) {$\vdots$};
      \node[right=of vd] (vd2) {$\vdots$};
      \node[below=0.8cm of v'_2] (v'_q1) {$v'_{j-2}$};
      \node at (v1_second|-v'_q1) (vq1) {$v_{j-3}$};
      \node at (v2|-v'_q1) (vq) {$v_{j-2}$};
      \node[below=0.2cm of v'_q1] (v) {$v'_{j - 1}$};
      \node at (v0|-v) (vq_second) {$v_{j-2}$};
      \node at (vq|-vq_second) (v_final) {$v_{j-1}$};
      \node[above = 0.4cm of v'_1](s1){};
      \node[below = 0.4cm of v](t1){};
      \node[left = of s1](s1l){};
      \node[below = -0.15cm of s1l](lx){$\tilde{x}, a_{\leq i}$};
      \node[right = of s1](s1r){};
      \node[below = 0cm of s1r](rx){$x, a_{> i}$};
      
      % Edges
      \draw[->, shorten >= 2mm, shorten <= 2mm] (v0) -- (v');
      \draw[->, shorten >= 1mm, shorten <= 2mm]  (v') -- (v1);
      \draw[->, shorten >= 2mm, shorten <= 2mm]  (v1_second) -- (v'_2);
      \draw[->, shorten >= 1mm, shorten <= 2mm]  (v'_2) -- (v2);
      \draw[->, shorten >= 0mm, shorten <= 0mm]  (vq1) -- (v'_q1);
      \draw[-, thick, dashed, red] (s1) -- (t1);
      \draw[->, shorten >= 0mm, shorten <= 0mm]  (v'_q1) -- (vq);
      \draw[->, shorten >= 0mm, shorten <= 0mm]  (vq_second) -- (v);
      \draw[->, shorten >= 0mm, shorten <= 0mm]  (v) -- (v_final);

      \draw[orange,thick,dashed] ($(v0.north west)+(-0.2,0)$)  rectangle ($(v'.south east)+(0.2,-0)$); 
      \node[left = 0.5cm of v0, orange] (w1) {Pre-processing.};
      \draw[blue,thick,dashed] ($(v1_second.north west)+(-0.2,0)$)  rectangle ($(v.south east)+(0,-0)$); 
      \node[below = 1cm of w1, blue] (w2) {Post-processing.};
      
      \end{tikzpicture}}
        \caption{After decoupling,  we will view the left part as pre-processing/post-processing stages.}
        \label{fig:multi-pass-pre-post}
    \end{figure}
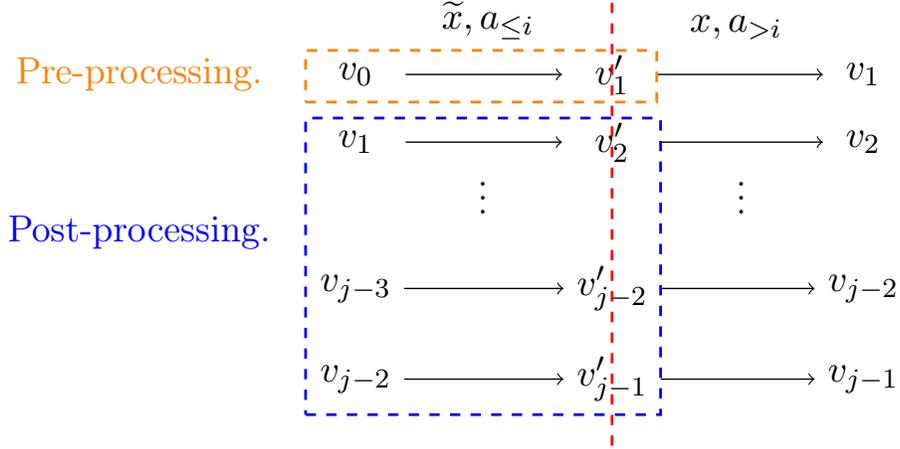

    \end{itemize}

\paragraph*{$\bar{G_{v'_{j - 1}}}$ rarely happens.} We now discuss in detail how we handle the second case. Recall that we are proving the transfer lemma for the $j$-th pass of the program.
The vertex $v'_{j - 1}$ is in the $(j - 1)$-th pass of the program. Our proof contains the following steps:
    \begin{enumerate}
        \item \textbf{(Decoupling.)} Observe that $v_0 \to v'_{j - 1}$ is equivalent to $(v_{j - 2} \to v'_{j - 1}) \land (v'_{j - 2} \to v_{j - 2})$. We can assume that $\doubleP_{x \mid v_{j - 2} \to v'_{j - 1}}$ is flat because, by induction, we can truncate significant values in the $(j-1)$-th pass without increasing stopping probability too much. Hence we can resample a hidden vector $\tilde{x} \in X$ and ``generate'' the left part $v_{j - 2} \to v'_{j - 1}$ using this resampled vector $\tilde{x}$. (See \Cref{fig:multi-pass-pre-post}.)

        \item \textbf{(Reduction to $j - 1$ Passes.)}
        \begin{itemize}
            \item \textbf{(A Toy Case.)} Let us first consider a toy case: Suppose for all $0 \leq t \leq j - 2$, $v'_{t+1} = v_{t}$, i.e., the left part of this program is trivial. Then the first $(j-1)$ passes of our program reduce to a $(j - 1)$-pass program with starting vertex $v'_1 = v_0$ that depends only on $a_{>i}$. Denote the program as $B_{>i}$. We define $G_{v'_{j-1}}$ to capture whether $B_{>i}$ stops in the $(j-1)$-th pass \emph{starting} from $v'_{j-1}$. 
            \\
            
            By induction, we assume \Cref{theo:multi-pass-main-result} has been proven for all $(j - 1)$-pass programs. This means
            \[
            \sum_{v'_{j-1}} \Pr[(v'_1 \to v'_{j-1})\land \bar{G_{v'_{j - 1}}}] \le 2^{-\Omega(\ell)}.
            \]
            That is to say, although $v'_{j - 1}$ is still picked by the first $(j - 2)$-passes, since all together this is a $(j - 1)$-pass program, we know that the probability of stopping is exponentially small. \\ %Hence we can show that the probability $$\sum_{v'_{j - 1}} \Pr\left[\bar{G_{v'_{j - 1}}} \land v_0 \to v'_{j - 1}\right]$$ is exponentially small.  \\
        \item \textbf{(The General Case.)} Notice that although the left part of the program may not be trivial in the general case, it is still completely independent of $x, a_{> i}$. \\
        
        If we view the right part as a $(j - 1)$-pass program, its inputs are $a_{> i}$ and $b_{> i}$. Then $v_0 \to v'_1$ is just a pre-processing stage before the first pass, while for all $1 \leq t\leq j - 2$, $v_{t} \to v'_{t + 1}$ is a post-processing stage between the $t$-th pass and the $(t + 1)$-th pass. These stages are independent of the inputs $a_{> i}, b_{> i}$. They depend on the randomized $\tilde{x}, a_{\leq i}$, which can be thought of as the internal randomness of the program. \\
        
        Naturally, a branching program with such extra randomized pre-processing/post-processing stages that are independent of the inputs can be simulated by a deterministic program by fixing the randomness of these stages. \\

        \item \textbf{(Reduction to the Toy Case.)} We will now fix the randomness of $\tilde{x}, a_{\leq i}$. For any fixed $\tilde{x}, a_{\leq i}$ and $0 \leq t \leq j - 2$, the transition from $v_{t}$ to $v'_{t+1}$ is deterministic. Moreover, by our modification in \Cref{sec:multi-pass-modify}, $v'_{t + 1}$ remembers $v_{t}$. Hence this transition is injective. Then the left part performs essentially the same functionality as a trivial program. 
        \end{itemize}        
    \end{enumerate}
% \end{itemize}

\subsection{Extending Good Events to Multi-Pass} \label{sec:multi-pass-good}

Roughly speaking, the good event $G_{v'_{j - 1}}$ happens when the path from $v'_{j - 1}$ to $v_j$ does not stops. In this section, we will specify the stopping rules used in this definition. In our proof, we need two properties from $G_{v'_{j - 1}}$:
\begin{itemize}
    \item When $G_{v'_{j - 1}}$ happens, the posterior distribution $\doubleP_{x \mid (G_{v'_{j-1}} \land\; (v'_{j- 1} \wt v_{j-1})) }$ should be flat. So that we can decouple the left and the right parts of the program. 
    \item For $v'_{j - 1}$ that is adaptively picked by ``the first $(j - 2)$-passes with pre/post-processing'', i.e., $v_0 \to v'_{j - 1}$, the probability that $\bar{G_{v'_{j - 1}}}$ happens should be small. 
\end{itemize}
The remaining task is finding the correct set of stopping rules that guarantee these two properties. Compared with the good event $G_{v'}$ defined in the two-pass transfer lemma, there are two complications:
\begin{itemize}
    \item In the two-pass version, since $v'$ is a vertex in the first pass, it has no associated counters. But $v'_{j - 1}$ has counters. We need to specify the counter-related stopping rules carefully. 

    Specifically, we will define the stopping rules using the new counters that start counting from $v'_{j - 1}$, instead of the old counters attached to $v'_{j - 1}$, which started counting from $v_{j - 2}$. 
    
    This resembles the two-pass case: The posterior distributions $\doubleP_{x \mid v'_1 \to \tilde{v}}$ (for first pass vertex $\tilde{v}$) used in the stopping rules of $G_{v'}$, are defined w.r.t. $v'_1 \to \tilde{v}$ instead of $v_0 \to \tilde{v}$.
    
    \item In the two-pass version, we only need that for all fixed $v'$, the probability of $\bar{G_{v'}}$ is small. But here, $v'_{j - 1}$ is picked by a (computationally bounded) adaptive procedure, and we still require the probability of $\bar{G_{v'_{j - 1}}}$ to be small. 

    For example when $j = 3$, the whole process $v_0 \to v'_{j - 1} \to v_{j - 1}$ contains two passes. Hence same as our analysis for two-pass algorithms, when defining $G_{v'_{j - 1}}$, we cannot stop on all bad edges. Otherwise, the probability of $\bar{G_{v'_{j - 1}}}$ would be too large. Hence it is necessary to introduce stopping rules related to high-probability-edges and counters. But now, as the program has pre/post-processing stages, even the definition of high-probability edges requires a little more care.  
\end{itemize}

\subsubsection{Probabilistic Subprogram}  Suppose $v'_{j - 1} \in V^{(j - 1)}_i$. To define the good event $G_{v'_{j - 1}}$, similar to the two-pass setting, we will consider a subprogram $B_{> i}$ of the original program $B$. But the definition here is more complicated: It involves resampling a uniform $\tilde{x} \in X$ and ``decoupling'' the left part and the right part of \Cref{fig:multi-pass-pre-post}.

\begin{definition}[Probabilistic Subprogram $B_{> i}$] Given the original program $B$, for any pass $j$ and layer $i$, define the $(j-1)$-pass \textit{probabilistic} subprogram $B_{>i}$ as follows:
\begin{itemize}
    \item The input for $B_{> i}$ is $(a_{i + 1}, b_{i + 1}), \dots, (a_T, b_T) \in A \times \{-1, 1\}$.
    \item The set of vertices of $B_{> i}$ is 
\[
V(B_{> i}) = \bigcup_{\tau=1}^{j-1} \bigcup_{t=i}^T V^{(\tau)}_{t}.
\]
Namely, it contains all vertices of $B$ that are in the last $(T-i+1)$ layers of the first $(j-1)$ passes. Note here the $i$-th layer in the original program is the starting layer of $B_{>i}$. 
    \item $B_{> i}$ has \emph{internal randomness}: Uniformly sampled $\tilde{x} \in X$ and $a_{\leq i} \in A^{i}$.
    \item The edges \emph{within} each pass remain unchanged. Namely, for each $\tau \in [j - 1]$ and $t\in [i, T-1]$, we keep all the edges between $V^{(\tau)}_{t}$ and $V^{(\tau)}_{t+1}$.
    \item The edges \emph{between} passes have no labels. Each vertex at the last layer of that pass has a unique outgoing edge that is determined by \emph{internal randomness} $\tilde{x}$ and $a_{\leq i}$. Specifically, for each $\tau \in [2,j-1]$, we connect $v_{\tau - 1} \in V^{(\tau - 1)}_T$ to a vertex $v'_{\tau} \in V_i^{(\tau)}$ if and only if the following holds:
    \begin{itemize}
        \item Let $$\tilde{b}_1 = M(a_1, \tilde{x}), \tilde{b}_2 = M(a_2, \tilde{x}), \dots, \tilde{b}_{i} = M(a_{i}, \tilde{x}).$$ In the $\tau$-th path of the original branching program $B$, the computational path starting from $v_{\tau - 1}$ on partial input $$(a_1, \tilde{b}_1), (a_2, \tilde{b}_2), \dots, (a_{i}, \tilde{b}_{i})$$ arrives at $v'_\tau$.  (Note that this corresponds to the post-processing stage in \Cref{fig:multi-pass-pre-post}.)
    \end{itemize}
    \item Finally, the starting vertex of $B_{> i}$ is the vertex $v'_1\in V^{(1)}_i$, such that in the original program $B$, the computational path starting from $v_0$ on partial input $$(a_1, \tilde{b}_1), (a_2, \tilde{b}_2), \dots, (a_{i}, \tilde{b}_{i})$$ arrives at $v'_1$. (Note this corresponds to the pre-processing stage in \Cref{fig:multi-pass-pre-post}.)
\end{itemize}
Crucially, the starting vertex and the edges between two adjacent passes only depend on the \emph{internal randomness}, but not on the input to $B_{> i}$.    
\end{definition}

We will need the following observation about the probabilistic subprogram.

\begin{claim}\label{claim:probab-B-connection}
For each $v'_{j-1}\in V^{(j-1)}_i$, we observe that
\begin{align}
\Pr[B_{> i}\text{ reaches } v'_{j-1}] = \Pr[v_{j-2}\wt v'_{j-1}] \cdot \Pr[v'_{j-2}\wt v_{j-2}].\label{equ:probabilisic-B-connection}
\end{align}
\end{claim}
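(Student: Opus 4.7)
The plan is to exploit the fact that in the probability space of $B_{>i}$, the internal randomness $(\tilde{x}, a_{\le i})$ and the input randomness $(x, a_{>i})$ are independent, and that the pre-processing and between-pass edges of $B_{>i}$ depend only on the former while the within-pass transitions depend only on the latter. Hence the probability of reaching a fixed $v'_{j-1}$ should factor into a product over these two independent sources, and the task is to identify each factor with the corresponding quantity in \eqref{equ:probabilisic-B-connection}.

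First, I would unfold what ``$B_{>i}$ reaches $v'_{j-1}$'' means. By the remember-the-past modification of \Cref{sec:multi-pass-modify}, fixing $v'_{j-1}$ pins down the earlier layer-$i$ snapshots $v'_1, v'_2, \ldots, v'_{j-2}$ and the last-layer vertices $v_1, v_2, \ldots, v_{j-2}$. The path in $B_{>i}$ reaches $v'_{j-1}$ exactly when three families of events all hold: (a)~the pre-processing from $v_0$ under $(\tilde{x}, a_{\le i})$ lands on $v'_1$; (b)~for each $\tau \in [j-2]$, the within-pass-$\tau$ portion satisfies $v'_\tau \wt v_\tau$ under $(x, a_{>i})$; and (c)~for each $\tau \in [j-2]$, the between-pass edge from $v_\tau$ under $(\tilde{x}, a_{\le i})$ lands on $v'_{\tau+1}$.

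Second, I would collapse the $(\tilde{x}, a_{\le i})$-family (a)$+$(c) and the $(x, a_{>i})$-family (b) each to a single $\wt$-event, using the fact that a modified pass internally simulates all earlier passes \emph{in parallel} on the current input. Concretely, running modified pass $j-1$ from $v_{j-2}$ with inputs $(a_t, M(a_t, \tilde{x}))_{t\le i}$ simultaneously simulates passes $1, 2, \ldots, j-2$ starting respectively at $v_0, v_1, \ldots, v_{j-3}$; the resulting layer-$i$ vertex equals $v'_{j-1}$ iff each of these internal simulations hits $v'_1, v'_2, \ldots, v'_{j-2}$, i.e., iff (a) and all of (c) hold. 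Since $(\tilde{x}, a_{\le i})$ has the same distribution as $(x, a_{\le i})$, the probability of this conjunction is precisely $\Pr[v_{j-2} \wt v'_{j-1}]$. An analogous argument, applied to modified pass $j-2$ starting from $v'_{j-2}$ and running to layer $T$, identifies the family (b) with the single event $v'_{j-2} \wt v_{j-2}$, whose probability is $\Pr[v'_{j-2} \wt v_{j-2}]$.

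Finally, multiplying the two factors using the independence of $(\tilde{x}, a_{\le i})$ from $(x, a_{>i})$ in the probability space of $B_{>i}$ yields the claimed identity. The main obstacle is the bookkeeping in the collapsing step: one must carefully unwind the recursive definition of the modified program to verify that the layer-$i$ vertex of modified pass $\tau$ really does encode the layer-$i$ snapshots of \emph{all} earlier passes simulated from $v_0, v_1, \ldots, v_{\tau-2}$, so that the equivalence between the single $\wt$-event and the conjunction of sub-events is genuine and not merely an implication.
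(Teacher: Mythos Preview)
Your proposal is correct and follows essentially the same approach as the paper: decompose the event ``$B_{>i}$ reaches $v'_{j-1}$'' into the conjunction of the $(\tilde{x}, a_{\le i})$-determined pre/post-processing transitions and the $(x, a_{>i})$-determined within-pass transitions, collapse each conjunction to a single $\wt$-event using the ``each pass remembers all previous passes'' property, and multiply using independence. The paper's proof is somewhat terser on the collapsing step (it simply says ``the equality follows from the fact that every pass remembers the previous passes''), but the content is the same.
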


\begin{proof}
To see this, note that $B_{> i}$ reaches $v'_{j-1}$ if and only if both of the following events happen:
\begin{enumerate}
\item $B_{> i}$ has $v'_1$ (which is remembered in $v'_{j-1}$) as the starting vertex. For each integer $\tau\in [2,j-1]$, the edge between passes moves from $v_{\tau - 1}$ to $v'_{\tau}$ (which are also remembered in $v'_{j-1}$) . This event only depends on the internal randomness $\tilde{x}, a_{< i}$.
It happens with probability 
\begin{align*}
\Pr_{\tilde{x}, a_{< i}}\left [ \bigwedge_{\tau \in [j - 1]} v_{\tau - 1} \wt v'_{\tau} \right] = \Pr_{\tilde{x}, a_{< i}}[v_{j-2}\wt v'_{j - 1}].
\end{align*}
here the equality follows from the fact that every pass remembers the previous passes. 
\item For each $\tau \in [j-2]$, $B_{> i}$ moves from $v'_{\tau}$ to $v_{\tau}$. This event only depends on $x, a_{\geq i}$ and happens with probability
\begin{align*}
\Pr_{x, a_{\geq i}}\left[\bigwedge_{\tau \in [j-2]} (v'_{\tau}\wt v_{\tau})\right] = \Pr[v'_{j - 2} \wt v_{j - 2}].
\end{align*}
here the equality follows from the fact that every pass remembers the previous passes. 
\end{enumerate}
Finally, observe that these two events are independent of each other, because the first one depends solely on $\tilde{x}, a_{< i}$, while the second one depends solely on $x, a_{\geq i}$. This completes the proof.
\end{proof}

\subsubsection{Good Events $G_{v'_{j - 1}}$} \label{sec:multi-transfer-good}

Informally speaking, we will define $G_{v'_{j - 1}}$ to be the event that the truncated path in $B_{> i}$ from $v'_{j - 1} \in V^{(j - 1)}_i$ does not stop in pass $j - 1$. For clarity, we will now fully expand this definition. 

\paragraph*{Modifying the probabilistic subprogram} First, to define counter-related stopping rules. We will apply the modification stages in \Cref{sec:multi-pass-modify} to $B_{> i}$ and get a new probabilistic program $\widetilde{B}_{> i}$. Compared with the modification for deterministic branching programs, we only have to make two changes for the modification stages: 
\begin{itemize}
    \item Now the vertices in the first pass also need to remember the starting vertex $v'_1$ because it is no longer fixed for probabilistic programs.
    \item Additional to the starting vertex $v'_\tau$ of pass $\tau$, every vertex in pass $\tau$ will also remember the last vertex from the previous pass, $v_{\tau - 1}$. 
    
    (For deterministic branching programs, the starting vertex of this pass is always the same as the last vertex of the previous pass. But this is not the case for probabilistic programs. ) 
\end{itemize} 

In this modification, for a vertex $v$ in the $\tau$-th pass, the original counters associated with it are ignored.
%treated not as counters but only as part of its index. 
Instead, we add new counters $\widetilde{\ch}^{(\tau)}(v)$ and $\widetilde{\cb}^{(\tau)}(v)$ to it. 

Also, it remembers the new counters from previous passes, $%\widetilde{\ch}^{(1)}(v), 
\widetilde{\ch}^{(2)}(v), \dots, \widetilde{\ch}^{(\tau - 1)}(v)$ and $%\widetilde{\cb}^{(1)}(v), 
\widetilde{\cb}^{(2)}(v),\dots, \widetilde{\cb}^{(\tau - 1)}(v)$.

\paragraph*{Stopping Rules.} The stopping rules are the same as \Cref{sec:multi-pass-stop-rules}. We highlight all the changes:
\begin{itemize}
    \item The conditional distributions $\doubleP_{x \mid v_{\tau - 1} \to v}$ ($v$ is in pass $\tau$) are replaced by their analogue $\doubleP_{x \mid v'_\tau \to v}$ for $\widetilde{B}_{> i}$. (This affects both the bad events and the counter updates.)
    \item The high-probability edges $\High(v)$ for vertex $v$ in layer $t$ is now defined using the probability $$\Pr[a_{t + 1} = a \mid \widetilde{B}_{>i} \text{ reaches } v \text{ without stopping}].$$ This is the analogue of $\Pr[a_{t + 1} = a \mid v_0 \to v]$ for this probabilistic program $\widetilde{B}_{> i}$.
    \item The counter-related stopping rules are w.r.t. new counters $\widetilde{\ch}^{(\tau)}(v)$ and $\widetilde{\cb}^{(\tau)}(v)$.
\end{itemize}

Since these stop rules are defined only for the probabilistic subprogram $\tilde{B}_{> i}$, in the definition of $G_{v'_{j - 1}}$, we will never stop in the middle of the pre/post-processing stages ($v_{\tau - 1} \wt v'_\tau$). In the $\tau$-th pass, we will only stop during $v'_{\tau} \wt v_\tau$. 

We also need to specify the \emph{thresholds} for defining the stopping rules. 
%Recall we defined $\ellflat^{(j-1)} = 100^{3^{j-2}-0.5}\ell$ (\Cref{sec:multi-main-result-statement}). 
Given the $(j-1)$-pass program $\tilde{B}_{>i}$, we set relevant stopping thresholds as specified in \Cref{tab:parameter-multi-pass} with ``$\ell$'' set to $2^{j+2}\cdot \ellflat^{(j-1)}$.
\paragraph*{Good Event Definition.} The good event $G_{v'_{j - 1}}  \subseteq \{(x, a_{> i}) \mid x \in X, a_{> i} \in A^{T - i}\}$ is defined as the event that starting from vertex $v'_{j - 1}$, the computational path determined by $x, a_{> i}$ reaches $V^{(j - 1)}_T$ without triggering any of these stopping rules.  %where we define relevant thresholds in accordance with \Cref{tab:parameter-multi-pass}, but with the base threshold ``$\ell$'' set to $2^{q+2}\cdot \ellflat^{(j-1)}$.

\paragraph{Fixing Internal Randomness} In our definition of $B_{> i}$, we sampled its internal randomness $\tilde{x}, a_{\leq i}$ uniformly at random. This is crucial for \Cref{claim:probab-B-connection} to hold. However, we will now show that the distribution of $\tilde{x}, a_{\leq i}$ actually does not matter for the definition of $G_{v'_{j - 1}}$. This means, even if we completely fix the internal randomness $\tilde{x}, a_{\leq i}$ and get a deterministic program, the good event $G_{v'_{j - 1}}$ defined w.r.t. this deterministic program is still the same subset of $\{(x, a_{> i}) \mid x \in X, a_{> i} \in A^{T - i}\}$ as the good event defined w.r.t. the probabilistic program with uniform internal randomness.

We will now sketch the main idea, and the formal proof will be given in \Cref{appendix:multi-pass}. The idea is the following: We can view the left part of \Cref{fig:multi-pass-pre-post} as Alice and the right part as Bob. Then at the beginning of the $t$-th pass, the vertex $v'_{t - 1}$ can be seen as a message sent from Alice to Bob. Similarly, $v_t$ can be seen as a message sent from Bob to Alice, as shown in the following figure.

\begin{figure}[H]
    \centering
   
\begin{tikzpicture}[
  rect/.style={rectangle, draw, text centered, minimum height=11em, minimum width=5em},
  arrow/.style={->, >=stealth, thick}
]
  % Draw rectangles for Alice and Bob
  \node[rect] (alice) {Alice};
  \node[below = -1.5cm of alice](xaA) {$\tilde{x},a_{\leq i}$};
  \node[rect, right=4cm of alice] (bob) {Bob};
  \node[below = -1.5cm of bob](xaB) {$x,a_{> i}$};

  % Number of messages (excluding mn)
  \def\numMessages{3}

  % Draw arrows for messages m1, m2, m3, ..., excluding mn
  \foreach \i [evaluate=\i as \y using 1-(\i-0.5)*1.7/(\numMessages+1)] in {1,...,\numMessages}{
    \coordinate (alice_m\i) at ($(alice.east)!\y!(alice.north east)$);
    \coordinate (bob_m\i) at ($(bob.west)!\y!(bob.north west)$);
    \pgfmathsetmacro\j{int((\i + 1) / 2)};
    \ifodd\i
      \draw[arrow] ([xshift=0.3cm]alice_m\i) -- node[midway, above] {$v'_\j$} ([xshift=-0.3cm]bob_m\i);
    \else
      \draw[arrow] ([xshift=-0.3cm]bob_m\i) -- node[midway, above] {$v_\j$} ([xshift=0.3cm]alice_m\i);
    \fi
  }

  % Add \vdots to represent omitted messages
  \node at ($(alice.east)!0.5!(bob.west)+ (0, -0.5cm)$) (vdots) {$\vdots$};

  % Draw arrow for mn
  \coordinate (alice_mn) at ($(alice.east)!0.6!(alice.south east)+ (0, -0.4cm)$);
  \coordinate (bob_mn) at ($(bob.west)!0.6!(bob.south west)+ (0, -0.4cm)$);
  \draw[arrow] ([xshift=0.3cm]alice_mn) -- node[midway, above] {$v'_j$} ([xshift=-0.3cm]bob_mn);
\end{tikzpicture}

    \caption{The messages between Alice and Bob.}
    \label{fig:communication}
\end{figure}
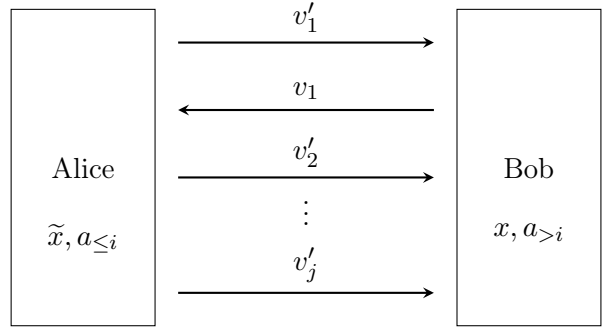

Consider the following communication game: 
\begin{itemize}
    \item Alice holds her private input $\tilde{x}, a_{\leq i}$ sampled from some distribution, while Bob holds his private input $x, a_{> i}$ sampled uniformly at random from $X \times A^{T - i}$.
    \item Each time Alice receives a message $v_{t - 1}$ (where $t \in [j]$), she computes $v'_t \gets f_t(v_t, \tilde{x}, a_{\leq i})$ for some function $f_t$ and sends it to Bob.
    \item  Then Bob computes $v_t \gets g_t(v'_t, x, a_{ > i})$ for some function $g_t$ and sends it to Alice.
\end{itemize}  
Our claim is that conditioning on the entire transcript $\Pi$, the posterior distribution of Bob's private input, $\Pr[x, a_{> i} \mid \Pi]$, is the same no matter how Alice's private input is sampled. To see this, notice that once we have conditioned on the entire transcript, Alice's and Bob's input would be independent of each other (because the set of inputs consistent with the trasncript always forms a combinatorial rectangle).

We will first explain how this model corresponds to our program $\widetilde{B}_{> i}$.
\begin{itemize}
    \item In $\widetilde{B}_{> i}$, the left part of \Cref{fig:multi-pass-pre-post} depends on $\tilde{x}, a_{\le i}$ which is the internal randomness. The right part of \Cref{fig:multi-pass-pre-post} depends on $x, a_{> i}$.
    \item When the left part receives a vertex $v_{t - 1}$ (where $t \in [j]$), it will determine $v'_t$ by simulating the computational path from $v_{t - 1}$ using $\tilde{x}, a_{\leq i}$. This corresponds to $v'_t \gets f_t(v_t, \tilde{x}, a_{\leq i})$.  
    \item When the right part receives a vertex $v'_t$, it will simulate the truncated path from $v_t$ using $x, a_{> i}$. The computational path from $v_t$ is independent of how $\tilde{x}, a_{\leq i}$ are sampled. 
    
    For the truncated path, we will have to show that the stopping rules are independent of $\tilde{x}, a_{\leq i}$ as well. Once we have shown this, the right part would correspond to $v_t \gets g_t(v'_t, x, a_{> i})$. 
    
    \item By our new modification, the vertex $v_1$ remembers the starting vertex $v'_1$, and the vertex $v'_2$ remembers the last vertex from the previous pass which is $v_1$. In general, every message contains the whole prefix of the transcript. 
\end{itemize}
So now we need to show that the stopping rules are independent of how $\tilde{x}, a_{\leq i}$ are sampled. Once we have shown this, we would prove (1) a strict correspondence between this communication game and $\widetilde{B}_{> i}$; (2) $G_{v'_{j - 1}}$ is independent of $\tilde{x}, a_{\leq i}$, since $G_{v'_{j - 1}}$ is a statement about whether Bob stops in the $(j - 1)$-th pass. 

Intuitively, the stopping rules are independent of how $\tilde{x}, a_{\leq i}$ is sampled for the following reason: For any pass $j$, any vertex $v'_j\in V^{(j)}_{i}$ remembers the whole prefix of the transcript (i.e., $v_{j-1},\dots, v_1$ and $v'_{j-2},\dots, v'_{1}$). By our claim above, the distribution of $(x, a_{> i})\mid v'_j$ is independent of how $\tilde{x}, a_{\leq i}$ are sampled. Then by one induction over the layers, we can show that the conditional probabilities used in the stopping rules, namely,
\[
\doubleP_{x \mid v'_j \to v} \quad \text{ and } \quad
\Pr[a_{t + 1} = a' \mid \widetilde{B}_{>i} \text{ reaches } v \text{ without stopping}]
\]
are independent of how $\tilde{x}, a_{\leq i}$ are sampled.

For the details of this proof, see \Cref{appendix:multi-pass}. 

\paragraph*{$G_{v'}$ implies flat distribution.} 

We need the following useful claim.

\begin{claim}\label{claim:multi-transfer-Gv-useful}
For every $v_{j-1}\in V^{(j-1)}_T$ in the \emph{original program}, it holds that
\[
\left\| \doubleP_{x|(v'_{j-1}\wt v_{j-1})\land G_{v'_{j-1}}} \right\|_{\infty} \le 2^{\ellgood^{(j-1)}+1}\cdot 2^{-n},
\]
where we recall that $\ellgood^{(j-1)}$ is defined as (see \Cref{tab:parameter-multi-pass}):
\[
\ellgood^{(j-1)} = (100^{3^{(j-2)}}) \cdot (2^{j+2}\cdot \ellflat^{(j-1)}).
\]
\end{claim}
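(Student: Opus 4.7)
My plan is to derive the bound by essentially specializing the $\ell_\infty$ conclusion of Theorem~\ref{theo:multi-pass-main-result} (the $(j-1)$-pass inductive hypothesis) to the sub-setup that defines $G_{v'_{j-1}}$. Concretely, the bound reduces to combining the significant-value stopping rule with the bad-edge rule at the penultimate vertex on the path ending at $v_{j-1}$, mirroring the final step of the main theorem's proof.

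First I would invoke the Claim from \Cref{sec:multi-transfer-good} to fix the internal randomness $(\tilde{x}, a_{\le i})$ of $\widetilde{B}_{>i}$ to any value, which alters neither $G_{v'_{j-1}}$ nor the conditional distribution that we are bounding. Under this fixing, $\widetilde{B}_{>i}$ is a deterministic $(j-1)$-pass program whose stopping thresholds are those of \Cref{tab:parameter-multi-pass} with the meta-parameter $\ell$ replaced by $\ell^* = 2^{j+2}\cdot \ellflat^{(j-1)}$. In particular the significant-value threshold in the sub-setup is $\ellsigv^* = 100^{3^{j-2}} \cdot \ell^* = 2^{j+2}\cdot \ell \cdot 100^{2\cdot 3^{j-2}} \le \ellgood^{(j-1)}$, and the significant-state threshold is $\ellsigs^* = 100^{3^{j-2}-1}\cdot \ell^*$, which is $100$ times smaller in the exponent.

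Next I would observe that $(v'_{j-1}\wt v_{j-1})\wedge G_{v'_{j-1}}$ is exactly the truncated-path event $v'_{j-1}\to v_{j-1}$ in the sub-setup. Let $u$ be the penultimate vertex on this path and $(a,b)$ label the edge $u\to v_{j-1}$. Unrolling the definition of truncated path yields $v'_{j-1}\to u$, $x\notin \SigV(u)$ (significant-value rule) and $a\notin \Bad(u)$ (bad-edge rule). A Bayes-rule computation then gives
\[
\doubleP_{x\mid v'_{j-1}\to v_{j-1}}(x') \;=\; \frac{\doubleP_{x\mid v'_{j-1}\to u}(x')}{\Pr_{x''\sim \doubleP_{x\mid v'_{j-1}\to u}}\!\bigl[x''\notin \SigV(u)\wedge M(a,x'')=b\bigr]}.
\]
The numerator is strictly below $2^{\ellsigv^* - n}$ because $x'\notin \SigV(u)$; for the denominator, the bad-edge rule gives $\Pr[M(a,x'')=b]\ge \tfrac{1}{2}-2^{-\rext}$, while the $\ell_\infty$-truncation trick applied to $u$ being non-significant bounds $\Pr[x''\in \SigV(u)]\le 2^{2\ellsigs^* - \ellsigv^*}$. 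Putting the two together produces $\doubleP_{x\mid v'_{j-1}\to v_{j-1}}(x')\le 2\cdot 2^{\ellsigv^* - n}\le 2^{\ellgood^{(j-1)}+1}\cdot 2^{-n}$, as desired.

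The main obstacle is making sure the Bayes correction factor stays $\le 2$, i.e.\ that the denominator above does not collapse. This is exactly what forces the $100$-fold multiplicative gap between $\ellsigv^*$ and $\ellsigs^*$ in the parameter schedule of \Cref{tab:parameter-multi-pass}: it makes $\Pr[x''\in \SigV(u)] \le 2^{-98\cdot \ell^*\cdot 100^{3^{j-2}-1}}$ negligible compared to $\tfrac{1}{2}$, so that the denominator is at least $\tfrac{1}{2}-o(1)$ and the argument goes through. Once this margin is secured, the rest of the derivation is bookkeeping of the thresholds.
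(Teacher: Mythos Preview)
Your overall plan coincides with the paper's: fix the internal randomness of $\widetilde{B}_{>i}$ to obtain a deterministic $(j-1)$-pass program $\hat{B}_{>i}$, identify $(v'_{j-1}\wt v_{j-1})\wedge G_{v'_{j-1}}$ with the event that $\hat{B}_{>i}$ reaches $v_{j-1}$ without stopping, and then read off the $\ell_\infty$ bound from the $(j-1)$-pass instance of Theorem~\ref{theo:multi-pass-main-result}. Done that way, the argument is correct and is exactly what the paper does.

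The gap is in your third paragraph, where you replace the black-box appeal to Theorem~\ref{theo:multi-pass-main-result} by an explicit last-edge computation. Your claim that the bad-edge rule yields $a\notin\Bad(u)$ is false: in the sub-setup the $(j-1)$-th pass is the \emph{last} pass of a multi-pass program, so it has high-probability edges, and the stopping rule only forbids $a\in\Bad(u)\setminus\High(u)$. When $a\in\High(u)\cap\Bad(u)$ the path happily traverses to $v_{j-1}$, and your denominator $\Pr_{x''}[M(a,x'')=b]$ can be as small as $2^{-\Delta}$ for an essentially arbitrary $\Delta$ --- this is precisely the phenomenon the bias counter $\cb$ exists to track. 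Your lower bound on the denominator, and hence your $\ell_\infty$ bound, then breaks by a factor of $2^{\Delta}$. The paper sidesteps this entirely by quoting the $\ell_\infty$ conclusion of Theorem~\ref{theo:multi-pass-main-result} for $\hat{B}_{>i}$ as a black box, not by re-deriving it at the last step.

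A minor second omission: the inductive hypothesis gives the bound for final vertices of $\widetilde{B}_{>i}$, which carry the \emph{new} counters $\widetilde{\cb},\widetilde{\ch}$; to conclude for $v_{j-1}$ in the original program you still need the (easy) observation that $\doubleP_{x|(v'_{j-1}\wt v_{j-1})\land G_{v'_{j-1}}}$ is a convex combination of the corresponding distributions over all $\widetilde{B}_{>i}$-vertices consistent with $v_{j-1}$.
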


\begin{proof} 
For every vertex $v_{j-1}$ in the final layer of $\tilde{B}_{>i}$, we fix the internal randomness of $\tilde{B}_{>i}$ so that 
$\Pr[\hat{B}_{>i} \text{ reaches } v_{j-1}] > 0$. Let $\hat{B}_{>i}$ denote the resulting deterministic program.
Then, over uniformly random $x,a_{>i}$, the event $\mathbbm{1}[\hat{B}_{>i} \text{ reaches } v_{j-1}]$ is equivalent to
\[
(v'_{j-1}\wt v_{j-1}) \land G_{v'_{j-1}}.
\]
Applying \Cref{theo:multi-pass-main-result} on $\hat{B}_{>i}$ with parameter $2^{j+2}\cdot \ellflat^{(j-1)}$, we know that
\[
\| \doubleP_{x|(v'_{j-1}\wt v_{j-1}) \land G_{v'_{j-1}}} \|_{\infty} \le 2^{\ellgood^{(j-1)}+1} \cdot 2^{-n}.
\]

We have shown the claimed bound holds for vertices in the program $\tilde{B}_{>i}$ (where the new counters $\widetilde{\cb},\widetilde{\ch}$ have been added). This also implies the same bound for the original program's $v_{j-1}$. Because the distribution of $\doubleP_{x|(v'_{j-1}\wt v_{j-1})\land G_{v'_{j-1}}}$ is a convex combination of $\left\{\doubleP_{x|(v'_{j-1}\wt v)\land G_{v'_{j-1}}}\right\}_v$ where $v$ runs over all vertices in $\tilde{B}_{>i}$ that are consistent with $v_{j-1}$. 
\end{proof}

% This ensures that
% \begin{align}
%     \Pr_{(x,a_{i+1},\dots, a_T)\sim X\times A^{T-i}}[B_{>i} \text{ stops}] \le 2^{-4\ellflat^{(j-1)}} \label{equ:B-gt-i-stop}
% \end{align}

% \paragraph*{Comparison with truncated path from $v'$.} Note here, the stopping rules for counter overflows are defined w.r.t. the new counters we add. The conditional distributions in the stopping rules are defined w.r.t. the truncated paths in this modified subprogram starting at $v'_j$. 

% In comparison, the stopping rules for truncated paths from $v'$ for counter overflow are defined w.r.t. the original counters. Moreover, the conditional distributions in the stopping rules are defined w.r.t. the truncated path in the original program starting at $v_{j - 1}$.

\subsection{Proof of the Multi-Pass Transfer Lemma}

Our proof will be divided into two cases based on whether the good event $G_{v'_{j - 1}}$ (defined as in \Cref{sec:multi-pass-good}) happens. 

\subsubsection{$G_{v'_{j-1}}$ Usually Happens}

First, we would like to show the following lemma, saying that $\bar{G_{v'_{j - 1}}}$ usually does not happen.

\begin{lemma}\label{lemma:pass-elimination}
    For every $i\in [T]$, it holds that
    \[
    \sum_{\substack{v'_{j-1}\in V^{(j-1)}_i\\ v'_{j-1} \text{ not significant}}} \Pr[(v_0\to v'_{j-1})\land \bar{G_{v'_{j - 1}}}] \le 2^{j} \cdot 2^{-\ell}.
    \]
\end{lemma}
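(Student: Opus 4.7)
My plan is to follow the flatness-dichotomy strategy of the two-pass analog \Cref{Claim:UnlikelyGbar}, with the one-pass extractor bound replaced by an inductive application of \Cref{theo:multi-pass-main-result} to the $(j-1)$-pass probabilistic subprogram $\tilde B_{>i}$. Concretely, I would first define the multi-pass flat indicator $v_{j-2}\wtone v'_{j-1}:=(v_{j-2}\to v'_{j-1})\land [\doubleP_{x\mid v_{j-2}\to v'_{j-1}}(x)\le 2^{\ellflat^{(j-1)}-n}]$ in analogy with \Cref{sec:two-pass-transfer-flat}, and split each summand as
\[
\Pr[(v_0\to v'_{j-1})\land \bar G_{v'_{j-1}}]\le \Pr[(v_{j-2}\wtone v'_{j-1})\land \bar G_{v'_{j-1}}]+\Pr[(v_0\to v'_{j-1})\land \lnot(v_{j-2}\wtone v'_{j-1})].
\]

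For the non-flat piece, the $\ell_\infty$-truncation trick gives the bound: since $v'_{j-1}$ is non-significant we have $\|\doubleP_{x\mid v_{j-2}\to v'_{j-1}}\|_2^2\le 2^{2\ellsigs^{(j-1)}-2n}$, so the posterior mass on non-flat $x$'s is at most $2^{2\ellsigs^{(j-1)}-\ellflat^{(j-1)}}$; combined with disjointness $\sum_{v'_{j-1}}\Pr[v_0\to v'_{j-1}]\le 1$ and the parameter choice $\ellflat^{(j-1)}=100\,\ellsigs^{(j-1)}$ from \Cref{tab:parameter-multi-pass}, this contributes at most $2^{-98\ell}$, far below the $2^j\cdot 2^{-\ell}$ target. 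For the flat piece, the events $(v_{j-2}\wtone v'_{j-1})$ and $\bar G_{v'_{j-1}}$ depend only on $x,a_{\le i}$ and on $x,a_{>i}$, respectively, so they are conditionally independent given $x$; combining this with the flatness of $\doubleP_{x\mid v_{j-2}\wtone v'_{j-1}}$ via \Cref{lemma:flat-dist} yields
\[
\Pr[(v_{j-2}\wtone v'_{j-1})\land \bar G_{v'_{j-1}}]\le 2^{\ellflat^{(j-1)}+1}\cdot \Pr[v_{j-2}\wtone v'_{j-1}]\cdot \Pr[\bar G_{v'_{j-1}}],
\]
reducing the task to bounding the decoupled product sum $\sum_{v'_{j-1}}\Pr[v_{j-2}\wtone v'_{j-1}]\cdot\Pr[\bar G_{v'_{j-1}}]$.

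The main obstacle is precisely this last step, since here the induction on $(j-1)$-pass programs must be engaged. My approach is to interpret the decoupled product as a reach-and-stop event in $\tilde B_{>i}$: by \Cref{claim:probab-B-connection} the independence of the internal randomness $(\tilde x,\tilde a_{\le i})$ from the input $(x,a_{>i})$ factorizes the reach probabilities in the untruncated $B_{>i}$, and inserting a $\wtone$-style flatness truncation on each inter-pass edge lets me match $\prod$-of-marginals with a stopping event in a truncated variant of $\tilde B_{>i}$. Applying the inductive hypothesis (\Cref{theo:multi-pass-main-result} for $q-1$ passes) to $\tilde B_{>i}$ with parameter $\ell'=2^{j+2}\ellflat^{(j-1)}$ bounds the total stopping probability by $2^{-\ell'/2^{j-1}}=2^{-8\ellflat^{(j-1)}}$; multiplying by the preceding $2^{\ellflat^{(j-1)}+1}$ factor still leaves $2^{-7\ellflat^{(j-1)}+O(1)}$, comfortably within $2^{j}\cdot 2^{-\ell}$. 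The delicate point, foreshadowed in the overview around \Cref{fig:multi-pass-pre-post}, is to make the ``reduction to the toy case'' rigorous: fixing $(\tilde x,\tilde a_{\le i})$ turns each inter-pass transition into a deterministic injective map (thanks to the modification of \Cref{sec:multi-pass-modify}), so that the induction applies pass-by-pass to each fixing and averages to the desired probabilistic bound.
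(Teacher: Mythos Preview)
There is a genuine gap in your handling of the non-flat piece. The $\ell_\infty$-truncation trick bounds
\[
\Pr_{x\sim\doubleP_{x\mid v_{j-2}\to v'_{j-1}}}\bigl[\doubleP_{x\mid v_{j-2}\to v'_{j-1}}(x)>2^{\ellflat^{(j-1)}-n}\bigr]\le 2^{2\ellsigs^{(j-1)}-\ellflat^{(j-1)}},
\]
but what you need in $\Pr[(v_0\to v'_{j-1})\land\lnot(v_{j-2}\wtone v'_{j-1})]$ is the same event with $x\sim\doubleP_{x\mid v_0\to v'_{j-1}}$. These are different posteriors: the latter carries the extra conditioning on $v'_{j-2}\to v_{j-2}$, which can concentrate $x$ entirely on the non-flat set, so ``non-significance of $v'_{j-1}$'' gives you nothing here. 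This is exactly the distribution-mismatch phenomenon the transfer machinery exists to address, and the paper handles this term by \emph{invoking the $(j{-}1)$-pass transfer lemma} (\Cref{lemma:multi-pass-transfer}) with indicator $E(x,v'_{j-1})=\mathbbm{1}[x\in\Lar(v'_{j-1})]$: the truncation trick establishes the lemma's hypothesis (which is stated relative to $v_{j-2}\to v'_{j-1}$), and the lemma transfers it to a bound on $\sum_{v'_{j-1}}\Pr[(v_0\to v'_{j-1})\land(x\in\Lar)]$. Your plan is missing this inductive call.

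There is also a slip in the flat piece. Your split discards the event $(v'_{j-2}\to v_{j-2})$, so after decoupling you are left with $\sum_{v'_{j-1}}\Pr[v_{j-2}\wtone v'_{j-1}]\cdot\Pr[\bar G_{v'_{j-1}}]$. But \Cref{claim:probab-B-connection} identifies $\Pr[B_{>i}\text{ reaches }v'_{j-1}]$ with the \emph{product} $\Pr[v_{j-2}\wt v'_{j-1}]\cdot\Pr[v'_{j-2}\wt v_{j-2}]$; without the second factor, $\sum_{v'_{j-1}}\Pr[v_{j-2}\wt v'_{j-1}]$ is of order $|V^{(j-1)}_0|$ (one unit per choice of $v_{j-2}$), and you cannot interpret your sum as a reach-and-stop event. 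The paper keeps $(v'_{j-2}\to v_{j-2})$ throughout, decouples only the $(v_{j-2}\wtone v'_{j-1})$ factor via flatness, and then recognizes $\Pr[v_{j-2}\wt v'_{j-1}]\cdot\Pr[(v'_{j-2}\wt v_{j-2})\land\bar G_{v'_{j-1}}]$ as $\Pr[(B_{>i}\text{ reaches }v'_{j-1})\land\bar G_{v'_{j-1}}]$, to which the inductive \Cref{theo:multi-pass-main-result} on $\hat B_{>i}$ applies. Your high-level strategy for this half is right; retaining that factor is what makes it go through.
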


\newcommand{\Lar}{\mathrm{Large}}

\begin{proof}
For each $v'_{j-1}$, define
\[
{\Lar}(v'_{j-1}) = \{ x'\in X: \doubleP_{x|v_{j-2}\to v'_{j-1}} > 2^{\ellflat^{(j-1)}} \cdot 2^{-n} \}.
\]
We have
\begin{align}
&~~~~ \sum_{v'_{j-1}\in V^{(j-1)}_i} \Pr[v_0\to v'_{j-1} \land \bar{G_{v'_{j - 1}}}]  \notag \\
& \le \sum_{v'_{j-1}\in V^{(j-1)}_i} \Pr[(v_{0}\to v'_{j-1}) \land (x\notin \Lar(v'_{j-1})) \land \bar{G_{v'_{j - 1}}}] + \label{eq:v0-to-vj1-non-sigx} \\
& ~~~~ \sum_{v'_{j-1}\in V^{(j-1)}_i} \Pr[(v_0\to v'_{j-1}) \land (x\in \Lar(v'_{j-1}))] \label{eq:v0-to-vj1-sigx}.
\end{align}
By applying the $(j-1)$-pass version\footnote{Since we will prove Lemma~\ref{lemma:multi-pass-transfer} by induction on $j$, we can assume that lemma is true for the $(j-1)$-th pass.} of the transfer lemma (\Cref{lemma:multi-pass-transfer}) on \eqref{eq:v0-to-vj1-sigx}, we get that \eqref{eq:v0-to-vj1-sigx} is bounded by 
\[
2^{-\ellflat^{(j-1)} + 2 \ellsigs^{(j-1)} + \ellgood^{(j-2)}+1} + 2^{j-1}\cdot 2^{-\ell} \le \frac{3}{2} \cdot 2^{j-1}\cdot 2^{-\ell}.
\]
(By \Cref{tab:parameter-multi-pass}, we see that $\ellflat^{(j-1)}-2\ellsigs^{(j-1)}-\ellgood^{(j-2)}\ge 10\ell$.)
\\

Next, we show that \eqref{eq:v0-to-vj1-non-sigx} is small. Note that the event $v_0\to v'_{j-1}$ is equivalent to $(v_{j-2}\to v'_{j-1}) \land (v'_{j-2}\to v_{j-2})$. We use $v_{j-2}\wtone v'_{j-1}$ to denote the event $(v_{j-2}\to v'_{j-1}) \land (x\notin \Lar(v'_{j-1}))$. We then otain%is equivalent to $v_{j-2}\wtone v'_{j-1}$. Thus, we obtain
\[
\Pr[(v_{0}\to v'_{j-1}) \land (x\notin \Lar(v'_{j-1})) \land \bar{G_{v'_{j - 1}}}] = \Pr[(v_{j-2}\wtone v'_{j-1}) \land \bar{G_{v'_{j - 1}}} \land (v'_{j-2}\to v_{j-2})].
\]
Using chain rule to first observe and condition on $(v'_{j-2}\to v_{j-2}) \land \bar{G_{v'_{j - 1}}}$, we obtain
\begin{align}
&~~~~\sum_{v'_{j-1}\in V^{(j-1)}_i} \Pr[(v_{j-2}\wtone v'_{j-1}) \land \bar{G_{v'_{j - 1}}} \land (v'_{j-2}\to v_{j-2})] \notag \\
&= \sum_{v'_{j-1}\in V^{(j-1)}_i} \Pr[v_{j-2}\wtone v'_{j-1} \mid (v'_{j-2}\to v_{j-1}) \land \bar{G_{v'_{j - 1}}}] \cdot \Pr[(v'_{j-2}\to v_{j-2}) \land \bar{G_{v'_{j - 1}}}] \label{equ:vj1-to-nonsig-Gbar} 
\end{align}
By \Cref{lemma:flat-dist} and the fact that $v'_{j-1}$ is not significant,
\[
\begin{aligned}
&\Pr[v_{j-2}\wtone v'_{j-1} \mid (v'_{j-2}\to v_{j-2}) \land \bar{G_{v'_{j - 1}}}] \\
%&\Pr[(v_{j-2}\to v'_{j-1}) \land (x\notin \Lar(v'_{j-1})) \mid (v'_{j-2}\to v_{j-2}) \land \bar{G_{v'_{j - 1}}}] \\
&\qquad \le \max_{x'\in X} \left\{ \Pr[v_{j-2}\wtone v'_{j-1} \mid x=x'] \right\} \\
&\qquad\le 2^{\ellflat^{(j-1)}+1} \Pr[v_{j-2}\wt v'_{j-1}].
\end{aligned}
\]
We can proceed to bound Eq.~\eqref{equ:vj1-to-nonsig-Gbar} as
\begin{align*}
\eqref{equ:vj1-to-nonsig-Gbar} &\le 
2^{\ellflat^{(j-1)} + 1} \sum_{v'_{j-1}\in V^{(j-1)}_i} \Pr[v_{j-2}\wt v'_{j-1}] \cdot \Pr[(v'_{j-2}\wt v_{j-2}) \land \bar{G_{v'_{j - 1}}}].
\end{align*}

Recall the definition of the probabilistic program $B_{>i}$. Using \Cref{claim:probab-B-connection}, we can establish that
\begin{align}
&~~~~ \sum_{v'_{j-1}\in V^{(j-1)}_i} \Pr[v_{j-2}\wt v'_{j-1}] \cdot \Pr[(v'_{j-2}\wt v_{j-2}) \land \bar{G_{v'_{j - 1}}}] \notag \\
&= \sum_{v'_{j-1}\in V^{(j-1)}_i} \Pr[v_{j-2}\wt v'_{j-1}] \cdot \Pr[v'_{j-2}\wt v_{j-2}] \cdot \Pr[\bar{G_{v'_{j - 1}}} \mid v'_{j-2}\to v_{j-2} ] \notag \\
&= \sum_{v'_{j-1}}\Pr_{(x,a_{>i})}\left[ B_{>i}\text{ reaches } v'_{j-1} \right] \cdot \Pr_{(x,a_{>i})}\left[ \bar{G_{v'_{j - 1}}} \mid B_{>i}\text{ reaches } v'_{j-1} \right]. \notag \\
&= \sum_{v'_{j-1}}\Pr_{(x,a_{>i})}\left[ (B_{>i}\text{ reaches } v'_{j-1}) \land \bar{G_{v'_{j - 1}}} \right].\label{equ:pass-elimination-goal}
\end{align}
We emphasize that we do not apply any stopping rule on $B_{>i}$. Thus, the event $(B_{>i} \text{ reaches } v'_{j-1})$ means the computational path of $B_{>i}$ reaches $v'_{j-1}$. We can replace the subprogram $B_{>i}$ in \eqref{equ:pass-elimination-goal} with $\tilde{B}_{>i}$. Then, for an arbitrary but fixed internal randomness of $\tilde{B}_{>i}$, the resulting program $\hat{B}_{>i}$ is a deterministic $(j-1)$-pass program of width at most $2^{\frac{\kext \ell}{8\cdot {(j-1)}^{4(j-1)}}}$. Also, since the definition $\bar{G_{v'_{j - 1}}}$ does not depend on the realization of $\hat{B}_{>i}$, it suffices to bound
\[
\Ex_{\hat{B}_{>i}:\text{fixing randomness of }\tilde{B}_{>i}}\left[ \sum_{v'_{j-1}}  \Pr\left[(\text{the computation path of } \hat{B}_{>i}\text{ reaches } v'_{j-1}) \land \bar{G_{v'_{j - 1}}} \right]
\right].
\]
Observe that the events
\[
\left\{ (\text{the computation path of } \hat{B}_{>i} \text{ reaches } v'_{j-1}) \land \bar{G_{v'_{j - 1}}} \right\}_{v'_{j-1}}
\]
are mutually exclusive. Furthermore, all of them imply that $\hat{B}_{>i}$ stops\footnote{Either $\hat{B}_{>i}$ arrives at $v'_{j-1}$ without stopping and it stops in the $(j-1)$-th pass due to $\bar{G_{v'_{j-1}}}$, or $\hat{B}_{>i}$ stops in the first $(j-2)$ passes.}. 
Therefore, by applying \Cref{theo:multi-pass-main-result} on $\hat{B}_{>i}$, we get\footnote{Recall we define stopping thresholds for $\hat{B}_{>i}$ with ``$\ell$'' set to $2^{(j+2)}\ellflat^{(j-1)}$, and $\hat{B}_{>i}$ is a $(j-1)$-pass program.}
\[
\begin{aligned}
\eqref{equ:pass-elimination-goal} \le \Ex_{\hat{B}:\text{fixing randomness of }\tilde{B}_{>i}} \left[ \Pr_{x\sim X,a_{i+1},\dots, a_T\sim A}\left[ \hat{B}_{>i} \text{ stops} \right] \right] \le 2^{-4\ellflat^{(j-1)}}.
\end{aligned}
\]
Consequently,
\[
\eqref{eq:v0-to-vj1-non-sigx} \le 2^{\ellflat^{(j-1)}+1} \cdot 2^{-4\ellflat^{(j-1)}} \le 2^{-\ellflat^{(j-1)}},
\]
as desired.

Combining the bounds on \eqref{eq:v0-to-vj1-sigx} and \eqref{eq:v0-to-vj1-non-sigx}, the lemma is proved.
\end{proof}

\subsubsection{Proof of the Main Lemma}

Recall the statement of the lemma.

\multiTransfer*

\begin{proof}
    
First, we have
\begin{align}
    \sum_{v\in V^{(j)}_i} \Pr[(v_0\to v)\land E(x,v)] 
    &\le \sum_{v\in V^{(j)}_i} \Pr[(v_0\to v)\land E(x,v) \land G_{v'_{j-1}}] + \label{equ:multi-transfer-G}\\
    &~~~~ \sum_{v\in V^{(j)}_i} \Pr[(v_0\to v) \land \bar{G_{v'_{j - 1}}}]  \label{equ:multi-transfer-Gbar}
\end{align}

\paragraph*{When $G_{v'_{j-1}}$ does not happen.} We first bound \eqref{equ:multi-transfer-Gbar} using \Cref{lemma:pass-elimination}. Note that $(v_0\to v)$ is equivalent to $(v_0\to v'_{j-1}) \land (v'_{j-1}\to v)$. Fixing one $v'_{j-1}$, we have 
\[
\sum_{\substack{v\in V^{(j)}_i \\ v \text{ remembers }v'_{j-1}}} \Pr[\bar{G_{v'_{j - 1}}}\land (v_0\to v'_{j-1})\land (v'_{j-1}\to v)] \le \Pr[\bar{G_{v'_{j - 1}}} \land (v_0\to v'_{j-1})].
\]
Hence,
\[
\eqref{equ:multi-transfer-Gbar} \le \sum_{v'_{j-1} \in V^{(j-1)}_i} \Pr[(v_0\to v'_{j-1}) \land \bar{G_{v'_{j - 1}}}] \le 2^{j-\ell}.
\]

\paragraph*{When $G_{v'_{j-1}}$ happens.} Next, we bound \eqref{equ:multi-transfer-G}. Note that $(v_0\to v)$ is equivalent to $(v'_{j-1}\to v_{j-1})\land (v_{j-1}\to v)$. We get
\begin{align}
\Pr[(v_0\to v)\land E(x,v) \land G_{v'_{j-1}}] 
&= \Pr[(v'_{j-1}\to v_{j-1}) \land (v_{j-1}\to v)\land E(x,v) \land G_{v'_{j-1}}] \notag \\
&\le \Pr[ (v_{j-1}\to v)\land E(x,v) \land (v'_{j-1}\wt v_{j-1})\land G_{v'_{j-1}}]. \label{equ:transfer-step-1}
\end{align}
Using the chain rule by first observing and conditioning on $(v_{j-1}\to v)\land E(x,v)$, we get
\begin{align}
&~~~~ \Pr[ (v_{j-1}\to v)\land E(x,v) \land (v'_{j-1}\wt v_{j-1})\land G_{v'_{j-1}}] \notag \\
&\le \Pr[ (v'_{j-1}\wt v_{j-1})\land G_{v'_{j-1}} \mid (v_{j-1}\to v)\land E(x,v) ] \cdot \Pr[(v_{j-1}\to v)\land E(x,v)].\label{equ:transfer-step-2}
\end{align}
Observe that $v'_{j-1}\wt v_{j-1}\land G_{v'_{j-1}}$ depends on $x$ and $a_{> i}$, while $(v_{j-1}\to v)\land E(x,v)$ depends on $x$ and $a_{\leq i}$. Hence, we obtain
\begin{align}
& \Pr[ (v'_{j-1}\wt v_{j-1})\land G_{v'_{j-1}} \mid (v_{j-1}\to v)\land E(x,v) ] \notag \\
&\le \max_{x'\in X} \{ \Pr[ (v'_{j-1}\wt v_{j-1})\land G_{v'_{j-1}} \mid x=x']\} \notag \\
&\le 2^{\ellgood^{(j-1)}+1} \Pr[ (v'_{j-1}\wt v_{j-1})\land G_{v'_{j-1}}] & \text{(\Cref{claim:multi-transfer-Gv-useful} and \Cref{lemma:flat-dist})} \notag \\
&\le 2^{\ellgood^{(j-1)}+1} \Pr[v'_{j-1} \wt v_{j-1}]. \label{equ:transfer-step-3}
\end{align}
From Eq.~\eqref{equ:transfer-step-1}, Eq.~\eqref{equ:transfer-step-2} and Eq.~\eqref{equ:transfer-step-3}, we finally arrive at
\[
\Pr[(v_0\to v)\land E(x,v) \land G_{v'_{j-1}}] \le 2^{\ellgood^{(j-1)}+1}
\Pr[(v_{j-1}\to v)\land E(x,v)] \cdot \Pr[v'_{j-1}\wt v_{j-1}].
\]
Going back to \eqref{equ:multi-transfer-G}, we have
\[
\begin{aligned}
\eqref{equ:multi-transfer-G}
&\le 2^{\ellgood^{(j-1)}+1} \sum_{v\in V^{(j)}_i} \Pr[(v_{j-1}\to v)\land E(x,v)] \cdot \Pr[v'_{j-1}\wt v_{j-1}] \\
&\le 2^{\ellgood^{(j-1)}+1} \sum_{v'_{j-1}\in V^{(j-1)}_i} \sum_{v_{j-1}\in V^{(j-1)}_T} \sum_{v\in S_{v'_{j-1},v_{j-1},i}} \Pr[(v_{j-1}\to v)\land E(x,v)] \cdot \Pr[v'_{j-1}\wt v_{j-1}] \\
&\le 2^{-k + \ellgood^{(j-1)}+1} \sum_{v'_{j-1}\in V^{(j-1)}_i} \sum_{v_{j-1}\in V^{(j-1)}_T} \Pr[v_{j-2}\to v'_{j-1}] \cdot \Pr[v'_{j-1}\wt v_{j-1}] \\
&\le 2^{-k + \ellgood^{(j-1)}+1} \sum_{v'_{j-1}\in V^{(j-1)}_i} \sum_{v_{j-1}\in V^{(j-1)}_T} \Pr[v_{j-2}\wt v'_{j-1}] \cdot \Pr[v'_{j-1}\wt v_{j-1}].
\end{aligned}
\]
Here, we always require the pair of enumerated vertices $v'_{j-1},v_{j-1}$ to be consistent with each other.

\paragraph{Final step.} Finally, it suffices to show that
\begin{align}
\sum_{v'_{j-1}\in V^{(j-1)}_i} \sum_{v_{j-1}\in V^{(j-1)}_T} \Pr[v_{j-2}\wt v'_{j-1}] \cdot \Pr[v'_{j-1}\wt v_{j-1}] = 1. \label{equ:reflection-trick}
\end{align}
This summation can be interpreted as
\[
\sum_{v'_{j-1}\in V^{(j-1)}_i} \sum_{v_{j-1}\in V^{(j-1)}_T} \Pr_{x,\tilde{x},a_1,\dots,a_T}[\mathbbm{1}[v_{j-2}\wt v'_{j-1} \text{ using } \tilde{x},a_{\le i}] \land \mathbbm{1}[v'_{j-1}\wt v_{j-1} \text{ using } x,a_{>i}]].
\]
For any fixed $(x,\tilde{x},a_1,\dots, a_T)$, construct input $(a_t,b_t)_{t\in [T]}$ where $b_t = M(a_t,\tilde{x})$ for $t \le i$ and $b_t = M(a_t,x)$ for $t > i$. Consider running the first $j-1$ pass of the program on $(a_t,b_t)_{t\in [T]}$. Let $(v'_{j-1},v_{j-1})$ be the pair of vertices visited in this process. It is easy to see that $(v'_{j-1},v_{j-1})$ is the only pair such that 
\[
\mathbbm{1}[v_{j-2}\wt v'_{j-1} \text{ using } \tilde{x},a_{\le i}] \land \mathbbm{1}[v'_{j-1}\wt v_{j-1} \text{ using } x,a_{>i}] = \text{True}.
\]
Hence, we may conclude that the events
\[
\left\{\mathbbm{1}[v_{j-2}\wt v'_{j-1} \text{ using } \tilde{x},a_{\le i}] \land \mathbbm{1}[v'_{j-1}\wt v_{j-1} \text{ using } x,a_{>i}] \right\}_{v'_{j-1}, v_{j-1}}
\]
are mutually exclusive under the probability space $(x,\tilde{x},a)$. Furthermore, their union exactly covers the probability space. Hence, Eq.~\eqref{equ:reflection-trick} is verified.

\end{proof}

\section{Tight Lower Bound for Constant-Pass Learning}
In this section, we will prove our main result. %The key technical ingredient is the multi-pass generalization of the transfer lemma.

\subsection{Potential Analysis} \label{sec:mult-pass-potential}
In this section, we will extend the potential analysis in \Cref{sec:two-pass-potential} to multiple passes. 

\paragraph*{Potential analysis and counter overflow.} We would like to carry out a similar potential analysis. The proof is largely identical/similar to the proofs in \Cref{sec:two-pass-potential}. Here, we only outline the key steps and defer the formal proofs to \Cref{appendix:multi-pass}.

Consider the $j$-th pass of the program. We define the potential of a vertex $v\in V^{(j)}$ as
\[
\Phi(v) := 2^{\cb^{(j)}(v)-\ch^{(j)}(v)}.
\]
For any edge $e$ between $(u,v)$ with label $(a',b')$, we defne its potential to be
\[
\Phi(e) = \Phi^{(a',b')}(u) = \Phi(v).
\]

Fix a starting vertex $v_{j-1}\in V^{(j)}_0$ for the $j$-th pass. Note that $v_{j-1}$ remembers the starting vertex of the $(j-1)$-th pass, which is denoted by $v_{j-2}\in V^{(j-1)}_0$. We consider the natural coupling between the truncated path starting at $v_{j-2}$ and the path starting at $v_{j-1}\in V^{(j)}_0$. For every $v'_{j-1}\in V^{(j-1)}_{i}$, we would like to prove
\[
\Pr_{v_{j-1}\to v}[\cb(v) > k \mid v_{j-2}\to v'_{j-1}] \le 2^{\ellbias^{(j-1)} -\ellbias^{(j)} + \ell+1}.
\]

\subsubsection{Potential Grows Slowly}

We start by proving the following lemma, the analog of \Cref{lem:edge-potential} in the multi-pass setting.

% {lemma}{flatdist}\label{lemma:flat-dist} Consider drawing uniformly random $(x,a_1,\dots,a_T)$. Suppose $E\subseteq X\times A^T$ is an event such that $\Pr_{x,a_1,\dots, a_T}[x=x'|E]\le 2^{\ell}\cdot 2^{-n}$. Then, for all $x'\in X$,
%     \[
%     \Pr_{a_1,\dots, a_T}[E|x=x'] \le 2^{\ell} \Pr_{x,a_1,\dots, a_m}[E].
%     \]

\begin{restatable}{lemma}{multiPotentialEdge}\label{lemma:edge-potetial-multi}
Fix $v_{j-1}\in V^{(j)}_0$ and $u\in V^{(j)}_i$. For any edge $e$ labeled $(a,b)$, we have
\[
\Phi^{(a,b)}(u) \le \Phi(u) \frac{1+2\cdot 2^{-\rext}}{2\Pr_{x|v_{j-1}\to u}[M(a,x)=b]}.
\]
\end{restatable}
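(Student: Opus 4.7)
The plan is to prove this by a direct case analysis on the status of $(a,b)$ at $u$, following essentially verbatim the two-pass argument from \Cref{lem:edge-potential} but with the $j$-th-pass counters $(\cb^{(j)}, \ch^{(j)})$ and the posterior $\doubleP_{x \mid v_{j-1}\to u}$ replacing their two-pass analogues. Throughout, let $v$ denote the vertex reached from $u$ along the edge $(a,b)$.

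First I would dispose of the degenerate sub-cases. If any stopping rule fires at $u$ (the embedded copy of some previous pass stops, $v$ is a significant state, $x \in \SigV(u)$, the relevant counter has overflowed, or $a \in \Bad(u) \setminus \High(u)$), then by our convention the next vertex is forced to $\fail$, so $\Phi^{(a,b)}(u) = \Phi(\fail) = 0$ and the inequality is vacuous. Next, if $a \notin \Bad(u)$, then by definition of $\High \subseteq \Bad$ in the multi-pass setting neither counter advances, so $\Phi^{(a,b)}(u) = \Phi(u)$; and by the definition of $\Bad(u)$ from \Cref{sec:multi-pass-stop-rules}, the denominator $\Pr_{x\sim \doubleP_{x\mid v_{j-1}\to u}}[M(a,x) = b]$ lies in $(1/2 - 2^{-\rext},\, 1/2 + 2^{-\rext})$, so the RHS is at least $1$.

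The remaining case is $a \in \High(u)$. Here the counter-update rules from \Cref{sec:multi-pass-stop-rules} give $\ch^{(j)}(v) = \ch^{(j)}(u) + 1$ and $\cb^{(j)}(v) = \cb^{(j)}(u) + \Delta$, where $\Delta = \lfloor -\log \Pr_{x \mid v_{j-1}\to u}[M(a,x) = b]\rfloor$, so
\[
\frac{\Phi^{(a,b)}(u)}{\Phi(u)} \;=\; 2^{\Delta - 1} \;\le\; \frac{1}{2\, \Pr_{x\mid v_{j-1}\to u}[M(a,x) = b]},
\]
which is stronger than what the lemma asks for. Combining the three cases will complete the proof. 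I do not expect any real obstacle: the key point is that both $\Bad(u), \High(u)$ and the counter-update rule in \Cref{sec:multi-pass-stop-rules} are defined with respect to precisely the posterior $\doubleP_{x\mid v_{j-1}\to u}$ appearing in the statement, so no intra-pass distribution mismatch arises. The more delicate multi-pass mismatches between $v_{j-1}\to \cdot$ and $v_0\to\cdot$ are exactly what the transfer lemma of \Cref{sec:mutli-pass-transfer} is there to handle later, and they play no role in this lemma.
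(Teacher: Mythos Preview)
Your proposal is essentially identical to the paper's proof: the same four-way case split (stop at $u$; $a\notin\Bad(u)$; $a\in\Bad(u)\setminus\High(u)$; $a\in\High(u)$) with the same one-line calculation in each case. One small slip: the assertion ``$\High \subseteq \Bad$'' is not correct---$\High(u)$ is defined via $\Pr[a_{i+1}=a'\mid v_0\to u]$ while $\Bad(u)$ is defined via the bias of $\doubleP_{x\mid v_{j-1}\to u}$, and these are unrelated---so in the case $a\notin\Bad(u)$ the counters may still advance if $a\in\High(u)$. This does not break the argument, since your $a\in\High(u)$ case already handles that overlap; just phrase the second case as $a\notin\Bad(u)\cup\High(u)$ (or note that the $\High$ case takes precedence), exactly as the paper implicitly does.
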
 

We note that the proof is identical to the proof of \Cref{lem:edge-potential} (basically, we only need to change relevant symbols). We defer the proof to Appendix~\ref{appendix:multi-pass}. 

The following lemma is the core of our analysis. Its two-pass analog is \Cref{lemma:two-pass-pot-slow-grow}.

\begin{restatable}{lemma}{multiPotentialGrow}\label{lemma:multi-pass-pot-slow-grow}
    Fix a starting vertex $v_{j-1}\in V^{(j)}_0$ in the $j$-th pass of the program. For every $i\in [T]$ and $v'_{j-1}\in V^{(j-1)}_i$ such that $\Pr[v_{j-2}\to v'_{j-1}] \ne 0$, we have
    \[
    \Ex_{v_{j-1}\to v}[\Phi(v)\mid v_{j-2}\to v'_{j-1}] = \frac{\Ex[\Phi(v)\cdot \mathbbm{1}[v_{j-2}\to v'_{j-1}]]}{\Pr[v_{j-2}\to v'_{j-1}]} \le (1+2^{-2\rlen + 2})^i \cdot 2^{\cb^{(j-1)}(v'_{j-1})}.
    \]
\end{restatable}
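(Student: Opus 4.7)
The plan is to induct on the layer index $i$, mimicking the proof of \Cref{lemma:two-pass-pot-slow-grow} but tracking how the $(j-1)$-th pass bias counter $\cb^{(j-1)}$ feeds into the bound. The base case $i=0$ is trivial: the only $v'_{j-1}\in V^{(j-1)}_0$ with nonzero probability is $v_{j-2}$, and $\Phi(v_{j-1})=1$ while $\cb^{(j-1)}(v_{j-2})=0$, so both sides equal $1$. For the inductive step, I will establish the edge-level analog of \Cref{lemma:two-pass-pot-vertex-to-edge}: for every $(j-1)$-th pass edge $e'$ from $u'\in V^{(j-1)}_i$ to $v'\in V^{(j-1)}_{i+1}$ with $\Pr[v_{j-2}\to e']\ne 0$, the ratio $\Ex[\Phi(e)\cdot \mathbbm{1}[v_{j-2}\to e']]/\Pr[v_{j-2}\to e']$ is at most $(1+2^{-2\rlen+2})^{i+1}\cdot 2^{\cb^{(j-1)}(v')}$. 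Summing over $e'\in \Gamma^-(v')$ will then yield the claim for $v'$, using the elementary inequality $(a_1+a_2)/(b_1+b_2)\le \max\{a_1/b_1, a_2/b_2\}$.

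The numerator is bounded exactly as in the two-pass proof by appealing to \Cref{lemma:edge-potetial-multi}, yielding
\[
\Ex[\Phi(e)\cdot \mathbbm{1}[v_{j-2}\to e']] \le 2^{-n}\cdot \Ex[\Phi(u)\cdot \mathbbm{1}[v_{j-2}\to u']]\cdot \frac{1+2^{-\rext+1}}{2}.
\]
The key observation here is that whenever $v_{j-1}\to u$, the $(j-1)$-th pass truncated path automatically satisfies $v_{j-2}\to u'$, because the $j$-th pass ``copy of the previous pass stops'' rule forces the implication. For the denominator I need a case split on whether $a'\in \High(u')$ in the $(j-1)$-th pass. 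If $a'\notin \High(u')$ (so $a'\notin \Bad(u')$ as well, for $e'$ to be reachable), then $\cb^{(j-1)}(v')=\cb^{(j-1)}(u')$ and $\Pr[v_{j-2}\to e']\ge 2^{-n}\Pr[v_{j-2}\to u']\cdot(1/2-2^{-2\rlen})$, giving the two-pass bound verbatim. If $a'\in \High(u')$, then the $(j-1)$-th pass counter increases by $\Delta'=\lfloor -\log\Pr_{x\mid v_{j-2}\to u'}[M(a',x)=b']\rfloor$, so $\cb^{(j-1)}(v')=\cb^{(j-1)}(u')+\Delta'$, while $\Pr[v_{j-2}\to e']\ge 2^{-n}\Pr[v_{j-2}\to u']\cdot 2^{-\Delta'-1}(1-o(1))$. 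The ratio thus picks up an extra factor of $2^{\Delta'}$, which is precisely what the $2^{\cb^{(j-1)}(v')}$ factor in the bound absorbs, closing the induction cleanly.

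The main bookkeeping subtlety is the case $a'\in \High(u')\cap \Bad(u')$, where $\Pr_{x\mid v_{j-2}\to u'}[M(a',x)=b']$ can be arbitrarily small and the $\SigV$ correction term $\Pr[x\in \SigV(u')\mid v_{j-2}\to u']\le 2^{2\ellsigs^{(j-1)}-\ellsigv}$ must remain negligible relative to $2^{-\Delta'-1}$. Since $\cb^{(j-1)}(u')\le \ellbias^{(j-1)}$ is enforced on every surviving path, $\Delta'$ is bounded by the remaining bias-counter budget, and the parameter choices in \Cref{tab:parameter-multi-pass} (in particular $\ellsigv - 2\ellsigs^{(j-1)} - \ellbias^{(j-1)}$ being large) guarantee that the correction never interferes. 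Beyond this, the proof follows the two-pass template without structural change.
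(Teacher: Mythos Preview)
Your proposal is correct and follows essentially the same approach as the paper's proof: induction on $i$, reduction to an edge-level statement via \Cref{lemma:multi-pass-pot-vertex-to-edge}, upper-bounding the numerator using \Cref{lemma:edge-potetial-multi}, and lower-bounding the denominator by splitting on whether $a'\in\High(u')$, with the high-probability case's extra $2^{\Delta'}$ absorbed by the factor $2^{\cb^{(j-1)}(v')}$. One small note: your justification that ``$\Delta'$ is bounded by the remaining bias-counter budget'' is not literally what constrains $\Delta'$---the relevant bound is $\Delta'\le \cb^{(j-1)}(v')\le \ellbias^{(j-1)}$, which holds for the $v'$ actually used downstream in \Cref{coro:multi-counter-overflow}, and that together with $\ellsigv \ge \ellbias^{(j-1)}+2\ellsigs^{(j-1)}+2\rlen$ is what makes the $\SigV$ correction negligible relative to $2^{-\Delta'-1}$.
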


Compared with the two-pass case (\Cref{lemma:two-pass-pot-slow-grow}), we have an extra term $2^{\cb^{(j-1)}(v'_{j-1})}$, because it is no longer true that, for every edge $(a',b')$ that goes from $u'_{j-1}$ to $v'_{j-1}$, the probability that we traverse this edge is close to $2^{-n-1}$. Recall that it was the case in the two-pass setting (i.e., $j=2$), because we can always stop whenever we meet a bad edge in the first pass (i.e., $j=1$). However, for larger $j$, we sometimes need to traverse high-probability edges in pass $(j-1)$, which can be very biased. Fortunately, the vertices in the $(j-1)$-th pass have remembered a counter $\cb^{(j-1)}(v'_{j-1})$ to account for the bias introduced by traversing high-probability edges. We can take advantage of the counters and prove \Cref{lemma:multi-pass-pot-slow-grow}.

As a final remark, by taking $j=2$ in \Cref{lemma:multi-pass-pot-slow-grow}, the lemma coincides with \Cref{lemma:two-pass-pot-slow-grow}, because there are no bias counters in the first pass ($\cb^{(1)}(v'_1)\equiv 0$).

Given the definition of counters, we prove \Cref{lemma:multi-pass-pot-slow-grow} by slightly modifying the argument of \Cref{lemma:two-pass-pot-slow-grow}. We defer the details to \Cref{appendix:multi-pass}.

\subsubsection{Analyzing $\cb^{(j)}$ Overflow} 

As a corollary of \Cref{lemma:multi-pass-pot-slow-grow}, we upper bound the probability of $\cb^{(j)}$ overflow.

\begin{corollary}\label{coro:multi-counter-overflow}
    Fix $v_{j-1}\in V^{(j)}_{0}$ (which also fixes $v_{j-2}\in V^{(j-1)}_0$). For any layer $i\in [T]$ and $v'_{j-1}\in V^{(j-1)}_{i}$, we have
    \[
    \Pr_{v_{j-1}\to v}[\cb^{(j)}(v) > \ellbias^{(j)} \mid v_{j-2} \to v'_{j-1}] \le 2^{\ellbias^{(j-1)}-\ellbias^{(j)}+\ell+1}.  % \le \frac{(1+2^{-\rlen + 2})^{i} \cdot 2^{\cb^{(j-1)}(v'_{j-1})}}{2^{k-\ell}}
    \]
\end{corollary}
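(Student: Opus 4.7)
The plan is to follow the two-pass template of \Cref{cor:edge-potential-exp} almost verbatim, substituting the multi-pass potential-growth lemma \Cref{lemma:multi-pass-pot-slow-grow} for its two-pass analog. Since \Cref{lemma:multi-pass-pot-slow-grow} has already done the heavy lifting of tracking $\Phi(v) = 2^{\cb^{(j)}(v) - \ch^{(j)}(v)}$ along the $j$-th pass (with the extra $2^{\cb^{(j-1)}(v'_{j-1})}$ factor absorbing the bias already paid in pass $j-1$), all that remains is a single Markov's-inequality step plus bookkeeping of the $\ch^{(j)}$ stopping rule.

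First I would observe that the $j$-th-pass stopping rule ``$\ch^{(j)}(v) > \ellhigh = \ell$'' forces $\ch^{(j)}(v) \le \ell$ for every vertex $v$ along a non-stopped truncated path from $v_{j-1}$. Hence $\Phi(v) \ge 2^{\cb^{(j)}(v) - \ell}$, i.e.\ $2^{\cb^{(j)}(v)} \le 2^{\ell}\cdot \Phi(v)$; the inequality also holds trivially at $v = \fail$ by the convention $\Phi(\fail) = 0$. Applying Markov's inequality under the conditional distribution of $v$ given $v_{j-2} \to v'_{j-1}$ yields
\[
\Pr[\cb^{(j)}(v) > \ellbias^{(j)} \mid v_{j-2} \to v'_{j-1}] \le \frac{\Ex[2^{\cb^{(j)}(v)} \mid v_{j-2}\to v'_{j-1}]}{2^{\ellbias^{(j)}}} \le \frac{2^{\ell}\cdot \Ex[\Phi(v) \mid v_{j-2}\to v'_{j-1}]}{2^{\ellbias^{(j)}}}.
\]

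Next, I would invoke \Cref{lemma:multi-pass-pot-slow-grow} to bound the conditional expectation by $(1 + 2^{-2\rlen+2})^{i}\cdot 2^{\cb^{(j-1)}(v'_{j-1})} \le 2\cdot 2^{\cb^{(j-1)}(v'_{j-1})}$, where the final ``$\le 2$'' step uses $i\le T=2^{\rlen}$ exactly as in \Cref{cor:edge-potential-exp}. The only wrinkle relative to the two-pass case is the residual factor $2^{\cb^{(j-1)}(v'_{j-1})}$: I would restrict attention to vertices $v'_{j-1}$ satisfying $\cb^{(j-1)}(v'_{j-1}) \le \ellbias^{(j-1)}$, which is harmless because any $v'_{j-1}$ violating this is already killed by the $(j-1)$-th-pass counter-overflow rule, and that stop is inherited into the $j$-th pass by the ``previous-pass copy stops'' rule, rendering such $v'_{j-1}$ irrelevant for the downstream use of the corollary. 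Combining everything yields $2^{\ell}\cdot 2\cdot 2^{\ellbias^{(j-1)}}/2^{\ellbias^{(j)}} = 2^{\ellbias^{(j-1)} - \ellbias^{(j)} + \ell + 1}$, as claimed.

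No genuine obstacle arises at this stage: the substantive work has been absorbed into \Cref{lemma:multi-pass-pot-slow-grow}, where correctly propagating the accumulated bias from pass $j-1$ through the potential argument was the main difficulty. Here it only remains to dispatch the residual $2^{\cb^{(j-1)}(v'_{j-1})}$ factor via the stopping-rule observation above.
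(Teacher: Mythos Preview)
Your proposal is correct and follows essentially the same approach as the paper: apply Markov's inequality to $2^{\cb^{(j)}(v)}$, use the $\ch^{(j)}$-stopping rule to pass to $\Phi(v)$, invoke \Cref{lemma:multi-pass-pot-slow-grow}, and then absorb the residual $2^{\cb^{(j-1)}(v'_{j-1})}$ factor via the pass-$(j-1)$ counter-overflow stopping rule. The paper's phrasing for this last step is slightly more direct---it simply observes that ``if we did not stop at $v'_{j-1}$, we have $\cb^{(j-1)}(v'_{j-1})\le \ellbias^{(j-1)}$''---but your justification via the ``previous-pass copy stops'' rule reaches the same conclusion.
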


\begin{proof}
By the stopping rule from $\ch^{(j)}$, we know that if we have not stopped, we always have $\ch^{(j)}(v) \le \ell$. Also, if we did not stop at $v'_{j-1}$, we have $\cb^{(j-1)}(v'_{j-1})\le \ellbias^{(j-1)}$. We apply Markov's inequality and get
\[
\begin{aligned}
\Pr_{v_{j-1}\to v}[\cb^{(j)}(v) > \ellbias^{(j)} \mid v_{j-2} \to v'_{j-1}] 
&\le \frac{1}{2^{\ellbias^{(j)}}} \Ex_{v_{j-1}\to v}[2^{\cb^{(j)}(v)} \mid v_{j-2} \to v'_{j-1}] \\
&\le \frac{2^{\ell}}{2^{\ellbias^{(j)}}} \Ex_{v_{j-1}\to v}[\Phi(v) \mid v_{j-2} \to v'_{j-1}] \\
&\le \frac{2^{\ell}\cdot (1+2^{-\rlen + 2})^{i} \cdot 2^{\cb^{(j-1)}(v'_{j-1})} }{2^{k}} \\
&\le 2^{\ellbias^{(j-1)}- \ellbias^{(j)} + \ell + 1},
\end{aligned}
\]
as desired.
\end{proof}

% Fix $j \ge 2$. For any layer $i\in [T]$, $v_{j-1}\in V^{(j)}_0$ and $v'_{j-1}\in V^{(j-1)}_i$, we have
% \[
% \Pr_{v_{j-1}\to v}[\cb(v) > k \mid v_{j-2}\to v'_{j-1}]\le 2^{-k}.
% \]
% \end{corollary}

\subsection{Proof of the Main Result} \label{sec:multi-pass-proof-of-main}
In this section, we will prove our main result. We will use $j$ to denote the total number of passes of our program.

We would like to show that the program stops with a very small probability. We will analyze stopping due to different rules separately. Here is an outline:
\begin{itemize}
\item Stop in the first $(j-1)$-th pass: By induction, we have
\begin{align}
\Pr[\text{stop in the first $(j-1)$ pass}] \le 2^{-\frac{\ell}{2^{j-1}}} \label{equ:multi-pass-first-j-1-bound}
\end{align}
\item Stop due to traversing too many high-probability edges in the $j$-th pass: see \Cref{sec:multi-pass-edges}.
\item Stop due to traversing a bad edge in the $j$-th pass: see also \Cref{sec:multi-pass-edges}.
\item Stop due to significant values or $\cb$ overflow in the $j$-th pass: see \Cref{sec:multi-pass-sigv-and-counter}.
\item Stop due to reaching a significant state in the $j$-th pass: see \Cref{sec:multi-sig-state-stop}.
\end{itemize}

Finally, we wrap up our analysis in \Cref{sec:multi-pass-wrap-up}.

\subsubsection{Stop on Edges}\label{sec:multi-pass-edges}

Consider the $j$-th pass of the program. We show that the probability of stopping due to traversing a bad edge, or due to traversing too many high-probability edges, is small.

The proof is largely identical to the two-pass setting. For bad edges, we have
\[
\begin{aligned}
&~~~~ \Pr[\text{stop due to bad edge in the $j$-th pass}] \\
&= \sum_{i=0}^{T-1} \sum_{v\in V^{(j)}_i} \Pr[v_0\to v] \cdot \Pr[a_{i+1}\in \Bad(v)\setminus \High(v)|v_0\to v] \\
&\le \sum_{i=0}^{T-1} \Pr[v_0\to v] \cdot 2^{n-\kext} \cdot 2^{\frac{\kext}{2}-n} \\
&\le T\cdot 2^{-\frac{\kext}{2}}.
\end{aligned}
\]
Now we consider high-probability edges. For each $v_{j-1}\in V^{(j)}_0$, let $E_{v_{j-1}}$ denote the event that the program starts from $v_{j-1}$ and traverses more than $\ell$ high-probability edges. We have
\[
\begin{aligned}
&\qquad \sum_{v_{j-1}} \Pr[v_0\to v_{j-1}] \Pr[E_{v_{j-1}} \mid v_0\to v_{j-1}] \\
& \le \sum_{v_{j-1}} \Pr[E_{v_{j-1}}] \\
& \le 2^{\frac{\kext\ell}{10}} \cdot \binom{T}{\ell} 2^{-\frac{\kext \ell}{2}} \\
& \le 2^{-\frac{\kext\ell}{10}}.
\end{aligned}
\]

Overall, by our assumption that $\min\{\ellext,\kext\} \ge 100^{3^{j-1}} \ell$, the probability of stopping on edges is at most
\begin{align}
   \Pr[\text{stop on edges in the $j$-th pass}] \le  2^{-\ellgood^{(j-1)}}. \label{equ:multi-edge-bound}
\end{align}

\subsubsection{Stop due to $\cb^{(j)}$ Overflow and Significant Values}\label{sec:multi-pass-sigv-and-counter}

We show that the probability of stopping due to significant values or counter-overflow in the $j$-th pass is small.

Fix one $i \in [T]$, we define the ``bad event'' indicator $E:X\times V^{(j)}_i\to \{0,1\}$. For each $v\in V^{(j)}_i$, we define:
\begin{itemize}
    \item If $v$ is a significant state, we set $E(x,v) \equiv 0$ for all $x$. We will bound the probability of reaching such states in \Cref{sec:multi-sig-state-stop}.
    \item If $v$ is not significant but $\cb(v) > {\ellbias^{(j)}}$, we set $E(x,v)\equiv 1$ for all $x$.
    \item Otherwise, we set $E(x,v) = \mathbbm{1}[x\in \SigV(v)]$.
\end{itemize}

We would like to show that 
\begin{align*}
\sum_{v\in V^{(j)}_i} \Pr[v_0\to v \land E(x,v)] \le 2^{-8\ellgood^{(j-1)}} + 2^{j-\ell}. 
\end{align*}
Once established, we can union-bound over $i\in [T]$.

We would like to apply \Cref{lemma:multi-pass-transfer}. Let us first establish the assumption of \Cref{lemma:multi-pass-transfer}. Fix $v'_{j-1}\in V^{(j-1)}_i$ and $v_{j-1}\in V^{(j)}_0$ to be a consistent pair. Recall we have defined 
\[
S_{v'_{j-1},v_{j-1},i}\coloneqq \left\{ v\in V^{(j)}_i : v \text{ remembers } v'_{j-1},v_{j-1}\right\}.
\]
Then, observe that
\begin{align}
&~~~~ \sum_{\substack{v\in S_{v'_{j-1},v_{j-1},i}\\ v \text{ not significant}}} \Pr[v_{j-1}\to v\land \mathbbm{1}[x\in \SigV(v)]] \notag \\
&\le \sum_{v\in S_{v'_{j-1},v_{j-1},i}} \Pr[v_{j-1}\to v] \cdot 2^{2\ellsigs^{(j)} - \ellsigv} & \text{(by $\ell_{\infty}$-truncation trick)} \notag \\
&\le \Pr[v_{j-2}\to v'_{j-1}] \cdot 2^{2\ellsigs^{(j)} - \ellsigv} \notag \\
&\le \Pr[v_{j-2}\to v'_{j-1}] \cdot 2^{-10\ellgood^{(j-1)}} \label{equ:multi-sig-value-assum}
% &\le \Pr[v_{j-2}\to v'_{j-1}] \cdot 2^{-100^{2^{j-1}}\ell^{(j-1)}_f - \ell^{(j-1)}_f - 1}. 
\end{align}

Also, by \Cref{coro:multi-counter-overflow}, we have
\begin{align}
&~~~~ \sum_{\substack{v\in S_{v'_{j-1},v_{j-1},i}\\ v \text{ not significant}}} \Pr[v_{j-1}\to v\land \mathbbm{1}[\cb^{(j)}(v)>{\ell^{(j)}_b}] ] \notag \\
&\le \Pr[v_{j-2}\to v'_{j-1}] \cdot \Pr_{v_{j-1}\to v}[\cb^{(j)}(v)>{\ell^{(j)}_b} \mid v_{j-2}\to v'_{j-1}] \notag \\
&\le \Pr[v_{j-2}\to v'_{j-1}] \cdot 2^{-\ellbias^{(j)} + \ellbias^{(j-1)} + \ellhigh + 1} \notag \\
&\le \Pr[v_{j-2}\to v'_{j-1}] \cdot 2^{-10\ellgood^{(j-1)}} \label{equ:count-over-assum} 
% &\le \Pr[v_{j-2}\to v'_{j-1}] \cdot 2^{-100^{2^{(j-1)}}\ell^{(j-1)}_f - \ell^{(j-1)}_f - 1}. 
\end{align}

Combining Eq.~\eqref{equ:multi-sig-value-assum} and Eq.~\eqref{equ:count-over-assum}, the assumption of \Cref{lemma:multi-pass-transfer} is established with $k = 10\ellgood^{(j-1)}-1$, which allows us to invoke \Cref{lemma:multi-pass-transfer} and finish the proof. Overall, we get
\begin{align}
 \Pr[\text{stop due to $\cb$ or significant value}] \le T\cdot (2^{j-\ell} + 2^{-8\ellgood^{(j-1)}}) \le 2^{-\ell/2}  \label{equ:multi-counter-sigv-bound}
\end{align}

\subsubsection{Reaching a Significant State}\label{sec:multi-sig-state-stop}

We instantiate \Cref{lemma:sig-state-multi-pass} (see \Cref{appendix:sig-state-two-pass}) with parameters (copied from \Cref{tab:parameter-multi-pass}):
\[
\ellsigs^{(j)} = \ell\cdot 100^{3^{j-1}-1} \quad \text{ and } \quad \ellbias^{(j)} = \ell \cdot \frac{100^{3^{j-1}-1}}{2}.
\]
For each fixed significant state $s$ in the $j$-th pass, \Cref{lemma:sig-state-multi-pass} implies that
\[
\Pr[v_0\to s] = \Pr[(v_0\to v_{j-1})\land (v_{j-1}\to s)]\le \Pr[v_{j-1}\to s] \le 2^{-\frac{1}{2}\kext( \ellsigs^{(j)} - \ellbias^{(j)} - \ellhigh - 5) } \le 2^{-\frac{\kext\ellsigs^{(j)}}{100}}.
\]
Before we apply the modification to the program $B$, we have at most $2^{\frac{\kext\ell}{8j^{4j}}}$ states in each layer of the program. Hence, after the modification, there are at most 
\[
T\cdot (2^{\frac{\kext\ell}{8j^{4j}}})^{4j} \le 2^{\frac{\kext \ell}{8}}
\]
states in the $j$-th pass of the program. We can union-bound over all those states, to conclude that
\begin{align}
\Pr[\text{stop due to significant state in the $j$-th pass}] \le  2^{-\ellgood^{(j-1)}}. \label{equ:multi-sig-state-bound}
\end{align}

\subsubsection{Concluding the Proof}\label{sec:multi-pass-wrap-up}

Combining Eq. \eqref{equ:multi-pass-first-j-1-bound}, \eqref{equ:multi-edge-bound}, \eqref{equ:multi-counter-sigv-bound} and \eqref{equ:multi-sig-state-bound}, we finally conclude that
\begin{align}
\Pr[\text{the program stops}] \le 2^{-\frac{\ell}{2^{j-1}}} + 2^{-\ell/2} + 2^{-\ellgood^{(j-1)} + 10} \le 2^{-\frac{\ell}{2^{j}}}.
\end{align}

The rest argument is similar to the two-pass case. Denote by $\overline{G}\subseteq X\times A^T$ the union of all stopping events. Let $G$ be the complement of $\overline{G}$. We have shown that
\[
\Pr_{x,a_1,\dots, a_T}[(x,a_1,\dots, a_T)\in \overline{G}] \le 2^{-\frac{\ell}{2^{j}}}.
\]
Moreover, for every final vertex $v$ of the program, the event $v_{j-1}\to v$ is equivalent to $v_0\to v$, which is, in turn, equivalent to $(v_{0}\wt v)\land G$. Note that
\[
\begin{aligned}
& \| \doubleP_{x|(v_{0}\wt v) \land G} \|_\infty \le 2^{\ellsigv+1} \cdot 2^{-n}.
\end{aligned}
\]
Therefore, conditioning on $v_0\to v$, the probability of guessing $x$ correctly is exponentially small. Since this holds for every $v\in V^{(j)}_T$, we conclude the two-pass learning algorithm succeeds in learning $x$ with an exponentially small probability. This proves \Cref{theo:two-pass-main-result}.

\section{Acknowledgement}

We are grateful to Wei Zhan for helpful comments and suggestions on an earlier version of the paper. We also thank FOCS reviewers for their valuable comments.

\bibliographystyle{alpha}
\bibliography{references}

\appendix

% \section{Missing Proofs for One-Pass} \label{appendix:one-pass}
% \input{missing-one-pass}

\section{Probability of Reaching Significant States} \label{appendix:sig-state-two-pass}
We will show that, for every $j \ge 1$, fixing a starting vertex $v_{j-1}\in V^{(j)}_0$, the probability of reaching a significant state from $v_{j-1}$ is small.

\subsection{Setup}

We will use $v_0$ to denote $v_{j-1}$ to avoid heavy notation. We also use $B$ to denote the sub-program with starting with $v_{j-1}\in V^{(j)}_0$ and ending at $v_j\in V^{(j)}_T$. 

Note that $B$ is a one-pass program. However, we cannot use the analysis of \cite{GargRT18-extractor} directly because we need to handle a set of very different stopping rules for $B$. 

\paragraph*{Review of the stopping rule in the $j$-th pass.} To begin with, let us review the stopping rules we have defined for the $j$-th pass of the program. Suppose we traversed from $v_{j-1}$ to a vertex $v\in V^{(j)}_i$. We will apply the following stopping rules (copied from \Cref{sec:multi-pass-stop-rules}).

\begin{enumerate}
    \item If $v$ is a significant state, we stop.
    \item Before traversing the next edge, if $x\in \SigV(v)$, we stop.
    \item When we are about to traverse an edge $(a,b)$ where $a\in \Bad(v)\setminus \High(v)$, we stop.    
    \item If the copy of the (modified) previous pass stops at $v'_{j - 1}$ due to whatever reason (including stopping due to this rule), we also stop. 
    \item When $\ch^{(j)}(v) > \ell$, we stop.
    \item When $\cb^{(j)}(v) > {\ell^{(j)}_b}$, we stop.
\end{enumerate}

We should pay special attention to Rule 4. Suppose $v$ remembers $v'_{1},\dots, v'_{j-1}$. Then, Rule 4 means that if one of the following events happens, we also stop:
\begin{itemize}
    \item The hidden $x$ satisfies $x\in \bigcup_{t=1}^{j-1} \SigV(v'_{t})$.
    \item The next edge $(a_{i+1},b_{i+1})$ satisfies $a\in \bigcup_{t=1}^{j-1} \left( \Bad(v'_{t}) \setminus \High(v'_{t}) \right)$.
    \item One of $v'_{t}$ is a significant state, or satisfies $\ch^{(t)}(v'_{t}) > \ell$, or has $\cb^{(t)}(v'_{t}) > {\ell^{(j)}_b}$.
\end{itemize}

\paragraph*{Summarizing the Stopping Rules.} We would like to sort out the different stopping rules and summarize them into the following categories.

\begin{enumerate}
    \item \emph{Significant States.} If $\|\doubleP_{x|v_0\to v}\|_2\ge 2^{-n}\cdot 2^{\ell_s}$, we call $v$ a significant state.
    \item \emph{Bad states.} If the state $v$ satisfies certain conditions, we stop at $v$ right away.\footnote{Namely, if $v$ remembers a state $v'_t$ in the previous pass, where $v'_t$ stops due to being a significant state or triggering counter-overflow.}
    \item \emph{Bad values.} If the hidden $x$ satisfies that $x\in \SigV^{(all)}(v)$, we stop.\footnote{Recall we define $\SigV^{(all)}(v)$ as $\SigV(v)\cup \bigcup_{t=1}^{j-1} \SigV(v'_{t})$.} % (In the multi-pass setting, we defined $\SigV^{(all)}$ as the union of all significant values. In this setting, a value $x\in \SigV^{(all)}$ is not necessarily significant with respect to $v$).
    \item \emph{Bad edges.} If the next edge $(a,b)$ satisfies that 
    \[
    a\in \left( \Bad(v)\setminus \High(v) \right) \cup \left( \bigcup_{t=1}^{j-1} \left( \Bad(v'_{t}) \setminus \High(v'_{t}) \right)\right),
    \]
    we stop.
    \item \emph{Counter overflow.} If $\ch^{(j)}(v) > \ell$ or $\cb^{(j)}(v) > {\ell^{(j)}_b}$, we stop.
\end{enumerate}

\paragraph*{The main lemma.} Now, we are ready to state the core lemma of this section.

\begin{lemma}\label{lemma:sig-state-multi-pass}
Suppose $M$ is an $(\ellext, \rext, \kext)$-$L2$-extractor. Consider the program $B$ for the learning task of $M$ with the aforementioned stopping rules. If all of the following conditions hold.
\begin{itemize}
     \item The threshold for significant states is $\ell_s$.
     \item The threshold for significant values is $\ellsigv$.
     % \item For every edge $(a,b)\notin \Bad(v)$, we have
     % \[
     % \Pr_{x'\sim \doubleP_{x|v_0\to v}}[M(a,x')=b] \in \left( \frac{1}{2} - 2^{-\rext}, \frac{1}{2} + 2^{-\rext} \right).
     % \]
     \item The threshold for $\ch^{(j)}$ sets to $\ell$.
     \item The threshold for $\cb^{(j)}$ sets to ${\ell^{(j)}_b}$.
\end{itemize}
Suppose the program has length $T = 2^{\rlen}$, and the following inequalities are true.
\begin{itemize}
    \item $\ell_s + \ellsigv + {\ell^{(j)}_b} + 5 < \ellext$. % (This allows us to use the extractor property of $M$).
    \item $\ellsigv\ge \log_2(j) + 2\ell_s + {\ell^{(j)}_b} + 2\rlen + 5$. % (This allows us to handle stopping due to significant values).
    \item $\rlen \le \frac{1}{4} \min(\rext, \kext)$.
    % and the length of the program is $T = 2^{\rlen}$ where $\rlen \le \frac{1}{4}\cdot \min(2\ell_s - \ellsigv, \rext)$.
\end{itemize}
Now, if $s$ is a significant state of $B$, we have
\[
\Pr[v_0\to s] \le 2^{-\frac{1}{2}\kext (\ell_s - {\ell^{(j)}_b} - \ell - 5)}.
\]
\end{lemma}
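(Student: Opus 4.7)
My plan is to adapt the single-pass potential-function argument of Garg--Raz--Tal (\Cref{theo:GRT-extractor}) to the richer stopping rules here, incorporating the new bias-counter idea from \Cref{sec:two-pass-potential}. Fix the significant state $s$. For every vertex $v$ in the sub-program $B$, I will introduce the potential
\[
\Phi(v) \;=\; \bigl\langle \doubleP_{x\mid v_0\to v},\, \doubleP_{x\mid v_0\to s}\bigr\rangle^{k}\cdot 2^{-k\,\cb^{(j)}(v)},\qquad \Phi(\fail)=0,
\]
with $k$ chosen of order $\kext$ (concretely, say $k=\kext/2$). Raising the similarity to the $k$-th power is the standard GRT trick that makes the expected value of $\Phi$ essentially non-increasing per step, and the factor $2^{-k\,\cb^{(j)}(v)}$ is there exactly to absorb the geometric blow-up caused by biased high-probability edges, in analogy with the two-pass potential analysis.

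The core of the proof is the inductive inequality
\[
Z_{i+1}\;\le\;(1+2^{-\Omega(\ell)})\cdot Z_i, \qquad Z_i := \sum_{v\in V^{(j)}_i}\Pr[v_0\to v]\cdot \Phi(v),
\]
which I will establish by case analysis on the next edge $(a,b)=(a_{i+1},M(a_{i+1},x))$ leaving the current vertex $v$. When $a\notin \Bad(v)\cup \High(v)$, the $L_2$-extractor property, applied to $f=\doubleP_{x\mid v_0\to v}\cdot \doubleP_{x\mid v_0\to s}$, combined with $\|\doubleP_{x\mid v_0\to v}\|_\infty\le 2^{\ellsigv-n}$ (significant-value rule for $v$ and for the remembered earlier-pass vertices in $\SigVall(v)$) and $\|\doubleP_{x\mid v_0\to s}\|_2\le 2^{\ell_s-n}$ (so the ratio $\|f\|_2/\|f\|_1$ is at most $2^{\ellsigv+\ell_s}\le 2^{\ellext}$ by the hypothesis $\ell_s+\ellsigv+\ellbias^{(j)}+5<\ellext$), yields a near-balanced split so that the $k$-th power of the similarity is preserved up to $(1+2^{-\Omega(\rext)})$. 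When $a\in \Bad(v)\setminus \High(v)$ the path is truncated and contributes $0$. When $a\in \High(v)$, the bias $\Delta=-\log\Pr_{x\mid v_0\to v}[M(a,x)=b]$ can inflate the similarity by $2^\Delta$, but the counter increment $\cb^{(j)}\mapsto \cb^{(j)}+\lfloor\Delta\rfloor$ shrinks $2^{-k\,\cb^{(j)}}$ by $2^{-k\lfloor\Delta\rfloor}$, so the expected contribution grows only by the at-most-$2^k$ ``floor loss'' per such edge. All the remaining stopping rules (the state itself being significant, $x\in \SigVall(v)$, $\ch^{(j)}(v)>\ellhigh$, $\cb^{(j)}(v)>\ellbias^{(j)}$, or any trigger from a remembered earlier pass) only truncate the computation path and therefore can only decrease $Z_i$.

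Iterating over the $T=2^{\rlen}$ layers of $B$ and using $\ch^{(j)}\le \ellhigh=\ell$ to cap the total floor-loss at $2^{k\ell}$, I get $Z_T\le Z_0\cdot (1+2^{-\Omega(\ell)})^T\cdot 2^{k\ell}\le 2\cdot 2^{-2nk+k\ell}$, since $Z_0=\langle \doubleP_x,\doubleP_{x\mid v_0\to s}\rangle^k=2^{-2nk}$ (using $\Ex_{x'}[\doubleP_{x\mid v_0\to s}(x')]=2^{-n}$) and $\rlen$ is small compared to $\ell$. On the other side, the contribution of $v=s$ alone to $Z_T$ is
\[
\Pr[v_0\to s]\cdot \|\doubleP_{x\mid v_0\to s}\|_2^{2k}\cdot 2^{-k\,\cb^{(j)}(s)} \;\ge\; \Pr[v_0\to s]\cdot 2^{2k(\ell_s-n)-k\,\ellbias^{(j)}},
\]
because reaching $s$ without stopping forces $\cb^{(j)}(s)\le \ellbias^{(j)}$ and $s$ being significant forces $\|\doubleP_{x\mid v_0\to s}\|_2\ge 2^{\ell_s-n}$. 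Dividing and choosing $k=\kext/2$ yields the claimed bound $\Pr[v_0\to s]\le 2^{-\frac{1}{2}\kext(\ell_s-\ellbias^{(j)}-\ell-5)}$, with the small absolute constants in the exponent absorbing the $2$'s and the $(1+2^{-\Omega(\ell)})^T$ factor.

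The main obstacle I anticipate is verifying the per-step inequality on ``normal'' edges: one has to unpack $\langle \doubleP_{x\mid v_0\to v'},\doubleP_{x\mid v_0\to s}\rangle^k$, average over the two possible values of $b$ (which arise from a random $a_{i+1}$ together with the deterministic $b=M(a_{i+1},x)$), and show that the $L_2$-extractor property — invoked with the correct normalization of $f=\doubleP_{x\mid v_0\to v}\cdot \doubleP_{x\mid v_0\to s}$ — indeed drives this expectation down to $\langle \doubleP_{x\mid v_0\to v},\doubleP_{x\mid v_0\to s}\rangle^k\cdot (1+2^{-\Omega(\ell)})$. This is precisely where the parameter slack $\ell_s+\ellsigv+\ellbias^{(j)}+5<\ellext$ in the hypothesis is used, and it is also where the bound $\|\doubleP_{x\mid v_0\to v}\|_\infty\le 2^{\ellsigv-n}$ from the inherited earlier-pass significant-value rules plays a crucial role.
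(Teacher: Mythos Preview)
Your plan is the paper's argument, but your potential is missing a term and your claimed per-step inequality does not hold as stated.

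The paper's progress function carries \emph{both} counters:
\[
Z_i \;=\; \sum_{v\in V_i} \Pr[v_0\to v]\cdot 2^{-k(\ch(v)+\cb(v))}\cdot \langle \doubleP_{x\mid v_0\to v},\doubleP_{x\mid v_0\to s}\rangle^k,\qquad k=\tfrac{\kext}{2}.
\]
With your $\Phi(v)=\langle\cdot\rangle^k\,2^{-k\cb(v)}$ the step $Z_{i+1}\le(1+2^{-\Omega(\ell)})Z_i$ fails on high-probability edges: such an edge inflates $\langle\cdot\rangle$ by $c_e^{-1}\le 2^{\Delta+1}(1+o(1))$ while $2^{-k\cb}$ drops only by $2^{-k\Delta}$, leaving a residual factor $\approx 2^k$; summing over the $\le 2^{n-\kext/2}$ many $a\in\High(v)$ and weighting by $\Pr[a_{i+1}=a]=2^{-n}$ gives a high-edge contribution of order $2^{k-\kext/2}\cdot \Pr[v]\Phi(v)=\Pr[v]\Phi(v)$, so $Z$ can roughly double at each layer. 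Your workaround ``$\ch\le\ell$ caps total floor-loss at $2^{k\ell}$'' is a path-wise statement and does not deliver the layer-by-layer inequality you wrote; making it rigorous is exactly tantamount to inserting $2^{-k\ch(v)}$ into the potential. Once you do that, the $\ch$ increment supplies an extra $2^{-k}$ that cancels the residual, the high-edge contribution becomes $O(2^{-\kext/2})\cdot Z_i$, and at the end you invoke $\ch(s)\le\ell$ and $\cb(s)\le\ellbias^{(j)}$ to read off $\Pr[v_0\to s]$.

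A second slip: you assert $\|\doubleP_{x\mid v_0\to s}\|_2\le 2^{\ell_s-n}$, but significance of $s$ only gives the \emph{reverse} inequality. To apply the extractor to $f=\doubleP_{x\mid v}\cdot\doubleP_{x\mid s}$ you need an upper bound, and the paper proves $\|\doubleP_{x\mid s}\|_2\le 2^{\ell_s+\ellbias^{(j)}+2-n}$ by noting that $s$ is reached in one step from a non-significant parent, so a single edge can inflate the $\ell_2$-norm by at most $c_e^{-1}\le 2^{\ellbias^{(j)}+2}$. This is precisely why $\ellbias^{(j)}$ appears in the hypothesis $\ell_s+\ellsigv+\ellbias^{(j)}+5<\ellext$; your computed ratio $2^{\ellsigv+\ell_s}$ is missing that term.
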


The rest of this subsection is devoted to the proof of \Cref{lemma:sig-state-multi-pass}. For brevity, we use $\doubleP_{x|v}$ (and $\Pr[v]$) to denote $\doubleP_{x|v_0\to v}$ (and $\Pr[v_0\to v]$).

For each edge $e$ in $B$, we use $\Pr[e]$ to denote the probability that the program traverses $e$, and $\doubleP_{x|e}$ to denote the conditional distribution of $x$ conditioning on $v_0\to e$.

\subsection{Understanding $\doubleP_{x|v}$ and $\doubleP_{x|e}$}

To start, we show that $\SigV^{(all)}(v)$ only contains a tiny amount of $x'\sim \doubleP_{x|v}$, as shown in the following lemma. 

\begin{claim}\label{claim:v1-sig-value-stop}
If $v$ is a non-significant vertex, we have
\[
\Pr_{x'\sim \doubleP_{x|v}}[x'\in \SigV^{(all)}(v) ] \le j\cdot 2^{2\ell_s - \ellsigv}.
\]
\end{claim}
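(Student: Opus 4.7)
My plan is to decompose $\SigV^{(all)}(v) = \SigV(v)\cup \bigcup_{t=1}^{j-1}\SigV(v'_t)$, apply a union bound, and show that each of the $j$ pieces contributes at most $2^{2\ell_s-\ellsigv}$ to the total probability mass under $\doubleP_{x|v}$.

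For the ``current-pass'' piece $\Pr_{x'\sim \doubleP_{x|v}}[x'\in \SigV(v)]$, the set $\SigV(v)=\{x':\doubleP_{x|v}(x')\ge 2^{-n+\ellsigv}\}$ is a tail set of $\doubleP_{x|v}$ itself, so the natural tool is the $\ell_\infty$-truncation trick recalled in the preliminaries. Using the hypothesis $\|\doubleP_{x|v}\|_2 < 2^{-n+\ell_s}$, I get
\[
\Pr_{x'\sim \doubleP_{x|v}}[x'\in \SigV(v)] \;=\; 2^n\cdot \|\doubleP_{x|v}^{>2^{-n+\ellsigv}}\|_1 \;\le\; 2^n\cdot \frac{\|\doubleP_{x|v}\|_2^2}{2^{-n+\ellsigv}} \;\le\; 2^{2\ell_s-\ellsigv}.
\]

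For each ``past-pass'' piece $\Pr_{x'\sim \doubleP_{x|v}}[x'\in \SigV(v'_t)]$ with $t<j$, there is a distribution mismatch, since $\SigV(v'_t)$ is defined relative to a \emph{different} posterior $\doubleP_{x|v_{t-1}\to v'_t}$. My trick is to exploit the very definition of $\SigV(v'_t)$ to dominate the indicator pointwise by a linear functional of that other posterior: $\mathbbm{1}[x'\in \SigV(v'_t)] \le 2^{n-\ellsigv}\cdot \doubleP_{x|v_{t-1}\to v'_t}(x')$. Substituting turns the tail mass into an inner product of the two posteriors, and a single Cauchy--Schwarz step yields
\[
\Pr_{x'\sim \doubleP_{x|v}}[x'\in \SigV(v'_t)] \;\le\; 2^{2n-\ellsigv}\,\langle \doubleP_{x|v},\, \doubleP_{x|v_{t-1}\to v'_t}\rangle \;\le\; 2^{2n-\ellsigv}\cdot \|\doubleP_{x|v}\|_2 \cdot \|\doubleP_{x|v_{t-1}\to v'_t}\|_2 \;\le\; 2^{2\ell_s-\ellsigv}.
\]
Summing the $j$ pieces gives the claimed $j\cdot 2^{2\ell_s-\ellsigv}$.

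The delicate step is the last inequality above, which uses $\|\doubleP_{x|v_{t-1}\to v'_t}\|_2 \le 2^{-n+\ell_s}$, i.e., that every remembered ancestor $v'_t$ is itself non-significant in its own pass. Although this is not stated as an explicit hypothesis, it is forced by the setup: the monotonicity of the significant-state thresholds in \Cref{tab:parameter-multi-pass} gives $\ellsigs^{(t)}\le \ellsigs^{(j)} = \ell_s$ for every $t\le j$, and the cascaded Rule 4 at each earlier pass-$j$ vertex traversed en route to $v$ would have halted the computation had any pass-$t$ ancestor become significant. Verifying this bookkeeping carefully under the precise semantics of truncated paths is the only place where care is needed; the rest of the argument is a routine $\ell_2$-vs.-$\ell_\infty$ manipulation plus Cauchy--Schwarz.
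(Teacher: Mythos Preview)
Your proof is correct and follows essentially the same approach as the paper: the same union-bound decomposition, the same truncation trick for $\SigV(v)$, and the same ``dominate the indicator by $2^{n-\ellsigv}\doubleP_{x|v'_t}(\cdot)$ then apply Cauchy--Schwarz'' maneuver for each $\SigV(v'_t)$. Your explicit justification that each remembered ancestor $v'_t$ is itself non-significant (via the cascaded stopping rule and the monotonicity $\ellsigs^{(t)}\le\ellsigs^{(j)}=\ell_s$) is a point the paper leaves implicit, so your write-up is in that respect more complete.
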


\begin{proof}
We have
\[
\begin{aligned}
\Pr_{x'\sim \doubleP_{x|v}}[x'\in \SigV(v)]
&\le \frac{\Ex_{x'\sim \doubleP_{x|v}}[\doubleP_{x|v}(x')]}{2^{\ellsigv-n}} \\
&= \frac{2^n\cdot \| \doubleP_{x|v} \|^2}{2^{\ellsigv - n}} \\
&\le 2^{2\ell_s - \ellsigv}.
\end{aligned}
\]
For every $t < j$, we have
\[
\begin{aligned}
\Pr_{x'\sim \doubleP_{x|v}}[x'\in \SigV(v'_{t})]
&\le \frac{\Ex_{x'\sim \doubleP_{x|v}}[\doubleP_{x|v'_{t}}(x')]}{2^{\ellsigv-n}} \\
&\le \frac{2^n\cdot \| \doubleP_{x|v} \| \cdot \| \doubleP_{x|v'_{t}} \|}{2^{\ellsigv - n}} & \text{(Cauchy-Schwarz)} \\
&\le 2^{2\ell_s - \ellsigv}.
\end{aligned}
\]
With a simple union bound, we obtain
\[
\Pr_{x'\sim \doubleP_{x|v}}[x'\in \SigV^{(all)}(v) ] \le j\cdot 2^{2\ell_s - \ellsigv},
\]
as desired.
\end{proof}

Next, suppose we traverse an edge $e=(v,u)$ in the program. We consider how $\doubleP_{x|v}$ is related to $\doubleP_{x|e}$, and derive the following claim.

\begin{claim} \label{edge-evolution-second-pass}
   For any edge $e=(v,u)$ of $B$ labeled by $(a,b)$ such that $\Pr[e] > 0$, we claim
   \[
       \doubleP_{x|e}(x') = 
       \begin{cases}
           0 & \text{if $x'\notin \SigV^{(all)}(v)$ or $M(a,x')\ne b$,} \\
           \doubleP_{x|v}(x') \cdot {c_e}^{-1} & \text{otherwise.}
       \end{cases}
    \]
    where $c_e$ is a normalization factor satisfying:
    \[
    c_e\ge 
     \begin{cases}
        \frac{1}{2} -  2^{-2\rlen} & \text{ if $a\notin \High(v)$,} \\
        2^{-\Delta(e) - 1}(1 - 2^{-2\rlen}) & \text{otherwise.} 
    \end{cases}
    \]
    Here, $\Delta(e)$ is defined as $\cb(u)-\cb(v)$.
\end{claim}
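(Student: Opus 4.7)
The plan is to identify exactly which $x'\in X$ are consistent with the event $v_0\to e$, apply Bayes' rule, and then lower-bound the resulting normalization constant by combining the extractor/high-probability structure on $a$ with \Cref{claim:v1-sig-value-stop} (which bounds the contribution of significant values). I will treat the first case of the claim as the intended statement $x'\in \SigV^{(all)}(v)$ or $M(a,x')\ne b$ (a typo in the excerpt).

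First I would unpack the event $v_0\to e$: traversing the edge $e=(v,u)$ with label $(a,b)$ from $v_0$ requires (i) reaching $v$ without stopping, (ii) not triggering the significant-value rule at $v$, i.e.\ $x\notin \SigV^{(all)}(v)$, (iii) drawing the next sample $a_i=a$, and (iv) since $b=M(a,x)$, enforcing $M(a,x')=b$. The sample $a_i$ is uniform over $A$ and independent of $x$ and of the history conditional on $v_0\to v$, so Bayes' rule yields
\[
\doubleP_{x|e}(x') \;\propto\; \doubleP_{x|v}(x') \cdot \mathbbm{1}[x'\notin \SigV^{(all)}(v)] \cdot \mathbbm{1}[M(a,x')=b],
\]
with normalization
\[
c_e = \Pr_{x'\sim \doubleP_{x|v}}[M(a,x')=b] \;-\; \Pr_{x'\sim \doubleP_{x|v}}[x'\in \SigV^{(all)}(v)\wedge M(a,x')=b].
\]
By \Cref{claim:v1-sig-value-stop}, the subtracted term is at most $j\cdot 2^{2\ell_s-\ellsigv}$.

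Next I would lower-bound the first term in the two cases. If $a\notin \High(v)$, then since $\Pr[e]>0$ the bad-edge stopping rule was not triggered, so $a\notin \Bad(v)$ and $\Pr_{x'\sim \doubleP_{x|v}}[M(a,x')=b]\ge \tfrac12 - 2^{-\rext}$. Using $\rlen\le \tfrac14 \rext$ and $\ellsigv\ge \log_2(j)+2\ell_s+{\ell^{(j)}_b}+2\rlen+5$, one gets
\[
c_e \;\ge\; \tfrac12 - 2^{-\rext} - j\cdot 2^{2\ell_s-\ellsigv} \;\ge\; \tfrac12 - 2^{-2\rlen}.
\]
If instead $a\in \High(v)$, the counter update rule gives exactly $\Delta(e) = \lfloor -\log \Pr_{x'\sim \doubleP_{x|v}}[M(a,x')=b]\rfloor$, hence $\Pr_{x'\sim \doubleP_{x|v}}[M(a,x')=b]\ge 2^{-\Delta(e)-1}$. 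Crucially, $\Pr[e]>0$ means the counter-overflow rule was not triggered at $u$, so $\cb(u)\le {\ell^{(j)}_b}$ and therefore $\Delta(e)\le {\ell^{(j)}_b}$. Factoring gives
\[
c_e \;\ge\; 2^{-\Delta(e)-1}\bigl(1 - j\cdot 2^{\Delta(e)+1+2\ell_s-\ellsigv}\bigr) \;\ge\; 2^{-\Delta(e)-1}(1-2^{-2\rlen}),
\]
where the last inequality uses $\ellsigv \ge \log_2(j) + 2\ell_s + {\ell^{(j)}_b} + 2\rlen + 5$.

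The argument is largely bookkeeping, so I do not anticipate a serious obstacle; the only delicate points are (a) recognizing that the very assumption $\Pr[e]>0$ already eliminates the bad-edge and counter-overflow rules, freeing us from having to track them separately in $c_e$, and (b) ensuring that the bound $\Delta(e)\le {\ell^{(j)}_b}$ (needed to absorb the significant-value error into the $(1-2^{-2\rlen})$ factor) is justified for exactly this reason.
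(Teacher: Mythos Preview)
Your proof follows the paper's exactly: the same Bayes-rule decomposition of $\doubleP_{x|e}$, the same use of \Cref{claim:v1-sig-value-stop} to bound the significant-value mass, and the same two-case split on whether $a\in\High(v)$. One small wrinkle in your point (b): the justification ``$\Pr[e]>0$ means the counter-overflow rule was not triggered at $u$'' is not quite right, since the overflow check is applied at $v$ \emph{before} traversal rather than at $u$ after; the paper likewise does not spell out how $\Delta(e)\le\ell_b^{(j)}+O(1)$ is ensured and simply invokes the parameter assumption $\ellsigv\ge\log_2 j+2\ell_s+\ell_b^{(j)}+2\rlen+5$ to absorb the error term.
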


\begin{proof}
    Let $e=(v,u)$ be an edge of $B$ labeled by $(a,b)$ and such that $\Pr[e] > 0$. By the design of the stopping rules, we observe that
    \[
    \doubleP_{x|e}(x') = 
   \begin{cases}
       0 & \text{if $x'\in \SigV^{(all)}(v)$ or $M(a,x')\ne b$,} \\
       \doubleP_{x|v}(x') \cdot {c_e}^{-1} & \text{otherwise.}
   \end{cases}
    \]
    where $c_e$ is the normalization factor, given by
    \[
    c_e = \Pr_{x‘\sim \doubleP_{x|v}}[x'\notin \SigV^{(all)}(v) \land M(a,x')=b].
    \]
    Since the path does not stop on $v$, by \Cref{claim:v1-sig-value-stop},
    \[
    \Pr_{x'\sim \doubleP_{x|v}}[x'\in \SigV^{(all)}(v) ] \le j\cdot 2^{2\ell_s - \ellsigv}.
    \]
    Note that $\Pr[e] > 0$ implies that $a\notin \Bad(v)\setminus \High(v)$. Now we shall consider two cases:
    \begin{itemize}
        \item If $a\notin \High(v)$, then $a\notin \Bad(v)$, implying that 
        \[
        \Pr_{x'\sim \doubleP_{x|v}}[M(a,x') = b] \in \left( \frac{1}{2} - 2^{-\rext}, \frac{1}{2} + 2^{-\rext}\right).
        \]
        \item Otherwise, we have
        \[
        \Pr_{x'\sim \doubleP_{x|v}}[ M(a,x') = b ] \ge 2^{-\Delta(e) - 1}.
        \]
    \end{itemize}
    In the former case, we have
    \[
    c_e \ge \Pr_{x'\sim \doubleP_{x|v}}[M(a,x') = b]  - \Pr_{x'\sim \doubleP_{x|v}}[x'\in \SigV^{(all)}(v) ] \ge \frac{1}{2} - 2^{\rext} - j\cdot 2^{2\ell_s-\ellsigv} \ge \frac{1}{2} - 2^{-2\rlen}.
    \]
    In the latter case, we use $\ellsigv \ge \log_2(j) + 2\ell_s + {\ell^{(j)}_b} + 2\rlen + 5$ to get
    \[
    c_e \ge \Pr_{x'\sim \doubleP_{x|v}}[M(a,x') = b]  - \Pr_{x'\sim \doubleP_{x|v}}[x'\in \SigV^{(all)}(v) ] \ge 2^{-\Delta(e)-1} - j\cdot 2^{2\ell_s - \ellsigv} \ge 2^{-\Delta(e) - 1} (1-2^{-2\rlen}).
    \]
    Thus,
    \[
    c_e\ge 
     \begin{cases}
        \frac{1}{2} - 2^{-2\rlen} & \text{if $a\notin \High(v)$,} \\
        2^{-\Delta(e) - 1}(1 - 2^{-2\rlen})  & \text{otherwise.} 
    \end{cases}
    \]
    This completes the proof.
\end{proof}

\subsection{Bounding the Norm of $\doubleP_{v_1\to s}$}

We claim the following.

\begin{lemma}\label{lemma:Pe-bound}
    For any edge $e$ such that $\Pr[e] > 0$, it holds that
    \[
    \| \doubleP_{x|e} \|_2 \le 2^{\ell_s + {\ell^{(j)}_b} + 2}\cdot 2^{-n}.
    \]
\end{lemma}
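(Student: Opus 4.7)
The plan is to combine three ingredients already available: the pointwise comparison between $\doubleP_{x|e}$ and $\doubleP_{x|v}$ coming from Claim~\ref{edge-evolution-second-pass}, the fact that $v$ must be non-significant whenever $\Pr[e]>0$, and the bound $\cb^{(j)}(v)\le \ell^{(j)}_b$ coming from the counter stopping rule.

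Let $e=(v,u)$ be labeled $(a,b)$, and assume $\Pr[e]>0$. Because the truncated path must traverse $v$ without triggering a stopping rule, $v$ is not a significant state, so $\|\doubleP_{x|v}\|_2 \le 2^{\ell_s}\cdot 2^{-n}$. Claim~\ref{edge-evolution-second-pass} gives the pointwise inequality $\doubleP_{x|e}(x') \le \doubleP_{x|v}(x')/c_e$ for every $x'$, from which the monotonicity of $\ell_2$-norm under pointwise domination yields
\[
\|\doubleP_{x|e}\|_2 \;\le\; \frac{\|\doubleP_{x|v}\|_2}{c_e} \;\le\; \frac{2^{\ell_s}\cdot 2^{-n}}{c_e}.
\]
So everything reduces to a lower bound on $c_e$.

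Next I split into the two cases supplied by Claim~\ref{edge-evolution-second-pass}. If $a\notin \High(v)$, then $c_e \ge \tfrac12 - 2^{-2\rlen} \ge \tfrac14$ (using $\rlen \ge 1$), which already gives the stronger bound $\|\doubleP_{x|e}\|_2 \le 2^{\ell_s+2}\cdot 2^{-n}$. If $a\in \High(v)$, the claim gives $c_e \ge 2^{-\Delta(e)-1}(1-2^{-2\rlen}) \ge 2^{-\Delta(e)-2}$, so
\[
\|\doubleP_{x|e}\|_2 \;\le\; 2^{\Delta(e)+\ell_s+2}\cdot 2^{-n}.
\]
Here $\Delta(e) = \cb^{(j)}(u) - \cb^{(j)}(v)$ by the counter update rule, and since $\Pr[e]>0$ forces $\cb^{(j)}(v)\le \ell^{(j)}_b$ (otherwise the path stops at $v$ before traversing $e$), and since an edge whose traversal pushes $\cb^{(j)}$ past $\ell^{(j)}_b$ terminates the path at $u$, the relevant $\Delta(e)$ is bounded by $\ell^{(j)}_b$. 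Combining the two cases gives the target bound $\|\doubleP_{x|e}\|_2 \le 2^{\ell_s+\ell^{(j)}_b+2}\cdot 2^{-n}$.

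The main conceptual point, and the step that needs the most care, is the upper bound $\Delta(e)\le \ell^{(j)}_b$ in the high-probability case. A priori $\Delta(e)$ could be as large as $\log|X|$, because an individual high-probability edge can be arbitrarily biased. The resolution is that such an edge immediately pushes $\cb^{(j)}(u)$ over the threshold and thus any sub-path through $u$ contributes nothing to the downstream significant-state probability bound, so restricting attention to $\Delta(e)\le \ell^{(j)}_b$ is without loss of generality; this will be made rigorous using the counter stopping rule when Lemma~\ref{lemma:Pe-bound} is applied in the subsequent union-bound over paths reaching $s$.
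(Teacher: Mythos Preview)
Your approach is exactly the paper's: bound $\|\doubleP_{x|e}\|_2 \le c_e^{-1}\|\doubleP_{x|v}\|_2$ via Claim~\ref{edge-evolution-second-pass} and then argue $c_e \ge 2^{-\ell^{(j)}_b-2}$. The paper asserts this last bound without comment; you rightly flag the high-probability case as the crux, but your justification for $\Delta(e)\le \ell^{(j)}_b$ does not hold. The fact that ``an edge whose traversal pushes $\cb^{(j)}$ past $\ell^{(j)}_b$ terminates the path at $u$'' is irrelevant: $\Pr[e]>0$ only requires that no stopping rule fires at the \emph{source} $v$, and whether we then stop at $u$ has no bearing on $\Pr[e]$ or on $\doubleP_{x|e}$. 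An edge with $\cb^{(j)}(v)=0$ and $\Delta(e)$ huge can perfectly well have $\Pr[e]>0$, and for it $\|\doubleP_{x|e}\|_2$ is roughly $2^{\Delta(e)+\ell_s}\cdot 2^{-n}$, so the lemma as written does not follow from your argument.

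Your closing hedge points in the right direction but does not pin down the mechanism. The lemma is only invoked (via Corollary~\ref{coro:sig-state-ell2-bound}) to bound $\|\doubleP_{x|s}\|_2$ for the \emph{target} significant state $s$, and the final estimate $\Pr[v_0\to s]\le 2^{-k(\ell_s-\ell^{(j)}_b-\ell-O(1))}$ already plugs in $\cb^{(j)}(s)\le \ell^{(j)}_b$ and $\ch^{(j)}(s)\le \ell$. Once one restricts the significant-state union bound to such $s$ (significant states violating this are absorbed by the counter-overflow bucket, which is harmless for an upper bound), every incoming edge $e=(v,s)$ satisfies $\Delta(e)=\cb^{(j)}(s)-\cb^{(j)}(v)\le \cb^{(j)}(s)\le \ell^{(j)}_b$ because counters are nonnegative, and the rest of your proof goes through verbatim. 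The restriction that makes things work is on the endpoint $u=s$, not on intermediate edges along paths to $s$.
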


\begin{proof}
    Let $e=(v,u)$ be an edge labeled by $(a,b)$. Since $\Pr(e) > 0$, the vertex $v$ is not significant (as otherwise $\Ttwo$ stops on $v$ and $\Pr(e) = 0$). Thus,
    \[
    \| \doubleP_{x|v} \|_2 \le 2^{\ell_s} \cdot 2^{-n}.
    \]
    Let $v'$ be the vertex in the first pass remembered by $v$. By \Cref{edge-evolution-second-pass}, for any $x'\in X$, we have
    \[
    \doubleP_{x|e}(x') = 
    \begin{cases}
    0 & \text{if $x'\in \SigV^{(all)}(v)$\ or\ $M(a,x') \ne b$,} \\
    \doubleP_{x|v}\cdot c_e^{-1} & \text{otherwise,}
    \end{cases}
    \]
    where $c_e$ satisfies $c_e\ge 2^{-{\ell^{(j)}_b} - 2}$. Consequently, 
    \[
    \| \doubleP_{x|e} \|_2 \le {c_e}^{-1} \| \doubleP_{x|v} \|_2 \le 2^{\ell_s + {\ell^{(j)}_b} + 2} \cdot 2^{-n},
    \]
    as desired.
\end{proof}

\begin{corollary}\label{coro:sig-state-ell2-bound}
If $v$ is a vertex such that $\Pr[v] > 0$, then $\| \doubleP_{x|s} \|_2 \le 2^{\ell_s + {\ell^{(j)}_b} + 2}\cdot 2^{-n}$.
\end{corollary}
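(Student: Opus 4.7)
The plan is to express $\doubleP_{x|v}$ as a convex combination of the edge distributions $\doubleP_{x|e}$ over the incoming edges to $v$, and then apply Lemma~\ref{lemma:Pe-bound} together with the triangle inequality for the $\ell_2$-norm. (I interpret the $s$ in the statement as a typo for $v$.)

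First, I would dispatch the trivial case where $v$ is the starting vertex $v_{j-1}$ of the subprogram: here $\doubleP_{x \mid v_{j-1}}$ is the uniform distribution on $X$, so $\|\doubleP_{x \mid v_{j-1}}\|_2 = 2^{-n}$, which is well within the claimed bound.

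For the nontrivial case, let $E^-(v)$ denote the set of incoming edges $e$ to $v$ with $\Pr[e] > 0$. Because the events $\{v_0 \to e\}_{e \in E^-(v)}$ are pairwise disjoint and their union is precisely the event $v_0 \to v$, we obtain $\Pr[v] = \sum_{e \in E^-(v)} \Pr[e]$ and, for every $x' \in X$,
\[
\doubleP_{x \mid v}(x') \;=\; \sum_{e \in E^-(v)} \frac{\Pr[e]}{\Pr[v]}\, \doubleP_{x \mid e}(x').
\]
This exhibits $\doubleP_{x \mid v}$ as a convex combination of the $\doubleP_{x \mid e}$. Applying the triangle inequality for $\|\cdot\|_2$ and then Lemma~\ref{lemma:Pe-bound}, we conclude
\[
\|\doubleP_{x \mid v}\|_2 \;\le\; \sum_{e \in E^-(v)} \frac{\Pr[e]}{\Pr[v]}\, \|\doubleP_{x \mid e}\|_2 \;\le\; \max_{e \in E^-(v)} \|\doubleP_{x \mid e}\|_2 \;\le\; 2^{\ell_s + {\ell^{(j)}_b} + 2}\cdot 2^{-n}.
\]

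There is no real obstacle. The only thing worth double-checking is the disjointness of the events $\{v_0 \to e\}_{e \in E^-(v)}$, which justifies writing $\doubleP_{x \mid v}$ as a genuine convex combination; this holds because once the truncated path from $v_0$ has reached $v$, the incoming edge that was traversed is uniquely determined by the input.
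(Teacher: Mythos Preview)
Your proposal is correct and matches the paper's approach almost exactly: the paper also writes $\doubleP_{x|s}$ as a convex combination of the incoming-edge distributions $\{\doubleP_{x|e}\}_e$ and then appeals to Lemma~\ref{lemma:Pe-bound}. The only cosmetic difference is that the paper cites Jensen's inequality (convexity of the norm) where you invoke the triangle inequality; for a convex combination these yield the same bound.
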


\begin{proof}
    Note that $\doubleP_{x|s}$ is a convex combination of $\{\doubleP_{x|e}\}_e$ where $e$ enumerates all incoming edges of $s$. The desired bound follows by Jensen's inequality and Lemma~\ref{lemma:Pe-bound}.
\end{proof}

\subsection{Measuring the Progress}

Let $k = \frac{\kext}{2}$. We introduce the following progress function (we use $\cb,\ch$ to denote $\cb^{(j)}, \ch^{(j)}$):
\[
Z_i = \sum_{v\in V_i} \Pr[v_0\to v] \cdot 2^{-k\cdot (\ch(v)+\cb(v))} \cdot \langle\doubleP_{x|v_1\to v}, \doubleP_{x|v_1\to s}\rangle^k.
\]
It is clear that $Z_0 = 2^{-2nk}$ and $\langle \doubleP_{x|v_1\to s}, \doubleP_{x|v_1\to s}\rangle^k \ge 2^{-2nk + 2\ell_s k}$. We show the following lemma.

\begin{lemma}\label{lemma:Zi-grow}
It holds that $Z_{i+1}\le Z_i \cdot (1+2^{-2\rlen+2})^k + (2^{-2n+2})^k$.
\end{lemma}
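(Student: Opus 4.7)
The plan is to prove that the progress function grows slowly step-by-step, following the $L_2$-extractor-based potential argument of \cite{GargRT18-extractor} and adapting it to handle the two new counters. The whole calculation is local: we fix one vertex $v \in V_i$ and analyze the one-step evolution of $\langle \doubleP_{x|v_0\to v}, \doubleP_{x|s}\rangle$.

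\textbf{Step 1 (vertices $\Rightarrow$ edges via Jensen).} Since $\doubleP_{x|u}$ is a convex combination of $\{\doubleP_{x|e}\}_{e\to u}$ with weights $\Pr[v_0\to e]/\Pr[v_0\to u]$, and $y\mapsto y^k$ is convex for $k\ge 1$, Jensen's inequality gives
\[
Z_{i+1}\le \sum_{e=(v,u):\Pr[v_0\to e]>0} \Pr[v_0\to e]\cdot 2^{-k(\ch(u)+\cb(u))}\cdot \langle \doubleP_{x|e}, \doubleP_{x|s}\rangle^k.
\]
Using that $\ch(u)=\ch(v)+\mathbbm{1}[a\in\High(v)]$ and $\cb(u)=\cb(v)+\Delta(e)$, I would pull out $2^{-k(\ch(v)+\cb(v))}$ and regroup the sum by $v\in V_i$, so each vertex $v$'s total contribution becomes a $\pi_v$-average over outgoing labels $(a,b)$, where $\pi_v(a):=\Pr[a_{i+1}=a\mid v_0\to v]$.

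\textbf{Step 2 (rewrite the edge inner product).} By \Cref{edge-evolution-second-pass}, every unstopped edge $e=(v,u)$ with label $(a,b)$ satisfies $\langle \doubleP_{x|e},\doubleP_{x|s}\rangle = \tfrac{\|f_v\|_1 + b\langle M_a,f_v\rangle}{2c_e}$, where
\[
f_v(x') \;:=\; \doubleP_{x|v}(x')\cdot\doubleP_{x|s}(x')\cdot\mathbbm{1}[x'\notin\SigV^{(all)}(v)].
\]
Substituting, and using $\Pr[v_0\to e]=\Pr[v_0\to v]\cdot \pi_v(a)\cdot c_e$, the per-edge contribution to $Z_{i+1}$ becomes
\[
\Pr[v_0\to v]\cdot 2^{-k(\ch(v)+\cb(v))}\cdot \pi_v(a)\cdot c_e^{1-k}\cdot 2^{-k\mathbbm{1}[a\in\High(v)]-k\Delta(e)}\cdot \tfrac{(\|f_v\|_1 + b\langle M_a,f_v\rangle)^k}{2^k}.
\]

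\textbf{Step 3 (counters absorb the high-probability cost).} The prefactor $c_e^{1-k}\cdot 2^{-k\mathbbm{1}[a\in\High(v)]-k\Delta(e)}$ must be bounded by $2^{-k}(1+2^{-2\rlen+2})^k$ uniformly. For $a\notin\High(v)$ (hence $a\notin\Bad(v)$ by non-stopping), the lower bound $c_e\ge \tfrac12(1-2^{-2\rlen+1})$ from \Cref{edge-evolution-second-pass} gives $c_e^{1-k}\le 2^{k-1}(1+2^{-2\rlen+2})^{k-1}$. For $a\in\High(v)$, the bound $c_e\ge 2^{-\Delta(e)-1}(1-2^{-2\rlen})$ yields, after cancellation,
\[
c_e^{1-k}\cdot 2^{-k-k\Delta(e)} \;\le\; 2^{-\Delta(e)-1}\cdot (1-2^{-2\rlen})^{-k}\;\le\; c_e\cdot(1+2^{-2\rlen+2})^{k}.
\]
Here the newly-introduced $\cb$ counter is what makes the exponent $(k-1)\Delta(e)$ coming from $c_e^{1-k}$ cancel against $k\Delta(e)$ from $2^{-k\Delta(e)}$, leaving only the harmless $(1-2^{-2\rlen})^{-k}$ factor.

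\textbf{Step 4 ($L_2$-extractor average, with a truncation residue).} Summing $b\in\{\pm 1\}$ and averaging over $a\sim\pi_v$, I would split $a$ into three classes: (i) $a\notin\High(v)\cup\Bad(v)$; (ii) $a\in\Bad(v)\setminus\High(v)$ (contributes $0$ since the edge is stopped); (iii) $a\in\High(v)$. For class (i), the $L_2$-extractor property applied to $f_v$ ensures $|\langle M_a,f_v\rangle|\le 2^{-\rext}\|f_v\|_1$ for all but a $2^{-\kext}$ fraction of $a$, which is negligible under $\pi_v$ since $\pi_v(a)\le 2^{-n+\kext/2}$ off $\High(v)$. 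Thus $(\|f_v\|_1+b\langle M_a,f_v\rangle)^k$ averages to $(1+2^{-\rext+1})^k\|f_v\|_1^k$. The extractor hypothesis $\|f_v\|_2/\|f_v\|_1\le 2^{\ellext}$ is verified via the $\ell_\infty$-truncation at $2^{\ellsigv-n}$ together with \Cref{coro:sig-state-ell2-bound}, using the parameter inequalities assumed in \Cref{lemma:sig-state-multi-pass}. For class (iii), the counter compensation from Step 3 converts the high-probability contribution into a $c_e\pi_v(a)$-weighted average, which telescopes against the factor coming from Step 1. The unavoidable residue from the truncation of $f_v$ against $\SigV^{(all)}(v)$ produces the additive $(2^{-2n+2})^k$ term.

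\textbf{Step 5 (assemble).} Noting $\|f_v\|_1\le \langle \doubleP_{x|v},\doubleP_{x|s}\rangle$, the combined estimate gives, for each $v$,
\[
\Pr[v_0\to v]\cdot 2^{-k(\ch(v)+\cb(v))}\cdot \Ex_{a\sim\pi_v}\!\left[\cdots\right] \;\le\; (1+2^{-2\rlen+2})^k\cdot (\text{summand of $Z_i$ at }v) \;+\;\text{residue},
\]
and summing over $v\in V_i$ yields $Z_{i+1}\le (1+2^{-2\rlen+2})^k Z_i + (2^{-2n+2})^k$.

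\textbf{Main obstacle.} The delicate step is Step 3 combined with Step 4: we must simultaneously (a) show that the $\cb$-counter exactly cancels the $c_e^{1-k}$ blow-up at high-probability edges, and (b) verify that the $\pi_v$-distribution (which, unlike the uniform distribution used in the classical one-pass proof, concentrates mass on $\High(v)$) does not spoil the $L_2$-extractor averaging. Point (b) in particular requires the bound $|\High(v)|\le 2^{n-\kext/2}$ from \Cref{fact:small-high-prob-set} together with a careful choice of the $\ell_\infty$-truncation level for $f_v$, which is what forces the inequalities $\ellsigv\ge \log_2(j)+2\ell_s+\ellbias^{(j)}+2\rlen+5$ and $\rlen\le \tfrac14\min(\rext,\kext)$ among the hypotheses.
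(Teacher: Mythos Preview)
Your overall strategy matches the paper's: pass to edges via Jensen, analyze per vertex, use the $\cb,\ch$ counters to absorb the blow-up at high-probability edges, and apply the $L_2$-extractor on the remaining edges. There is, however, one genuine confusion running through Steps~3--4.

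In this appendix the symbol $v_0$ stands for $v_{j-1}$, and $\Pr[v_0\to v]$ is computed over \emph{fresh uniform} $x,a_1,\dots,a_T$. Hence your $\pi_v(a)=\Pr[a_{i+1}=a\mid v_0\to v]$ equals $2^{-n}$ for every $a$; it does \emph{not} ``concentrate mass on $\High(v)$''. The set $\High(v)$ is defined relative to the true multi-pass starting vertex and in this analysis is simply a fixed set of size at most $2^{n-\kext/2}$ on which one is forbidden to stop. With this in hand, Step~4(iii) is immediate: after the counter cancellation of Step~3, each high-probability edge contributes at most $2^{-n}(1+2^{-2\rlen+2})^k\langle\doubleP_{x|v},\doubleP_{x|s}\rangle^k$, and summing over at most $2\cdot 2^{n-\kext/2}$ such edges gives a negligible $2^{-\kext/2+1}(1+2^{-2\rlen+2})^k\langle\doubleP_{x|v},\doubleP_{x|s}\rangle^k$. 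No ``telescoping'' is needed, and the worry in your Main Obstacle~(b) evaporates.

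A related symptom is your Step~3 assertion that $c_e^{1-k}\cdot 2^{-k\mathbbm{1}[a\in\High(v)]-k\Delta(e)}\le 2^{-k}(1+2^{-2\rlen+2})^k$ uniformly. This is false for $a\notin\High(v)$: your own next line gives $c_e^{1-k}\approx 2^{k-1}$, off by a factor of roughly $2^{2k}$. The two edge classes cannot be merged into one prefactor bound and are handled by different mechanisms (as the paper does): for $a\notin\High(v)$ one keeps the weight $\Pr[e]/\Pr[v]\le 2^{-n}\Pr_{x'\sim\doubleP_{x|v}}[M(a,x')=b]$ and the estimate $\langle\doubleP_{x|e},\doubleP_{x|s}\rangle\le (1+2^{-2\rlen+2})(1+t(a))\|f_v\|_1$, then averages $(1+t(a))^k$ over \emph{uniform} $a$ via the extractor to obtain $(1+2^{-2\rlen})^k$.
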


Suppose $s$ is at the $T$-th layer (which is the worst-case scenario). Assuming Lemma~\ref{lemma:Zi-grow}, we can bound $Z_T$ by applying it $T$ times and expanding it
\[
\begin{aligned}
Z_T & \le Z_0 \cdot (1+2^{-2\rlen+4})^{kT} + \sum_{i=0}^{T-1} (2^{-2n+2})^k \cdot (1+2^{-2\rlen+4})^{k(T-i)} \\
& \le T \cdot (Z_0 + (2^{-2n+2})^k) \cdot (1+2^{-2\rlen+4})^{kT} \\
& \le T \cdot (2^{-2n} \cdot 8 )^k \cdot 2 \\
& \le T \cdot 2^{-2nk} \cdot 2^{3k+1}.
\end{aligned}
\]
We obtain $Z_T\le 2^{4k+2r}\cdot 2^{-2kn}$. Consequently, we have
\[
\Pr[v_1\to s] \le \frac{2^{k\cdot (\ch(s)+\cb(s))} \cdot Z_T}{\langle \doubleP_{x|v_1\to s}, \doubleP_{x|v_1\to s}\rangle^k}\le 2^{-k(\ell_s - {\ell^{(j)}_b}-\ell-10)},
\]
which finishes the proof of \Cref{lemma:sig-state-multi-pass}.

The rest of the section is devoted to the proof of Lemma~\ref{lemma:Zi-grow}. The proof is adapted from \cite{GargRT18-extractor}. The main modification is that we are forced to handle some bad edges (because they are of high probability when conditioning on $v_1\to v$, we cannot simply stop before traversing such edges). However, our potential function has an extra term involving $\cb(v)+\ch(v)$ to account for the ``progress'' incurred by traversing bad edges.

For every $i\in [T]$, denote by $\Gamma_i$ the set of all edges $e=(v,\wtd{v})$ from $V_{i-1}$ to $V_{i}$ such that $\Pr[e] > 0$. We define the potential in the transition layer of $\Gamma_i$ as
\[
Z'_i = \sum_{e\in \Gamma_i} \Pr[e] \cdot 2^{-k\cdot (\ch(e)+\cb(e))} \cdot \langle\doubleP_{x|e}, \doubleP_{x|s}\rangle^k,
\]
where we define $\ch(e)$ and $\cb(e)$ as $\ch(\wtd{v}),\cb(\wtd{v})$.
The proof that $Z_i\le Z'_i$ is identical to \cite{GargRT18-extractor}. In the following, we prove 
\[
Z'_{i+1} \le Z_i \cdot (1+2^{-2\rlen+4})^k + (2^{-2n+2})^k.
\]

Fix $v\in V_i$ such that $\Pr[v] > 0$. Denote by $\Gamma(v)$ the set of outgoing edges from $v$ with non-zero traversing probability. It suffices to show that
\begin{claim}\label{claim:v1-to-e-slow-grow}
We have
\[
\sum_{e\in \Gamma(v)} \frac{\Pr[e] \cdot 2^{-k(\ch(e)+\cb(e))}}{\Pr[v]\cdot 2^{-k(\ch(v)+\cb(v))}} \cdot \langle \doubleP_{x|e}, \doubleP_{x|s} \rangle^k \le 
\langle \doubleP_{x|v}, \doubleP_{x|s} \rangle^k \cdot (1+2^{-2\rlen+4})^k + (2^{-2n+2})^k.
\]
\end{claim}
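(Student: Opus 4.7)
The plan is to follow the one-pass potential evolution argument of \cite{GargRT18-extractor}, augmented by the counter ratio $2^{-k(\ch(e)+\cb(e)-\ch(v)-\cb(v))}$ which is the key new ingredient required to handle high-probability edges.

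The first step is to parametrize each outgoing edge $e \in \Gamma(v)$ by its label $(a,b)$. Using $\Pr[e]/\Pr[v] = 2^{-n} c_e$ together with the formula for $\doubleP_{x|e}$ from \Cref{edge-evolution-second-pass}, I will write
\[
\langle \doubleP_{x|e}, \doubleP_{x|s}\rangle = \frac{\mu + b\beta_a}{2 c_e},
\]
where $\mu := \Ex_{x'}[f(x')]$ and $\beta_a := \langle M_a, f\rangle$ with $f(x') := \doubleP_{x|v}(x') \doubleP_{x|s}(x')\, \mathbbm{1}[x' \notin \SigV^{(all)}(v)]$. After pairing the two edges $b = \pm 1$ for each $a$, the LHS of the claim becomes
\[
\sum_{a \in A} \frac{2^{-n} R_a}{2^k} \left[ \frac{(\mu + \beta_a)^k}{c_{e_a^+}^{k-1}} + \frac{(\mu - \beta_a)^k}{c_{e_a^-}^{k-1}} \right],
\]
where $R_a = 1$ when $a \notin \High(v)$ and $R_a = 2^{-k(1+\Delta_a)}$ when $a \in \High(v)$.

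Next I will split the $a$'s into three regimes. The bad-but-not-high edges ($a \in \Bad(v)\setminus \High(v)$) satisfy $\Pr[e]=0$ and contribute nothing. For the ``good'' class $a \notin \High(v)\cup \Bad(v)$, I use $c_e \ge \frac{1}{2} - 2^{-2\rlen}$ to write $c_e^{-(k-1)} \le 2^{k-1}(1+2^{-2\rlen+2})^{k-1}$. Expanding $(\mu+\beta_a)^k + (\mu-\beta_a)^k = 2\mu^k + 2\sum_{j\ge 1}\binom{k}{2j}\mu^{k-2j}\beta_a^{2j}$, the main term summed over all good $a$'s is at most $\mu^k (1+2^{-2\rlen+2})^{k-1}$, and since $\mu \le \langle \doubleP_{x|v}, \doubleP_{x|s}\rangle$ this yields the main bound in the claim. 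For the error terms involving $\beta_a^{2j}$, I invoke the $L_2$-extractor property of $M$ applied to $f$: because $v$ is non-significant and $\|\doubleP_{x|s}\|_{\infty}$ is controlled by the significant-value threshold, the ratio $\|f\|_2/\|f\|_1$ lies within the extractor's admissible range, so $|\beta_a| \le 2^{-\rext}\|f\|_1$ holds for all but at most $2^{-\kext}|A|$ exceptional $a$'s. On the exceptional set I use the trivial bound $|\beta_a| \le \|f\|_\infty$ and bound $\|f\|_\infty \le \|\doubleP_{x|v}\|_\infty \cdot \|\doubleP_{x|s}\|_\infty$, which together with $|A|\cdot 2^{-\kext}$ many such $a$'s contributes at most the additive $(2^{-2n+2})^k$ term in the claim.

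The heart of the new argument is the high-probability class $a \in \High(v)$. Here $c_e \ge 2^{-\Delta_a - 1}(1 - 2^{-2\rlen})$ gives $c_e^{-(k-1)} \le 2^{(\Delta_a+1)(k-1)}(1+2^{-2\rlen+2})^{k-1}$, and multiplying by $R_a = 2^{-k(1+\Delta_a)}$ produces a net factor of $2^{-(1+\Delta_a)}(1+2^{-2\rlen+2})^{k-1}$. Combined with the trivial bound $(\mu \pm \beta_a)^k \le (2\mu)^k$ and the counting bound $|\High(v)| \le 2^{n-\kext/2}$ from \Cref{fact:small-high-prob-set}, the total high-probability contribution is at most $2^{-\kext/2}(1+2^{-2\rlen+2})^{k-1} \mu^k$, which is dominated by the first term in the claim. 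The main obstacle will be calibrating the error bookkeeping so that the combined $(1+2^{-2\rlen+2})^{k-1}$ inflation from both the good and the high-probability regimes, together with the $\SigV^{(all)}$ trimming correction via \Cref{claim:v1-sig-value-stop}, still fits inside the prescribed $(1+2^{-2\rlen+4})^k$ factor; the stated hypotheses of \Cref{lemma:sig-state-multi-pass} on $\ellext$, $\rext$, $\ellsigv$, $\ellbias^{(j)}$ are precisely tailored to make this bookkeeping go through.
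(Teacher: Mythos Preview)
Your high-probability edge analysis is correct and matches the paper's: the counter ratio $2^{-k(1+\Delta)}$ cancels $c_e^{-(k-1)}$, and the counting bound $|\High(v)|\le 2^{n-\kext/2}$ absorbs the remainder. The good-edge analysis, however, has two concrete gaps.

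First, you invoke ``$\|\doubleP_{x|s}\|_\infty$ is controlled by the significant-value threshold'' to verify that $\|f\|_2/\|f\|_1$ lies in the extractor's range. This is false: $s$ is the \emph{target} significant state, and no truncation has been applied to $\doubleP_{x|s}$; the last incoming edge to $s$ can have arbitrarily large $\Delta$, so $\|\doubleP_{x|s}\|_\infty$ is not bounded. What is controlled is $\|\doubleP_{x|s}\|_2$ (via \Cref{coro:sig-state-ell2-bound}). The paper uses H\"older the other way: $\|f\|_2\le \|P\|_\infty\cdot\|\doubleP_{x|s}\|_2$, where $P$ is the $\SigV^{(all)}(v)$-truncated $\doubleP_{x|v}$ and it is $\|P\|_\infty\le 2^{\ellsigv-n}$ that the significant-value threshold controls.

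Second, you attribute the additive $(2^{-2n+2})^k$ term to the extractor's exceptional set and bound it via $|\beta_a|\le \|f\|_\infty$. Neither step works. On the exceptional set the natural bound $|\beta_a|\le \|f\|_1=\mu$ gives $(\mu\pm\beta_a)^k\le(2\mu)^k$, and summing over $\le 2^{-\kext}|A|$ such $a$'s yields a \emph{multiplicative} $2^{-\kext+k}$ correction to $\mu^k$, not an additive $2^{-2nk}$. In the paper the additive term comes from a case split you omit: the extractor hypothesis $\|f\|_2/\|f\|_1\le 2^{\ellext}$ requires a lower bound on $\|f\|_1$, and when $\|f\|_1\le 2^{-2n}$ one bypasses the extractor entirely, bounding $\langle \doubleP_{x|e},\doubleP_{x|s}\rangle\le c_e^{-1}\|f\|_1\le 4\cdot 2^{-2n}$ directly. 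Your binomial-expansion route can be made to work, but only after inserting this case split and correcting which factor of $f$ supplies the $\ell_\infty$ bound.
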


In the rest of the section, we prove \Cref{claim:v1-to-e-slow-grow}.

\begin{proof}
    If $v$ is a significant state, then $\Gamma(v)$ is empty, and the inequality follows trivially. In the following, we assume $v$ is not significant.

    Define $P:X\to \mathbb{R}_{\ge 0}$ as 
    \[
    P(x') = \mathbbm{1}[x'\notin \SigV^{(all)}(v)]\cdot \doubleP_{x|v}(x').
    \]
    By definition, we have $\| P \|_{\infty} \le 2^{\ellsigv}\cdot 2^{-n}$. Then we define $f:X\to \mathbb{R}^+$ as
    \[
    f(x') = P(x') \cdot \doubleP_{x|s}(x').
    \]
    It follows from \Cref{coro:sig-state-ell2-bound} that
    \begin{align}
    \| f \|_2 \le \| P \|_\infty \cdot \| \doubleP_{x|s} \|_2 \le 2^{\ellsigv+\ell_s + {\ell^{(j)}_b}+2} \cdot 2^{-2n}. \label{eq:bound-on-f-l2}
    \end{align}

    For any edge $e\in \Gamma(v)$ labeled by $(a,b)$ and any $x'\in X$, we have
    \[
    \doubleP_{x|e}(x') \cdot \doubleP_{x|s}(x') = \begin{cases}
        0 & \text{if $M(a,x')\ne b$} \\
        f(x') \cdot {c_e}^{-1} & \text{otherwise} 
    \end{cases}
    \]
    where $c_e$ is the normalization factor given in \Cref{edge-evolution-second-pass}. Consequently, 
    \begin{align}
    \langle \doubleP_{x|e}, \doubleP_{x|s} \rangle 
    &= {c_e}^{-1} \Ex_{x'\sim X} [ f(x') \cdot \mathbbm{1}[M(a,x')=b] ]. \label{eq:Pe-Ps-ip}
    \end{align}
    
    \paragraph*{High-probability edges.} If $a\in \High(v)$, we have
    \[
    \langle \doubleP_{x|e}, \doubleP_{x|s} \rangle^k \le \langle \doubleP_{x|v}, \doubleP_{x|s} \rangle^k \cdot {c_e}^{-k}.
    \]
    We also observe that
    \[
    \begin{aligned}
    2^{-k(\cb(e)+\ch(e))} 
    &\le 2^{-k(\cb(v)+\ch(v))} \cdot 2^{-\Delta(e)-1} \\
    &\le  2^{-k(\cb(v)+\ch(v))} \cdot {c_e}^k \cdot (1+2^{-2\rlen+2})^k,
    \end{aligned}
    \]
    and
    \[
    \Pr[e]\le \Pr[v]\cdot 2^{-n}.
    \]
    Consequently,
    \[
     \frac{\Pr[e]\cdot 2^{-k(\cb(e)+\ch(e))}}{\Pr[v]\cdot 2^{-k(\cb(v)+\ch(v))}} \frac{\langle \doubleP_{x|e}, \doubleP_{x|s} \rangle^k }{ \langle \doubleP_{x|v},\doubleP_{x|s} \rangle^k } \le 2^{-n} \cdot (1+2^{-\rlen+2})^k .
    \]
    Since there are at most $2^{n-\frac{\kext}{2}}$ high probability $a\in A$, we conclude that
    \[
    \sum_{e:e\in \High(v)} \frac{\Pr[e]\cdot 2^{-k(\cb(e)+\ch(e))}}{\Pr[v]\cdot 2^{-k(\cb(v)+\ch(v))}} \langle \doubleP_{x|e}, \doubleP_{x|s}\rangle^k \le 2^{-\frac{\kext}{2}}\cdot (1+2^{-2\rlen+2})^k \cdot \langle \doubleP_{x|v},\doubleP_{x|s}\rangle^{k}.
    \]
    \paragraph*{Other edges.} In the following, we deal with other edges. By \Cref{edge-evolution-second-pass}, we have ${c_e}^{-1} \le 2(1+2^{-2\rlen+2})$ for such edges. Suppose an edge has label $(a,b)$, then
    \[
    \frac{\Pr[e]\cdot 2^{-k(\cb(e)+\ch(e))}}{\Pr[v]\cdot 2^{-k(\cb(v)+\ch(v))}} = 2^{-n} \cdot \Pr_{x'\sim \doubleP_{x|v}}[M(a,x') = b].
    \]
    Depending on whether $\|f\|_1\le 2^{-2n}$ or not, we need to consider two sub-cases.

    \begin{itemize}
        \item If $\| f\|_1 \le 2^{-2n}$, then it follows from \eqref{eq:Pe-Ps-ip} that
        \[
        \langle \doubleP_{x|e}, \doubleP_{x|s} \rangle \le {c_e}^{-1} \cdot \| f\|_1 \le 4\cdot 2^{-2n}.
        \]
        Summing up over all edges concludes the proof, as 
        \[
        \sum_{e\notin \High(v)} \frac{\Pr[e]\cdot 2^{-k(\cb(e)+\ch(e))}}{\Pr[v]\cdot 2^{-k(\cb(v)+\ch(v))}} \langle \doubleP_{x|e}, \doubleP_{x|s}\rangle^k \le (2^{-2n+2})^k.
        \]
        \item If $\|f\|_1 \ge 2^{-2n}$, we shall apply the extractor property. For every $a\in A$, we define
        \[
        t(a) := \frac{| \langle M_a, f \rangle|}{\| f\|_1 }.
        \]
        Note that
        \[
        \Ex_{x'\sim X}[f(x')\cdot \mathbbm{1}[M(a,x')=b]] \le \frac{1+|\langle M_a,f\rangle|}{2}.
        \]
        Applying the bound of $c_{e}$ and the definition of $t(a)$ on \eqref{eq:Pe-Ps-ip}, we obtain
        \begin{align}
        \langle \doubleP_{x|e},\doubleP_{x|s} \rangle \le (1+2^{-2\rlen+2}) \cdot (1+t(a))^k. \label{eq:Pe-Ps-with-a}
        \end{align}
        Raising \eqref{eq:Pe-Ps-with-a} to the power of $k$ and taking the expectation over $e$, we obtain,
        \[
        \sum_{e\notin \High(v)} \frac{\Pr[e]\cdot 2^{-k(\cb(e)+\ch(e))}}{\Pr[v]\cdot 2^{-k(\cb(v)+\ch(v))}} \langle \doubleP_{x|e}, \doubleP_{x|s}\rangle^k 
        \le 
        \langle \doubleP_{x|v},\doubleP_{x|s} \rangle^k \Ex_{a\sim A}[(1+t(a))^k] \cdot (1+2^{-2\rlen+2})^k.
        \]
        We claim that 
        \begin{align}
        \Ex_{a\sim A}[(1+t(a))^k] \le (1+2^{-2\rlen})^k. \label{eq:claim-on-ta}
        \end{align}
        The claim holds because our assumption on $f$ implies that $\frac{\|f\|_2}{\|f\|_1}\le 2^{\ell}$, allowing us to use the extractor property of $M$ to argue that most $t(a)$ behave nicely. For now, we assume the claim and obtain
        \[
        \sum_{e\notin \High(v)} \frac{\Pr[e]\cdot 2^{-k(\cb(e)+\ch(e))}}{\Pr[v]\cdot 2^{-k(\cb(v)+\ch(v))}} \langle \doubleP_{x|e}, \doubleP_{x|s}\rangle^k 
        \le 
        \langle \doubleP_{x|v},\doubleP_{x|s} \rangle^k \cdot (1+2^{-2\rlen+1})^k.
        \]
    \end{itemize}

    \paragraph*{Wrap-up.} Before verifying \eqref{eq:claim-on-ta}, we finish the rest part of the proof.
    We have
    \begin{align*}
    &~~~~ \sum_{e} \frac{\Pr[e]\cdot 2^{-k(\cb(e)+\ch(e))}}{\Pr[v]\cdot 2^{-k(\cb(v)+\ch(v))}} \langle \doubleP_{x|e}, \doubleP_{x|s}\rangle^k \\
    &\le \sum_{e\in \High(v)} \left( ... \right) + \sum_{e\not\in \High(v)} \left( ... \right) \\
    &\le \langle \doubleP_{x|v}, \doubleP_{x|s} \rangle^k \cdot (1+2^{-2\rlen+3})^k \cdot 2^{-k}  \ + \tag{High-probability edges} \\
    &~~~~ \langle \doubleP_{x|v}, \doubleP_{x|s} \rangle^k \cdot (1+2^{-2\rlen+3})^k + (2^{-2n+2})^k \tag{Other edges} \\
    &\le \langle \doubleP_{x|v},\doubleP_{x|s} \rangle^k \cdot (1+2^{-2\rlen+4})^k + (2^{-2n+2})^k,
    \end{align*}
    which proves \Cref{claim:v1-to-e-slow-grow}.

    \paragraph*{Apply the Extractor property.} We have yet to verify \eqref{eq:claim-on-ta}. Recall we have assumed $\| f\|_1 \ge 2^{-2n}$. We also have $\| f \|_2 \le 2^{\ellext} \cdot 2^{-2n}$ by \eqref{eq:bound-on-f-l2}. Consequently, we have
    \[
    \frac{\|f\|_2}{\|f\|_1} \le 2^{\ellext}.
    \]
    Since $M$ is a $(\ellext,\rext,\rext)$-$L_2$-extractor, there can be at most $2^{-\kext} |A|$ many $a\in A$ with $t(a) \ge 2^{-\rext}$. Also, we always have $1+t(a)\le 2$. 
    
    Recall that $\rlen \le \frac{1}{4} \min(\rext, \kext)$ and $k =\frac{\kext}{2}$. Hence,
    \[
    \begin{aligned}
    \Ex_{a\sim A}[(1+t(a))^k ]  
    &\le \Pr_{a\sim A}[t(a) > 2^{-\rext}] \cdot 2^k
    + \Pr_{a\sim A}[t(a) \le 2^{-\rext}] \cdot (1+2^{-\rext})^k  \\
    &\le 2^{-\kext} \cdot 2^k + (1+2^{-\rext})^k \\
    &\le (1+2^{-2\rlen})^k.
    \end{aligned}
    \]
    This verifies \eqref{eq:claim-on-ta}, and completes the proof.
\end{proof}

\section{Missing Proofs for Multi-Pass} \label{appendix:multi-pass}
\subsection{Fixing the Randomness}

To check the definition of $G_{v'_{j - 1}}$ actually does not depend on the distribution of internal randomness. We need to examine the definition for the good event $G_{v'_{j - 1}}$ carefully. It involves two things, the computational path from $v'_{j - 1}$ and the stopping rules. First, note that the computational path from $v'_{j - 1}$ to $V^{(j - 1)}_T$ is completely determined by $x, a_{> i}$, and it does not depend on internal randomness. 

Second, we will show that the stopping rules do not depend on internal randomness. Formally, we will for any pass $j$, 
\begin{enumerate}
    \item For all $v \in V^{(j)}_t$ ($i \leq t \leq T$), for fixed $v'_j$, the event $v'_j \to v$ (the truncated path from $v'_j$ reaches $v$ without stopping) and $\doubleP_{x \mid v'_j \to v}$ are both independent of the internal randomness $x', a'_{\leq i}$. 
    \item For any fixed $v \in V^{(j)}_t$, ($i \leq t \leq T$) and $\tilde{x} \in X, \tilde{a}_{> i} \in A^{T - i}$, the probability
    $$\Pr[x = \tilde{x}, a_{> i} = \tilde{a}_{> i} \mid \widetilde{B}_{>i} \text{ reaches } v \text{ without stopping}].$$
    is independent of the internal randomness $x', a'_{\leq i}$.
\end{enumerate}

The base case is the empty program, and this trivially holds. Now suppose this is true for Pass $1, 2, \dots, j - 1$. We will prove it for Pass $j$ by the following induction. 
\begin{enumerate}[label=\Alph*.]
    \item Initially, for $ V^{(j)}_i$, i.e. the pass-$j$ starting layer of $\widetilde{B}_{> i}$, the event $v'_j \to v'_j$ (which is always true) and $\doubleP_{x \mid v'_j \to v'_j}$ (which is just uniform) are clearly independent of the internal randomness. For $v \in  V^{(j)}_i$ and any fixed $\tilde{x}, \tilde{a}_{> i}$, consider the probability 
    $$\Pr[x = \tilde{x}, a_{> i} = \tilde{a}_{> i} \mid \widetilde{B}_{>i} \text{ reaches } v \text{ without stopping}].$$
    Notice that (1) $v \in V^{(j)}_i$ remembers $v_{j - 1} \in V^{(j - 1)}_T$, (2) from induction hypothesis, $x, a_{> i}$ is independent of $x', a'_{\leq i}$ conditioning on $\widetilde{B}_{>i} \text{ reaches } v_{j - 1}$, and (3) the event $v_{j - 1} \wt v$ is determined solely by internal randomness $x', a'_{< i}$.  Together (2) and (3) implies $$x, a_{> i} \perp (v_{j - 1} \wt v) \mid \widetilde{B}_{>i} \text{ reaches } v \text{ without stopping}.$$
    Hence, for any fixed $\tilde{x}, \tilde{a}_{> i}$,
    \begin{align*}
        &\Pr[x = \tilde{x}, a_{> i} = \tilde{a}_{> i} \mid \widetilde{B}_{>i} \text{ reaches } v \text{ without stopping}] \\
    = & \Pr[x = \tilde{x}, a_{> i} = \tilde{a}_{> i} \mid \widetilde{B}_{>i} \text{ reaches } v_{j - 1} \text{ without stopping} \land (v_{j- 1} \wt v)] \tag{By (1)}\\
    = &\Pr[x = \tilde{x}, a_{> i} = \tilde{a}_{> i} \mid \widetilde{B}_{>i} \text{ reaches } v_{j - 1} \text{ without stopping}] \tag{By (2) + (3)}
    \end{align*}
    Hence it does not depend on $x', a'_{\leq i}$. \label{item:outer-base}
    \item Suppose this is true for all $v \in V^{(j)}_i, V^{(j)}_{i + 1}, \dots, V^{(j)}_{t - 1}$. We now want to prove it for $v \in V^{(j)}_t$. 
    \begin{enumerate}[label=B.\arabic*.]
        \item Independence for event $v'_j \to v$  and $\doubleP_{x \mid v'_j \to v}$: \label{item:inner-1}
        \begin{itemize}
            \item First, let us consider the possibility of stopping at vertex $u \in V^{(j)}_{t - 1}$. By the induction hypothesis, the event $v'_j \to u$ and $\doubleP_{x \mid v'_j \to u}$ are both independent of the internal randomness. Since whether we stop at $u$ only depends on $\doubleP_{x \mid v'_j \to u}$ and $x$, we know it is independent of internal randomness. 
            \item Second, let us consider the possibility of stopping on the edge from $u \in V^{(j)}_{t - 1}$ to $v \in V^{(j)}_t$. We need to check the independence for $\High(u)$ and the counters. $\High(u)$ depends on the probability 
            $$\Pr[a_{t + 1} = \tilde{a} \mid \widetilde{B}_{>i} \text{ reaches } u \text{ without stopping}],$$
            which by the induction hypothesis, does not depend on internal randomness. For the counters, $\ch$ only depend on $\High(u)$. $\cb$ depends on $\doubleP_{x \mid v'_j \to u}$ whose independence is from induction hypothesis.
            \item In Conclusion, as event $v'_j \to v$ is equivalent to $\bigvee_{u \in V^{(j)}_{t - 1}} v'_j \to u \land u \to v$. As (1) by the induction hypothesis, $v'_j \to u$ is independent of $x', a'_{< i}$, (2)  $u \wt v$ depends only on $x, a_t$ and is independent of $x', a'_{< i}$, and (3) from the case analysis above, whether we stop before reaching $v$ does not depend on $x', a'_{< i}$, we know that this event is independent of $x', a_{< i}$. \\
            
            Similarly, note $$\doubleP_{x \mid v'_j \to v} = \sum_{u \in V^{(j)}_{t - 1}} \frac{\doubleP_{x \mid v'_j \to u} \cdot \Pr[u \to v \mid v'_j \to u] \cdot \Pr[v'_j \to u] }{\Pr[v'_j \to v]}.$$
            (1) From the induction hypothesis, we know $\doubleP_{x \mid v'_j \to u}$ and $v'_j \to u$ are independent of $x', a_{\leq i}$. (2) We have just shown that $v_j \to v$ is also independent of $x', a_{\leq i}$, so is $\Pr[v'_j \to v]$. (3) From the case analysis above we know $\Pr[u \to v \mid v'_j \to u]$ is also independent of $x', a_{\leq i}$.
            
            Since each term is independent of $x', a'_{< i}$, we know $\doubleP_{x \mid v'_j \to v}$ is also independent of that.
    \end{itemize}
    \item Independence for $\Pr[x = \tilde{x}, a_{> i} = \tilde{a}_{> i} \mid \widetilde{B}_{>i} \text{ reaches } v \text{ without stopping}].$:
    \begin{itemize}
        \item Notice that (1) $v$ remembers $v'_j$ (which is the starting vertex of this layer), (2) we have proved in \Cref{item:outer-base} that $x, a_{> i}$ is independent of $x', a'_{\leq i}$ conditioning on $\tilde{B}_{> i}$ reaches $v_{j - 1}$, and (3) by \Cref{item:inner-1}, we know $v'_j \to v$ is independent of $x', a'_{\leq i}$ and is solely determined by $x, a_{> i}$. 

        Putting (2) and (3) together, we know 
        $$x, a_{> i} \perp (v'_j \to v) \mid \tilde{B}_{> i} \text{ reaches } v \text{ without stopping}.$$
        
        Hence, for any fixed $\tilde{x}, \tilde{a}_{> i}$, 
        \begin{align*}
            &\Pr[x = \tilde{x}, a_{> i} = \tilde{a}_{> i} \mid \widetilde{B}_{>i} \text{ reaches } v \text{ without stopping}] \\
            = &\Pr[x = \tilde{x}, a_{> i} = \tilde{a}_{> i} \mid \widetilde{B}_{>i} \text{ reaches } v'_j \text{ without stopping} \land v'_j \to v] \tag{By (1)}\\
            = &\Pr[x = \tilde{x}, a_{> i} = \tilde{a}_{> i} \mid \widetilde{B}_{>i} \text{ reaches } v'_j \text{ without stopping}] \tag{By (2) + (3)}
        \end{align*}
    \end{itemize}
    \end{enumerate}
\end{enumerate}

\subsection{Potential Analysis for Multi-Pass}\label{appendix:potential-for-multi-pass}

We prove \Cref{lemma:edge-potetial-multi} and \Cref{lemma:multi-pass-pot-slow-grow} in this section. For brevity, in this section, we use $u',v'$ to denote vertices in the $(j-1)$-th pass, and $u,v$ to denote vertices in the $j$-th pass.

We start by proving \Cref{lemma:edge-potetial-multi}. Recall its statement below.

\multiPotentialEdge*

\begin{proof}
We need to discuss the four cases: Let $v$ be the vertex we reach after traversing this edge.
\begin{itemize}
    \item We stopped on $u$ due to stopping rules. Then $\Phi^{a,b}(u) = \Phi(\fail) = 0$ since we force $u$ to traverse to $\fail$ after reading $(a,b)$.
    \item $a \not\in \Bad(u)$: In this case, both $\cb^{(j)}$ and $\ch^{(j)}$ are unchanged. We know $\Phi^{a,b}(u) = \Phi(u)$.  By the definition of bad edges, we know that $$\Pr_{x \mid v_{j-1} \to u}[M(a,x) = b] \in \left(\frac{1}{2} - 2^{-\rext}, \frac{1}{2} + 2^{-\rext}\right).$$
    
    As a result $$\frac{\Phi^{a,b}(u) }{\Phi(u)} = 1 \leq \frac{1 + 2 \cdot 2^{-\rext}}{2 \Pr_{x \mid v_1 \to u}[M(a,x) = b]}.$$
    \item  $a \in \Bad(u) \setminus \High(u)$: By our stopping rule, we will stop on this edge. Hence $\Phi^{a,b}(u) = \Phi(\fail) = 0$.  The inequality trivially holds. 
    \item $a \in \High(u)$: In this case, the counter $\ch^{(j)}(v) = \ch^{(j)}(u) + 1$ and $\cb^{(j)}(v) = \cb^{(j)}(u) + \Delta$ with $ \Delta = \left\lfloor-\log\left( \Pr_{x \mid v_{j-1}\to u}[M(a,x)=b] \right) \right\rfloor$. Hence, 
    $$\frac{\Phi^{a,b}(u) }{\Phi(u)} = 2^{\Delta - 1} \leq \frac{1}{2 \Pr_{x \mid v_{j-1} \to u}[M(a,x) = b]}.$$
\end{itemize}
Having verified all possible cases, we conclude the validity of the lemma.
\end{proof}

Next, we prove \Cref{lemma:multi-pass-pot-slow-grow}.

\multiPotentialGrow*

\begin{proof}
    We use induction on $i\in [T]$. For the case that $i=0$, $v_{j-2}$ (resp. $v_{j-1}$) is the only vertex such that $\Pr[v_{j-2}\to v']$ (resp. $\Pr[v_{j-1}\to v]\ne 0$). The lemma holds trivially. Next, suppose the lemma holds for $i$, we prove it for the case of $i+1$.

    For any edge $e'$ linking $u'\in V^{(j-1)}_{i}$ and $v'\in V^{(j-1)}_{i+1}$, let $v_{j-2}\to e'$ denote the event that the program traverses the edge $e'$ without stopping at $u'$.

    Fix one $v'\in V^{(j-1)}_{i+1}$. Denote by $\Gamma^-(v')$ the set of incoming edge $e'$ to $v'$ such that $\Pr[v_{j-2}\to e'] \ne 0$. We observe that
    \[
    \Pr[v_{j-2}\to v'] = \sum_{e'\in \Gamma^-(v')}\Pr[v_{j-2}\to e'],
    \]
    and
    \[
    \Ex[\Phi(v) \cdot \mathbbm{1}[v_{j-2}\to v']] = \sum_{e'\in \Gamma^{-}(v')} \Ex_{v_{j-1}\to e}[\Phi(e) \cdot \mathbbm{1}[v_{j-2}\to e']].
    \]
    When we write $\mathbbm{1}[v_{j-2}\to e']$ in the expectation, we are considering the natural coupling between the truncated paths from $v_{j-2}$ and $v_{j-1}$. Thus, $\mathbbm{1}[v_{j-2}\to e']$ denotes the event that the first pass traverses the edge $e'$. Observe that the value of $\mathbbm{1}[v_{j-2}\to e']$ is uniquely determined after conditioning on $v_{j-1}\to e$. More precisely, $\mathbbm{1}[v_{j-2}\to e']$ is true, if and only if $e$ links two vertices $(u,v)$ where $u$ remembers $u'$, and the label of $e$ is exactly $(a',b')$. 
    
    The following lemma is the key step in the induction proof.

    \begin{lemma}\label{lemma:multi-pass-pot-vertex-to-edge}
    Assuming that for every $u'\in V^{(j-1)}_i$ with $\Pr[v_{j-2}\to u'] \ne 0$, we have
    \[
    \Ex[\Phi(u) \mid v_{j-2}\to u'] := \frac{\Ex_{u\sim \Ttwo}[\Phi(u) \cdot \mathbbm{1}[v_{j-2}\to u']]}{\Pr[v_{j-2}\to u']}\le (1+2^{-2\rlen+2})^{i}\cdot 2^{\cb^{(j-1)}(u')}
    \]
    Then, for every $e'$ being an edge from $V^{(1)}_{i}$ to $V^{(1)}_{i+1}$ such that $\Pr[v_{j-2}\to e'] \ne 0$, we have
    \[
    \frac{\Ex_{e\sim \Ttwo[i,i+1]}[\Phi(e) \cdot \mathbbm{1}[v_{j-2}\to e']]}{\Pr[v_{j-2}\to e']} \le (1+2^{-2\rlen+2})^{i+1} \cdot 2^{\cb^{(j-1)}(e')},
    \]
    where we define the bias counter for every edge $e'=(u',v')$ as $\cb^{(j-1)}(e') \coloneqq \cb^{(j-1)}(v')$.
    \end{lemma}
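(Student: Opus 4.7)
}

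The plan is to mimic the proof of \Cref{lemma:two-pass-pot-vertex-to-edge}, but track an additional factor of $2^{\Delta}$, where $\Delta = \cb^{(j-1)}(v') - \cb^{(j-1)}(u')$, that arises precisely because the $(j-1)$-th pass may now traverse high-probability edges (which it never did in the two-pass setting where $j-1=1$). This is exactly the slack that the $2^{\cb^{(j-1)}(v')}$ factor on the right-hand side is designed to absorb.

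First, I will lower-bound the denominator $\Pr[v_{j-2}\to e']$. Writing $e'=(u',v')$ with label $(a',b')$, and using that $a_{i+1}$ is uniformly sampled, we get
\[
\Pr[v_{j-2}\to e'] \ge 2^{-n}\cdot \Pr[v_{j-2}\to u'] \cdot \bigl( \Pr_{x'\sim \doubleP_{x|v_{j-2}\to u'}}[M(a',x')=b'] - \Pr_{x'\sim \doubleP_{x|v_{j-2}\to u'}}[x'\in \SigV^{(all)}(u')] \bigr).
\]
The significant-value term is negligible (a $2^{-\Omega(\ell)}$ factor via the analogue of \Cref{claim:v1-sig-value-stop}). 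For the main term I split on whether $a'\in \High(u')$: if $a'\notin \High(u')$, then $a'\notin \Bad(u')$ (since $\Pr[v_{j-2}\to e']\neq 0$), so the probability is at least $\tfrac12 - 2^{-\rext}$, and moreover $\Delta = 0$; if $a'\in \High(u')$, then by the definition of the bias counter, the probability is at least $2^{-\Delta-1}$. In both cases we obtain the uniform bound
\[
\Pr[v_{j-2}\to e'] \ge \Pr[v_{j-2}\to u']\cdot 2^{-n}\cdot 2^{-\Delta-1}\cdot (1-2^{-2\rlen}).
\]

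Second, I will upper-bound the numerator $\Ex_{e\sim\Ttwo[i,i+1]}[\Phi(e)\cdot \mathbbm{1}[v_{j-2}\to e']]$ in exactly the same way as in the two-pass proof, using \Cref{lemma:edge-potetial-multi}. Conditioning on $a_{i+1}=a'$ (independent of $(u,x)\sim \Ttwo$) and dropping $\mathbbm{1}[x\notin\SigV^{(all)}(u)]$, I obtain
\[
\Ex[\Phi(e)\cdot \mathbbm{1}[v_{j-2}\to e']] \le 2^{-n}\cdot \Ex_u\bigl[\Phi(u)\cdot \tfrac{1+2^{-\rext+1}}{2}\cdot \mathbbm{1}[v_{j-2}\to u']\bigr].
\]
Note that the $\Pr_{x|v_{j-1}\to u}[M(a',x)=b']$ appearing from \Cref{lemma:edge-potetial-multi} exactly cancels against the one produced by tracing out the distribution of $x$ in the numerator, which is why no extra $2^{\Delta}$ factor shows up on this side.

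Finally, taking the ratio and invoking the inductive hypothesis,
\[
\frac{\Ex_e[\Phi(e)\cdot \mathbbm{1}[v_{j-2}\to e']]}{\Pr[v_{j-2}\to e']} \le \frac{\Ex_u[\Phi(u)\cdot \mathbbm{1}[v_{j-2}\to u']]}{\Pr[v_{j-2}\to u']}\cdot \frac{(1+2^{-\rext+1})\cdot 2^{\Delta}}{1-2^{-2\rlen}} \le (1+2^{-2\rlen+2})^{i}\cdot 2^{\cb^{(j-1)}(u')}\cdot 2^{\Delta}\cdot (1+2^{-2\rlen+2}),
\]
and using $\cb^{(j-1)}(u')+\Delta = \cb^{(j-1)}(v') = \cb^{(j-1)}(e')$ yields the claimed bound $(1+2^{-2\rlen+2})^{i+1}\cdot 2^{\cb^{(j-1)}(e')}$. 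The only genuinely new step beyond the two-pass case is the lower-bound computation on $\Pr[v_{j-2}\to e']$, and the main point to keep in mind is the definitional identity $\cb^{(j-1)}(v')-\cb^{(j-1)}(u') = \lfloor -\log \Pr_{x'\sim\doubleP_{x|v_{j-2}\to u'}}[M(a',x')=b']\rfloor$ whenever $a'\in \High(u')$, which ties the loss in the denominator to the growth of the counter on the right-hand side.
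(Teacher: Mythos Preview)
Your proposal is correct and follows the same approach as the paper. The paper proceeds identically: it lower-bounds $\Pr[v_{j-2}\to e']$ by splitting on whether $a'\in\High(u')$ (arriving in both cases at the equivalent form $\Pr[v_{j-2}\to e']\cdot 2^{\cb^{(j-1)}(e')}\ge(\tfrac12-2^{-2\rlen})\,\Pr[v_{j-2}\to u']\cdot 2^{\cb^{(j-1)}(u')}$, which is just your bound rearranged), upper-bounds the numerator exactly as in the two-pass case via \Cref{lemma:edge-potetial-multi}, and then takes the ratio and invokes the inductive hypothesis.
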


    Before proving \Cref{lemma:multi-pass-pot-vertex-to-edge}, we assume it and quickly prove \Cref{lemma:multi-pass-pot-slow-grow}.

    \[
    \begin{aligned}
        \Ex[\Phi(v) \cdot \mathbbm{1}[v_{j-2}\to v']] 
        &= \sum_{e'\in \Gamma^{-}(v')} \Ex_{v_1\to e}[\Phi(e) \cdot \mathbbm{1}[v_{j-2}\to e']] \\
        &\le \sum_{e'\in \Gamma^-(v')} \Pr[v_{j-2}\to e'] \cdot (1+2^{-2\rlen+2})^{i+1}\cdot 2^{\cb^{(j-1)}(e')} \\
        &\le \Pr[v_{j-2}\to v'] \cdot (1+2^{-2\rlen+2})^{i+1} \cdot 2^{\cb^{(j-1)}(v')}.
    \end{aligned}
    \]
    Re-arranging proves the desired bound for $v'$. As the argument holds for every $v'\in V^{(j-1)}_{i+1}$, we have verified the lemma for $i+1$. By induction on $i$, this completes the proof.
\end{proof}

We are left to prove \Cref{lemma:multi-pass-pot-vertex-to-edge}.

\begin{proof}[Proof of \Cref{lemma:multi-pass-pot-vertex-to-edge}]
    Suppose $e'$ connects $u'\in V^{(j-1)}_i$ and $v' \in V^{(j-1)}_{i+1}$ with label $(a',b')$. As $\Pr[v_{j-2}\to e'] > 0$, we know $e'\notin \Bad(u')\setminus \High(u')$. 
    
    \paragraph*{Lower-bounding denominator.} Consider the denominator term. We have
    \begin{align*}
    \Pr[v_{j-2}\to e'] 
    &= \Pr[a_{i+1}=a'] \cdot \Pr_{\Tone}[(v_{j-2}\to u')\land (x\notin \SigV(u'))\land (M(a',x')=b')\mid a_{i+1}=a'] \\
    &\ge  2^{-n} \cdot \Pr[v_{j-2}\to u'] \left( \Pr_{x'\sim \doubleP_{x|v_{j-2}\to u'}}[M(a',x')=b'] - 2^{2\ell^{(j-1)}_s-\ellsigv}\right).
    \end{align*}
    We consider two cases depending on whether $e'\in \High(u')$.
    \begin{itemize}
        \item Case 1. $e'\notin \High(u')$. In this case, $e'$ is not a bad edge, and $\cb^{(j-1)}(e') = \cb^{(j-1)}(u')$. Since $\min(\rext,\ellsigv) - 2\ell^{(j-1)}_s\ge 2\rlen+1$, we have
        \[
         \Pr_{x'\sim \doubleP_{x|v_{j-2}\to u'}}[M(a',x')=b'] - 2^{2\ell^{(j-1)}_s-\ellsigv} \ge \frac{1}{2}-2^{-\rext} - 2^{2\ell^{(j-1)}_s-\ellsigv} \ge \frac{1}{2} - 2^{-2\rlen}.
        \]
        As a result
        \[
        \Pr[v_{j-2}\to e'] \cdot 2^{\cb^{(j-1)}(e')} \ge (\frac{1}{2} - 2^{-2\rlen}) \Pr[v_{j-2}\to u'] \cdot 2^{\cb^{(j-1)}(u')}.
        \]
        \item Case 2. $e'\in \High(u')$. In this case, recall 
        \[
        2^{-(\cb^{(j-1)}(e')-\cb^{(j-1)}(u'))} \le 2\cdot \Pr_{x'\sim \doubleP_{x|v_{j-2}\to u'}}[M(a',x')=b'].
        \]
        Since $\ellsigv \ge \ellhigh^{(j-1)}+2\ell^{(j-1)}_{s} + 2\rlen$, we have
        \[
         2^{-(\cb^{(j-1)}(e')-\cb^{(j-1)}(u'))} \cdot2^{-2\rlen} \le 2^{2\ell^{(j-1)}_s-\ellsigv}.
        \]
        The two inequalities above imply that
        \[
        \Pr_{x'\sim \doubleP_{x|v_{j-2}\to u'}}[M(a',x')=b'] - 2^{2\ell^{(j-1)}_s-\ellsigv} \ge 2^{-(\cb^{(j-1)}(e')-\cb^{(j-1)}(u'))} \left( \frac{1}{2} - 2^{-2\rlen} \right).
        \]
        Consequently,
        \[
        \Pr[v_{j-2}\to e'] \cdot 2^{\cb^{(j-1)}(e')} \ge (\frac{1}{2} - 2^{-2\rlen}) \Pr[v_{j-2}\to u'] \cdot 2^{\cb^{(j-1)}(u')}.
        \]
    \end{itemize}
    \paragraph*{Upper-bounding numerator.} On the other hand, consider the truncated paths from $v_{j-2}$ and $v_{j-1}$ (under the natural coupling). Conditioning on $v_{j-1}\to e$ for some $e\in V^{(j)}_i\times V^{(j)}_{i+1}$, recall that $\mathbbm{1}[v_{j-2}\to e']$ is true if and only if $e$ has label $(a',b')$ and connects two vertices $(u,v)$ where $u$ remembers $u'$. We consider the distribution of $v_{j-1}\to u$ for $u\in V^{(j-1)}_i$ and calculate the probability that the program traverses an edge from $u$ with label $(a',b')$. That is, we can write
    \begin{align*}
    \Ex_{v_{j-1}\to e}[\Phi(e) \cdot \mathbbm{1}[v_{j-2}\to e']] 
    &= \Ex_{v_{j-1}\to u,x’}\left[ \Phi^{a',b'}(u)\cdot \mathbbm{1}[v_{j-2}\to u'] \cdot \mathbbm{1}[(x'\notin \SigV(u))\land (M(a',x')=b')] \right] \cdot \\ &\qquad \Pr[a_{i+1}=a'] \\
    &\le 2^{-n} \Ex_{v_{j-1}\to u}\left[ \Phi^{a',b'}(u)\cdot \mathbbm{1}[v_{j-2}\to u'] \cdot \Pr_{x'\sim \doubleP_{x|v_{j-1}\to u}}[M(a',x')=b'] \right] \\
    &\le 2^{-n} \Ex_{v_{j-1}\to u}\left[ \Phi(u)\cdot \frac{1+2^{-\rext+1}}{2} \mathbbm{1}[v_{j-2}\to u']  \right]  \\
    &\le \Ex_{v_{j-1}\to u}\left[ \Phi(u) \cdot \mathbbm{1}[v_{j-2}\to u']\right] \cdot 2^{-n} \left( \frac{1}{2} + 2^{-\rext}\right).
    \end{align*}
    We justify the derivation. The first inequality is because $a_{i+1}$ is independent of $a_{\le i}$ and $x$. The second inequality follows because dropping the condition $x'\notin \SigV(u)$ does not make the expression larger. Furthermore, conditioning on $(v_1\to u)\land (v_{j-2}\to u')$, $x$ is distributed as $\doubleP_{x|v_1\to u}$. The third inequality is by \Cref{lemma:edge-potetial-multi}.

    Combining the two bounds, we obtain
    \[
    \frac{ \Ex_{e}[\Phi(e) \cdot \mathbbm{1}[v_{j-2}\to e']] } { \Pr[v_{j-2}\to e'] \cdot 2^{\cb^{(j-1)}(e')} } \le \frac{\Ex\left[ \Phi(u) \cdot \mathbbm{1}[v_{j-2}\to u']\right]}{\Pr[v_{j-2}\to u']  \cdot 2^{\cb^{(j-1)}(u')}} \cdot \frac{\frac{1}{2} + 2^{-\rext}}{\frac{1}{2} - 2^{-2\rlen}}\le (1+2^{-2\rlen+2})^{i+1},
    \]
    as claimed.
\end{proof}

\section{A Learning Algorithm with Multiple Passes} \label{appendix:multi-pass-upper-bound}
In this section, we show the following algorithm result: if $q$ passes are allowed, one can learn parity with $O(n^{2}/\log(q))$ bits of memory and a polynomial number of samples.

\subsection{Setup}

We will give multi-pass upper bounds for parity learning.
\begin{itemize}
    \item An unknown $x \in \{0, 1\}^n$ is chosen uniformly and randomly. 
    \item The input stream contains $T$ samples $(a_1, b_1), \dots, (a_T, b_T)$, where $a_i$ is chosen uniformly and randomly from $\{0,1\}^n$ and $b_i = \langle x, a_i \rangle$.
    \item The multi-pass branching program reads these $T$ samples and output its guess of $x$.
\end{itemize}

\begin{theorem} \label{theo:multi-pass-algorithm}
For any $q \le 2^n$, there exists a branching program of $2^{O(n^2/\log q)}$ width, $q$ passes, and $O(qn)$ samples that solves $n$-bit parity learning with constant probability.
\end{theorem}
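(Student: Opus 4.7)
The plan is to present a $q$-pass algorithm that interpolates between the two extreme parity-learning algorithms: single-pass Gaussian elimination (which uses $O(n^2)$ memory and $O(n)$ samples) and brute-force enumeration (which uses $O(n)$ memory, $2^n$ passes, and $O(n\cdot 2^n)$ samples). A natural template writes $x=(y,z)$ with $|y|=\lfloor\log q\rfloor$ and $|z|=n-\lfloor\log q\rfloor$, enumerates the $2^{|y|}\le q$ possible values of $y$ across the $q$ passes (one candidate per pass), and for each guess $c$ of $y$ attempts to recover $z$ from the derived linear system $\langle a_z, z\rangle = b - \langle a_y, c\rangle$. The correct guess yields a consistent system whose unique solution is the true $z$; incorrect guesses fail a consistency check on additional samples. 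Under this template, the pass count is $q$ (one per candidate) and, reading $O(n)$ samples per pass, the total sample count is $O(qn)$, as required.

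The core challenge is memory: naive single-pass Gaussian elimination on $z$ requires $\Theta((n-\log q)^2)=\Theta(n^2)$ bits, far exceeding the target $O(n^2/\log q)$. The key observation is that the coefficient side of the equations on $z$ (the vectors $a_z$) is the same across all candidates $c$; only the constants $b-\langle a_y, c\rangle$ depend on $c$. Thus a single shared ``solver structure'' suffices for all $q$ candidates. To fit this shared structure within $O(n^2/\log q)$ memory, I plan to build it by applying the same partitioning idea recursively to the sub-instance on $z$, scaling parameters so the recursion closes; an alternative, possibly cleaner realization is to stage the elimination across $\Theta(\log q)$ dedicated passes, processing $n/\log q$ pivots per pass and committing each batch into a compact representation before starting the next, so that at no instant more than $\approx n/\log q$ pivots are held in partially reduced form.

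The main obstacle I expect is controlling the \emph{transient} memory at every instant, not just at the end of each pass. A reduced-row-echelon representation of $k$ equations in $n$ variables occupies $\Theta(k(n-k))$ bits and classically spikes to $\Theta(n^2/4)$ when $k\approx n/2$, which would blow the budget for every $q$ with $\log q=o(n)$. My plan for avoiding this spike is the batched/staged pivot schedule described above, together with an argument that either (i) the number of ``live'' pivots is $O(n/\log q)$, in which case the representation fits, or (ii) the pivots have been fully committed and only a compact residual (a list of learned bits plus the remaining coset of dimension $\le n-n/\log q$) is retained. Verifying that this schedule both respects the $O(n^2/\log q)$ memory budget at all times and makes enough progress per pass to fully determine $x$ within $q$ passes is the delicate part of the argument.
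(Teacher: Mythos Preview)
Your enumeration template cannot reach the target memory. Guessing $\lfloor\log q\rfloor$ coordinates of $x$ reduces the number of unknowns only \emph{additively}, from $n$ to $n-\log q$, so the residual system on $z$ still has $\Theta(n)$ unknowns and any elimination-based solver for it needs $\Theta(n^2)$ bits. The sharing observation (same coefficient matrix across candidates) is correct but irrelevant: it is that single shared solver that already costs $\Theta(n^2)$. Recursion does not help either, because each level of enumeration multiplies the pass count, so across all levels you can enumerate at most $\log q$ bits in total---no more than a single level. The staged-elimination fix fails for exactly the reason you flag: after committing $k$ pivots in an $n$-variable system, the only faithful residual expresses those $k$ pivot variables as affine functions of the remaining $n-k$, which takes $\Theta(k(n-k))$ bits and peaks at $\Theta(n^2)$; you cannot ``learn bits'' of $x$ outright until the whole system is solved, so your option~(ii) is unavailable at intermediate stages.

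The paper's algorithm is different in kind and enumerates no coordinates of $x$. It collects $n$ samples to form $Ax=b$, tiles $[A\mid b]$ into a $K\times(K{+}1)$ grid of $(n/K)\times(n/K)$ blocks with $K=\Theta(\log q)$, and runs \emph{block} Gaussian elimination in $K$ sweeps (sweep $i$ inverts the $(i,i)$ block and clears block column $i$). Each block of $\tilde A^{(i+1)}$ depends on at most four blocks of $\tilde A^{(i)}$, so the computation is a depth-$K$, fan-in-$4$ circuit whose wires each carry $(n/K)^2$ bits. A branching program evaluates each output gate of such a circuit by depth-first recursion in $4^K$ passes, storing at any moment $O(1)$ blocks per level along the current root-to-leaf path, i.e.\ $O(K\cdot(n/K)^2)=O(n^2/\log q)$ bits. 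Repetition (needed since each diagonal block must be invertible) keeps the pass count at $K\cdot 4^{2K}\le q$ and the sample count at $O(qn)$. The multiplicative memory saving comes from the block granularity $(n/K)^2$, not from trimming $\log q$ unknowns.
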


\subsection{Block circuit}

Although we ultimately want an upper bound for multi-pass branching programs, it is easier to first work with the following type of circuit. We will show that algorithms under this circuit model naturally implies algorithms for branching programs. 

\begin{definition}[Block Circuit]
    A block circuit $C: \{0,1\}^{n \cdot c} \to \{0,1\}^{m \cdot c}$ with capacity $c$ and depth $d$ is a depth-$d$ multi-output circuit with following wires and gates:

    \begin{itemize}

        \item Each wire carries $c$ bits. We view each $c$ bits as a single \emph{block}.
        \item It has $n$ input wires and $m$ output wires. 
        \item The gates have fan-in at most $4$. A gate $g$ with fan-in $k$ can be computing any function $g: \{0,1\}^{k \cdot c} \to \{0,1\}^c$.
%        \item For each vertex $v$, it is associated with a \textit{block} of bits. We define \textit{capacity} of the circuit to be the number of bits in one block. We use $\val(v) \in \{0,1\}^{\cap(C)}$ to denote the bit values in the block. 
%        \item For each source vertex $v$, $\val(v)$ is the valuation of a fixed subset of input bits $X$.
 %       \item For each non-source vertex, it is also associated with a Boolean function called \textit{gate}. We denote it as $\gate_v$, which can be an arbitrary function with blocks from the incoming edges as the function's inputs. Formally, let the incoming edges of $v$ be $(u_1, v), (u_2, v), \dots, (u_k, v)$, then $\gate_v : \{0,1\}^{k \cdot c} \to \{0, 1\}^c$. The block associated by $v$ is computed by $\gate_v$ recursively. We have $\val(v) = \gate_v( \val(u_1), \val(u_2), \dots,  \val(u_k))$.
 %       \item The output bits $Y$ are the concatenation of blocks from the sinks. 
    \end{itemize}
    
%    Note that the computation is done recursively. It is well defined since the block circuit is a directed acyclic graph. In this paper, we restrict our use of the block circuit to has constant fan-in (i.e. each vertex has a constant number of incoming edges). We define the \textit{depth} of the block circuit to be length of the longest path from a source to a sink. 
\end{definition}

We will describe our algorithms in terms of block circuits. Algorithms for block circuits imply algorithms for multi-pass branching programs.

\begin{claim} \label{circuit-transfer-lemma}
Let $C$ be a block circuit with capacity $c$ and depth $d$ that computes a function $f:\{0,1\}^{n \cdot c} \to \{0,1\}^{m \cdot c}$. Then there exists a branching program $B$ with width $2^{4 c \cdot (d + m)}$ that computes the same function $f$ in $m \cdot 4^{d}$ passes.
\end{claim}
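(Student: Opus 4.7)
The plan is to simulate $C$ gate-by-gate using a depth-first tree walk: one walk per output block, where each walk takes $4^d$ passes to read the needed input blocks and combine them upward.

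First, I would fix a representation in which the input stream to the branching program carries the $n$ input blocks of $f$ in order (each block being a designated $c$-bit window of the stream). In a single pass of the branching program, using the implicit layer index as a counter, it can identify and copy any one input block into a dedicated $c$-bit scratch register by the time the pass ends. Hence reading one input block costs one pass.

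Second, to compute a single output block $g_{\mathrm{out}}$, I would hard-code into the branching program the recursion tree rooted at the corresponding output gate. Because $C$ has fan-in at most $4$ and depth $d$, this tree has at most $4^d$ leaves, and each leaf is an input block. The branching program performs a depth-first traversal: at each leaf it spends one pass to fetch the corresponding input block, and at each internal gate $g$ with children $g_1,\dots,g_k$ ($k\le 4$), after the recursive calls return $c$-bit values $y_1,\dots,y_k$, the branching program evaluates $g(y_1,\dots,y_k)$ in its transition table (no stream access needed) and bubbles the resulting $c$-bit value up to its parent. Repeating this over the $m$ output blocks sequentially gives the total pass count $m\cdot 4^d$.

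Third, for the width bound I would enumerate what must be stored between samples. At any point the active recursion stack has depth at most $d$; at each of those levels we must remember the up-to-$3$ already-computed sibling $c$-bit values and a $2$-bit index recording which child is currently being processed, for $3cd+2d$ bits total. We also need $mc$ bits to hold the output blocks completed so far, $\lceil\log m\rceil$ bits to identify which output is currently being computed, and $c$ bits for the scratch register holding the block being fetched in the current pass. Summing yields a memory budget of $3cd+mc+c+2d+\lceil\log m\rceil$ bits, which is comfortably bounded by $4c(d+m)$ in all nontrivial regimes; the corresponding branching program has width $2^{4c(d+m)}$.

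There is no real obstacle here — the argument is structural and the combinatorics of ``$4^d$ leaves, depth-$d$ stack'' drive both bounds. The only thing that needs modest care is the accounting: verifying that the ``which child am I on'' counters, the scratch buffer, and the output-index pointer all genuinely fit in the $4c(d+m)$-bit budget (handling the edge cases $c=1$ or $m=1$ by using the slack in the $4$-factor), and verifying that the branching program is indeed \emph{ordered and read-once per pass}, which it is because during any single pass it extracts exactly one prescribed $c$-bit window and ignores the rest of the stream.
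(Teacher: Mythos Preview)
Your proposal is correct and takes essentially the same approach as the paper: the paper phrases the construction as an induction on depth $d$ (compute each of the up-to-$4$ children recursively in $4^{d-1}$ passes each, store their $c$-bit values, then apply the gate), which is exactly your depth-first tree walk with an explicit stack. Your accounting is a bit more careful about the per-level bookkeeping (sibling buffers, child indices, scratch register), but the paper's induction gives the identical pass count $m\cdot 4^d$ and the same width bound $2^{4c(d+m)}$.
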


\begin{proof}
First of all, it is sufficient to prove this statement for $ m = 1$. This is because if we can construct a $4^{d}$-pass branching program for computing a single-output-gate block circuit, by simply applying it to all $m$ output gates sequentially, we can prove this statement. Note such a sequential application blows up the number of passes by $m$, and we need  $m \cdot c$ extra space to remember the answer from each application. 

We will prove the $m = 1$ case via induction. In the base case, when depth $d = 0$, the circuit is trivial and the program only needs space $c$ to remember the output. 

Suppose for depth $d - 1$, all block circuits with capacity $c$ and depth $d - 1$ can be transformed into a branching program $B$ with width $2^{4 \cdot c\cdot (d - 1)}$ and $4^{d - 1}$ passes. 

Now consider the unique output gate $g$. It has at most $4$ fan-in wires, each carrying the output of a depth-$(d - 1)$ capacity-$c$ block circuit. We can use $4\cdot 4 ^{d - 1} \le 4^{d}$ passes to compute the block carried by each input wire and use $4c$ bits to store them. Then by the non-uniformity of the branching program, we can hard-wire $g$ into the transition. Hence computing $g$ does not require extra space. Storing the $4c$ bits of intermediate results blows up the width from $2^{4c(d-1)}$ to $2^{4c(d-1)}\cdot 2^{4c}$. This gives a $4^{d}$-pass $2^{4cd}$-width branching program for computing the output of $C$. 

\end{proof}

\subsection{Proof of \Cref{theo:multi-pass-algorithm}}

First, we collect $n$ samples $(a_1, b_1), (a_2, b_2), \dots, (a_n, b_n)$. Let $A$ be the matrix with $a_i^T$ as its $i$-th row. We will perform Gaussian Elimination to solve $Ax = b$ and get $x$. The trick is that we will view $\tilde{A}^{(0)} = [A \mid b]$ as a $K\times (K + 1)$ block matrix with blocks of size $(n/K) \times (n/K)$ each. (Although the vector $b$ only takes one column, we might just fill the rest $(n/K) - 1$ columns in those blocks by $0$.)

Then we work with the block circuit with capacity $c = (n/K)^2$. Each wire will be able to carry exactly one block in $\tilde{A}^{(0)}$. 

In the $i$-th step of the Gaussian Elimination, let our current matrix be $\tilde{A}^{(i)}$. We first multiply all blocks of the $i$-th row ($i \in [K]$) with $\left(\tilde{A}^{(i)}_{i,i}\right)^{-1}$. (When it is not invertible, the algorithm immediately fails.) Then we use the $i$-th row to eliminate the $i$-th block of the $j$-th row for all $j \neq i$.

Since $a_1, a_2, \dots, a_n$ are uniformly random, the matrix $A$ is also uniformly random. Note in the $i$-th step, the difference $\tilde{A}_{i,i}^{(i)} - \tilde{A}_{i,i}^{(0)}$ only depends on the original block $\tilde{A}_{j,k}^{(0)}$'s for $1 \leq j \leq K, 1 \leq k \leq i, (j,k) \neq (i,i)$. Conditioning on any realization of all such $A_{j,k}$'s, this difference is then fixed. Under such conditioning, the block $\tilde{A}^{(i)}_{i,i}$ is still uniformly random because $\tilde{A}_{i,i}^{(0)}$ is uniformly random. With at least $\frac{1}{4}$ probability, the random $n/K \times n/K$ matrix $\tilde{A}_{i,i}^{(i)}$ over $\mathbb{F}_2$ is invertible. After performing such Gaussian Elimination, $\tilde{A}^{(n)} = [I \mid b']$. Then we know $x = b'$. 

In the corresponding block circuit, each gate is of form 
\begin{align*}
    \begin{cases}
    \tilde{A}_{j,k}^{(i + 1)} =  \left(\tilde{A}^{(i)}_{i,i}\right)^{-1} \cdot \tilde{A}^{(i)}_{i,k} & \text{if $i = j$}\\
    \tilde{A}_{j,k}^{(i + 1)} = \tilde{A}^{(i)}_{j,k} - 
    \tilde{A}^{(i)}_{j,i} \cdot \left(\tilde{A}^{(i)}_{i,i}\right)^{-1} \cdot \tilde{A}^{(i)}_{i,k} & \text{otherwise}
    \end{cases}.
\end{align*}
So it has fan-in at most $4$. This block circuit for Gaussian Elimination will have depth $K$, and capacity $c = (n / K)^2$. Finally, it will have output $x$ with $m = K$ output gates. %(One extra benefit is that it also has a relatively small size, $O(K^3)$.)

Let $K = \frac{1}{5}\log q$. By \Cref{circuit-transfer-lemma}, this gives a branching program with width $2^{O(n^2 /  \log q)}$  and $K\cdot 4^{K}$ passes that solve learning parity with at least $\frac{1}{4^K}$ probability. We can boost this probability to constant by sequentially repeating this algorithm $4^K$ times, each with a new set of samples. This blows up the number of passes to $K \cdot 4^{K} \cdot 4^{K} \le 2^{5K} \le q$. Hence we end up with a branching program with at most $q$ passes. The number of samples used is $4^K \cdot n \le qn$.

\end{document}